\PassOptionsToPackage{unicode}{hyperref}
\PassOptionsToPackage{naturalnames}{hyperref}

\documentclass[11pt]{article}
\usepackage[margin=1in]{geometry}
\usepackage{hyperref}
\usepackage[round]{natbib}
\usepackage{amsmath}
\usepackage{amsthm}
\usepackage{amsfonts}
\usepackage{amssymb}
\usepackage{enumitem}

\usepackage{amsbsy}
\usepackage{appendix}
\usepackage{comment}
\usepackage{bbm}
\usepackage{lmodern}
\usepackage[normalem]{ulem}
\RequirePackage[capitalize]{cleveref}
\RequirePackage{crossreftools}

\allowdisplaybreaks
\pdfstringdefDisableCommands{%
  \def\Omega{Omega}%
  \def\eta{eta}
  \def\frac{frac}
  \def\left{left}
  \def\right{right}
  \def\Omega{Omega}%
  \def\min{min}
  \def\({}%
  \def\){}%
  \def\texttt#1{<#1>}%
  \def\{{} 
  \def\}{}
}

\newcommand{\unitballd}{\mathbb{B}^{d}}
\newcommand{\vecentry}[2]{#1[#2]}

\newcommand{\empf}{\widehat{F}}

\newcommand{\popf}{F}
\newcommand{\erm}{\widehat{w}_*}
\newcommand{\indsec}{k}
\newcommand{\indsuff}{m}
\newcommand{\D}{\mathcal{D}}
\newcommand{\suff}{w_{T,\indsuff}}

\newcommand{\sgdind}{\text{SGD}}

\newcommand{\gdindf}{f}
\newcommand{\optind}{\text{OPT}}
\newcommand{\optindf}{f^{\optind}}
\newcommand{\doptindf}{\Tilde{f}^{\optind}}
\newcommand{\gdindD}{\D}

\newcommand{\sgdindD}{\D^{\sgdind}}
\newcommand{\sgdindZ}{Z^{\sgdind}}

\newcommand{\gdindempf}{\empf}
\newcommand{\gdindpopf}{\popf}
\pdfstringdefDisableCommands{%
	\let\Cref\crtCref
	\let\cref\crtcref
}
\newcommand{\sgdindf}{f^{\sgdind}}
\newcommand{\sgdindempf}{\empf^{\sgdind}}

\newcommand{\dsgdindf}{\Tilde{f}^{\sgdind}}
\newcommand{\dsgdindempf}{\widehat{\dgdindpopf}^{\sgdind}}
\newcommand{\dsgdindpopf}{\Tilde{F}^{\sgdind}}

\newcommand{\dgdindf}{\Tilde{f}}
\newcommand{\dgdindempf}{\widehat{\dgdindpopf}}
\newcommand{\dgdindpopf}{\Tilde{F}}

\newcommand{\gdindgam}{\ell}

\newcommand{\sgdindgam}{\ell^{\sgdind}}
\newcommand{\dsgdindgam}{\Tilde{\ell}^{\sgdind}}

\newcommand{\dgdindgam}{\Tilde{\ell}}

\newcommand{\vecpart}[2]{{#1}^{(#2)}}
\newcommand{\last}{{0}}
\newcommand{\enc}[1]{\vecpart{#1}{{\last}}}
\newcommand{\wenc}{\enc{w}}
\newcommand{\wenct}[1]{\enc{w_{#1}}}

\newcommand{\opt}{w_*}

\newcommand{\wh}[1]{\smash{\widehat{#1}}}

\newcommand{\iid}{\mathop{\smash[t]{\overset{\mathrm{iid}}{\sim}}}}

\usepackage{header}

\title{The Dimension Strikes Back with Gradients:
Generalization of Gradient Methods in Stochastic Convex Optimization}

\author{Matan Schliserman \and Uri Sherman \and Tomer Koren}

\begin{document}
\maketitle 
\begin{abstract}
    We study the generalization performance of gradient methods in the fundamental stochastic convex optimization setting, focusing on its dimension dependence. 
First, for full-batch gradient descent (GD) we give a construction of a learning problem in dimension $d=O(n^2)$, where the canonical version of GD (tuned for optimal performance of the empirical risk) trained with $n$ training examples converges, with constant probability, to an approximate empirical risk minimizer with $\Omega(1)$ population excess risk.
Our bound translates to a lower bound of $\Omega (\sqrt{d})$ on the number of training examples required for standard GD to reach a non-trivial test error, answering an open question raised by \citet*{feldman2016generalization} and \citet*{amir2021sgd} and showing that a non-trivial dimension dependence is unavoidable.
Furthermore, for standard one-pass stochastic gradient descent (SGD), we show that an application of the same construction technique provides a similar $\Omega(\sqrt{d})$ lower bound for the sample complexity of SGD to reach a non-trivial empirical error, despite achieving optimal test performance.
This again provides an exponential improvement in the dimension dependence compared to previous work~\citep*{benignunderfit}, resolving an open question left therein.

\end{abstract}

\section{Introduction}
\label{sec:intro}

The study of generalization properties of stochastic optimization algorithms has been at the heart of contemporary machine learning research.
While in the more classical frameworks studies largely focused on the learning \emph{problem}~\citep[e.g.,][]{alon1997scale, blumer1989learnability},
in the past decade it has become clear that in modern scenarios the particular algorithm used to learn the model plays a vital role in its generalization performance.
As a prominent example, heavily over-parameterized deep neural networks trained by first order methods output models that generalize well, despite the fact that an arbitrarily chosen Empirical Risk Minimizer (ERM) may perform poorly \citep{zhang2017understanding, neyshabur2014search, neyshabur2017exploring}.
The present paper aims at understanding the generalization behavior of gradient methods, specifically in connection with the problem dimension, in the fundamental Stochastic Convex Optimization (SCO) learning setup; a well studied, theoretical framework widely used to study stochastic optimization algorithms.

The seminal work of \citet{shalev2010learnability} was the first to show that uniform convergence, the canonical condition for generalization in statistical learning~\citep[e.g.,][]{vapnik1971uniform, bartlett2002rademacher} may not hold in high-dimensional SCO: they demonstrated learning problems where there exist certain ERMs that overfit the training data (i.e., exhibit large population risk), while models produced by e.g., Stochastic Gradient Descent (SGD) or regularized empirical risk minimization generalize well.
The construction presented by \citet{shalev2010learnability}, however, featured a learning problem with dimension exponential in the number of training examples, which only served to prove an $\Omega(\log{d})$ lower bound on the sample complexity for reaching non-trivial population risk performance, where $d$ is the problem dimension.
In a followup work, \citet{feldman2016generalization} showed how to dramatically improve the dimension dependence and established an $\Omega(d)$ sample complexity lower bound, matching (in terms of $d$) the well-known upper bound obtained from standard covering number arguments \citep[see e.g.,][]{shalev2014understanding}.

Despite settling the dimension dependence of uniform convergence in SCO, it remained unclear from \citet{shalev2010learnability,feldman2016generalization} whether the sample complexity lower bounds for uniform convergence actually transfer to natural learning algorithms in this framework, and in particular, to common gradient-based optimization methods.
Indeed, it is well-known that in SCO there exist simple algorithms, such as SGD, that the models they produce actually generalize well with high probability \citep[see e.g.,][]{shalev2014understanding}, despite these lower bounds.  
More technically, the construction of \citet{feldman2016generalization} relied heavily on the existence of a ``peculiar'' ERM which does not seem reachable by gradient steps from a data-independent initialization, and it was not at all clear (and in fact, stated as an open problem in \citealp{feldman2016generalization}) how to adapt the construction so as to pertain to ERMs that could be found by gradient methods.
 
In an attempt to address this issue, \citet{amir2021sgd} recently studied the population performance of batch Gradient Descent (GD) in SCO, and demonstrated problem instances where it leads (with constant probability) to an approximate ERM that generalizes poorly, unless the number of training examples is dimension-dependent.%
\footnote{Here we refer to GD as performing $T=n$ iterations with stepsize $\eta = \Theta(1/\sqrt{n})$, where $n$ denotes the size of the training set, but our results hold more generally; see below for a more detailed discussion of the various regimes.} 
Subsequently, \citet{amir2021never} generalized this result to the more general class of batch first-order algorithms.
However, due to technical complications, the constructions in these papers were based in part on the earlier arguments of \citet{shalev2010learnability} rather than the developments by \citet{feldman2016generalization}, and therefore fell short of establishing their results in dimension polynomial in the number of training examples. 
As a consequence, their results are unable to rule out a sample complexity upper bound for GD that depends only (poly-)logarithmically on problem dimension.

In this work, we resolve the open questions posed in both \citet{feldman2016generalization} and \citet{amir2021sgd}: Our first main result demonstrates a convex learning problem where GD, unless trained with at least $\Omega(\sqrt d)$ training examples, outputs a bad ERM with constant probability.
This bridges the gap between the results of \citet{feldman2016generalization} and actual, concrete learning algorithms (albeit with a slightly weaker rate of $\Omega(\sqrt d)$, compared to the $\Omega(d)$ of the latter paper) and greatly improves on the previous $\Omega(\log{d})$ lower bound of \citet{amir2021sgd}, establishing that the sample complexity of batch GD in SCO has a significant, polynomial dependence on the problem dimension.

Furthermore, in our second main result we show how an application of the same construction technique provides a similar improvement in the dimension dependence of the \emph{empirical risk} lower bound presented in the recent work of \citet{benignunderfit}, thus also resolving the open question left in their work.
This work demonstrated that in SCO, well-tuned SGD may \emph{underfit} the training data despite achieving optimal population risk performance.
At a deeper level, the overfitting of GD and underfitting of SGD both stem from a combination of two conditions: lack of algorithmic stability, and failure of uniform convergence; as it turns out, this combination allows for the output models to exhibit a large \emph{generalization gap}, defined as the difference in absolute value between the empirical and population risks.
Our work presents a construction technique for such generalization gap lower bounds that achieves small polynomial dimension dependence, providing for an exponential improvement over previous works.

\subsection{Our contributions}

In some more detail, our main contributions are as follows:
\begin{enumerate}[label=(\roman*)]
    \item 
    We present a construction of a learning problem in dimension $d = O(nT+n^2+\eta^2T^2)$ where running GD for $T$ iterations with step $\eta$ over a training set of $n$ i.i.d.-sampled examples leads, with constant probability, to a solution with population error $\Omega(\eta\sqrt T + 1/\eta T)$.%
    \footnote{By population error (or test error) we mean the population excess risk, namely the gap in population risk between the returned solution and the optimal solution.}
    In particular, for the canonical configuration of $T=n$ and $\eta = \Theta(1/\sqrt{n})$, the lower bound becomes $\Omega(1)$ and demonstrates that GD suffers from catastrophic overfitting already in dimension $d = O(n^2)$.
    Put differently, this translates to an $\widetilde\Omega(\sqrt{d})$ lower bound the number of training examples required for GD to reach nontrivial test error.
    See \cref{GDmain} below for a formal statement and further implications of this result.
    \item 
    Furthermore, we give a construction of dimension $d=\widetilde{O}(n^2)$ where the empirical error of one-pass SGD trained over $T=n$ training examples is $\Omega(\eta\sqrt n + 1/\eta n)$.
    Assuming the standard setting of $\eta = \Theta(1/\sqrt{n})$, chosen for optimal test performance, the empirical error lower bound becomes $\Omega(1)$, showing that the ``benign underfitting'' phenomena of one-pass SGD is exhibited already in dimension polynomial in the number of training samples.
    Rephrasing this lower bound in terms of the number of training examples required to reach nontrivial empirical risk, we again obtain an $\widetilde\Omega (\sqrt{d})$ sample complexity lower bound.
    See \cref{SGDmain} for the formal statement and further implications.
\end{enumerate}

Both of the results above are tight (up to logarithmic factors) in view of existing matching upper bounds of \cite{bassily2020stability}.
We remark that the constructions leading to the results feature \emph{differentiable} Lipschitz and convex loss functions, whereas the lower bounds in previous works concerned with gradient methods~\citep{amir2021sgd,amir2021never,benignunderfit} crucially applied only to the class of non-differentiable loss functions. From the perspective of general non-smooth convex optimization, this feature of the results imply that our lower bounds remain valid under \emph{any} choice of a subgradient oracle of the loss function (as opposed to only claiming that there \emph{exists} a subgradient oracle under which they apply, like prior results do).

\subsection{Main ideas and techniques}

Our work builds primarily on two basic ideas. 
The first is due to \citet{feldman2016generalization}, whereby an exponential number (in $n$) of approximately orthogonal directions, that represent the potential candidates for a ``bad ERM,'' are embedded in a $\Theta(n)$-dimensional space.
The second idea, underlying \citet{bassily2020stability,amir2021sgd,amir2021never,benignunderfit} is to augment the loss function with a highly non-smooth component, that is capable of generating large (sub-)gradients around initialization directed at all candidate directions, that could steer GD towards a bad ERM that overfits the training set.

The major challenge is in making these two components play in tandem: since the candidate directions of \citet{feldman2016generalization} are only \emph{nearly} orthogonal, the progress of GD towards one specific direction gets hampered by its movement in other, irrelevant directions.
And indeed, previous work in this context fell short of resolving this incompatibility and instead, opted for a simpler construction with a perfectly-orthogonal set of candidate directions, that was used in the earlier work of \citet{shalev2010learnability}.  
Unfortunately though, this latter construction requires the ambient dimensionality to be exponential in the number of samples $n$, which is precisely what we aim to avoid.

Our solution for overcoming this obstacle, which we describe in length in \cref{sec:sketch}, is based on several novel ideas.
Firstly, we employ multiple copies of the original construction of \citet{feldman2016generalization} in orthogonal subspaces, in a way that it suffices for GD to make a \emph{single} step within each copy so as to reach, across all copies, a bad ERM solution; this serves to circumvent the ``collisions'' between consecutive GD steps alluded to above.
Secondly, we carefully design a convex loss term that, when augmented to the loss function, forces successive gradient steps to be taken in a round-robin fashion between the different copies, so that each subspace indeed sees a single update step through the GD execution.
Lastly, we introduce a novel technique that memorizes the full training set by ``encoding'' it into the iterates in a convex and differentiable manner, so that the GD iterate itself (to which the subgradient oracle has access) contains the information required to ``decode'' the right movement direction towards a bad ERM.  We further show how all of these added loss components can be made differentiable, so as to allow for a differentiable construction overall.
A detailed overview of these construction techniques and a virtually complete description of our construction are provided in \cref{sec:sketch}.

\subsection{Additional related work}

\paragraph{Learnability and generalization in the SCO model.} 
Our work belongs to the body of literature on stability and generalization in modern statistical learning theory, 
pioneered by \citet{shalev2010learnability} and the earlier foundational work of \citet{bousquet2002stability}.
In this line of research, \citet{hardt2016train, bassily2020stability} study algorithmic stability of SGD and GD in the smooth and non-smooth (convex) cases, respectively.
In the general non-smooth case which we study here, \cite{bassily2020stability} gave an iteration complexity upper bound of $O(\eta\sqrt T + \ifrac{1}{\eta T} + \ifrac{\eta T}{n})$ test error for $T$ iterations 
with step size $\eta$ over a training set of size $n$.
The more recent work of \citet{amir2021sgd} showed this to be tight up to log-factors in the dimension independent regime, and \citet{amir2021never} further extends this result to any optimization algorithm making use of only batch gradients (i.e., gradients of the empirical risk). Even more recently, \citet{kale2021sgd} considers (multi-pass) SGD and GD in a more general SCO model where individual losses may be non-convex (but still convex on average), and prove a sample complexity lower bound for GD showing it learns in a suboptimal rate with any step size and any number of iterations.

\paragraph{Sample complexity of ERMs.}
With relation to the sample complexity of an (arbitrary) ERM in SCO, \citet{feldman2016generalization} showed that reaching $\epsilon$-test error requires $\Omega(\ifrac d\epsilon + \ifrac1{\epsilon^2})$ training samples, but did not establish optimality of this bound.
In a recent work, \cite{carmon2023sample} show this to be nearly tight and presents a $\widetilde O(\ifrac d\epsilon + \ifrac1{\epsilon^2})$ upper bound for any ERM, improving over the $O(\ifrac{d}{\varepsilon^2})$ upper bound that can be derived from standard covering number arguments.
Another recent work related to ours is that of \citet{magen2023initialization}, who provided another example for a setting in which learnability can be achieved without uniform convergence, showing that uniform convergence may not hold in the class of vector-valued linear (multi-class) predictors. However, the dimension of their problem instance was exponential in the number of training examples.

\paragraph{Implicit regularization and benign overfitting.}
Another relevant body of research focuses on understanding the effective generalization of over-parameterized models trained to achieve zero training error through gradient methods (see e.g., \citealp{bartlett2020benign,bartlett2021deep,belkin2021fit}). 
This phenomenon appears to challenge conventional statistical wisdom, which emphasizes the importance of balancing data fit and model complexity, and motivated the study of implicit regularization (or bias) as a notion for explaining generalization in the over-parameterized regimes.
Our findings in this paper could be viewed as an indication that, at least in SCO, generalization does not stem from some form of an implicit bias or regularization;
see \citet{amir2021sgd,benignunderfit} for a more detailed discussion.

\section{Problem setup and main results}
\label{sec:setup}

We consider the standard setting of Stochastic Convex Optimization (SCO).
The problem is characterized by a population distribution $\mathcal{D}$ over an instance set $Z$, and loss function $f : W \times Z \rightarrow \mathbb{R}$ defined over convex domain $W \subseteq \mathbb{R}^d$ in $d$-dimensional Euclidean space.
We assume that, for any fixed instance $z \in Z$, the function $f(w, z)$ is both convex and $L$-Lipschitz with respect to its first argument $w$. 
In this setting, the learner is interested in minimizing the \emph{population loss} (or \emph{risk}) which corresponds to the expected value of the loss function over $\mathcal{D}$, defined as
\[
    F(w) = \mathbb{E}_{z \sim \cD}[f(w, z)],
    \tag{population risk/loss}
\]
namely, finding a model $w \in W$ that achieves an $\varepsilon$-optimal population loss, namely such that $\popf(w) \leq \popf(\opt) + \varepsilon,$ where $\opt \in \argmin_{w\in W} F(w)$ is a population minimizer.

To find such a model $w$, the learner uses a set of $n$ training examples $S = \{z_1, \ldots, z_n\}$, drawn i.i.d.\ from the unknown distribution $\D$. 
Given the sample $S$, the corresponding \emph{empirical loss} (or \emph{risk}), denoted $\widehat{F}(w)$, is defined as the average loss over samples in $S$:
\[
    \widehat{F}(w) = \frac{1}{n}\sum_{i=1}^n f(w, z_i).
    \tag{empirical risk/loss}
\]
We let $\erm\in \arg\min_{w\in W} \empf(w)$ denote a minimizer of the empirical risk, refered to as an empirical risk minimizer (ERM).
Moreover, for every $w\in W$, we define the \emph{generalization gap} at $w$ as the absolute value of the difference between the population loss and the empirical loss, i.e., $\abs{ F(w)-\widehat{F}(w) }$.

\paragraph{Optimization algorithms.}

We consider several canonical first-order optimization algorithms in the context of SCO.  First-order algorithms make use of a (deterministic) subgradient oracle that takes as input a pair $(w,z)$ and returns a subgradient $g(w,z) \in \partial_w f(w,z)$ of the convex loss function $f(w,z)$ with respect to $w$.
If $\abs{\partial_w f(w,z)} = 1$, the loss $f(\cdot,z)$ is differentiable at $w$ and the subgradient oracle simply returns the gradient at $w$; otherwise, the subgradient oracle is allowed to emit any subgradient in the subdifferential set $\partial_w f(w,z)$.

First, we consider standard gradient descent (GD) with a fixed step size $\eta>0$
applied to the empirical risk $\wh{F}$.
We allow for a potentially projected, $m$-suffix averaged version of the algorithm that takes the following form:
\begin{equation} \label{gd_update_rule}  
\begin{alignedat}{2}
    &\text{initialize at} &
    &w_1 \in W ;
    \\
    &\text{update} &
    \qquad
    &w_{t+1} 
    = 
    \Pi_W\sbr[3]{ w_t - \frac{\eta}{n}\sum_{i=1}^n g(w_t,z_i) },
    \qquad \forall ~ 1 \leq t < T ;
    \\
    &\text{return} &
    &\suff \eqq \frac{1}{\indsuff}\sum_{i=1}^\indsuff w_{T-i+1}
    .
\end{alignedat}
\end{equation}
Here $\Pi_W:\R^d\to W$ denotes the Euclidean projection onto the set $W$; when $W$ is the entire space $\R^d$, this becomes simply unprojected GD.
The algorithm returns either the final iterate, the average of the iterates, or more generally, any $\indsuff$-suffix average ($1\leq \indsuff\leq T$) of iterates.

The second method that we analyze is Stochastic Gradient Descent (SGD), which is again potentially projected and/or suffix averaged.  
This method uses a fixed     stepsize $\eta>0$ and takes the following form:
\begin{equation} \label{sgd_update_rule}
\begin{alignedat}{2}
    &\text{initialize at} &
    &w_1 \in W ;
    \\
    &\text{update} &
    \qquad
    &w_{t+1} =  \Pi_W\sbr[1]{ w_t - \eta g(w_t,z_t) },
    \qquad \forall ~ 1 \leq t < T ;
    \\
    &\text{return} &
    &\suff \eqq \frac{1}{\indsuff}\sum_{i=1}^\indsuff w_{T-i+1}
    .
\end{alignedat}
\end{equation}

\paragraph{Main results.}

Our main contributions in the context of SCO are tight lower bounds for the population loss of GD and for the empirical loss of SGD, where the problem dimension is polynomial in the number of samples $n$ and steps $T$.
First, for the population risk performance of GD, we prove the following:

\begin{theorem}\label{GDmain}
Fix $n>0$, $T>3200^2$ and $0\leq \eta\leq \frac{1}{5\sqrt{T}}$ and
let $d=178nT+2n^2+\max\set{1,25\eta^2T^2}$. There exists a distribution $\D$ over instance set $Z$ and a convex, differentiable and $1$-Lipschitz loss function $f:\R^d \times Z\to \R$ such that for
GD (either projected or unprojected; cf.~\cref{gd_update_rule}
with $W=\unitballd$ or $W=\R^d$ respectively) initialized at $w_1=0$ with step size $\eta$, 
    for all $m=1,\ldots,T$, the $m$-suffix averaged iterate has, with probability at least $\frac16$ over the choice of the training sample,
    \begin{align}
    \label{eq_lower_bound_GD}
        F(\suff)
        - F(\opt) 
        = 
        \Omega\br{\min\cbr{ \eta\sqrt{T}+\ifrac{1}{\eta T}, 1 }}
        .
        \end{align}
\end{theorem}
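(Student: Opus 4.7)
The plan is to construct a convex, differentiable, $1$-Lipschitz loss $f: \R^d \times Z \to \R$ in dimension $d = O(nT + n^2 + \eta^2 T^2)$ by decomposing $\R^d$ into $T$ orthogonal ``copy'' subspaces of dimension $\Theta(n)$ (each hosting a Feldman-style near-orthogonal family of ``bad ERM'' directions), a ``memory'' subspace of dimension $\Theta(n^2)$ used to encode the training set into the iterate, plus auxiliary coordinates for scheduling. An instance $z \in Z$ draws, for each copy $k \in [T]$, one of exponentially many $\Theta(n)$-dimensional random sign vectors (following \citealp{feldman2016generalization}). The loss $f(w,z)$ is a sum of three pieces: (i) a \emph{bad-ERM} term, which in each copy $k$ contributes a convex piece whose gradient at the appropriate iterate points in the direction associated with the $k$-th block of $z$; (ii) a \emph{scheduler} term, a convex (smoothed max-type) Lipschitz function of an auxiliary coordinate, arranged so that at GD step $t$ the gradient of $f$ vanishes on copies $k \ne t$, and only copy $t$ is updated; and (iii) a convex, differentiable \emph{encoder} term whose gradient deposits into the memory subspace a running description of the full sample $\{z_1, \ldots, z_n\}$, so that the gradient within copy $t$ can effectively read the entire sample from $w_t$ rather than relying on a specific $z_i$.

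\textbf{Dynamics.} I would then prove by induction on $t$ that the GD iterate $w_t$ on $\empf$ has the following structure: it lies entirely in the span of the memory subspace together with copies $1, \ldots, t-1$; within each copy $k < t$, its component equals $-\eta$ times the empirical average (over $z_1, \ldots, z_n$) of the Feldman-style bad direction of copy $k$; and its memory component encodes the entire sample $S$. The key step is to verify from the construction that $\nabla \empf(w_t)$ is supported only on copy $t$ (plus the memory block), is $O(1)$ in norm, and on copy $t$ equals the average over $i \in [n]$ of the bad direction associated with the $t$-th block of $z_i$. A Chernoff/anti-concentration argument on the random sign directions then shows that, with constant probability over $S$, there exist (for each $k$) sign patterns that simultaneously overfit all $n$ samples in that copy, yielding near-zero empirical loss but $\Omega(1)$ population loss per copy, in the spirit of \citet{feldman2016generalization}.

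\textbf{Risk bounds.} After $T$ iterations, the iterate has magnitude $\Theta(\eta)$ in each of the $T$ copies, so $\|w_T\| = \Theta(\eta \sqrt{T}) \le 1$, which keeps the trajectory inside $\unitballd$ whenever $\eta \le 1/(5\sqrt{T})$, so projected and unprojected GD coincide. In each copy, the overfitting structure produces empirical risk $O(\eta/n)$ per copy but population risk $\Omega(\eta/T)$ per copy; summing across copies gives a population-risk contribution of $\Omega(\eta \sqrt{T})$, while the standard optimization lower bound on $\empf(\opt)$ contributes the additive $\Omega(1/(\eta T))$ term. Since $f$ is $1$-Lipschitz on $\unitballd$, the bound is trivially capped at $1$, yielding the stated $\Omega(\min\{\eta\sqrt{T} + 1/(\eta T), 1\})$. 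The extension to any $m$-suffix average $\suff$ is immediate from the dynamics: once step $t$ has passed, the scheduler freezes copy $t$, so every iterate $w_{T-i+1}$ shares the same ``bad'' component in copies $1, \ldots, T-m$, and the averaging preserves the $\Omega(\eta \sqrt{T})$ displacement.

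\textbf{Main obstacle.} The central technical difficulty is component (iii), the encoder: earlier constructions in \citet{amir2021sgd,amir2021never,benignunderfit} circumvent memorization by exploiting the freedom of the subgradient oracle at non-differentiable points, which inherently relies on adversarial subgradient choices and fails to yield a differentiable $f$. Here one must exhibit a genuinely differentiable convex function whose gradient, evaluated along the GD trajectory, accumulates and preserves sufficient information about the entire multiset $S$ to drive the per-copy dynamics correctly; making this co-exist with the scheduler, with $1$-Lipschitzness, and with convexity is the crux of the proof. A secondary obstacle is ensuring that the near-orthogonality of Feldman's directions (which is only $\exp(-\Omega(n))$-coherent) does not cause ``cross-talk'' between copies that corrupts the inductive structure; this is what ultimately forces the $\Theta(nT)$ term in the dimension.
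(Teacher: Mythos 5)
Your high-level architecture (orthogonal copies, a round-robin scheduler, a memorization mechanism, smoothing for differentiability) matches the paper's, but the proposal has three concrete gaps that prevent it from being a proof.

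First, the central mechanism is left unresolved and is described inconsistently. You correctly flag the differentiable encoder/decoder as ``the crux,'' but you do not supply it; the paper's solution is specific: the encoder $\gdindgam_2$ is a \emph{linear} term $\langle -\phi(V,j), \wenc\rangle$ (so one GD step writes $\frac{\eta}{n}\sum_i\phi(V_i,j_i)$ into the memory block), and the decoder $\gdindgam_3$ is a max of linear functions indexed by the finite set $\Psi$ of all possible encoded samples, with the true encoding realized as the \emph{unique} maximizer with a quantitative margin $\Theta(\eta\epsilon)$; that margin is what later permits randomized smoothing without changing the trajectory. Moreover, your stated dynamics contradict the need for memorization: you claim the copy-$k$ component of $w_t$ is ``$-\eta$ times the empirical average of the bad direction of copy $k$'' over the samples. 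If the target direction were an average of per-sample gradients, no memorization would be needed. In the actual construction the target is a \emph{single} vector $u_0 \in U\setminus\bigcup_i V_i$ --- a function of the whole sample that no individual subgradient $g(w,V_i)$ can produce --- and the same $u_0$ is planted in $\Omega(T)$ copies, which is exactly why the decoder term (a data-independent piece of the loss whose gradient reads $u_0$ off the iterate) is required.

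Second, the risk accounting does not yield the claimed bound. You assert population risk $\Omega(\eta/T)$ per copy and sum additively over $T$ copies, which gives only $\Omega(\eta)$, not $\Omega(\eta\sqrt{T})$; and an additive sum of $T$ per-copy terms each $\Theta(1)$-Lipschitz in its own subspace is $\Theta(\sqrt{T})$-Lipschitz overall, violating the $1$-Lipschitz requirement. The paper resolves both issues at once by aggregating the copies as $\sqrt{\sum_k h(\vecpart{w}{k},V)^2}$: this is $O(1)$-Lipschitz, and a per-copy gap of $\Theta(\eta)$ between the threshold value $\tfrac{3\eta}{32}$ (empirical) and $\approx\tfrac{\eta}{8}$ (population, when $u_0\in V$ for a fresh $V$, probability $\tfrac12$) turns into an $\Omega(\eta\sqrt{T})$ gap after the square root. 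Finally, your claim that cross-talk is what ``forces the $\Theta(nT)$ term in the dimension'' inverts the logic: the dimension is $\Theta(nT)$ simply because there are $T$ copies of a $\Theta(n)$-dimensional Feldman construction; cross-talk is instead neutralized by the scheduler guaranteeing exactly \emph{one} gradient step per copy, so near-orthogonal directions are never repeatedly combined within a copy.
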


For SGD, we prove the following theorem concerning its convergence on the empirical risk: %

\begin{theorem} \label{SGDmain}
Fix $n>2048$ and $0\leq \eta\leq \frac{1}{5\sqrt{n}}$ and let $d=712n\log n+2n^2+\max\set{1,25\eta^2n^2}$.
There exists a distribution $\D$ over instance set $Z$ and a convex, $1$-Lipschitz and differentiable loss function $f:\R^d \times Z\to \R$ such that for
one-pass SGD (either projected or unprojected; cf.~\cref{sgd_update_rule}
with $W=\unitballd$ or $W=\R^d$ respectively) over $T=n$ steps initialized at $w_1=0$ with step size $\eta$, for all $m=1,\ldots,T$, the $m$-suffix averaged iterate has, with probability at least $\frac12$ over the choice of the training sample,
\begin{align}
\label{eq_lower_bound_sgd}
    \empf(\suff) - \empf(\erm)
    = 
     \Omega\br{\min\cbr{ \eta\sqrt{T}+\ifrac{1}{\eta T}, 1 }}
    .
\end{align}
\end{theorem}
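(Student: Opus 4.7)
My plan is to mirror the construction behind \cref{GDmain}, modifying it to accommodate one-pass SGD where each training example is consumed exactly once. As in the GD case, the loss will live in $T=n$ disjoint coordinate blocks, each block hosting a copy of Feldman's nearly-orthogonal ensemble $\{v_A\}_{A\subseteq[n]}$, consisting of $2^n$ unit vectors in $O(n\log n)$ dimensions with pairwise inner products at most $1/2$. A routing component will ensure that at step $t$ the gradient is nonzero only in block $t$, so the $n$ copies decouple and each receives exactly one SGD update. An encoding layer will convexly write each arriving $z_t$ into dedicated coordinates, so that by step $t$ the iterate $w_t$ already carries a record of $z_1,\dots,z_{t-1}$; combined with the current sample $z_t$, this will allow the per-block loss to emit a gradient pointing in a direction essentially orthogonal to the eventual ERM direction. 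A smoothing layer on top of the max-type components will keep the loss convex, $1$-Lipschitz, and differentiable everywhere, matching the theorem's hypothesis.

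With the construction in place I would exhibit the ERM $\erm$ explicitly: its restriction to block $\ell$ would be a constant-norm multiple of the Feldman vector $v_{A(S)}$ associated with a ``bad'' subset $A(S)\subseteq[n]$ determined by the sample, giving $\empf(\erm) = O(\eta + 1/(\eta n))$ by design. On the SGD side, because each block is updated exactly once and with step size $\eta$, the iterate's inner product with $v_{A(S)}$ inside any block will be at most $O(\eta)$; summed over $n$ blocks, the correlation of $\suff$ with the ERM direction will be too small to cash in on the Feldman structure, and a direct calculation will yield $\empf(\suff) - \empf(\erm) = \Omega(\min\{\eta\sqrt n + 1/(\eta n),\,1\})$. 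Uniformity over the suffix length $m$ would follow from the observation that once the single step inside block $t$ has fired, subsequent iterates only move in later blocks, so the block-$t$ coordinates are frozen for the rest of the run and every iterate (not just the last) stays equally far from $\erm$ within that block. The constant probability $\tfrac12$ will come from a concentration argument on the Feldman ensemble that identifies $A(S)$ from $S$ with high probability.

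I expect the main obstacle to be the interplay between one-pass access and the everywhere-differentiability requirement. In the GD version the full-batch gradient already sees all of $S$, so the decoding mechanism can be kept relatively local; here the iterate itself must accumulate the samples step by step, and the gradient $\nabla_w f(w_t, z_t)$ must extract the relevant data from $w_t$ alone while $f(\cdot, z)$ remains convex and differentiable in $w$ for every fixed $z$. Designing an encoding map that is simultaneously convex in $w$, everywhere differentiable (so that the lower bound is insensitive to any choice of subgradient, matching the theorem's ``differentiable'' clause), and cheap enough in Lipschitz budget to leave room for a full-amplitude bad-direction term is where I expect most of the technical work to lie. This is also what I would expect to drive the extra $\log n$ factor in the ambient dimension $d = O(n\log n + n^2 + \eta^2 n^2)$ compared to the pure GD construction.
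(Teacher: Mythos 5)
Your high-level scaffolding (per-block copies of the nearly-orthogonal ensemble, a routing term that fires exactly one step per block, incremental encoding of the prefix into the iterate, randomized smoothing for differentiability) matches the paper's construction, but the core of your lower-bound calculation rests on the wrong mechanism and would not produce a gap. You argue that because each block receives only one $O(\eta)$-sized update, the output's correlation with the ERM direction is too small ``to cash in on the Feldman structure.'' But $\empf(\suff)-\empf(\erm)$ is not lower-bounded by distance from the minimizer: for a max-type loss of the form $\sqrt{\sum_k \max\{3\eta/32,\max_{u\in V}\langle u,\vecpart{w}{k}\rangle\}^2}$, any point decorrelated from all the training sets sits exactly at the floor value $\tfrac{3\eta}{32}\sqrt{T}$, which is precisely what the ERM (essentially $w=0$) achieves, so decorrelation alone yields a gap of zero. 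To exhibit underfitting you must make SGD's output \emph{actively bad} on the training sample, i.e., positively aligned in block $k$ with a vector belonging to many of the $V_i$. The paper achieves this by steering block $k$ toward some $u_k\in\bigcap_{i=1}^{k-1}V_i$, so that $\max_{u\in V_i}\langle u,\tfrac{\eta}{8}u_k\rangle=\tfrac{\eta}{8}>\tfrac{3\eta}{32}$ for every $i<k$; summing over blocks inflates $\empf(\suff)$ above the floor by a constant factor, which is where the $\Omega(\eta\sqrt{T})$ comes from. Your proposal has no analogue of this step.

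Two further ingredients you omit are load-bearing. First, since $u_t$ can only depend on the prefix $V_1,\dots,V_{t-1}$ under one-pass access, it must additionally satisfy $u_t\in\bigcap_{i=t}^n\overline{V_i}$ so that the gradients of the max terms for \emph{future} samples vanish on block $t$ and do not disturb the round-robin dynamics. Second, for the deterministically chosen $u_t$ (the minimal-index element of $\bigcap_{i<t}V_i$) to indeed avoid all future samples with constant probability, the paper changes the sampling distribution from inclusion probability $\tfrac12$ per element to $\Theta(1/n^2)$, while enlarging $U$ to size $2^{\Theta(n\log n)}$ so the prefix intersections stay nonempty; this balance, not the encoding budget, is what forces the extra $\log n$ in the ambient dimension. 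Without these three pieces the argument does not go through.
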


\paragraph{Discussion.}

As noted in the introduction, both of the bounds above are tight up to logarithmic factors in view of matching upper bounds due to \cite{bassily2020stability}.
For GD tuned for optimal convergence on the empirical risk, where $T=n$ and $\eta = \Theta(1/\sqrt{n})$, \cref{GDmain} gives an $\Omega(1)$ lower bound for the population error, which precludes any sample complexity upper bound for this algorithm of the form $O(d^p/\epsilon^q)$ unless $p \geq \tfrac12$. 
In particular, this implies an $\Omega(\sqrt{d})$ lower bound the number of training examples required for GD to reach a nontrivial population risk.
In contrast, lower bounds in previous work~\citep{amir2021sgd} only implies an exponentially weaker $\Omega(\log{d})$ dimension dependence in the sample complexity.
We note however that there is still a small polynomial gap between our sample complexity lower bounds to the known (nearly tight) bounds for generic ERMs~\citep{feldman2016generalization,carmon2023sample}; we leave narrowing this gap as an open problem for future investigation.

More generally, with GD fixed to perform $T = n^{\alpha}, \alpha>0$ steps, and setting $\eta$ so as to optimize the lower bound, the right-hand side in \cref{eq_lower_bound_GD} becomes $\Theta(n^{-\ifrac{\alpha}{4}})$, which rules out any sample complexity upper bound of the form $O(d^p/\epsilon^q)$ unless it satisfies $\max\{2,\alpha+1\} p + \tfrac14 \alpha q \geq 1$.%
\footnote{To see this, let $r=\max\{2,\alpha+1\}$ and note that for our construction $d = O(nT+n^2)=O(n^r)$ and $\epsilon = \Omega(n^{-\ifrac{\alpha}{4}})$; the sample complexity upper bound $O(d^p/\epsilon^q)$ can be therefore rewritten in terms of $n$ as $O(n^{rp + \alpha q/4})$, and since this should asymptotically upper bound the number of samples $n$, one must have that $rp + \tfrac14 \alpha q \geq 1$.}
Specifically, we see that any dimension-free upper bound with $T=n$ must have at least an $1/\epsilon^4$ dependence on $\epsilon$; and that for matching the statistically optimal sample complexity rate of $1/\epsilon^2$, one must either run GD for $T=n^2$ steps or suffer a polynomial dimension dependence in the rate (e.g., for $T=n$ this dependence is at least $d^{1/4}$).

Similar lower bounds (up to a logarithmic factor) are obtained for SGD through \cref{SGDmain}, but for the empirical risk of the algorithm when tuned for optimal performance on the population risk with $T=n$. 
In this case, the bounds provide an exponential improvement in the dimension dependence over the recent results of \citet{benignunderfit}, showing that the ``benign underfitting'' phenomena they revealed for one-pass SGD is exhibited already in dimension polynomial in the number of training samples.

Finally, we remark that our restriction on $\eta$ is only meant for placing focus on the more common and interesting range of stepsizes in the context of stochastic optimization.  It is not hard to extend the result of \cref{GDmain,SGDmain} to larger values of $\eta$ (in this case the lower bounds are $\Omega(1)$, the same rate the theorems give for $\eta=\Theta(1/\sqrt{T})$), in the same way this is done in previous work (e.g.,~\citealp{amir2021sgd,benignunderfit}).

\section{Overview of constructions and proof ideas}
\label{sec:sketch}

In this section we outline the main ideas leading to our main results and give an overview of the lower bound constructions.
As discussed above, the main technical contribution of this paper is in establishing the first $\Omega(\eta\sqrt{T})$ term in \cref{eq_lower_bound_GD,eq_lower_bound_sgd} using a loss function in dimension polynomial in $n$ and $T$, and this is also the focus of our presentation in this section.  
In \cref{prelim,sec:make_unif_reach,setup_encode,sec_sktch_stab} we focus on GD and describe the main ideas and technical steps towards proving our first main result; in \cref{SGD_sketch} we survey the additional steps and adjustments needed to obtain our second main result concerning SGD.

Starting with GD and \cref{GDmain}, recall that our goal is to establish a learning scenario where GD is likely to converge to a ``bad ERM'', namely a minimizer of the empirical risk whose population loss is large.  We will do that in four steps: we will first establish that such a ``bad ERM'' actually exists; then, we will show how to make such a solution reachable by gradient steps from the origin; we next describe how the information required to identify this solution can be ``memorized'' by GD into its iterates; and finally, we show how to combine these components and actually drive GD towards a bad ERM.

\subsection{A preliminary: existence of bad ERMs}%
\label{prelim}

\newcommand{\hfeld}{h_{\text{F16}}}

Our starting point is the work of \cite{feldman2016generalization} that demonstrated that in SCO, an empirical risk minimizer might fail to generalize, already in dimension linear in the number of training samples.
More concretely, they showed that for any sample size $n$, there exists a distribution $\D$ over convex loss functions in dimension $d=\Theta(n)$ such that, with constant probability, there exists a ``bad ERM'': one that overfits the training sample and admits a large generalization gap. 

Their approach was based on a construction of a set of unit vectors of size $2^{\Omega(n)}$, denoted $U$, that are ``nearly orthogonal'': the dot product between any two distinct $u, v \in U$ satisfies $|\langle u,v\rangle| \leq \frac{1}{8}$.%
\footnote{The original construction by \cite{feldman2016generalization} satisfied slightly different conditions, which we adjust here for our analysis.}
Then, they take the power set $Z = P(U)$ of $U$ as the sample space (namely, identifying samples with subsets of $U$), the distribution $\D$ to be uniform over $Z$, and the (convex, Lipschitz) loss $\hfeld: \R^{\Theta(n)} \times Z \to \R$ to be defined as follows: 
\begin{equation}\label{feldman_func}
    \hfeld(w,V)
    =
    \max\cbr[2]{ \tfrac{1}{2},\max_{u\in V} \langle u,w\rangle }.
\end{equation}
For this problem instance, they show that with constant probability over the choice of a sample $S=\{V_1,\ldots,V_n\} \iid \D^n$ of size $n=O(d)$, at least one of the vectors in $U$, say $u_0 \in U$, will not be observed in any of the sets $V_1,\ldots,V_n$, namely $u_0 \in U \setminus \bigcup_{i=1}^n V_i$.
Finally, they prove that such a vector $u_0$ is in fact an $\Omega(1)$-bad ERM (for which the generalization gap is $\Omega(1)$). 

To see why this is the case, note that, 
since every vector $u\in U$ is in every training example $V_i$ with probability $\frac{1}{2}$, the set $U$ (whose size is exponential in $n$) is large enough to guarantee the existence of a vector $u_0 \notin \bigcup_{i=1}^n V_i$ with constant probability.
Consequently, the empirical loss of such $u_0$ equals $\frac12$ (since $\abr{u_0,v} \leq \tfrac18$ for any $v \in V_i$ and therefore $\hfeld(u_0,V_i)=\tfrac12$ for all $i$). 
However, for a fresh example $V \sim \D$, with probability $\tfrac12$ it holds that $u_0\in V$, and therefore $\hfeld(u_0,V)=1$, in which case the population risk of $u_0$ is at least $=\frac{1}{2}\cdot\frac{1}{2}+\frac{1}{2}\cdot \frac{1}{2}=\frac{3}{4}$ and the generalization gap is therefore at least $\tfrac14$.

\subsection{Ensuring that bad ERMs are reachable by GD}
\label{sec:make_unif_reach}

As \cite{feldman2016generalization} explains in their work, although there exists an ERM with a large generalization gap, it is not guaranteed that such a minimizer is at all reachable by gradient methods, within a reasonable (say, polynomial in $n$) number of steps.
This is because in their construction, the loss function $\hfeld$ remains flat (and equals $\tfrac12$, see \cref{feldman_func}) inside a ball of radius $\Omega(1)$ around the origin, where GD is initialized; within this ball, all models are essentially ``good ERMs'' that admit zero generalization gap.
It remains unclear how to steer GD, with stepsize of order $\eta = O(1/\sqrt{T})$ over $T$ steps, away from this flat region of the loss towards a bad ERM, such as the $u_0$ identified above.

To address this challenge, we modify the construction of \cite{feldman2016generalization} in a fundamental way.  The key idea is 
increase dimensionality and replicate \citeauthor{feldman2016generalization}'s construction in $T$ orthogonal subspaces; this would allow us to decrease, in each of the subspaces, the distance to a bad ERM to only $O(\eta)$, rather than $\Theta(1)$ as before. 
Then, while each of these subspace ERMs is only $\Omega(\eta)$-bad, when taken together, they constitute an $\Omega(1)$-bad ERM in the lifted space whose distance from the origin is roughly $\eta \sqrt{T} = \Theta(1)$, yet is still reachable by $T$ steps of GD.

More concretely, 
we introduce a loss function $h:\R^{d'}\times P(U)\to\R$ (for $d'=\Theta(n)$) that resembles \citeauthor{feldman2016generalization}'s function from \cref{feldman_func} up to a minor adjustment: %
\begin{equation}\label{our_feldman}
    h(w',V)
    =
    \max\cbr[2]{ \tfrac{3}{32}\eta , \max_{u\in V} \abr[1]{u , w'} }
    .
\end{equation}
As in the original construction by \citeauthor{feldman2016generalization}, $V$ here ranges over subsets of a set $U \subseteq \R^{d'}$ of size $2^{\Omega(d')}$, the elements of which are nearly-orthogonal unit vectors.
Then, we construct a loss function in dimension $d = Td'$ by applying $h$ in $T$ orthogonal subspaces of dimension $d'$, denoted $\vecpart{W}{1},\ldots,\vecpart{W}{T}$, as follows:%
\footnote{The summation starts at $\indsec=2$ due to technical reasons that will become apparent later in this proof sketch.}
\begin{equation}\label{uniform_convergence_objective}
    \gdindgam_1(w,V) 
    = 
    \sqrt{\sum_{\indsec=2}^{T} \br[2]{ h(\vecpart{w}{\indsec},V) }^2 }
    ~.
\end{equation}
Here and throughout, $\vecpart{w}{\indsec}$ refers to the $\indsec$'th orthogonal component of the vector $w$, that resides in the subspace $W^{(\indsec)}$.
Finally, the distribution $\D$ is again taken to be uniform over $Z = P(U)$, and a training set is formed by sampling $S=\{V_1,\ldots,V_n\}\sim \cD^n$.  As before, we know that with probability at least $\frac12$, there exists a vector $u_0$ such that $u_0 \in U \setminus \bigcup_{i=1}^n V_i$. 

With this setup, it can be shown that $\gdindgam_1$ is indeed convex and $O(1)$-Lipschitz, and further, that any vector $w$ satisfying $\vecpart{w}{\indsec} = c\eta u_0$ for a sufficiently large constant $c>0$ and $\Omega(T)$-many components $\indsec$, is an $\Omega(1)$-bad ERM with respect to $\gdindgam_1$.
The important point is that, unlike in \citeauthor{feldman2016generalization}'s original construction, such bad ERMs are potentially reachable by GD: it is sufficient to guide the algorithm to make a single, small step (with stepsize $\eta$) towards $u_0$ in each subspace $\vecpart{W}{\indsec}$.

\subsection{Memorizing the dataset in the iterate}
\label{setup_encode}
There is one notable obstacle to the plan we just described: 
the vector $u_0$ is determined in a rather complex way by the full description of the training set and it is unclear how to reproduce such a vector through subgradients of the loss function.  Indeed, recall that the only access GD has to the training set is through subgradients of individual functions $g(w, V_1), \ldots, g(w, V_n)$ (and linear combinations thereof), and none of these has direct access to the full training set that could allow for determining a vector $u_0 \in U \setminus \bigcup_{i=1}^n V_i$.

To circumvent this difficulty, another key aspect of our construction involves a mechanism that effectively \emph{memorizes the full training set in the iterate $w$ itself}, using the first few steps of GD. 
For this memorization, we can, for example, further increase the dimension of the domain $W$ and create an ``encoding subspace,'' denoted as $\vecpart{W}{0}$ (and the corresponding component of a vector $w \in W$ is indicated by $\vecpart{w}{\last}$), which is orthogonal to $\vecpart{W}{1}, \ldots, \vecpart{W}{T}$.
In this subspace, each step taken with respect to (a linear combination of) individual gradients $g(w_t, V_i)$ encodes the information on the sets $V_1,\ldots,V_n$ into the iterate $w_t$. Then, since the subgradient oracle receives $w_t$ as input, it can reconstruct the training set encoded in $\vecpart{w_t}{0}$ and recover $u_0$, in every subsequent step.
 
On its own, the task of memorizing the training set is not particularly challenging and can be addressed in a rather straightforward manner.%
\footnote{One simple approach is to utilize a one-dimensional encoding space
to encode every individual training example $V \in P(U)$ in the least significant bits of $\vecpart{w_t}{0}$, in a way that guarantees there are no collisions between different possible values of such sets.
This allows the subgradient oracle to calculate the specific $u_0$ from the training set encoded in the least significant bits of $\vecpart{w_t}{0}$ and return it as a subgradient.}
What turns out to be more challenging is to design the encoding in such a way that $u_0$ is realized as the \emph{unique} subgradient (i.e., the gradient) of the loss function.  This would be crucial for establishing that our lower bound is valid for \emph{any} subgradient oracle, and not only for an adversarially chosen one (as well for making the construction differentiable; we discuss this later on, in \cref{togetheranddiff}).

Let us describe an encoding mechanism where $u_0$ acts as the unique subgradient at $w_1=0$.
We will employ an encoding subspace $\vecpart{W}{0}$ of dimension $\Theta(n^2)$, and augment samples with a number $j \in [n^2]$, drawn uniformly at random; namely, each sample in the training set is now a pair $(V_i, j_i) \in P(U) \times [n^2]$, for $i=1,\ldots,n$.
We then create an encoding function $\phi : P(U)\times[n^2]\to \vecpart{W}{0}$ such that $\phi(V,j)$ maps the set $V$ into the $j$'th ($2$-dimensional) subspace of the encoding space. 
The role of $j$ is to ensure that, with constant probability, sets in the training sample are mapped to distinct subspaces of the encoding space $\vecpart{W}{0}$, and thus can be uniquely inferred given the encoding.
To implement the encoding within the optimization process, we introduce the following term into the loss:
\begin{equation}
\label{endcoding_term_gd}
    \gdindgam_2(w,(V,j))
    \eqq 
    \langle -\phi(V,j),\wenc \rangle 
    .
\end{equation}
Following a single step of GD, the iterate becomes $\vecpart{w_2}{0}=\frac{\eta}{n}\sum_{i=1}^n \phi(V_i,j_i)$, and by the properties of the encoding $\phi$ it is then possible, with constant probability, to fully recover the sets $V_1, \ldots, V_n$ in the training set, given the iterate $w_2$ . 

Next, 
we introduce an additional term into the loss function, whose role is to ``decode'' the training set $V_1, \ldots, V_n$ from $w_t$ and produce a vector $u_0\in U\setminus\bigcup_{i=1}^nV_i$ as a subgradient.
For this, we represent every potential training set using a vector $\psi\in \Psi\subseteq \mathbb B^{2n^2}$, and define a mapping $\alpha: \R^{2n^2}\to U$ that, for every $\psi\in\Psi$, provides a vector $\alpha(\psi) \in U$ that does not appear in any of the sets $V_i$ in the training sample associated with $\psi$ (if such a vector exists).
Finally, we add the following term to the loss function, 
\begin{equation} \label{decoding_term_gd}
    \gdindgam_3(w)
    \eqq 
    \max\cbr{ \delta_1,\max_{\psi\in \Psi} \cbr{ \langle \psi,\wenc \rangle - \beta\langle \alpha(\psi), \vecpart{w}{1}\rangle } }
    ,
\end{equation}
where $\beta,\delta_1>0$ are small predefined constants. %
We can show, assuming that in the first step, the training set was encoded to the iterate ($\vecpart{w_2}{0}=\frac{\eta}{n}\sum_{i=1}^n \phi(V_i,j_i)$), that for a suitable choice of the encoder ($\phi$) and decoder ($\psi$ and $\alpha$), in the following iteration, the vector 
$\psi^*\in\Psi$ that represents the actual training set $V_1, \ldots, V_n$ is realized as a unique maximizer in \cref{decoding_term_gd}, which in turn triggers a gradient step along $u_0 \eqq \alpha(\psi)$ in the subspace $W^{(1)}$.

\subsection{Making GD converge to a bad ERM}
\label{sec_sktch_stab}

Our final task is to finally make GD converge to a ``bad ERM,'' namely to a model $w$ such that $\vecpart{w}{\indsec} = c\eta u_0$  for a sufficiently large constant $c>0$ and $\Omega(T)$-many values of $\indsec$, assuming it was successfully initialized at $w$ with $\vecpart{w}{1}=c_1u_0$ (and $\vecpart{w}{\indsec}=0$ for $\indsec>1$) as we just detailed.
We will accomplish this by forcing GD into making a single step towards $u_0$ in $\Omega(T)$ of the subspaces $\vecpart{W}{1}, \ldots, \vecpart{W}{T}$.

To this end, we employ a variation of a technique used in previous lower bound constructions~\citep{bassily2020stability,amir2021sgd,benignunderfit} to induce gradient instability around the origin.
In these prior instances, however, the potential directions of progress---analogous to vectors in our set $U$---were perfectly orthogonal (and thus, the dimension of space was required to be exponential in $n$).
In contrast, in our scenario the vectors in $U$ are only approximately orthogonal, and directly applying this approach could lead to situations where gradient steps from consecutive iterations may interfere with progress made in correlated directions in previous iterations.

To address this, we introduce a careful variation on this technique, based on augmenting the loss function with the following convex term:
\begin{equation} \label{stability_term_GD}
    \gdindgam_4(w)
    =
    \max\cbr{ \delta_2,\max_{u \in U,\, \indsec < T} \cbr{ \tfrac{3}{8}\langle u,\vecpart{w}{\indsec}\rangle - \tfrac{1}{2}\langle u,\vecpart{w}{\indsec+1}\rangle }}
    ,
\end{equation}
where $\delta_2>0$ is a small constant (that will be set later).
The key idea here is that following the initialization stage, the inner maximization above is always attained at the same vector $u=u_0$, and for values of $\indsec$ that increase by $1$ in every iteration of GD. 
Consequently, subgradient steps with respect to this term will result in making a step towards $u_0$ in each of the components $\vecpart{w}{1}, \vecpart{w}{2}, \ldots$ one by one, avoiding interference between consecutive steps.
At the end of this process, there are $\Omega(T)$ values of $\indsec$ such that $\vecpart{w}{\indsec} = \tfrac{1}{8}\eta u_0$, which is what we set to achieve.

In some more detail, assuming GD is successfully initialized at a vector $w$ with $\vecpart{w}{1}=c_1u_0$ and $\vecpart{w}{\indsec}=0$ for $\indsec>1$ ($c_1 >0$ is a small constant), note that the maximum in \cref{stability_term_GD} is uniquely attained at $\indsec=1$ and $u=u_0$.
Consequently, the subgradient of $\gdindgam_4$ at initialization is a vector $g$ such that $\vecpart{g}{1}=\frac{3}{8}u_0$, $\vecpart{g}{2}=-\frac{1}{2}u_0$ (and $\vecpart{g}{\indsec}=0$ for $\indsec \neq 1,2$), and 
taking a subgradient step with stepsize $\eta$ 
results in
$\vecpart{w}{1} = \br[0]{\eta\beta-\frac{3\eta}{8}} u_0$ 
and $\vecpart{w}{2}=\frac{\eta}{2}u_0$ (for $\indsec \neq 1,2$, $\vecpart{w}{\indsec}$ remains as is). 
In each subsequent iteration, the maximization in \cref{stability_term_GD} is attained at an index~$\indsec$ for which $\vecpart{w}{\indsec}=\frac{\eta}{2}u_0$ and at $u=u_0$.%
\footnote{For this value of $\indsec$, it holds that $\max_{u \in U} \cbr{\tfrac{3}{8}\langle u,\vecpart{w}{\indsec}\rangle - \tfrac{1}{2}\langle u,\vecpart{w}{\indsec+1}\rangle} = \frac{3}{16}\eta$ (attained at $u=u_0$), whereas for other values of $\indsec$ this quantity is at most $\approx \frac18 \cdot \frac38 \eta + \frac18 \cdot \left(\frac12\right)^2 \eta < \frac{1}{8} \eta$ due to the near-orthogonality of vectors in $U$.
It follows that 
the subgradient is a vector $g$ such that $\vecpart{g}{\indsec}=\frac{3}{8}u_0$, $\vecpart{g}{\indsec+1}=-\frac{1}{2}u_0$ (and zeros elsewhere).} 
Subsequently, every gradient step adds $-\frac{3\eta}{8}u_0$ to $\vecpart{w}{\indsec}$ and $\frac{\eta}{2}u_0$ to $\vecpart{w}{\indsec+1}$ and results in $\vecpart{w}{2} = \vecpart{w}{3} = \ldots = \vecpart{w}{\indsec} = \frac{\eta}{8}u_0$ and $\vecpart{w}{\indsec+1}=\frac{\eta}{2}u_0$ (whereas for all $s>\indsec+1$, $\vecpart{w}{s}$ remains zero).

Finally, we note that the GD dynamics we described ensure that the iterates $w_1,\ldots,w_T$ remain strictly within the unit ball $\unitballd$, even when the algorithm does not employ any projections. As a consequence, the construction we described applies equally to a projected version of GD, with projections to the unit ball, and the resulting lower bound will apply to both versions of the algorithm.

\subsection{Putting things together}
\label{togetheranddiff}

We can now integrate the ideas described in \cref{prelim,sec:make_unif_reach,setup_encode,sec_sktch_stab} into a construction of a learning problem where GD overfits the training data (with constant probability), that would serve to prove our lower bound for gradient decent.
To summarize this construction:
\begin{itemize}
    \item The examples in the learning problem are parameterized by pairs $(V,j) \in Z \eqq P(U) \times [n^2]$, where $U$ is the set of nearly-orthogonal vectors described in \cref{prelim}, and $P(U)$ is its power set;
    \item The population distribution $\gdindD$ is uniform over pairs $(V,j) \in Z$, namely such that $V \sim \Unif(P(U))$ (i.e., $V$ is formed by including every element $u\in U$ independently with probability $\tfrac{1}{2}$) and $j \sim \Unif([n^2$]);
  \item The loss function in this construction, $\gdindf: W \times (P(U)\times [n^2])\to \R$, is then given by:
\begin{align} \label{loss_def_gd_in_setup_nondiff}
    \forall ~  (V,j)\in Z , \qquad
    \gdindf(w, (V,j))
    &\eqq
    \gdindgam_{1}(w, V) + \gdindgam_{2}(w, (V,j))+ \gdindgam_{3}(w)+ \gdindgam_{4}(w)
    ,
\end{align} 
with the terms $\gdindgam_{1},\gdindgam_{2},\gdindgam_{3},\gdindgam_{4}$ as defined in \cref{uniform_convergence_objective,endcoding_term_gd,decoding_term_gd,stability_term_GD} respectively.
\end{itemize}
With a suitable choice of parameters, this construction serves to proving \cref{GDmain}.
We remark that, while $f$ in this construction is convex and $O(1)$-Lipschitz, it is evidently non-differentiable.
For obtaining a construction with a differentiable objective that maintains the same lower bound and establish the full claim of  \cref{GDmain},
we add one final step of randomized smoothing of the objective. 
This argument hinges on the fact that the subgradients of $f$ are unique \emph{along any possible trajectory of GD}, so that smoothing in a sufficiently small neighborhood would preserve gradients along any such trajectory (and thus does not affect the the dynamics of GD), while making the objective differentiable everywhere. 
The full proof of \cref{GDmain} is deferred to \cref{sec:const_diff}.

\subsection{Additional adjustments for SGD}
\label{SGD_sketch}

Moving on to discuss our second main result for SGD, we provide here a brief overview of the necessary modifications upon the construction for GD to establish the lower bound for SGD in \cref{SGDmain}; further details can be found in \cref{sec:sgd_appendix}. In the case of SGD, our goal is to establish underfitting: namely, to show that the algorithm may converge to a solution with an excessively large empirical risk despite successfully converging on the population risk.

The main ideas leading to our construction for SGD are similar to what we discussed above, but there are several necessary modifications that arise from the fact that, whereas in GD the entire training set is revealed already in the first iteration, in SGD it is revealed sequentially, one training sample at a time.
In particular, unlike in the case of GD where it is possible to identify a bad ERM $u_0$ at the few first steps of the algorithm and steer the algorithm in this direction in every subspace $\vecpart{W}{1}, \vecpart{W}{2}, \ldots$, for SGD the required progress direction in $\vecpart{W}{t},$ represented as a ``bad solution'' $u_t$, can be only determined in the $t$'th step based on the encoded training set up to that point, $V_1, \ldots, V_{t-1}$.
As a result, it is crucial to modify the loss function such that the process of decoding such $u_t$ from $V_1, \ldots, V_{t-1}$ occurs in every iteration $t$.

Another essential adjustment involves identifying a solution with a large generalization gap (namely large empirical risk, low population risk) and guiding the SGD iterates to converge to such a solution.
Considering the function $\gdindgam_1$ defined in \cref{uniform_convergence_objective}, such a solution is represented by a vector $u\in U$ that appears in all of the sets $V_1,\ldots,V_n$ in the training sample. 
However, since $u_t$ cannot depend on future examples, our goal within every subspace $\vecpart{W}{t}$ is to take a single gradient step towards a vector $u_t$ present only in sets up to that point, namely in $\bigcap_{i=1}^{t-1} V_i$ (note that such $u_t$ maximizes the corresponding loss functions $\gdindgam_1(w,V_1)\ldots \gdindgam_1(w,V_{t-1})$). 
Additionally, to ensure that gradients for future loss functions remain zero and do not affect the algorithm's dynamics, it is necessary to to guarantee that $u_t \in \bigcap_{i=t}^n \overline{V}_i$; in other words, we are looking for a solution $u_t \in \bigcap_{i=1}^{t-1} V_i \cap \bigcap_{i=t}^n \overline{V}_i$.
For ensuring that such a vector actually exists (with constant probability), we lift the dimension of the set $U$ and the subspaces $\{\vecpart{W}{\indsec}\}_{\indsec=1}^{n}$ to $d=\Theta(n\log n)$ (instead of $\Theta(n)$ as before) and modify the distribution $\gdindD$ so as to have that $V$ is sampled such that every element $u\in U$ is included in $V$ independently with probability $\ifrac{1}{4n^2}$. %

With these adaptations in place, we can obtain \cref{SGDmain}; for more details we refer to \cref{sec:sgd_appendix}.

\section{Overfitting of GD: Proof of \texorpdfstring{\cref{GDmain}}{Theorem \ref{GDmain}}}
\label{sec:GD}
In this section, we provide a formal proof of our main result for GD.
We establish a lower bound of $\Omega\br[0]{\eta \sqrt{T}}$ for the population loss of GD, where the hard loss function is defined in a $d$-dimensional Euclidean space, where the dimension $d$ is polynomial in the number of examples $n$.
In \cref{sec:const_diff} we complete the proof of \cref{GDmain}, by showing a lower bound of $\min\cbr[0]{ \ifrac{1}{\eta T},1 }$, and a construction of a differentiable objective that holds the lower bound stated in \cref{GDmain}.

\paragraph{Full construction.}
For the first step, for a dimension $d'$ that will be set later, we use a set of approximately orthogonal vectors in $\R^{d'}$ with size (at least) exponential in $d'$,  the existence of which is given by the following lemma, adapted from \citet{feldman2016generalization}.

\begin{lemma}
\label{lem:set_direc_exists}
For any $d' \geq 256$, 
there exists a set $U_{d'} \subseteq \R^{d'}$, with 
$|U_{d'}| \geq 2^{d'/178}$, such that for all $u, v\in U_{d'}, u\neq v$, it holds that $|\abr{u, v}|\leq \frac{1}{8}$.
\end{lemma}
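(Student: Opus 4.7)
The plan is to apply the probabilistic method with random Rademacher vectors. Fix $N := \lceil 2^{d'/178} \rceil$ and sample $u_1, \dots, u_N \in \mathbb{R}^{d'}$ independently, with the coordinates of each $u_i$ drawn i.i.d.\ uniformly from $\{-1/\sqrt{d'},\, +1/\sqrt{d'}\}$. By construction, each $u_i$ is automatically a unit vector. It suffices to show that the event $\mathcal{E} := \{\,|\langle u_i, u_j\rangle| \le 1/8 \text{ for all } i \neq j\,\}$ has positive probability; any realization in $\mathcal{E}$ then yields the desired $U_{d'}$.

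For the pairwise analysis, fix distinct $i, j$ and write $\langle u_i, u_j\rangle = \frac{1}{d'}\sum_{k=1}^{d'} \epsilon_k$, where $\epsilon_k := d' \cdot u_i[k]\, u_j[k] \in \{\pm 1\}$ are i.i.d.\ Rademacher random variables. Hoeffding's inequality then yields
\[
    \Pr\!\left[\,|\langle u_i, u_j\rangle| > \tfrac{1}{8}\,\right]
    \;=\;
    \Pr\!\left[\,\bigl|\textstyle\sum_{k=1}^{d'}\epsilon_k\bigr| > \tfrac{d'}{8}\,\right]
    \;\le\; 2\exp(-d'/32).
\]
A union bound over the $\binom{N}{2} \le N^2$ pairs then gives
\[
    \Pr[\neg \mathcal{E}]
    \;\le\; 2 N^2 \exp(-d'/32)
    \;=\; 2\exp\!\left(\tfrac{2\ln 2}{178}\, d' \;-\; \tfrac{d'}{32}\right).
\]

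What remains is to verify that this bound is strictly less than $1$ under the assumption $d' \ge 256$. This reduces to the elementary numerical inequality $\tfrac{2\ln 2}{178} < \tfrac{1}{32}$, which produces a strictly negative coefficient for $d'$ in the exponent and, combined with $d' \ge 256$, easily dominates the additive $\ln 2$ from the leading factor of $2$. This arithmetic check is really the only ``obstacle'' in the proof—the underlying probabilistic argument is a textbook application of Hoeffding plus a union bound, and the specific denominator $178$ is simply chosen to leave slack against the Hoeffding exponent $1/32$. Once $\Pr[\neg \mathcal{E}] < 1$, at least one realization satisfies every pairwise constraint, producing a set $U_{d'} \subseteq \mathbb{R}^{d'}$ of unit vectors with $|U_{d'}| \ge 2^{d'/178}$ and $|\langle u, v\rangle| \le 1/8$ for all $u \ne v$, as claimed.
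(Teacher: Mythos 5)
Your overall strategy is exactly the paper's: random sign vectors with entries $\pm 1/\sqrt{d'}$, Hoeffding on each pairwise inner product, then a union bound. However, there is a genuine error in the Hoeffding step, and it is not cosmetic. Writing $\langle u_i,u_j\rangle = \frac{1}{d'}\sum_{k=1}^{d'}\epsilon_k$ with $\epsilon_k$ Rademacher, Hoeffding gives $\Pr\bigl[\bigl|\sum_k \epsilon_k\bigr| > d'/8\bigr] \le 2\exp\bigl(-2(d'/8)^2/(4d')\bigr) = 2\exp(-d'/128)$, not $2\exp(-d'/32)$; you have implicitly used range $1$ rather than range $2$ for the $\epsilon_k$, which costs a factor of $4$ in the exponent. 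This matters because the constant $178$ in the lemma is calibrated against $1/128$, not $1/32$: the decisive inequality is $\tfrac{2\ln 2}{178} < \tfrac{1}{128}$, i.e.\ $2\ln 2\cdot 128 \approx 177.45 < 178$, which holds with a margin of only about $2.5\times 10^{-5}$ per unit of $d'$. Your remark that $178$ ``leaves slack against the exponent $1/32$'' is therefore a symptom of the miscalculation rather than an explanation of the constant.

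Once the exponent is corrected, your union bound as written no longer closes. You bound the failure probability by $2N^2\exp(-d'/128)$, which requires $d'\bigl(\tfrac{1}{128}-\tfrac{2\ln 2}{178}\bigr) > \ln 2$, i.e.\ $d' \gtrsim 2.7\times 10^4$; this fails for all $d'$ between $256$ and roughly $27{,}000$. The paper avoids this by using $\binom{N}{2} < \tfrac12 N^2$, so that the factor $\tfrac12$ exactly cancels the factor $2$ from the two-sided Hoeffding bound and the failure probability is bounded by $N^2 e^{-d'/128} \le \exp\bigl(d'(\tfrac{2\ln 2}{178}-\tfrac{1}{128})\bigr) < 1$. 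Your argument is repairable by making both corrections (exponent $-d'/128$ and the sharper count $\binom{N}{2}$), but as written the numerical verification — which you dismiss as the only ``obstacle'' — is exactly where the proof breaks.
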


Now, let $n$ be the number of examples in the training set. 
We define the set $U\eqq U_{d'}$ to be a set as specified by \cref{lem:set_direc_exists} for $d'=178n$.
Then, as outlined in \cref{sec:sketch}, we define the sample space $Z\eqq \{(V,j): V\subseteq U, j\in[n^2]\}$
and the hard distribution $\gdindD$ as the uniform distribution. 

Moreover, we consider the loss function $\gdindf:\R^d\to \R$ (defined in \cref{loss_def_gd_in_setup_nondiff} for $d\eqq Td'+2n^2=178nT+2n^2$.
This loss function
is convex and $5$-Lipschitz over $\R^d$, as established in the following lemma:
\begin{lemma}
\label{convex_lip}
For every $(V,j) \in Z$, the loss function $\gdindf(w,(V,j))$ is convex and $5$-Lipschitz over $\R^d$ with respect to its first argument.
\end{lemma}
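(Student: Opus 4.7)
The plan is to show convexity and the $5$-Lipschitz bound term by term, exploiting the decomposition $\gdindf = \gdindgam_1 + \gdindgam_2 + \gdindgam_3 + \gdindgam_4$ from \cref{loss_def_gd_in_setup_nondiff}. Convexity of $\gdindgam_2$ is immediate from linearity, and convexity of $\gdindgam_3$ and $\gdindgam_4$ follows at once from their structure as pointwise maxima of affine/linear functions together with a constant. For $\gdindgam_1$, I would first observe that the inner function $h(\cdot, V)$ is a pointwise maximum of linear functions and the positive constant $\tfrac{3}{32}\eta$, hence nonnegative and convex. Writing $\gdindgam_1(w, V) = \|(h(\vecpart{w}{2}, V), \ldots, h(\vecpart{w}{T}, V))\|_2$ and applying the standard convex composition rule---with outer function $y \mapsto \|y\|_2$ on $\mathbb{R}^{T-1}$, which is convex and coordinatewise nondecreasing on the nonnegative orthant---yields convexity of $\gdindgam_1$.

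For the Lipschitz bound, I would bound each summand's Lipschitz constant and add them up. Since each $u \in U$ is a unit vector, $h(\cdot, V)$ is $1$-Lipschitz, so using the orthogonality of the components $\vecpart{w}{k}$ together with the $1$-Lipschitzness of $\|\cdot\|_2$ gives $|\gdindgam_1(w, V) - \gdindgam_1(w', V)| \leq \sqrt{\sum_{k=2}^T \|\vecpart{w}{k} - \vecpart{w'}{k}\|^2} \leq \|w - w'\|$. For $\gdindgam_2$, linearity gives a Lipschitz constant equal to $\|\phi(V, j)\|$, which is at most $1$ by the construction of $\phi$. For $\gdindgam_3$, each affine function inside the maximum has gradient of norm at most $\sqrt{\|\psi\|^2 + \beta^2\|\alpha(\psi)\|^2} \leq \sqrt{1+\beta^2} \leq 1+\beta$, using $\|\psi\| \leq 1$ (since $\psi \in \Psi \subseteq \mathbb{B}^{2n^2}$) and $\|\alpha(\psi)\| = 1$ (since $\alpha(\psi) \in U$). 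For $\gdindgam_4$, each linear function in the maximum has gradient supported on $\vecpart{W}{k} \oplus \vecpart{W}{k+1}$ of norm $\sqrt{(3/8)^2 + (1/2)^2} = 5/8$, yielding $\tfrac{5}{8}$-Lipschitzness.

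Summing these bounds yields an overall Lipschitz constant of at most $1 + 1 + (1+\beta) + \tfrac{5}{8} = \tfrac{29}{8} + \beta$, which is below $5$ for the small constant $\beta$ fixed in the construction (any $\beta \leq \tfrac{11}{8}$ suffices). The only step requiring any subtlety is the analysis of $\gdindgam_1$: the Lipschitz piece is a clean chain of $1$-Lipschitz maps combined with the orthogonality of subspaces, while the convexity piece requires invoking the coordinatewise monotonicity of $\|\cdot\|_2$ on the nonnegative orthant (legitimate here because $h \geq \tfrac{3}{32}\eta > 0$). Everything else reduces to routine bookkeeping with the norms of the fixed encoding/decoding vectors $\phi(V,j)$, $\psi$, and $\alpha(\psi)$ introduced earlier.
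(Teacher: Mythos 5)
Your proof is correct and follows essentially the same route as the paper's: a term-by-term decomposition in which $\gdindgam_2,\gdindgam_3,\gdindgam_4$ are handled as (maxima of) affine functions with bounded gradients, and $\gdindgam_1$ is handled via convexity of $h$, monotonicity of the $\ell_2$ norm on the nonnegative orthant, and the orthogonality of the subspaces. Your per-term constants are slightly sharper than the paper's (which uses $1+1+2+1=5$), but the conclusion is the same and the argument is sound.
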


For this construction of distribution and loss function, we obtain the following theorem.

\begin{theorem}
\label{nonspec_lower_bound_GD}
Assume that $n>0$, $T>3200^2$ and $\eta\leq\frac{1}{\sqrt{T}}$. %
Consider the distribution $\gdindD$ and the loss function $\gdindf$ that defined in \cref{togetheranddiff} for $d=178nT+2n^2$, $\varepsilon = \frac{1}{n^2}
\br[0]{1-\cos\br[0]{\frac{2\pi}{|P(U)|}}}$, $\beta=\frac{\epsilon}{4T^2}$, $\delta_1=\frac{\eta}{2n}$ and $ \delta_2=\frac{3\eta\beta}{16}$.
Then, %
 for Unprojected GD (cf. \cref{gd_update_rule} with $W=\R^d$) on $\gdindempf$, initialized at $w_1=0$ with step size $\eta$, we have,  %
with probability at least $\frac{1}{6}$ over the choice of the training sample:
\begin{enumerate}[label=(\roman*)]
    \item 
    The iterates of GD remain within the unit ball, namely $w_t \in \unitballd$ for all $t=1,\ldots,T$;
    \item
    For all $m=1,\ldots,T$, the $m$-suffix averaged iterate has:
    \[ 
        F(\suff)
        - F(\opt) 
        = 
        \Omega\br[1]{\eta\sqrt{T}}
        .
    \]
\end{enumerate}
\end{theorem}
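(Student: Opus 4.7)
My plan is to establish \cref{nonspec_lower_bound_GD} by conditioning on a constant-probability good event $E$ on the sample and then carrying out a deterministic trajectory analysis of GD on $\empf$. The event $E$ will have two parts: (a) at least one $u_0 \in U$ satisfies $u_0 \notin \bigcup_{i=1}^n V_i$, and (b) the auxiliary indices $j_1,\ldots,j_n$ are pairwise distinct, so that $(\eta/n)\sum_i \phi(V_i,j_i)$ losslessly identifies the multiset $\{(V_i,j_i)\}$. Part (a) I would prove by a Paley--Zygmund second-moment argument using $|U|\geq 2^{d'/178}=2^n$ and the pairwise independence of the memberships $\mathbf 1\{u\in V_i\}$; part (b) is a standard birthday bound giving $\Pr\geq 1/2$. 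Their intersection then has probability at least $1/6$, as required.

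The main technical work is a deterministic induction on the GD trajectory under $E$. At $w_1=0$ each of $\ell_1,\ell_3,\ell_4$ is dominated by its constant cap ($3\eta/32$, $\delta_1=\eta/(2n)$, $\delta_2=3\eta\beta/16$), so only $\ell_2$ contributes and $w_2^{(0)}=(\eta/n)\sum_i\phi(V_i,j_i)$ with all other blocks zero. At step $2$, by injectivity of $\phi$ under (b), the vector $\psi^\star\in\Psi$ encoding the actual sample uniquely maximizes $\langle\psi,w_2^{(0)}\rangle-\beta\langle\alpha(\psi),w_2^{(1)}\rangle$ above $\delta_1$, yielding $u_0:=\alpha(\psi^\star)$; the $\ell_3$-gradient loads $w^{(1)}$ with $\eta\beta\,u_0$ while $\ell_4$ is still capped, so $w_3^{(1)}=\eta\beta\,u_0$ and $w_3^{(k)}=0$ for $k\geq 2$. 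From iteration $t\geq 3$ onward I will maintain the invariant
\[
    w_t^{(k)}=\tfrac{\eta}{8}u_0 \text{ for } 2\leq k<k_t,\quad w_t^{(k_t)}=\tfrac{\eta}{2}u_0,\quad w_t^{(k)}=0 \text{ for } k>k_t,
\]
with $k_t=t-2$ and $w_t^{(1)}$ of order $\eta\beta$. The key computation at each step is that $(u_0,k_t)$ uniquely attains the $\ell_4$-maximum with value $\tfrac{3}{16}\eta$, whereas any $(u',k')$ with $u'\neq u_0$ is bounded by $(\tfrac{3}{8}+\tfrac{1}{2})\cdot\tfrac{1}{8}\cdot\tfrac{\eta}{2}<\tfrac{3}{16}\eta$ via the near-orthogonality $|\langle u,u'\rangle|\leq 1/8$ from \cref{lem:set_direc_exists}, and $(u_0,k')$ with $k'\neq k_t$ is bounded similarly since either $w_t^{(k'+1)}=0$ or $w_t^{(k')}$ has small magnitude. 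In parallel I will check that $\psi^\star$ remains the $\ell_3$-maximizer (the $-\beta\langle u_0,w_t^{(1)}\rangle$ perturbation is $O(\eta\beta^2)$) and that $\ell_1$ stays flat on the training sample because $u_0\notin V_i$ and $\langle u,w_t^{(k)}\rangle\leq\tfrac{1}{8}\cdot\tfrac{\eta}{2}<\tfrac{3}{32}\eta$ for every $u\in V_i$. The inductive step then reduces to a one-line gradient computation that advances $k_t$ by one.

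For conclusion (i), the invariant gives $\|w_t\|^2\leq\|w_t^{(0)}\|^2+\sum_{k\geq 1}\|w_t^{(k)}\|^2=O(\eta^2)+T\cdot(\eta/2)^2<1$ for $\eta\leq 1/(5\sqrt T)$, so iterates lie in $\unitballd$ and the projected and unprojected dynamics coincide. For (ii), since suffix averaging is linear, $w_{T,m}^{(k)}$ is a nonnegative multiple of $u_0$ of norm at least $\eta/8$ for all but $O(1)$ of the indices $k\in\{2,\ldots,T\}$, for every $m\leq T$. For a fresh $(V,j)\sim\gdindD$ the event $u_0\in V$ has probability $\tfrac12$, on which $h(w_{T,m}^{(k)},V)\geq\eta/8$ for those $\Omega(T)$ indices, giving $\ell_1(w_{T,m},V)\geq c\,\eta\sqrt T$; taking expectation and noting that $\ell_2,\ell_3,\ell_4$ differ between $w_{T,m}$ and $w_*=0$ by at most $O(\eta)$ yields $F(w_{T,m})-F(w_*)=\Omega(\eta\sqrt T)$. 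The principal obstacle in this plan is the induction in paragraph two: one must verify that the $\ell_4$-argmax shifts by exactly one subspace per iteration and never re-enters an earlier one, while simultaneously controlling the cross-terms from the near-orthogonal family $U$ through all $T$ steps; once that shift-by-one structure is nailed, the boundedness and risk calculations are short.
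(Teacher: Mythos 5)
Your overall architecture is the same as the paper's: the good event $\cE$ (an unseen direction exists, and the indices $j_1,\ldots,j_n$ are distinct) with probability $\geq\frac16$, a deterministic induction characterizing the trajectory under $\cE$, and then the boundedness and risk computations. However, there are two concrete problems. First, your invariant asserts that $\vecpart{w_t}{1}$ stays ``of order $\eta\beta$'' throughout. This is false for $t\geq 4$: the very first activation of $\gdindgam_4$ occurs at block $\indsec=1$ (triggered by the decoder having deposited $\eta\beta u_0$ there), so block $1$ absorbs the $\tfrac38 u_0$ part of that gradient and thereafter $\vecpart{w_t}{1}=\br[1]{-\tfrac38+\frac{t-2}{4}\frac{\epsilon}{T^2}}\eta u_0$, i.e.\ of magnitude $\approx\tfrac38\eta$, not $O(\eta\beta)$. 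Consequently your estimate that the perturbation $-\beta\langle\alpha(\psi),\vecpart{w}{1}\rangle$ in the $\gdindgam_3$ maximization is $O(\eta\beta^2)$ is wrong (it is $\Theta(\eta\beta)$); the conclusion survives because $\eta\beta=\frac{\eta\epsilon}{4T^2}$ is still far below the $\eta\epsilon$ decoding margin, but as written your verification of both the $\gdindgam_3$-argmax and the $\gdindgam_4$-argmax at $\indsec'=1$ rests on an incorrect value of $\vecpart{w_t}{1}$ and must be redone with the true (negative, $\Theta(\eta)$-sized) value.

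Second, and more seriously, the final step for general suffix length $m$ is based on a false claim: it is not true that ``$\vecpart{\suff}{\indsec}$ has norm at least $\eta/8$ for all but $O(1)$ indices, for every $m$.'' Averaging dilutes the later blocks: one gets $\vecpart{\suff}{\indsec}=\frac{\eta(T-\indsec+2)}{8m}u_0$ for $\indsec\geq T-m-1$, so for $m=T$ roughly three quarters of the blocks have norm below the cap $\frac{3\eta}{32}$ and contribute \emph{nothing} beyond the baseline $F(0)$ even when $u_0\in V$. Your argument therefore only establishes the bound for small $m$ (e.g.\ the last iterate). For general $m$ one must compare $\E_V\,\gdindgam_1(\suff,V)$ to the baseline $\frac{3\eta}{32}\sqrt{T}$ block by block, exploiting that the undiluted blocks $\indsec\leq T-m-2$ and the blocks with $\frac{T-\indsec+2}{\max(m,T)}\geq\tfrac34$ together always yield an $\Omega(\eta\sqrt{T})$ gain; this is exactly the lengthy case analysis in the paper's \cref{sec:proof_nondiff_GD}, and it cannot be skipped. (Your Paley--Zygmund suggestion for the existence of $u_0$ is also unnecessary---memberships are fully independent, so the direct computation $(1-2^{-n})^{2^n}\leq e^{-1}$ suffices---but that is harmless.)
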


\paragraph{Algorithm's dynamics.}

We next give a key lemma that characterizes the trajectory of GD when applied to the empirical risk $\gdindempf$ formed by the loss function $f$ and the training sample $S=\{(V_i,j_i)\}_{i=1}^n$.
The characterization holds under a certain ``good event'', given as follows:
\begin{align}
\label{good_event_gd}
    \cE \eqq \cbr[1]{
        \bigcup\nolimits_{i=1}^n V_i\neq U
    } \cap \cbr[1]{
        j_k\neq j_l,\;
        \forall k\neq l
    }.
\end{align}
In words, under the event $\cE$ there exists at least one  ``bad direction'' (which is a vector in the set $U \setminus \bigcup\nolimits_{i=1}^n V_i$) and there is no collision between the indices $j_1,\ldots,j_n$. In the following lemma we show that $\cE$ holds with a constant probability. The proof is deferred to \cref{sec:proofs_alg_GD}.
\begin{lemma} 
\label{intersection_events_gd}
For the event $\cE$ defined in \cref{good_event_gd}, it holds that $\Pr(\cE) \geq \frac{1}{6}$.
\end{lemma}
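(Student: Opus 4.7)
My plan is to decompose $\cE$ as the intersection of two independent events and bound each probability separately. Set $A \eqq \{\bigcup_{i=1}^n V_i \neq U\}$ and $B \eqq \{j_k \neq j_l \text{ for all } k \neq l\}$. Since the sample $(V_i, j_i) \iid \gdindD$ factors as a product of independent uniform draws over $P(U)$ and $[n^2]$, the events $A$ and $B$ depend on disjoint coordinates of the sample and are therefore independent, so it suffices to lower bound $\Pr(A)$ and $\Pr(B)$ separately and multiply.

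For $A$, I would exploit the product structure of $\Unif(P(U))$: drawing $V \sim \Unif(P(U))$ is equivalent to including each $u \in U$ independently with probability $\tfrac12$, so for any fixed $u$, $\Pr(u \notin \bigcup_i V_i) = 2^{-n}$, and these survival events are independent across distinct $u \in U$. Consequently
\[
\Pr(A^c) \;=\; (1 - 2^{-n})^{|U|} \;\leq\; (1 - 2^{-n})^{2^n} \;\leq\; e^{-1},
\]
using $|U| \geq 2^{d'/178} = 2^n$ from \cref{lem:set_direc_exists} (applied with $d' = 178n$) together with the standard bound $(1-x)^{1/x} \leq e^{-1}$. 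This gives $\Pr(A) \geq 1 - e^{-1}$.

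For $B$, I would apply a standard birthday-type estimate: since the $j_i$'s are i.i.d.\ uniform on $[n^2]$,
\[
\Pr(B) \;=\; \prod_{i=0}^{n-1}\left(1 - \frac{i}{n^2}\right) \;\geq\; 1 - \sum_{i=0}^{n-1}\frac{i}{n^2} \;=\; 1 - \frac{n-1}{2n} \;\geq\; \frac{1}{2},
\]
where the first inequality is the elementary bound $\prod_i(1-x_i) \geq 1 - \sum_i x_i$ for $x_i \in [0,1]$.

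Combining the two bounds yields $\Pr(\cE) \geq (1 - e^{-1}) \cdot \tfrac{1}{2} > \tfrac{1}{6}$. I do not expect any genuine obstacle here beyond setting up the independence bookkeeping carefully; the essential content of the lemma is that \cref{lem:set_direc_exists} delivers at least $2^n$ nearly-orthogonal directions, which is precisely the threshold at which a constant fraction of realizations still fail to cover $U$ via $n$ independent fair-coin trials per coordinate.
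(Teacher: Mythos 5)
Your proposal is correct and follows essentially the same route as the paper: split $\cE$ into the coverage event and the no-collision event, use independence, bound the first via $(1-2^{-n})^{2^n}\le e^{-1}$ using $|U|\ge 2^n$, and bound the second by a birthday estimate. Your handling of the collision probability via the exact product $\prod_{i=0}^{n-1}(1-i/n^2)\ge 1-\sum_i i/n^2$ is in fact slightly cleaner than the paper's computation, but the argument is the same in substance.
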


Under this event, the dynamics of GD are characterized as follows.

\begin{lemma}
\label{diff_GD_expression}
Assume the conditions of \cref{nonspec_lower_bound_GD}, and consider the iterates of unprojected GD on $\gdindempf$, with step size $\eta \leq \ifrac{1}{\sqrt{T}}$ initialized at $w_1=0$. 
Under the event $\cE$, we have for all $t\geq 5$ that
\begin{align}
\label{eq:gd_exp}
    \vecpart{w_t}{\indsec} = 
    \begin{cases}
        \frac{\eta}{n}\sum_{i=1}^n\phi(V_i,j_i) 
        &\quad \indsec=\text{\last} ;
        \\
        \br[1]{-\tfrac{3}{8} + \frac{t-2}{4}\frac{\epsilon}{T^2} } \eta u_{0}
        &\quad \indsec=1 ;
        \\
        \frac{1}{8}\eta u_{0} 
        &\quad 2\leq \indsec\leq t-3 ;
        \\
        \frac{1}{2}\eta u_{0} 
        &\quad \indsec=t-2 ;
        \\
        0 
        &\quad t-1\leq \indsec\leq T ,
    \end{cases}
\end{align}
where $u_0$ is a vector such that $u_0\in U \setminus \bigcup_{i=1}^n V_i$.
\end{lemma}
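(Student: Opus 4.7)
I would prove the lemma by induction on $t$, handling the first few iterates (up to $t=5$) by direct step-by-step computation as the base case and then propagating the stated form from $t$ to $t+1$ for $t\geq 5$. At $w_1=0$ only $\gdindgam_2$ has a nonzero empirical subgradient, since $\gdindgam_1$ and $\gdindgam_4$ sit at their constant floors and $\gdindgam_3$ attains $\delta_1$; the resulting gradient is $-\frac{1}{n}\sum_i \phi(V_i,j_i)$ in the encoding subspace, giving $\vecpart{w_2}{\last}=\frac{\eta}{n}\sum_i \phi(V_i,j_i)$ and zero elsewhere. I would then march through $w_3,w_4,w_5$ to set up the propagating ``wave'' and verify the claim at $t=5$; throughout the analysis I decompose the aggregate subgradient as $\sum_{r=1}^{4}\nabla\gdindgam_r$ and treat each of the four terms separately.

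\textbf{Per-iterate gradient decomposition (inductive step).} Assuming the stated form at $w_t$ for some $t\geq 5$, I would verify three claims. \emph{(i)} $\nabla\gdindgam_1=0$: under event $\cE$, $u_0\notin V_i$ for every $i$, so by the $\tfrac18$-near-orthogonality of $U$ and $\|\vecpart{w_t}{\indsec}\|\leq\tfrac{\eta}{2}$ for $\indsec\geq 2$, we get $\max_{u\in V_i}\langle u,\vecpart{w_t}{\indsec}\rangle\leq\tfrac{\eta}{16}<\tfrac{3\eta}{32}$, placing every $h(\vecpart{w_t}{\indsec},V_i)$ in its constant region. \emph{(ii)} The encoding-subspace parts of $\nabla\gdindgam_2$ and $\nabla\gdindgam_3$ cancel, while $\gdindgam_3$ also supplies a $-\beta u_0$ push in $\vecpart{W}{1}$: under $\cE$ the indices $j_1,\dots,j_n$ are distinct, so the encodings $\phi(V_i,j_i)$ live in orthogonal $2$D blocks and the sample is uniquely decodable from $\vecpart{w_t}{\last}$, making the $\psi^{\ast}\in\Psi$ that represents the true sample the unique maximizer of $\langle\psi,\vecpart{w_t}{\last}\rangle-\beta\langle\alpha(\psi),\vecpart{w_t}{1}\rangle$. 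Hence $\nabla\gdindgam_3$ contributes $+\psi^{\ast}=\frac{1}{n}\sum_i \phi(V_i,j_i)$ in $\vecpart{W}{\last}$ (cancelling $\gdindgam_2$) and $-\beta u_0$ in $\vecpart{W}{1}$. \emph{(iii)} $\nabla\gdindgam_4$ contributes $+\tfrac38 u_0$ in $\vecpart{W}{t-2}$ and $-\tfrac12 u_0$ in $\vecpart{W}{t-1}$: plugging the inductive form of $w_t$ into the inner max, the value $\tfrac{3\eta}{16}$ is attained at $(u,\indsec)=(u_0,t-2)$ because $\vecpart{w_t}{t-2}=\tfrac{\eta}{2}u_0$ and $\vecpart{w_t}{t-1}=0$; any $u\neq u_0$ loses at least an $\tfrac18$ factor, any $\indsec<t-2$ involves a near-cancellation between the $\tfrac{\eta}{8}u_0$ ``tail'' and $\tfrac{\eta}{2}u_0$ ``front,'' and the max exceeds the floor $\delta_2=\tfrac{3\eta\beta}{16}$. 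Combining (i)--(iii), a GD step of size $\eta$ leaves $\vecpart{w_{t+1}}{\last}$ unchanged, shifts $\vecpart{w_{t+1}}{1}$ by $+\eta\beta u_0$ (matching the $+\beta$ increment in the coefficient since $\beta=\tfrac{\epsilon}{4T^2}$), sends $\vecpart{w_{t+1}}{t-2}=\tfrac{\eta}{2}u_0-\eta\cdot\tfrac{3}{8}u_0=\tfrac{\eta}{8}u_0$ (the new stable tail), and sets $\vecpart{w_{t+1}}{t-1}=\tfrac{\eta}{2}u_0$ (the new front), with all other components unchanged. This is exactly the claim at index $t+1$.

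\textbf{Main obstacle.} The most delicate part is certifying the uniqueness of the maximizer in $\gdindgam_4$ at every step, while accounting for cross terms arising from the \emph{approximate} (rather than exact) orthogonality of $U$. One must show that $(u_0,t-2)$ beats every competing pair $(u,\indsec)$ by a gap exceeding $\delta_2$: for $u\neq u_0$ the $\tfrac18$ inner-product bound directly suppresses the value, but for $\indsec\neq t-2$ one needs to track carefully how $\tfrac{3}{8}\langle u_0,\vecpart{w_t}{\indsec}\rangle-\tfrac{1}{2}\langle u_0,\vecpart{w_t}{\indsec+1}\rangle$ evaluates across adjacent tail indices, and verify that these differences are stable under the tiny perturbations from $\gdindgam_3$. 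A related subtlety is the uniqueness of the maximizer in $\gdindgam_3$, which depends on choosing $\beta$ small enough that the $\vecpart{W}{1}$-perturbation cannot overwhelm the primary $\vecpart{W}{\last}$-decoding signal coming from the encoded training sample.
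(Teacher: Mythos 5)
Your proposal is correct and follows essentially the same route as the paper: induction on $t$ with the base case $t=5$ established by explicitly computing $w_2,\dots,w_5$, and an inductive step that decomposes $\nabla\gdindempf(w_t)$ into the four terms, showing $\nabla\gdindgam_1=0$ via near-orthogonality, the cancellation of the encoding components of $\gdindgam_2$ and $\gdindgam_3$ together with the $-\beta u_0$ push in $\vecpart{W}{1}$, and the unique maximizer $(u_0,t-2)$ in $\gdindgam_4$ producing the $\tfrac38/-\tfrac12$ wave. The paper packages these three claims as separate lemmas (its \cref{gradient_uc,gradient_decode,gradient_stab}) with the same case analysis and margin bounds you identify as the delicate points.
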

To prove \cref{diff_GD_expression}, we break down to the different components of the loss and analyze how the terms $\gdindgam_1,\gdindgam_3$ and $\gdindgam_4$ affects the dynamics of $GD$ under the event $\cE$. For each of these components, which involve maximum over linear functions, we show which term achieves the maximum value for each $w_t$ and derive the expressions for the gradients at those points by the maximizing terms.
First, we show that under this event, the gradients of $\ell_1$ do not affect the dynamics since in any iteration $t$ the gradient of $\ell_1$ is zero, as stated in the following lemma. The proof is deferred to \cref{sec:GD_proofs}.
\begin{lemma}
\label{gradient_uc}
    Assume the conditions of \cref{nonspec_lower_bound_GD} and the event $\cE$. 
    Let $w\in \R^d$ be such that for every $2\leq k\leq T$, $\vecpart{w}{k}=c\eta u_0$ for $c\leq \frac{1}{2}$ and $u_0\in U\setminus \bigcup_{i=1}^n V_i$. Then, for every $i$, it holds that %
    \begin{enumerate}[label=(\roman*)]
    \item for every $k\geq2$, it holds $\max_{u\in V_i}  \langle \vecpart{w}{k}, u_0 \rangle\leq \frac{\eta}{16};$
    \item $\gdindgam_1$ is differentiable at $w$ and for all $i \in [n]$, we have
    $\nabla\gdindgam_1(w,V_i)=0.$
    \end{enumerate}
\end{lemma}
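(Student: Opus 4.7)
The proof proposal is to show that $\gdindgam_1(\cdot,V_i)$ is in fact \emph{locally constant} at $w$, from which differentiability and a zero gradient follow trivially. The approach rests on two ingredients already in hand: the near-orthogonality of $U$ from \cref{lem:set_direc_exists}, and the existence under $\cE$ of a ``missing'' direction $u_0 \in U \setminus \bigcup_{i=1}^n V_i$.

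For part (i), I would fix any $u \in V_i$ and observe that, under $\cE$, we have $u \neq u_0$ (since $u \in V_i$ but $u_0 \notin V_i$); \cref{lem:set_direc_exists} then gives $\abs{\langle u, u_0\rangle}\le \tfrac{1}{8}$. Substituting $\vecpart{w}{k}=c\eta u_0$ and using $\abs{c}\le\tfrac{1}{2}$ yields
\[
    \langle \vecpart{w}{k}, u\rangle
    = c\eta\,\langle u_0, u\rangle
    \le \tfrac{1}{2}\cdot\eta\cdot\tfrac{1}{8}
    = \tfrac{\eta}{16}.
\]
(I read the displayed quantity as $\max_{u\in V_i}\langle \vecpart{w}{k}, u\rangle$, treating the ``$u_0$'' on the right as a typo, since $u$ is the dummy variable.) Taking a maximum over $u\in V_i$ then gives the stated bound.

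For part (ii), I would unfold the definition of $\gdindgam_1$ together with
\[
    h(\vecpart{w}{k}, V_i) = \max\Bigl\{\tfrac{3}{32}\eta,\; \max_{u\in V_i}\langle u,\vecpart{w}{k}\rangle\Bigr\}.
\]
By part (i), $\max_{u\in V_i}\langle u,\vecpart{w}{k}\rangle \le \tfrac{\eta}{16} = \tfrac{2\eta}{32}$, strictly below the floor $\tfrac{3\eta}{32}$. Hence $h(\vecpart{w}{k}, V_i)=\tfrac{3\eta}{32}$ for every $k\ge 2$, and moreover this identity persists under small perturbations of $w$: the finite-max $w\mapsto\max_{u\in V_i}\langle u,\vecpart{w}{k}\rangle$ is continuous, and the strict gap of $\tfrac{\eta}{32}$ provides an open neighborhood of $w$ on which the constant floor still wins. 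Consequently, on this neighborhood
\[
    \gdindgam_1(\cdot, V_i)
    = \sqrt{\sum_{k=2}^{T}\br[1]{\tfrac{3\eta}{32}}^{\!2}}
    = \tfrac{3\eta}{32}\sqrt{T-1},
\]
a constant, which is smooth with $\nabla\gdindgam_1(w,V_i)=0$.

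The main ``obstacle,'' such as it is, lies only in establishing the strict inequality that drives local constancy; this is immediate from the concrete gap $\tfrac{\eta}{16} < \tfrac{3\eta}{32}$, combined with continuity of a finite maximum of linear functions. All the real work has been done upstream in arranging the geometry (near-orthogonality of $U$ and the missing direction $u_0$), so the lemma itself is essentially a one-line computation once parts (i) and (ii) are separated.
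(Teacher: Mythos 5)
Your proposal is correct and follows essentially the same route as the paper: part (i) is the identical computation combining near-orthogonality ($\abs{\langle u,u_0\rangle}\le\tfrac18$ for $u\in V_i$ since $u_0\notin V_i$ under $\cE$) with $\vecpart{w}{k}=c\eta u_0$, and part (ii) rests on the same strict gap $\tfrac{\eta}{16}<\tfrac{3\eta}{32}$ forcing the constant floor to win in every coordinate. The only cosmetic difference is that you conclude via local constancy of $\gdindgam_1(\cdot,V_i)$ in a neighborhood of $w$, whereas the paper factors an arbitrary subgradient through the square root via the chain rule and argues the inner subgradient vanishes; your framing justifies the differentiability claim at least as cleanly.
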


Next, for the term $\gdindgam_3$, as outlined in \cref{setup_encode},
it is used for identifying the actual training set $S=\{(V_i,j_i)\}_{i=1}^n$ given an encoding $\psi^*=\frac{1}{n}\sum_{i=1}^n\phi(V_i,j_i)$
in the iterate $\vecpart{w_t}{\last}$ and ensuring a performance of gradient step in $\vecpart{W}{1}$ towards a corresponding vector $u_0\in U\setminus \bigcup_{i=1}^n V_i$ in the following iteration. It is done by getting $\psi^*$ as a maximum of linear functions (with positive constant margin) over the set $\Psi$ which contains all possible encoded datasets.
This idea is formalized in the following lemma.
\begin{lemma}
\label{gradient_decode}
    Assume the conditions of \cref{nonspec_lower_bound_GD} and the event $\cE$. 
    Let $\psi^*=\frac{1}{n}\sum_{i=1}^n\phi(V_i,j_i)$ and $w \in \R^d$ be such $\wenc=\eta\psi^*$, 
    and let $\vecpart{w}{1}=c\eta u_0$ for $|c|\leq 1$ and $u_0\in U\setminus \bigcup_{i=1}^n V_i$.  Then
    \begin{enumerate}[label=(\roman*)]
    \item  For every $\psi\in\Psi$, $\psi\neq \psi^*$:
    \begin{align*}
    \langle\wenc, \psi^*\rangle - \tfrac{\epsilon}{4T^2}\langle \alpha(\psi^*), \vecpart{w}{1}\rangle
    >  \langle\wenc, \psi\rangle - \tfrac{\epsilon}{4T^2}\langle \alpha(\psi), \vecpart{w}{1}\rangle + \tfrac{\eta\epsilon}{4}
    ;
    \end{align*}
\item For $\psi = \psi^*$, it holds that
\begin{align*}
   \langle\wenc, \psi^*\rangle - \tfrac{\epsilon}{4T^2}\langle \alpha(\psi^*), \vecpart{w}{1}\rangle
    > \delta_1 + \tfrac{\eta}{16n};
\end{align*}
    \item $\gdindgam_3$ is differentiable at $w$ and the gradient is given as follows:
    \begin{align*}
    \vecpart{(\nabla \gdindgam_{3}(w))}{\indsec}=\begin{cases}
        \psi^* &\quad \indsec=\last;\\
        - \tfrac{\epsilon}{4T^2}u_0 &\quad \indsec=1;\\
        0 &\quad \text{otherwise}.
    \end{cases}
\end{align*}
\end{enumerate}
\end{lemma}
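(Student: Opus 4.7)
I aim to prove the three claims in \cref{gradient_decode} by establishing that, at the specified $w$, the inner maximum in the definition of $\ell_3$ is uniquely attained by the affine function corresponding to $\psi=\psi^*$, with a strict positive margin both over every other candidate in $\Psi$ and over the constant $\delta_1$. Parts (i) and (ii) are exactly these two margin bounds, and (iii) will then follow from the standard observation that a finite max of affine functions with a strict unique maximizer is locally affine, hence differentiable, with gradient equal to that of the maximizing affine piece.

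\emph{Part (i).} I will exploit the angular structure of $\phi$: each $\phi(V,j)$ is a unit vector in the $2$-dimensional subspace of $\vecpart{W}{\last}$ indexed by $j$, with images of distinct $V$'s separated by an angle of at least $2\pi/|P(U)|$. Under $\cE$ the indices $j_1,\dots,j_n$ are distinct, so $\psi^*=\tfrac1n\sum_i\phi(V_i,j_i)$ splits across $n$ pairwise orthogonal $2$-D subspaces and $\|\psi^*\|^2=1/n$; the same identity holds for every $\psi\in\Psi$, since $\Psi$ consists of encodings of collision-free training samples. Consequently,
\[
    \langle\wenc,\psi^*-\psi\rangle
    =\eta\langle\psi^*,\psi^*-\psi\rangle
    =\tfrac{\eta}{2}\|\psi^*-\psi\|^2.
\]
Any single differing $(V,j)$ pair contributes either $\tfrac{2}{n^2}(1-\cos(2\pi/|P(U)|))=2\epsilon$ (shared $j$, different $V$, by the angular separation on the unit circle) or $\tfrac{2}{n^2}\geq 2\epsilon$ (differing $j$, by subspace orthogonality), so $\|\psi^*-\psi\|^2\geq 2\epsilon$ and the inner product is at least $\eta\epsilon$. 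The decoding cross-term is crudely bounded using $\|\alpha(\cdot)\|=1$ and $\|\vecpart{w}{1}\|\leq\eta$ by $|\beta\langle\alpha(\psi^*)-\alpha(\psi),\vecpart{w}{1}\rangle|\leq 2\beta\eta=\tfrac{\eta\epsilon}{2T^2}$, giving
\[
    \langle\wenc,\psi^*-\psi\rangle-\beta\langle\alpha(\psi^*)-\alpha(\psi),\vecpart{w}{1}\rangle
    \geq\eta\epsilon\br{1-\tfrac{1}{2T^2}}
    >\tfrac{\eta\epsilon}{4}.
\]

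\emph{Parts (ii) and (iii).} A direct computation gives $\langle\wenc,\psi^*\rangle=\eta\|\psi^*\|^2=\eta/n$ and $|\beta\langle\alpha(\psi^*),\vecpart{w}{1}\rangle|\leq\beta\eta$, and the target bound $\tfrac{\eta}{n}-\beta\eta>\delta_1+\tfrac{\eta}{16n}=\tfrac{9\eta}{16n}$ rearranges to $\tfrac{7}{16n}>\tfrac{\epsilon}{4T^2}$, which is trivial since $\epsilon=O(n^{-2}|P(U)|^{-2})$. For (iii), parts (i) and (ii) show that at $w$ the inner $\max$ over $\Psi$ is uniquely attained by the affine function $h_{\psi^*}(w')\eqq\langle\psi^*,\vecpart{w'}{\last}\rangle-\beta\langle\alpha(\psi^*),\vecpart{w'}{1}\rangle$ with strict positive margin, and the outer $\max$ with $\delta_1$ is inactive. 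By continuity and finiteness of $\Psi$, the same $\psi^*$ remains the unique active piece in a sufficiently small neighborhood of $w$, so $\ell_3$ coincides locally with the single affine function $h_{\psi^*}$, hence is differentiable with gradient $\nabla h_{\psi^*}$; using $\alpha(\psi^*)=u_0$ (guaranteed by $\cE$ and the definition of $\alpha$) this matches the stated expression.

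\textbf{Main obstacle.} The single nontrivial ingredient is the lower bound $\|\psi^*-\psi\|^2\geq 2\epsilon$ for $\psi\in\Psi\setminus\{\psi^*\}$, which rests on the specific $2$-D circle design of $\phi$ and the minimum angular separation $2\pi/|P(U)|$ that underlies the very definition of $\epsilon$. All other steps are bookkeeping made painless by the deliberate tininess of $\beta=\tfrac{\epsilon}{4T^2}$ and $\delta_1=\tfrac{\eta}{2n}$ relative to $\eta\epsilon$ and $\eta/n$.
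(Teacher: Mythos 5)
Your proof is correct and follows essentially the same route as the paper's: establish the strict $\tfrac{\eta\epsilon}{4}$ margin of the $\psi^*$-piece over all other $\psi\in\Psi$ (part (i)), the margin over the constant $\delta_1$ (part (ii)), and then conclude local affineness and hence differentiability with the gradient of the active piece, identifying $\alpha(\psi^*)=u_0$ via $\cE$. The one point where you diverge is the derivation of the core separation in part (i): the paper simply invokes the margin property it pre-packaged in \cref{diff_set_encode exists_GD} (namely $\langle\psi^*,\tfrac1n\sum_i\phi(V_i,j_i)\rangle \geq \langle\psi,\tfrac1n\sum_i\phi(V_i,j_i)\rangle+\epsilon$, proved there by a direct inner-product computation showing at least one cross term drops from $1$ to $1-\delta$), whereas you re-derive it from scratch via the polarization identity $\langle\psi^*,\psi^*-\psi\rangle=\tfrac12\|\psi^*-\psi\|^2$ together with the lower bound $\|\psi^*-\psi\|^2\geq 2\epsilon$. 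Your version is sound: the polarization step legitimately uses that every $\psi\in\Psi$ has $\|\psi\|^2=\tfrac1n$ (which holds because $\Psi$ is defined only over collision-free index tuples), and your case analysis on where $\psi$ and $\psi^*$ differ (equal supports with a differing $V$ in some shared $2$-D subspace, versus differing supports contributing two disjoint blocks of squared norm $\tfrac1{n^2}$ each) correctly yields $\|\psi^*-\psi\|^2\geq 2\epsilon$. The two derivations are mathematically equivalent; the paper's buys modularity (the same encoding lemma is reused elsewhere), while yours is self-contained and makes the geometric reason for the margin---angular separation on the circle plus subspace orthogonality---more transparent. Parts (ii) and (iii) match the paper's argument, including the numerics ($\epsilon<\tfrac1n$ gives $\tfrac{\epsilon}{4T^2}<\tfrac{7}{16n}$).
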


Finally, for $\gdindgam_4$, as detailed in \cref{sec_sktch_stab}, the role of this term is to make the last iterate $w_T$ hold $\vecpart{w_T}{k}=\frac{\eta}{8}\eta u_0$ for $\Omega(T)$ many sub-spaces $\vecpart{W}{k}$. In the following lemma,
we show that in every iteration $t$, every gradient step increases the amount of such $k$s by $1$, namely, in every iteration $t$, the maximum of $\gdindgam_4$ is attained at $u=u_0$ and index $k_t=\arg\max\{k: \vecpart{w_t}{k}\neq 0\}$, which increases by 1 in every iteration, making the $\vecpart{w_{t+1}}{k_t}=\frac{\eta}{8}\eta u_0$.
\begin{lemma}
\label{gradient_stab}
Assume the conditions of \cref{nonspec_lower_bound_GD} and the event $\cE$. 
Let $w\in \R^d$, $u_0\in U\setminus \bigcup_{i=1}^n V_i$ and $3\leq m < T$ be such that $\vecpart{w}{1}=c\eta u_0$ for $-\frac{3}{8}\leq c\leq 0$, $\vecpart{w}{k}=\frac{\eta}{8}u_0$ for every $2\leq k\leq m-1$, $\vecpart{w}{k}=\frac{\eta}{2}u_0$ and $\vecpart{w}{k}=0$ for every $k\geq m$. Then, it holds that,
\begin{enumerate}[label=(\roman*)]
    \item  For every pair $u\in U$ and $k<T$ such that $k\neq m$ or $u\neq u_0$, 
    \begin{align*}
    \tfrac{3}{8}\langle u_0,\vecpart{w}{m}\rangle - \tfrac{1}{2}\langle u_0,\vecpart{w}{m+1}\rangle 
    >  \tfrac{3}{8}\langle u,\vecpart{w}{\indsec}\rangle - \tfrac{1}{2}\langle u,\vecpart{w}{\indsec+1}\rangle + \tfrac{\eta}{64}
\end{align*}
\item \begin{align*}
   \tfrac{3}{8}\langle u_0,\vecpart{w}{m}\rangle - \tfrac{1}{2}\langle u_0,\vecpart{w}{m+1}\rangle 
    > \delta_2+\tfrac{\eta}{64}.
\end{align*}
    \item $\gdindgam_4$ is differentiable at $w$ and the gradient is given as follows:\begin{align*}
  \vecpart{\br[1]{\nabla \gdindgam_{4}(w)}}{\indsec}=\begin{cases}
        \frac{3}{8}u_0 &\quad \indsec=m;\\
        -\frac{1}{2}u_0 &\quad \indsec=m+1;\\    
        0 &\quad \text{otherwise}.
    \end{cases}
\end{align*}
\end{enumerate}
\end{lemma}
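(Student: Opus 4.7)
The plan is to reduce all three parts of the lemma to a single core computation: identifying the value of the bracketed expression $\tfrac{3}{8}\langle u,\vecpart{w}{k}\rangle - \tfrac{1}{2}\langle u,\vecpart{w}{k+1}\rangle$ at the pair $(u_0,m)$, and showing that this value strictly dominates, by a margin of at least $\eta/64$, both the threshold $\delta_2$ and the value at any other admissible pair $(u,k)$. Parts (i) and (ii) are then exactly these two comparisons. Part (iii) will then follow by a routine local-uniqueness argument: a strict maximum of finitely many linear functionals that also strictly exceeds the constant $\delta_2$ makes the max-expression coincide, on a small neighborhood of $w$, with the single winning linear functional, which is differentiable with an explicit gradient.

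First I would compute the "target'' value at $(u_0,m)$. Using $\vecpart{w}{m}=\tfrac{\eta}{2}u_0$ and $\vecpart{w}{m+1}=0$ together with $\|u_0\|=1$, this value is exactly $\tfrac{3}{16}\eta$. Next I would carry out a case analysis on $k<T$ and $u\in U$ with $(u,k)\neq(u_0,m)$, using the near-orthogonality bound $|\langle u,u_0\rangle|\leq \tfrac{1}{8}$ from \cref{lem:set_direc_exists} for $u\neq u_0$:
\begin{itemize}
    \item $k=1$: substitute $\vecpart{w}{1}=c\eta u_0$ with $-\tfrac{3}{8}\leq c\leq 0$ and $\vecpart{w}{2}=\tfrac{\eta}{8}u_0$ (valid since $m\geq 3$); the resulting linear form in $\langle u,u_0\rangle$ is non-positive when $u=u_0$, and bounded in absolute value by a small multiple of $\eta/8$ when $u\neq u_0$.
    \item $2\leq k\leq m-2$: both $\vecpart{w}{k}$ and $\vecpart{w}{k+1}$ equal $\tfrac{\eta}{8}u_0$, so the expression collapses to $-\tfrac{\eta}{64}\langle u,u_0\rangle$, hence at most $\tfrac{\eta}{512}$.
    \item $k=m-1$: the expression equals $-\tfrac{13\eta}{64}\langle u,u_0\rangle$, which is negative at $u=u_0$ and bounded by $\tfrac{13\eta}{512}$ for $u\neq u_0$.
    \item $k=m,\,u\neq u_0$: the expression equals $\tfrac{3\eta}{16}\langle u,u_0\rangle$, hence at most $\tfrac{3\eta}{128}$.
    \item $k\geq m+1$: both relevant components of $w$ are zero, so the expression equals zero.
\end{itemize}
Across all sub-cases, the value is at most $\tfrac{\eta}{16}$, which is bounded below $\tfrac{3\eta}{16}-\tfrac{\eta}{64}$. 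This proves (i).

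For (ii), I substitute the definition $\delta_2=\tfrac{3\eta\beta}{16}$ with $\beta=\tfrac{\epsilon}{4T^2}\ll 1$ and use $\eta\leq 1/\sqrt{T}$ to see that $\delta_2\ll \tfrac{\eta}{64}$, so $\tfrac{3\eta}{16}>\delta_2+\tfrac{\eta}{64}$ follows immediately. For (iii), parts (i) and (ii) together say that on the finite index set of the inner maximum, the pair $(u_0,m)$ is a unique strict maximizer, and the winning value strictly exceeds $\delta_2$ (so the outer $\max\{\delta_2,\cdot\}$ is inactive). Since each candidate is a continuous linear functional of $w$, there is an open neighborhood of $w$ on which the same $(u_0,m)$ remains the strict maximizer and the outer $\max$ remains inactive; hence $\gdindgam_4$ coincides on this neighborhood with the linear functional $w\mapsto \tfrac{3}{8}\langle u_0,\vecpart{w}{m}\rangle - \tfrac{1}{2}\langle u_0,\vecpart{w}{m+1}\rangle$. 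Differentiability at $w$ and the stated gradient formula follow directly.

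The main obstacle is bookkeeping in the case analysis, particularly the case $k=m-1$, where both adjacent components of $w$ are nonzero and the coefficients $\tfrac{3}{8}$ and $\tfrac{1}{2}$ do not cancel. Once one observes that the resulting coefficient $-\tfrac{13}{64}$ combined with the near-orthogonality bound $1/8$ still yields a value far below $\tfrac{3\eta}{16}-\tfrac{\eta}{64}$, the margin comparison is clean. No other step requires more than elementary manipulation.
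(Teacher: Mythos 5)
Your proposal is correct and follows essentially the same route as the paper's proof: an exhaustive case analysis over $k$ (distinguishing $k=1$, $2\leq k\leq m-2$, $k=m-1$, $k=m$, and $k\geq m+1$) using the near-orthogonality bound $|\langle u,u_0\rangle|\leq\tfrac18$ to show the inner maximum is uniquely attained at $(u_0,m)$ with value $\tfrac{3\eta}{16}$ and the required margin over both the other linear terms and $\delta_2$, after which differentiability and the gradient formula follow from local uniqueness of the maximizer. Your per-case bounds are in fact slightly tighter than the paper's (which bounds the two inner products separately in some cases), but the argument is the same.
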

\begin{proof} [of \cref{diff_GD_expression}]
We prove the lemma by induction on $t$; the base case, for $t=5$, is proved in \cref{diff_GD_w_5} in \cref{sec:GD_proofs} and here we focus on the induction step.  For this, fix any $t \geq 5$ and assume the that the lemma holds for $w_t$; we will prove the claim for $w_{t+1}$.

First, for $\gdindgam_1$, note that, by the hypothesis of the induction, for every $2\leq \indsec \leq T$, $\vecpart{w_t}{\indsec}=c\eta u_0$ for $c\leq\frac{1}{2}$, thus, by \cref{gradient_uc}, for every $i$,
$\nabla \gdindgam_{1}(w_t,V_i)=0$.

For $\gdindgam_2$, we know that, for every $i$,
\begin{align*}
   \vecpart{\br[1]{\nabla \gdindgam_{2}(w_t,(V_i,j_i))}}{\indsec}=\begin{cases}
        -
    \phi(V_i,j_{i}) &\quad \indsec=\last;\\
        0 &\quad \text{otherwise}.
    \end{cases}
\end{align*}
For $\gdindgam_{3}$, using the hypothesis of the induction, which implies that $\vecpart{w_t}{1}=c\eta u_0$ for $|c|\leq1$ and $\wenct{t}=\frac{\eta}{n}\sum_{i=1}^n\phi(V_i,j_i)$, by \cref{gradient_decode},  we get that, \begin{align*}
   \vecpart{\br[1]{\nabla \gdindgam_{3}(w_t)}}{\indsec}=\begin{cases}
        \frac{1}{n}\sum_{i=1}^n \phi(V_i,j_{i}) &\quad \indsec=\last;\\
        -\frac{\epsilon}{4T^2}u_0 &\quad \indsec=1;\\    
        0 &\quad \text{otherwise}.
    \end{cases}
\end{align*}
For $\ell_4$, by the hypothesis of the induction we know that
$\vecpart{w_t}{1}=\br[1]{-\frac{3}{8}+\frac{(t-2)}{4}\frac{\epsilon}{T^2}}\eta u_0$, thus, $\vecpart{w_t}{1}=c\eta u_0$ for $-\frac{3}{8}c\leq 0$.
Then the conditions of \cref{gradient_stab} hold for $m=t-2$, thus, it holds that,
\begin{align*}
    \vecpart{\br[1]{\nabla \gdindgam_4(w_{t})}}{\indsec}
    =
    \begin{cases}
         \frac{3}{8}u_0 &\quad \indsec=t-2;\\
          -\frac{1}{2}u_0 &\quad \indsec=t-1;\\
        0 &\quad \text{otherwise}
        .\end{cases}
\end{align*}
Combining all together, we get that,
\begin{align*}
    \vecpart{\br[1]{\nabla \gdindempf(w_t)}}{\indsec}=\begin{cases}
        -\frac{\epsilon}{4T^2} u_{0} &\quad \indsec=1;\\
        \frac{3}{8} u_{0} &\quad \indsec=t-2;
        \\
        -\frac{1}{2} u_{0} &\quad \indsec=t-1;
        \\
        0 &\quad \text{otherwise},
    \end{cases}
\end{align*}
where $u_0\in U \setminus \bigcup_{i=1}^n V_i$, and the lemma follows.
\end{proof}

\paragraph{Proof of Lower Bound.}
Now we can turn to prove \cref{nonspec_lower_bound_GD}. 
Here we prove the lower bound for the case of suffix averaging with $\indsuff=1$, namely, when the output solution is the final iterate $w_T$ of GD; the full proof for the more general case can be found in \cref{sec:proof_nondiff_GD}.

\begin{proof}[of \cref{nonspec_lower_bound_GD} ($\indsuff=1$ case)] %
We prove the theorem under the condition that $\cE$ occurs.
First, in \cref{bounded_norm_gd} in appendix \cref{sec:GD_proofs} we know that for every $t$, we have that $\|w_t\|\leq 1$.

Next,
$w_T$ is as in \cref{eq:gd_exp}.
Now, we notice that if a vector $v\in U$ is in a set $V\subseteq U$, it holds that $\max_{u\in V} \langle u, v\rangle=1$. However, if $v\notin V$, it holds that $\max_{u\in V} \langle u,v\rangle=\frac{1}{8}$.
As a result, by the fact that every vector for a fresh pair $(V,j)\sim D$, $u_0\in U$ is in $V$ with probability $\frac{1}{2}$, the following holds:
\begin{align*}
    \E_V\sqrt{\sum_{\indsec=2}^T\max\cbr{ \frac{3\eta}{32}, \max_{u\in V} \langle u,\vecpart{w_T}{\indsec}\rangle }^2}
    &\geq 
    \E_V\sqrt{\sum_{\indsec=2}^{T-3}\max\cbr{ \frac{3\eta}{32}, \max_{u\in V} \langle u,\vecpart{w_T}{\indsec}\rangle }^2}
    \\&=
    \E_V\sqrt{(T-4)\max\cbr{ \frac{3\eta}{32}, \max_{u\in V} \langle u, \frac{\eta}{8}u_{0}\rangle }^2}
    \\&=
    \frac{\eta\sqrt{T-4}}{8}\E_V\max\cbr{ \frac{3}{4}, \max_{u\in V} \langle u, u_{0}\rangle }
    \\&\geq
    \frac{\eta\sqrt{T-4}}{8}\left(\frac{3}{4}\Pr(u_{0}\notin V) + \Pr(u_{0}\in V)\right)
    \\&=
    \frac{7\eta}{64}\sqrt{T-4}
    .
\end{align*}
Moreover, we notice that for every $t$, $V\subseteq U$ and $j\in[n^2]$, $\gdindgam_2(w_t, (V,j))\geq-\|\wenct{t}\|\geq -\eta$, $\gdindgam_3(w_t)\geq \delta_1$ and  $\gdindgam_4(w_t)\geq \delta_2$, thus, it holds that
\begin{align*}
    \gdindpopf(w_T)
    &\geq 
    \frac{7\eta}{64}\sqrt{T}+\delta_1+\delta_2-\eta
    \geq 
    \eta\left(\frac{7}{64}\sqrt{T}-1\right) 
    ;\\
    \gdindpopf(\opt)
    &\leq  
    \gdindpopf(0)
    \leq
    \frac{3\eta}{32}\sqrt{T}+\eta
    .
\end{align*}
Then, since $T$ is assumed large enough so that $2 \leq \frac{1}{128} \sqrt{T}$, we conclude
\begin{align*}
    \gdindpopf(w_T) - \gdindpopf(\opt) 
    \geq 
    \eta \br{ \frac{1}{64}\sqrt{T}-2 }
    \geq 
    \frac{\eta}{128}\sqrt{T}
    .
    &\qedhere
\end{align*}
\end{proof}
\section{Underfitting of SGD: Proof of \texorpdfstring{\cref{SGDmain}}{Theorem \ref{SGDmain}}}
\label{sec:sgd_appendix}

In this section we show a formal proof of our main result for SGD.
As in $GD$, we construct a hard loss function, which is defined in a $d$-dimensional Euclidean space such that $d$ is polynomial in the number of examples $n$. Using this construction, we establish a lower bound of $\Omega\br[0]{ \eta \sqrt{T} }$ for the empirical loss of SGD with $T=n$ iterations. %
We complete the proof of \cref{SGDmain} in \cref{sec:const_diff}.

\paragraph{Full construction.}
For the first step of the construction, we use \cref{lem:set_direc_exists} (see \cref{sec:GD}), which shows for every dimension $d'$ an existence of a set of approximately orthogonal vectors in $U\in \R^{d'}$ with size exponential in $d'$. We define the set $U$ to be $U\eqq U_{d'}$ for $d'=712n \log n$ and the sample space to be $\sgdindZ\eqq \{V: V\subseteq U\}$.
Moreover, we define the hard distribution $\sgdindD$ to be such that every $u\in U$ is included in $V \subseteq U$ independently with probability $\delta=\frac{1}{4n^2}$.

For the hard loss function, we continue referring to every vector $w\in \R^d$ as a concatenation of vectors, $w = (\wenc, \vecpart{w}{1},\vecpart{w}{2},\ldots,\vecpart{w}{n} $), where for $1\leq k\leq n$, $\vecpart{w}{k}\in \R^{712n \log n}$ and $\wenc \in \R^{2n^2}$. 
In this construction, $\wenc$ is also a concatenation of $n$ vectors
$\vecpart{w}{\last, 1},\ldots,\vecpart{w}{\last, n}$ such that each for every $r\in[n]$,
$\vecpart{w}{\last, r}\in \R^{2n}$

Our approach is, as in $GD$, in every iteration $t$, to encode the set $V_t$, sampled from $\sgdindD$ into the iterate $\vecpart{w}{\last}_{t+1}$.
For this, we construct an encoder, $\phi:P(U) \times [n]\to \R^{2n}$, a decoder $\alpha:\R^{2n}\to U$, a real number $\epsilon>0$ and $n$ sets denoted as $\Psi_1,\ldots,\Psi_n$. 
Here, the idea behind the construction is such set $\psi_k$ represents all of the possible training sets with $k$ examples, $\{V_1\ldots,V_k\}$, and in every iteration ${t}$, it is possible to get the vector $\psi^*_{t-1}\in \Psi_{t-1}$ that is recognized with the actual sets $V_1,\ldots,V_{t-1}$ that are sampled before this iteration, as a maximizer of a linear function with margin $\epsilon$. 
Then, as outlined in \cref{SGD_sketch}, we aim to output a vector $u_t \in \bigcap_{i=t}^n \overline{V}_i$. 
The exact construction of such $\epsilon,\phi,\alpha,\Psi_{1},\ldots,\Psi_n$ is detailed in \cref{diff_set_encode exists_SGD} in \cref{sec:proofs_SGD}.

Then, for $\epsilon,\phi,\alpha,\Psi_1,\ldots,\Psi_n$ and $d\eqq nd'+2n^2=712n^2\log n+2n^2$ we define the loss function in our construction. 
The loss function $\sgdindf$ is composed of three terms: $\sgdindgam_{1}$, $\sgdindgam_{2}$, $\sgdindgam_{3}$, and is defined as follows,
\begin{align}
\label{sgd_loss_func}
&\sgdindf(w, V) 
\eqq
\underbrace{\sqrt{\sum_{k=2}^T
\max\left(\frac{3\eta}{32}, \max_{u\in V} \langle u,\vecpart{w}{k}\rangle \right)^2}}_{\sgdindgam_1(w,V) \eqq}
\\&\quad +
\max\bigg(\delta_1,\max_{k\in [n-1], u\in U,\psi\in \Psi_k}\bigg(\frac{3}{8}\langle u,\vecpart{w}{k}\rangle-\frac{1}{2}\langle \alpha(\psi),\vecpart{w}{k+1}\rangle +\langle\vecpart{w}{ \last,k},\frac{1}{4n}\psi\rangle \notag
\\ &\quad\quad\quad\quad\quad\quad -\langle\vecpart{w}{ \last,k+1},\frac{1}{4n}\psi\rangle+\langle \vecpart{w}{\last,k+1},-\frac{1}{4n^2}\phi(V,k+1)\rangle\bigg)\bigg)\notag
\\& \notag
+\underbrace{\langle \vecpart{w}{\last,1},-\frac{1}{4n^2}\phi(V,1)\rangle -\langle \frac{1}{n^3} u_1, \vecpart{w}{1}\rangle}_{\sgdindgam_3(w,V)\eqq}
,
\end{align}
where the second term is denoted $\sgdindgam_2(w,V)$ and $u_1$ is an arbitrary vector in $U$.
In the following lemma, we establish that the above loss function in indeed convex and Lipschitz over $\R^d$. The proof appears in \cref{sec:proofs_SGD}.
\begin{lemma}
\label{convex_lip_SGD}
For every $V\in Z$, the loss function $\sgdindf(w,V)$ is convex and $4$-Lipschitz over $\R^d$ with respect to its first argument.
\end{lemma}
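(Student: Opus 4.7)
The plan is to decompose the loss as $\sgdindf = \sgdindgam_1 + \sgdindgam_2 + \sgdindgam_3$ and verify convexity and a Lipschitz bound for each term separately, then combine by additivity (convexity is preserved under sums, and Lipschitz constants are sub-additive). This mirrors the approach used in the proof of \cref{convex_lip} for the GD construction, and reduces the lemma to three elementary verifications.

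For $\sgdindgam_1$, I would first observe that each inner quantity $h_k(w) \eqq \max\{\tfrac{3\eta}{32}, \max_{u \in V}\langle u, \vecpart{w}{k}\rangle\}$ is a pointwise maximum of affine functions in $w$, hence convex and non-negative. The expression $\sgdindgam_1(w,V) = \|(h_2, \ldots, h_T)\|_2$ is then a composition of the $\ell_2$-norm, which is convex and coordinatewise non-decreasing on the non-negative orthant, with a convex non-negative vector-valued map; convexity of $\sgdindgam_1$ then follows by the standard composition rule. For the Lipschitz constant, the key observation is that the blocks $\vecpart{w}{k}$ are disjoint and each $h_k$ is $1$-Lipschitz (its subgradients are either zero or unit vectors from $U$). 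A direct chain-rule computation gives that any subgradient of $\sgdindgam_1$ has block-wise form $(h_k / \sgdindgam_1)\, \nabla h_k$; summing squared norms over $k$ yields at most $\sum_k (h_k / \sgdindgam_1)^2 = 1$, so $\sgdindgam_1$ is $1$-Lipschitz.

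The term $\sgdindgam_2$ is a pointwise maximum of affine functions in $w$ (with $\delta_1$ acting as a constant affine function), hence convex; its Lipschitz constant is bounded by the largest $\ell_2$-norm of the gradient of any single affine term. Collecting the block-wise contributions $\tfrac{3}{8}u$, $-\tfrac{1}{2}\alpha(\psi)$, $\pm \tfrac{1}{4n}\psi$, and $-\tfrac{1}{4n^2}\phi(V,k+1)$, and applying the norm bounds $\|u\| = \|\alpha(\psi)\| = 1$ together with $\|\psi\|, \|\phi(V,k)\| \leq 1$ that are guaranteed by the construction in \cref{diff_set_encode exists_SGD}, the squared norm evaluates to $\tfrac{9}{64} + \tfrac{1}{4} + O(1/n^2)$; taking the square root gives a constant strictly below $1$. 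Finally, $\sgdindgam_3$ is linear in $w$ with gradient of order $O(1/n^2)$, hence trivially convex and with negligible Lipschitz contribution.

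The main obstacle, such as it is, lies less in the convex analysis and more in verifying the norm bounds on $\phi(V,k)$, $\psi$, and $\alpha(\psi)$ that underlie the Lipschitz bound for $\sgdindgam_2$; these follow from the explicit construction of the encoder/decoder in \cref{diff_set_encode exists_SGD}. Once those bounds are in hand, summing the three Lipschitz constants yields a total well below $4$, and additivity of convexity closes out the argument.
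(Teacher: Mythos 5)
Your proposal is correct and follows essentially the same route as the paper: decompose $\sgdindf$ into $\sgdindgam_1+\sgdindgam_2+\sgdindgam_3$, establish convexity and a Lipschitz constant for each term (the paper cites \cref{convex_lip} for $\sgdindgam_1$ rather than re-deriving it, bounds $\sgdindgam_2$ as a maximum of $1$-Lipschitz affine functions, and crudely bounds $\sgdindgam_3$ by $2$), and sum. Your block-wise norm computations for $\sgdindgam_2$ and $\sgdindgam_3$ are just tighter versions of the same estimates, so nothing is missing.
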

For this construction of distribution and loss function, we show the following theorem,
\begin{theorem}
\label{SGD_lower_bound}
Assume that $n>2048$ and $\eta\leq \frac{1}{\sqrt{T}}$.
Consider the distribution
$\sgdindD$ %
and the loss function $\sgdindf$ with $d=712 n^2\log n +2n^2$, $\varepsilon = \frac{1}{n^2}
\br[0]{1-\cos\br[0]{\frac{2\pi}{|P(U)|}}}$ and $\delta_1=\frac{\eta}{8n^3}$.
Then, %
for Unprojected SGD (cf. \cref{sgd_update_rule} with $W=\R^d$) with $T=n$ iterations, initialized at $w_1=0$ with step size $\eta$, we have,  %
with probability at least $\frac{1}{2}$ over the choice of the training sample,
\begin{enumerate}[label=(\roman*)]
    \item 
    The iterates of SGD remain within the unit ball, namely $w_t \in \unitballd$ for all $t=1,\ldots,T$;
    \item
    For all $m=1,\ldots, T$, the $m$-suffix averaged iterate has:
    \[ 
        \sgdindempf(w_{T,m})
        - \sgdindempf(\erm) 
        = 
        \Omega\br[1]{\eta\sqrt{T}}
        .
    \]
\end{enumerate}
\end{theorem}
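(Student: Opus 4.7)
The plan is to mirror the proof of \cref{nonspec_lower_bound_GD}, adapting each ingredient to SGD's sequential access to examples. I first introduce a good event $\cE^{\sgdind}$ under which, for every $t \in [T]$, there exists a ``bad direction'' $u_t \in U$ with $u_t \in \bigcap_{i<t} V_i$ and $u_t \notin \bigcup_{i \geq t} V_i$. Since each $u \in U$ is independently included in $V$ with probability $\delta = 1/(4n^2)$, the expected number of such $u_t$'s for a given $t$ is $|U|\,\delta^{t-1}(1-\delta)^{n-t+1}$; using $|U| \geq 2^{d'/178} = 2^{4n\log n}$ from \cref{lem:set_direc_exists} (with $d' = 712n\log n$), each of these expectations is astronomically larger than $1$, and a second-moment plus union-bound argument analogous to \cref{intersection_events_gd} yields $\Pr(\cE^{\sgdind}) \geq 1/2$.

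\textbf{Trajectory under $\cE^{\sgdind}$.} The main technical step is an inductive per-iteration characterization of $w_t$ analogous to \cref{diff_GD_expression}. I would show that for $t \geq 2$, $\vecpart{w_t}{\last, i}$ equals $-(\eta/4n^2)\phi(V_i, i)$ for $i < t$ and $0$ otherwise; $\vecpart{w_t}{1}$ is a small positive multiple of $u_1$ (seeded by the term $-\langle (1/n^3) u_1, \vecpart{w}{1} \rangle$ in $\sgdindgam_3$); $\vecpart{w_t}{k} = (\eta/8)\,u_k$ for $2 \leq k \leq t-2$; $\vecpart{w_t}{t-1} = (\eta/2)\,u_{t-1}$; and $\vecpart{w_t}{k} = 0$ for $k \geq t$. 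The induction step reduces to two ``unique maximum'' statements mirroring \cref{gradient_uc,gradient_decode,gradient_stab}: (i) at $(w_t, V_t)$ the inner max in $\sgdindgam_1$ is strictly attained by the floor $3\eta/32$, so $\nabla_w \sgdindgam_1(w_t, V_t) = 0$; this uses that under $\cE^{\sgdind}$ every $u_k$ lies outside $V_t$, bounding all cross-correlations by $\eta/64$; and (ii) at $(w_t, V_t)$ the inner max in $\sgdindgam_2$ over $(k, u, \psi) \in [n-1] \times U \times \Psi_k$ is uniquely attained at $(t-1, u_{t-1}, \psi^*_{t-1})$ with $\alpha(\psi^*_{t-1}) = u_t$, where $\psi^*_{t-1}$ encodes $V_1, \ldots, V_{t-1}$. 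Uniqueness harnesses both the encoding margin $\epsilon$ (to single out $\psi^*_{t-1}$) and near-orthogonality of $U$ (to single out $u_{t-1}$ as the maximizer of $\langle u, \vecpart{w_t}{t-1}\rangle$). The resulting gradient step encodes $V_t$ into $\vecpart{w_{t+1}}{\last, t}$, corrects $\vecpart{w_{t+1}}{t-1}$ from $(\eta/2)u_{t-1}$ down to $(\eta/8)u_{t-1}$, and installs $\vecpart{w_{t+1}}{t} = (\eta/2)u_t$, propagating the hypothesis to iteration $t+1$.

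\textbf{Empirical gap and boundedness.} With the trajectory of $w_T$ pinned down, the empirical-risk calculation closely parallels the last display in the proof of \cref{nonspec_lower_bound_GD}. For each training example $V_i$, under $\cE^{\sgdind}$ one has $u_k \in V_i$ iff $k > i$, so near-orthogonality of $U$ gives
\[
    \sgdindgam_1(w_T, V_i) \;\geq\; \tfrac{\eta}{32}\sqrt{16(n-i) + 9i},
\]
whereas at $w = 0$, $\sgdindgam_1(0, V_i) = (3\eta/32)\sqrt{T}$ and $\sgdindgam_2, \sgdindgam_3$ evaluate to at most $\delta_1$. Averaging over $i$ and using $\sqrt{16n-7i} - 3\sqrt{n} = \Omega(\sqrt{n})$ for $i \leq n/2$ gives $\sgdindempf(w_T) - \sgdindempf(0) = \Omega(\eta\sqrt{T})$; since $\sgdindempf(\erm) \leq \sgdindempf(0)$, part (ii) follows for $\indsuff = 1$, and the general suffix-average case follows via Jensen's inequality on the convex maps $\sgdindgam_1(\cdot, V_i)$ as in the full proof of \cref{nonspec_lower_bound_GD}. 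Boundedness (statement (i)) follows from the inductive formula: $\|w_t\|^2 \leq (\eta/8)^2 T + (\eta/2)^2 + O(\eta^2) \leq 1$ using $\eta \leq 1/(5\sqrt{T})$, so the iterates never leave $\unitballd$ and the projection is inactive.

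\textbf{Main obstacle.} The hard part is the uniqueness analysis in step (ii): whereas the GD construction separates decoding (\cref{gradient_decode}) from stability (\cref{gradient_stab}), the SGD version fuses them into $\sgdindgam_2$, whose inner maximum ranges over three coupled variables $(k, u, \psi)$. The encoding margin $\epsilon$ built into the encoder/decoder must simultaneously beat the $O(\eta/n)$ cross-correlations arising from $\langle u, \vecpart{w_t}{k}\rangle$ when $u \neq u_k$, the $O(\eta/n)$ contribution of $\langle \alpha(\psi), \vecpart{w_t}{k+1}\rangle$ when $\alpha(\psi) \neq u_{k+1}$, and any perturbations to the encoded state accumulated across earlier iterations. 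Verifying that the specific normalizations in \cref{sgd_loss_func} ($1/4n$, $1/4n^2$, $1/n^3$) make every such margin strictly positive---while simultaneously keeping $\sgdindf$ $O(1)$-Lipschitz as in \cref{convex_lip_SGD}---is the core calculation. Once this is done, the differentiability claim in \cref{SGDmain} follows from the randomized-smoothing argument outlined in \cref{togetheranddiff}, since the above trajectory analysis shows that $\sgdindf$ admits a unique subgradient along the realized SGD path.
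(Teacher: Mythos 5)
Your overall architecture matches the paper's: a per-iteration good event, an inductive trajectory characterization driven by unique-maximizer lemmas for each loss component, and a final empirical-risk computation through the $\sqrt{\sum_k\max(\cdot)^2}$ term using the triangle/Jensen inequality over training examples. Two of your intermediate claims, however, do not survive contact with the actual construction, and the second one breaks the induction as you describe it.

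First, your good event is too weak. You require only that \emph{some} $u_t\in\bigcap_{i<t}V_i\cap\bigcap_{i\ge t}\overline{V_i}$ exists, but the decoder $\alpha$ is deterministic: it returns the \emph{minimal-index} element $J_t$ of $P_t=\bigcap_{i<t}V_i$ (cf.\ \cref{diff_set_encode exists_SGD}). For $\nabla\sgdindgam_1(w_s,V_s)=0$ to persist at later steps you need this specific $J_t$ to lie in $S_t=\bigcap_{i\ge t}\overline{V_i}$, not merely some element of $P_t$. The paper's event $\cE'$ is built exactly around $J_t\in S_t$, and its probability bound uses that $J_t$ is determined by $V_1,\dots,V_{t-1}$ and hence independent of $V_t,\dots,V_n$ (a first-moment/conditioning argument, not a second-moment one). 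Second, and more seriously, your inductive hypothesis for the encoding blocks --- $\vecpart{w_t}{\last,i}=-(\eta/4n^2)\phi(V_i,i)$ for each $i<t$ --- is wrong in both sign and structure. The decoding term in $\sgdindgam_2$ at round $t$ reads only $\langle\vecpart{w}{\last,t-1},\frac{1}{4n}\psi\rangle$ for $\psi\in\Psi_{t-1}$, and $\Psi_{t-1}$ ranges over encodings of \emph{entire prefixes} $\frac1n\sum_{i\le t-1}\phi(V_i,i)$. If block $t-1$ holds only $\phi(V_{t-1},t-1)$, this inner product cannot distinguish candidate prefixes differing in $V_1,\dots,V_{t-2}$, so $\psi^*_{t-1}$ is not the unique maximizer and the decoding fails. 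The actual dynamics (\cref{diffSGDexpression}) relay the accumulated encoding: the gradient of $\sgdindgam_2$ contributes $+\frac{1}{4n^2}\sum_{i\le t-1}\phi(V_i,i)$ at block $t-1$ and $-\frac{1}{4n^2}\sum_{i\le t}\phi(V_i,i)$ at block $t$, so after the step block $t-1$ is zeroed and block $t$ carries the extended prefix (block $1$ separately accumulates the $\sgdindgam_3$ contributions). With that corrected hypothesis your induction closes and the rest of your argument --- the zero-gradient claim for $\sgdindgam_1$, the $\Omega(\eta\sqrt n)$ averaging computation, and the norm bound --- goes through as in the paper.
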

\paragraph{Algorithm's dynamics.}
As in GD, we provide a key lemma that characterizes the trajectory of SGD
under a certain "good event".
For this good event, given a random training set sample $S=\{V_i\}_{i=1}^n$, we denote $P_t=\bigcap_{i=1}^{t-1}V_i$ and $S_t=\bigcap_{i=t}^{t=n}\overline{V_i}$. Moreover, if $P_t\neq \emptyset$, we denote $r_t=\argmin \{r:V_t\in P_t\}$ and $J_t=v_{r_{t}}\in U$.
The good event is given as follows,
\begin{equation}
    \label{good_event_sgd}
    \cE'=\{\forall t\leq T \ P_t\neq\emptyset \ \text{and} \ J_t\in S_t\}
\end{equation}
In the following lemma we show that $\cE'$ occurs with a constant probability. The proof appears in \cref{sec:proofs_SGD}.
\begin{lemma}
    \label{prefix_event}
    For $T=n$ and the event $\cE'$ defined in \cref{good_event_sgd}, it holds that $\Pr(\cE')\geq \frac{1}{2}$.
\end{lemma}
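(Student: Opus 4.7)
The plan is to decompose the failure event $\cE'^c$ via a union bound over $t\in[n]$ into two types of obstructions: either $P_t=\emptyset$, in which case $J_t$ is not even well-defined; or $P_t\neq \emptyset$ but the candidate witness $J_t$ lies in some later sample, i.e.\ $J_t\notin S_t$. I would bound these two contributions separately and show they sum to at most $\tfrac12$.

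For the first obstruction, \cref{lem:set_direc_exists} gives $|U|\geq 2^{d'/178}=2^{4n\log n}$, a doubly-exponentially large set. Since each $u\in U$ independently belongs to every $V_i$ with probability $\delta=1/(4n^2)$, the cardinality $|P_t|$ is distributed as $\mathrm{Binomial}(|U|,\delta^{t-1})$. Hence
\[
\Pr(P_t=\emptyset)=(1-\delta^{t-1})^{|U|}\leq \exp(-|U|\delta^{t-1}),
\]
and even in the worst case $t=n$ the exponent $|U|\delta^{n-1}\geq n^{2n+2}/4^{n-1}$ is enormous, so $\sum_{t=1}^n\Pr(P_t=\emptyset)$ is entirely negligible.

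For the second obstruction, the key structural observation is that $J_t$ is determined by $V_1,\ldots,V_{t-1}$, while $S_t=\bigcap_{i=t}^n \overline{V_i}$ depends only on $V_t,\ldots,V_n$; under the product law $\sgdindD^n$ these two blocks of samples are independent. Conditioning on $V_1,\ldots,V_{t-1}$ on the event $P_t\neq\emptyset$ fixes $J_t$ to some specific vector $v\in U$, and by independence
\[
\Pr(J_t\notin S_t\mid V_1,\ldots,V_{t-1})=1-(1-\delta)^{n-t+1}\leq (n-t+1)\delta
\]
by Bernoulli's inequality. Integrating and summing,
\[
\sum_{t=1}^n \Pr(P_t\neq\emptyset,\ J_t\notin S_t)\;\leq\; \delta\sum_{t=1}^n (n-t+1)\;=\;\frac{\delta\, n(n+1)}{2}\;=\;\frac{n+1}{8n}.
\]

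Combining, $\Pr(\cE'^c)\leq (n+1)/(8n)+\mathrm{o}(1)<\tfrac12$ for $n>2048$ as assumed. I do not expect a serious obstacle: the entire proof is a short Bernoulli/Binomial calculation. The only point that deserves care is the independence split between $\{V_1,\ldots,V_{t-1}\}$ and $\{V_t,\ldots,V_n\}$, which is what reduces the conditional probability $\Pr(J_t\notin S_t\mid V_1,\ldots,V_{t-1})$ to the probability that a \emph{fixed} vector is absent from $n-t+1$ fresh independent samples; everything else is a routine union bound.
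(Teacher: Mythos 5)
Your proposal is correct and follows essentially the same route as the paper: a union bound over $t$ separating the events $\{P_t=\emptyset\}$ and $\{P_t\neq\emptyset,\ J_t\notin S_t\}$, the bound $\Pr(P_t=\emptyset)=(1-\delta^{t-1})^{|U|}$ made negligible by $|U|\geq n^{4n}$, and the independence of $J_t$ (a function of $V_1,\ldots,V_{t-1}$) from $S_t$ (a function of $V_t,\ldots,V_n$) to reduce the second event to the absence of a fixed vector from $n-t+1$ fresh samples. The only difference is cosmetic: you sum $(n-t+1)\delta$ exactly to get roughly $\tfrac18$, whereas the paper bounds each term by $n\delta$ to get $\tfrac14$; both suffice.
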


Under this event, the dynamics of $SGD$ is characterized as follows,

\begin{lemma}
\label{diffSGDexpression}
Assume the conditions of \cref{SGD_lower_bound}, and consider the iterates of \emph{unprojected} $SGD$, with step size $\eta\leq \frac{1}{\sqrt{T}}$ initialized at $w_1=0$.
Under the event $\cE'$, we have for $t\geq 4$ and $s\neq \last$,
\begin{align*}
    \vecpart{w_t}{k}=\begin{cases}
        -\frac{3}{8}\eta u_{1} + (t-1)\frac{\eta}{n^3} u_1 &\quad k=1
        \\
        \frac{1}{8}\eta u_{k} &\quad 2\leq k\leq t-2
        \\
        \frac{1}{2}\eta u_{t-1} &\quad k=t-1
        \\
        0 &\quad t\leq k\leq n,
    \end{cases}
\end{align*} 
and for $s=\last$,
\begin{align*}
    \vecpart{w_t}{\last,k}=\begin{cases}
         \frac{\eta}{4n^2}\sum_{i=2}^{t-1} \phi(V_i,1) & k= 1
        \\ \frac{\eta}{4n^2}\sum_{i=1}^{t-1}\phi(V_i,i) & k= t-1
        \\ 0 & k\notin\{1,t-1\}.
    \end{cases}
\end{align*}
    where $u_1 \in U$ and 
    every another vector $u_{k}$ holds $u_{k}\in \bigcap_{i=1}^{k-1}V_i\cap \bigcap_{i=k}^{n}\overline{V_i}$.
\end{lemma}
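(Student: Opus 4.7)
My approach is to mirror the inductive strategy used for GD in the proof of \cref{diff_GD_expression}. I would proceed by induction on $t$, first establishing the claim for a small base case $t = 4$ by direct unrolling of the first three SGD updates from $w_1 = 0$ (a tedious but mechanical computation tracking which term of $\sgdindgam_1, \sgdindgam_2, \sgdindgam_3$ maximizes and determines the gradient at each step), and then carrying out the inductive step $w_t \mapsto w_{t+1}$ by separately analyzing the gradient contribution of each of the three summands of $\sgdindf(w_t, V_t)$.

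For $\sgdindgam_1$, I would prove an analog of \cref{gradient_uc}: under $\cE'$, the inductive hypothesis places a vector $u_k$ into subspace $\vecpart{W}{k}$ for $2 \leq k \leq t-1$, and the definition $u_k \in \bigcap_{i=k}^{n} \overline{V_i}$ together with $t \geq k$ forces $u_k \notin V_t$. Combined with the near-orthogonality of $U$ (\cref{lem:set_direc_exists}), this bounds $\max_{u \in V_t} \langle u, \vecpart{w_t}{k}\rangle \leq \frac{\eta}{16} < \frac{3\eta}{32}$, so every inner maximum in $\sgdindgam_1(w_t, V_t)$ attains its constant floor $\frac{3\eta}{32}$ and the gradient of $\sgdindgam_1$ at $w_t$ vanishes. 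For $\sgdindgam_3$, the gradient is just the linear vector $(-\tfrac{1}{4n^2}\phi(V_t,1),\, -\tfrac{1}{n^3}u_1,\, 0, \ldots, 0)$ placed in coordinates $(0,1)$ and $1$, contributing exactly the $\tfrac{\eta}{n^3}u_1$ increment in $\vecpart{w}{1}$ and the $\tfrac{\eta}{4n^2}\phi(V_t,1)$ increment in $\vecpart{w}{\last,1}$ needed by the claim.

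The main obstacle is $\sgdindgam_2$: I must identify the \emph{unique} maximizer $(k^\star, u^\star, \psi^\star)$ of its inner maximization at $w_t$ and show $k^\star = t-1$, $u^\star = u_{t-1}$, $\psi^\star = \psi^\star_{t-1}$ with $\alpha(\psi^\star)$ equal to some $u_t \in \bigcap_{i=1}^{t-1} V_i \cap \bigcap_{i=t}^{n} \overline{V_i}$ (whose existence under $\cE'$ is furnished by $J_t$). The selection of $k^\star = t-1$ should follow from a calculation analogous to \cref{gradient_stab}: by the inductive hypothesis the only index where $\tfrac{3}{8}\langle u, \vecpart{w_t}{k}\rangle$ can be as large as $\tfrac{3}{16}\eta$ is $k = t-1$ (using $\vecpart{w_t}{t-1} = \tfrac{1}{2}\eta u_{t-1}$), while for any other $k$ near-orthogonality shaves the value below $\tfrac{1}{8}\eta$. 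The selection of $\psi^\star = \psi^\star_{t-1}$ follows from the encoding: the inductive hypothesis gives $\vecpart{w_t}{\last, t-1} = \tfrac{\eta}{4n^2}\sum_{i=1}^{t-1}\phi(V_i,i)$, and the separation margin $\varepsilon$ guaranteed by the construction of $(\phi, \alpha, \Psi_{t-1})$ in \cref{diff_set_encode exists_SGD} makes $\psi^\star_{t-1}$ the unique maximizer of $\langle \vecpart{w_t}{\last,t-1}, \tfrac{1}{4n}\psi\rangle$ over $\psi \in \Psi_{t-1}$; this separation dominates all the other $O(\eta)$ quantities in the maximand by our parameter choices, and also beats the floor $\delta_1$. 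Once this triple is pinned down, the gradient of $\sgdindgam_2$ is read off directly and lives only in coordinates $(0, t-1)$, $(0, t)$, $t-1$, and $t$.

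Finally, I would combine the three contributions and verify that a step of size $\eta$ on $w_t$ exactly reproduces the claimed form of $w_{t+1}$: the updates at coordinate $t-1$ ($\tfrac{1}{2}\eta u_{t-1} - \tfrac{3\eta}{8}u_{t-1} = \tfrac{\eta}{8}u_{t-1}$), at coordinate $t$ ($0 + \tfrac{\eta}{2}\alpha(\psi^\star) = \tfrac{\eta}{2}u_t$), at coordinate $(0, t-1)$ (the $\tfrac{\eta}{4n}\psi^\star$ subtraction exactly cancels the stored encoding, zeroing the slot), at coordinate $(0, t)$ (the $+\tfrac{\eta}{4n}\psi^\star + \tfrac{\eta}{4n^2}\phi(V_t, t)$ increment builds up $\tfrac{\eta}{4n^2}\sum_{i=1}^{t}\phi(V_i,i)$), and the $\sgdindgam_3$-driven updates at coordinates $1$ and $(0,1)$, all align with the statement. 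The main technical obstacle I expect is the margin bookkeeping in the second paragraph above, i.e., verifying that the separation $\varepsilon$ supplied by the encoder strictly dominates every other quantity appearing in the maximand of $\sgdindgam_2$ for \emph{all} alternative triples $(k, u, \psi)$ simultaneously, so that the maximizer is truly unique and the gradient is well-defined.
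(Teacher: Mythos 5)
Your proposal is correct and follows essentially the same route as the paper: induction with base case $t=4$, showing $\nabla\sgdindgam_1(w_t,V_t)=0$ via $u_k\in\bigcap_{i=k}^n\overline{V_i}$ plus near-orthogonality (the paper's \cref{gradient_uc_sgd}), reading off the linear gradient of $\sgdindgam_3$, and pinning down the unique maximizer $(t-1,u_{t-1},\psi^*_{t-1})$ of $\sgdindgam_2$ via the encoder's margin so that $\alpha(\psi^*_{t-1})=J_t\in P_t\cap S_t$ under $\cE'$ (the paper's \cref{gradient_decode_sgd} combined with \cref{diff_set_encode exists_SGD}). The margin bookkeeping you flag as the main obstacle is exactly what the paper's \cref{gradient_decode_sgd} carries out case by case over $k$.
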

For proving this key lemma, we analyze how the terms $\sgdindgam_1,\sgdindgam_2$ affects the dynamics of SGD under the event $\cE'$. First, we show that the gradients of $\sgdindgam_1$ does not affect the dynamics of $SGD$, as the gradient of this term in any iterate $w_t$ is zero. The idea is formalized in the following lemma. The proof is deferred to \cref{sec:proofs_SGD}.
\begin{lemma}
\label{gradient_uc_sgd}
    Assume the conditions of \cref{SGD_lower_bound} and the event $\cE'$. 
    Let $w\in \R^d$ and $t$ be such that for every $2\leq k\leq t-1$, $\vecpart{w}{k}=c\eta u_k$ for $c\leq \frac{1}{2}$ and every such $u_k$ holds $u_k\in \bigcap_{i=1}^{k-1}V_i\cap \bigcap_{i=k}^{n}\overline{V_i}$, and for every $t\leq k\leq T$, $\vecpart{w}{k}=0$. Then, for every $t$, it holds that, $\sgdindgam_1$ is differentiable at $(w,V)$ and for all $i \in [n]$, we have
    $\nabla\sgdindgam_1(w,V_t)=0.$
\end{lemma}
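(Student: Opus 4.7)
The plan is to show that, in an open neighborhood of the given $w$, the inner quantity $\max(\tfrac{3\eta}{32},\,\max_{u\in V_t}\langle u,\vecpart{w}{k}\rangle)$ is realized strictly by the constant $\tfrac{3\eta}{32}$ for every $k\in\{2,\ldots,T\}$. Once this is established, $\sgdindgam_1(\cdot,V_t)$ is locally the constant $\tfrac{3\eta}{32}\sqrt{T-1}>0$; in particular, the argument of the outer square root is bounded away from zero in that neighborhood, so the composition is smooth and its gradient at $w$ vanishes.

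To verify the claim, I split on $k$. For $t\le k\le T$ the hypothesis gives $\vecpart{w}{k}=0$, so $\max_{u\in V_t}\langle u,\vecpart{w}{k}\rangle=0<\tfrac{3\eta}{32}$ trivially. For $2\le k\le t-1$ the hypothesis writes $\vecpart{w}{k}=c\eta u_k$ with $c\le\tfrac12$ and $u_k\in\bigcap_{i=1}^{k-1}V_i\cap\bigcap_{i=k}^{n}\overline{V_i}$. The crucial observation is that $k\le t-1<t\le n$, so the index $t$ lies in $\{k,k+1,\ldots,n\}$, over which $u_k$ is excluded from $V_i$; in particular, $u_k\notin V_t$. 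Consequently every $u\in V_t\subseteq U$ is distinct from $u_k$, so \cref{lem:set_direc_exists} yields $|\langle u,u_k\rangle|\le\tfrac18$, and hence $\langle u,\vecpart{w}{k}\rangle=c\eta\langle u,u_k\rangle\le\tfrac12\cdot\eta\cdot\tfrac18=\tfrac{\eta}{16}<\tfrac{3\eta}{32}$.

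Combining the two cases, the strict inequality $\max_{u\in V_t}\langle u,\vecpart{w}{k}\rangle<\tfrac{3\eta}{32}$ holds uniformly in $k$ at $w$, and by continuity it persists on a neighborhood of $w$ (as $V_t$ is finite). Therefore the outer max collapses to $\tfrac{3\eta}{32}$ on that neighborhood, each squared term in the sum is locally the constant $(\tfrac{3\eta}{32})^2$, and $\sgdindgam_1(\cdot,V_t)$ is locally constant at $w$, proving both differentiability and $\nabla\sgdindgam_1(w,V_t)=0$.

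The only mildly delicate point is the tight case $k=t-1$, where $c$ may equal $\tfrac12$ (reflecting the fresh half-step of magnitude $\tfrac{\eta}{2}u_{t-1}$ appearing in \cref{diffSGDexpression}); this is precisely what forces the numerical calibration between the bound $\tfrac{\eta}{16}$ and the constant $\tfrac{3\eta}{32}$ in the loss. All remaining $k$'s carry the smaller coefficient $\tfrac18$ and thus enjoy considerably more slack, so the proof reduces to careful index bookkeeping combined with a single application of near-orthogonality, and I do not anticipate any real obstacle beyond verifying that the index $t$ indeed belongs to the ``exclusion'' range for each relevant $u_k$.
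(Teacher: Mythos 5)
Your proof is correct and follows essentially the same route as the paper's: the components with $k\geq t$ vanish, and for $2\leq k\leq t-1$ the hypothesis $u_k\in\bigcap_{i=k}^{n}\overline{V_i}$ together with $t>k$ forces $u_k\notin V_t$, so near-orthogonality gives $\max_{u\in V_t}\langle u,\vecpart{w}{k}\rangle\leq\tfrac{\eta}{16}<\tfrac{3\eta}{32}$ and the outer maximum is attained strictly at the constant, making $\sgdindgam_1(\cdot,V_t)$ locally constant with zero gradient. Your additional remark that the strict inequality persists on a neighborhood (justifying differentiability) is a slightly more careful spelling-out of what the paper leaves implicit.
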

Now, we analyze the gradient of $\sgdindgam_2$. The role of this component is to decode the next "bad solution" $\alpha\left(\frac{1}{n}\sum_{i=1}^{t-1}\phi(V_i, i)\right)$ from the sets $V_1,\ldots,V_{t-1}$, and make a progress in this direction in some subspace $\vecpart{W}{t-1}$. In the following lemma, we show that the gradient of $\sgdindgam_2$, serves this goal.
\begin{lemma}
    \label{gradient_decode_sgd}
    Assume the conditions of \cref{SGD_lower_bound} and the event $\cE'$. For every $k$, let $\psi^*_k=\frac{1}{n}\sum_{t=1}^{k}\phi(V_t, t)$. Moreover, let $m\geq 3$ and $w\in \R^d$ such that $\vecpart{w}{1}=c\eta u_1$ for $ -\frac{3}{8}\leq c\leq 0$ and $u_1\in U$, for every $2\leq k\leq m-1$, $\vecpart{w}{k}=\frac{1}{8}\eta u_k$ such that every $u_k$ holds $u_k\in \bigcap_{t=1}^{k-1}V_t\cap \bigcap_{t=k}^{n}\overline{V_t}$, $\vecpart{w}{m}=\frac{1}{2}\eta u_m$ where $u_m$ holds $u_m\in \bigcap_{t=1}^{m-1}V_t\cap \bigcap_{i=m}^{n}\overline{V_t}$
    and for every $m+1\leq k\leq T$, $\vecpart{w}{k}=0$.
    Moreover, assume that $w$ holds $\vecpart{w}{0,m}=\frac{\eta}{4n}\psi^*_m$, $\|\vecpart{w}{0,1}\|\leq \eta$ and for every $k\notin \{m,1\}$, $\vecpart{w}{0,k}=0$.
    Then, 
for every $V\subseteq U$,
    $\sgdindgam_2$ is differentiable at $(w,V)$ and, we have for $k\neq 0$,
    \begin{align*}
    \vecpart{\nabla \sgdindgam_2(w,V)}{k}=
    \begin{cases}
        \frac{3}{8} u_{m} &\quad k=m
        \\
        -\frac{1}{2} \alpha(\psi_{m}^*) &\quad k=m+1
        \\
        0 &\quad k\notin \{m,m+1\}   
    \end{cases}
\end{align*}
and, 
\begin{align*}
    \vecpart{\nabla \sgdindgam_2(w,V)}{\last, k}=
    \begin{cases}
        \frac{1}{4n^2}\sum_{t=1}^{m}\phi(V_t,i)&\quad k= m
        \\ -\frac{1}{4n^2}\sum_{t=1}^{m}\phi(V_t,i)-\frac{1}{4n^2}\phi(V,i)&\quad k= m+1
        \\  0&\quad k\notin \{m,m+1\}.
    \end{cases}
\end{align*}
\end{lemma}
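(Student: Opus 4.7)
The plan is to prove that the inner maximization defining $\sgdindgam_2(w,V)$ is attained at the unique triple $(k,u,\psi)=(m,u_m,\psi^*_m)$, with a strict positive margin both over $\delta_1$ and over every other admissible triple. Given this, $\sgdindgam_2(\cdot,V)$ coincides on a small neighborhood of $w$ with a single linear functional, so it is differentiable at $w$ and the gradient can be read off coordinate block by coordinate block. Conceptually the argument merges the mechanisms of the GD analogues \cref{gradient_decode} and \cref{gradient_stab}: the $\psi$-component identifies the encoded prefix $\{V_1,\ldots,V_m\}$ from the iterate, while the $(k,u)$-component selects the correct subspace index and direction for the next gradient step.

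I would begin by evaluating the inner expression at the candidate $(m,u_m,\psi^*_m)$. Substituting $\vecpart{w}{m}=\tfrac{\eta}{2}u_m$, $\vecpart{w}{m+1}=0$, $\vecpart{w}{\last,m}=\tfrac{\eta}{4n}\psi^*_m$, and $\vecpart{w}{\last,m+1}=0$ makes four of the five terms either vanish or simplify cleanly, leaving a total of $\tfrac{3\eta}{16}+\tfrac{\eta}{16n^2}\|\psi^*_m\|^2\geq\tfrac{3\eta}{16}$, which is already much larger than $\delta_1=\tfrac{\eta}{8n^3}$.

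Next I would establish strict suboptimality of every alternative triple through case analysis. If $k=m$ and $\psi=\psi^*_m$ but $u\neq u_m$, the first term shrinks by a factor of at least $8$ because $|\langle u,u_m\rangle|\leq \tfrac18$ by \cref{lem:set_direc_exists}, while all other terms remain unchanged, yielding an $\Omega(\eta)$ drop. If $k=m$ but $\psi\neq\psi^*_m$, only the third and fifth terms are affected; the encoding construction of $(\phi,\alpha,\Psi_m,\varepsilon)$ referenced from the accompanying encoding lemma is calibrated so that $\psi^*_m$ uniquely maximizes $\langle\vecpart{w}{\last,m},\psi\rangle$ over $\Psi_m$ with margin proportional to $\varepsilon\cdot\tfrac{\eta}{n}$, and this margin dominates any competing contribution (the $\tfrac12\langle\alpha(\psi),\vecpart{w}{m+1}\rangle$ term vanishes here because $\vecpart{w}{m+1}=0$). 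If $k\neq m$, then neither $\vecpart{w}{k}$ nor $\vecpart{w}{k+1}$ is proportional to $u_m$, except at $k=m-1$ where $\vecpart{w}{m}=\tfrac{\eta}{2}u_m$ appears in the second term; near-orthogonality of $U$ then bounds the sum of the first two terms by a constant factor below $\tfrac{3\eta}{16}$, and the encoding terms vanish unless one of $k,k+1$ equals $1$, in which case the hypothesis $\|\vecpart{w}{\last,1}\|\leq\eta$ together with Cauchy--Schwarz keeps the contribution at $O(\eta/n)$.

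With the unique maximizer established with a strict positive margin, $\sgdindgam_2(\cdot,V)$ is locally linear, hence differentiable at $w$, and differentiating the linear map indexed by $(m,u_m,\psi^*_m)$ in each block produces exactly the gradient formulas stated in the lemma. The main obstacle is the last case above: because $U$ is only near-orthogonal (rather than exactly orthogonal) and because the encoding vector at index $1$ is nonzero, spurious cross-contributions accumulate across $k\neq m$ and across the encoding blocks, and all of them must be driven strictly below the $\tfrac{3\eta}{16}$ achieved at the candidate. This is where the precise scalings $\tfrac{1}{4n}$, $\tfrac{1}{4n^2}$, $\tfrac{1}{n^3}$ in the loss, the quantitative margin $\varepsilon$ from the encoding construction, and the sharp hypothesis $\|\vecpart{w}{\last,1}\|\leq\eta$ (rather than a coarser bound) must fit together; verifying this compatibility is the quantitative heart of the proof.
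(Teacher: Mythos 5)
Your proposal is correct and follows essentially the same route as the paper's proof: a case analysis over the triples $(k,u,\psi)$ showing that the inner maximum is attained uniquely at $(m,u_m,\psi^*_m)$ with a strict margin (over $\delta_1$ and over all competitors, using near-orthogonality of $U$ for the $u$-cases and the encoding margin $\varepsilon$ for the $\psi$-case), after which local linearity gives differentiability and the stated gradient. The only nitpick is that for $k=m$ and $\psi\neq\psi^*_m$ it is only the third term $\langle\vecpart{w}{\last,m},\tfrac{1}{4n}\psi\rangle$ that varies with $\psi$ (the second vanishes since $\vecpart{w}{m+1}=0$ and the fifth does not involve $\psi$), and the quantitative bounds you defer are exactly the computations the paper carries out.
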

Now we can prove \cref{diffSGDexpression}.
\begin{proof}[of \cref{diffSGDexpression}]
    We assume that $\cE'$ holds and prove the lemma by induction on $t$.
    We begin from the basis of the induction, $t=4$, which is proved in \cref{sgd_exp_4} in \cref{sec:proofs_SGD}. 
Now, we assume the hypothesis of the induction, that the lemma holds for iteration $t$ and turn to show the required for iteration $t+1$.

First, we notice that for every $2\leq k\leq t-1$, $\vecpart{w_t}{k}=c\eta u_k$ for $c\leq \frac{1}{2}$ and every such $u_k$ holds $u_k\in \bigcap_{i=1}^{k-1}V_i\cap \bigcap_{i=k}^{n}\overline{V_i}$, and for every $t\leq k\leq T$, $\vecpart{w_t}{k}=0$. Then, by \cref{gradient_uc_sgd}, we have that $\nabla\sgdindgam_1(w_t,V_t)=0.$
    
Second, $\sgdindgam_3$ is a linear function, thus,
\[\nabla \vecpart{\sgdindgam_3(w_t,V_t)}{s}=\begin{cases}
    -\frac{1}{n^3}u_1 &s=1
    \\ -\frac{1}{4n^2}\phi(V_t,1)& s= \last,1
    \\0 &
\text{otherwise}.\end{cases}\]
Third, For $\sgdindgam_2(w_t,V_t)$, we notice for $m=t-1\geq 3$ it holds that $\vecpart{w_t}{1}=c\eta u_1$ for $ -\frac{3}{8}\leq c\leq 0$ and $u_1\in U$, for every $2\leq k\leq m-1$, $\vecpart{w_t}{k}=\frac{1}{8}\eta u_k$ such that every $u_k$ holds $u_k\in \bigcap_{t=1}^{k-1}V_t\cap \bigcap_{t=k}^{n}\overline{V_t}$, $\vecpart{w_t}{m}=\frac{1}{2}\eta u_m$ where $u_m$ holds $u_m\in \bigcap_{t=1}^{m-1}V_t\cap \bigcap_{i=m}^{n}\overline{V_t}$,
    and for every $m+1\leq k\leq T$, $\vecpart{w}{k}=0$.
    Moreover, $w_t$ holds $\vecpart{w}{0,m}=\frac{\eta}{4n}\psi^*_m$, $\|\vecpart{w_t}{0,1}\|\leq \eta$ and for every $k\notin \{m,1\}$, $\vecpart{w_t}{0,k}=0$.
Then, by \cref{gradient_decode_sgd}, we get that, we have for $k\neq 0$,
    \begin{align*}
    \vecpart{\nabla \sgdindgam_2(w_t,V_t)}{k}=
    \begin{cases}
        \frac{3}{8} u_{t-1} &\quad k=t-1
        \\
        -\frac{1}{2} \alpha(\psi_{t-1}^*) &\quad k=t
        \\
        0 &\quad k\notin \{t-1,t\}   
    \end{cases}
\end{align*}
and, 
\begin{align*}
    \vecpart{\nabla \sgdindgam_2(w_t,V_t)}{\last, k}=
    \begin{cases}
        \frac{1}{4n^2}\sum_{i=1}^{t-1}\phi(V_i,i)&\quad k= m
        \\ -\frac{1}{4n^2}\sum_{i=1}^{t}\phi(V_i,i)&\quad k= m+1
        \\  0&\quad k\notin \{m,m+1\}.
    \end{cases}
\end{align*}
Now, by \cref{diff_set_encode exists_SGD}, for $j=\argmin_{i}\{i:v_i \in \bigcap_{i=1}^{t-1} V_i\}$, we get that
\begin{align*}
    \alpha(\psi_{t-1}^*)=v_{j}\in \bigcap_{i=1}^{t-1} V_i.
\end{align*}
 We notice that $\bigcap_{i=1}^{t-1} V_i=P_t$ and thus $\alpha(\psi_{t-1}^*)=J_t$. Then, 
by $\cE'$, $\alpha(\psi_{t-1}^*)$ also holds $\alpha(\psi_{t-1}^*)\in S_{t}$.
Combining the above together, we get, for $u_t=\alpha(\psi_{t-1}^*)\in P_t\cap S_t$,
\begin{align*}
    \vecpart{\nabla f(w_t,V_t)}{k}=
    \begin{cases}
        - \frac{1}{n^3}u_1&\quad k=1\\
        \frac{3}{8} u_{t-1} &\quad k=t-1
        \\
        -\frac{1}{2} u_{t} &\quad k=t
        \\
        0 &\quad k\notin \{1,t-1,t\},
        \end{cases}
\end{align*}
and,
\begin{align*}
    \vecpart{\nabla f(w_t,V_t)}{\last, k}=
    \begin{cases}
        -\frac{1}{4n^2}\phi(V_3,1) &\quad k=1
        \\ \frac{1}{4n^2}\sum_{i=1}^{t-1}\phi(V_i,i)&\quad k=t-1
        \\ -\frac{1}{4n^2}\sum_{i=1}^{t}\phi(V_i,i)&\quad k=t
        \\  0&\quad k\notin \{1,t-1,t\},
    \end{cases}
\end{align*}
and the lemma follows.
\end{proof}
The proof of \cref{SGD_lower_bound} is similar to \cref{nonspec_lower_bound_GD}, using \cref{diffSGDexpression} instead of \cref{diff_GD_expression}, and is deferred to \cref{sec:proofs_SGD}.

\subsection*{Acknowledgments}

This project has received funding from the European Research Council (ERC) under the European Union’s Horizon 2020 research and innovation program (grant agreements No.\ 101078075; 882396).
Views and opinions expressed are however those of the author(s) only and do not necessarily reflect those of the European Union or the European Research Council. Neither the European Union nor the granting authority can be held responsible for them.
This work received additional support from the Israel Science Foundation (ISF, grant number 2549/19), from the Len Blavatnik and the Blavatnik Family foundation, and from the Adelis Foundation.

\bibliographystyle{abbrvnat}
\bibliography{main}

\begin{thebibliography}{24}
\providecommand{\natexlab}[1]{#1}
\providecommand{\url}[1]{\texttt{#1}}
\expandafter\ifx\csname urlstyle\endcsname\relax
  \providecommand{\doi}[1]{doi: #1}\else
  \providecommand{\doi}{doi: \begingroup \urlstyle{rm}\Url}\fi

\bibitem[Alon et~al.(1997)Alon, Ben-David, Cesa-Bianchi, and
  Haussler]{alon1997scale}
N.~Alon, S.~Ben-David, N.~Cesa-Bianchi, and D.~Haussler.
\newblock Scale-sensitive dimensions, uniform convergence, and learnability.
\newblock \emph{Journal of the ACM (JACM)}, 44\penalty0 (4):\penalty0 615--631,
  1997.

\bibitem[Amir et~al.(2021{\natexlab{a}})Amir, Carmon, Koren, and
  Livni]{amir2021never}
I.~Amir, Y.~Carmon, T.~Koren, and R.~Livni.
\newblock Never go full batch (in stochastic convex optimization).
\newblock \emph{Advances in Neural Information Processing Systems},
  34:\penalty0 25033--25043, 2021{\natexlab{a}}.

\bibitem[Amir et~al.(2021{\natexlab{b}})Amir, Koren, and Livni]{amir2021sgd}
I.~Amir, T.~Koren, and R.~Livni.
\newblock {SGD} generalizes better than gd (and regularization doesn’t help).
\newblock In \emph{Conference on Learning Theory}, pages 63--92. PMLR,
  2021{\natexlab{b}}.

\bibitem[Bartlett and Mendelson(2002)]{bartlett2002rademacher}
P.~L. Bartlett and S.~Mendelson.
\newblock Rademacher and gaussian complexities: Risk bounds and structural
  results.
\newblock \emph{Journal of Machine Learning Research}, 3\penalty0
  (Nov):\penalty0 463--482, 2002.

\bibitem[Bartlett et~al.(2020)Bartlett, Long, Lugosi, and
  Tsigler]{bartlett2020benign}
P.~L. Bartlett, P.~M. Long, G.~Lugosi, and A.~Tsigler.
\newblock Benign overfitting in linear regression.
\newblock \emph{Proceedings of the National Academy of Sciences}, 117\penalty0
  (48):\penalty0 30063--30070, 2020.

\bibitem[Bartlett et~al.(2021)Bartlett, Montanari, and
  Rakhlin]{bartlett2021deep}
P.~L. Bartlett, A.~Montanari, and A.~Rakhlin.
\newblock Deep learning: a statistical viewpoint.
\newblock \emph{Acta numerica}, 30:\penalty0 87--201, 2021.

\bibitem[Bassily et~al.(2020)Bassily, Feldman, Guzm{\'a}n, and
  Talwar]{bassily2020stability}
R.~Bassily, V.~Feldman, C.~Guzm{\'a}n, and K.~Talwar.
\newblock Stability of stochastic gradient descent on nonsmooth convex losses.
\newblock \emph{Advances in Neural Information Processing Systems}, 33, 2020.

\bibitem[Belkin(2021)]{belkin2021fit}
M.~Belkin.
\newblock Fit without fear: remarkable mathematical phenomena of deep learning
  through the prism of interpolation.
\newblock \emph{Acta Numerica}, 30:\penalty0 203--248, 2021.

\bibitem[Blumer et~al.(1989)Blumer, Ehrenfeucht, Haussler, and
  Warmuth]{blumer1989learnability}
A.~Blumer, A.~Ehrenfeucht, D.~Haussler, and M.~K. Warmuth.
\newblock Learnability and the vapnik-chervonenkis dimension.
\newblock \emph{Journal of the ACM (JACM)}, 36\penalty0 (4):\penalty0 929--965,
  1989.

\bibitem[Bousquet and Elisseeff(2002)]{bousquet2002stability}
O.~Bousquet and A.~Elisseeff.
\newblock Stability and generalization.
\newblock \emph{The Journal of Machine Learning Research}, 2:\penalty0
  499--526, 2002.

\bibitem[Carmon et~al.(2023)Carmon, Livni, and Yehudayoff]{carmon2023sample}
D.~Carmon, R.~Livni, and A.~Yehudayoff.
\newblock The sample complexity of {ERM}s in stochastic convex optimization.
\newblock \emph{arXiv preprint arXiv:2311.05398}, 2023.

\bibitem[Feldman(2016)]{feldman2016generalization}
V.~Feldman.
\newblock Generalization of {ERM} in stochastic convex optimization: The
  dimension strikes back.
\newblock In \emph{Advances in Neural Information Processing Systems},
  volume~29, 2016.

\bibitem[Flaxman et~al.(2005)Flaxman, Kalai, and McMahan]{withoutgrad}
A.~D. Flaxman, A.~T. Kalai, and H.~B. McMahan.
\newblock Online convex optimization in the bandit setting: gradient descent
  without a gradient.
\newblock In \emph{Proceedings of the sixteenth annual ACM-SIAM symposium on
  Discrete algorithms}, pages 385--394, 2005.

\bibitem[Hardt et~al.(2016)Hardt, Recht, and Singer]{hardt2016train}
M.~Hardt, B.~Recht, and Y.~Singer.
\newblock Train faster, generalize better: Stability of stochastic gradient
  descent.
\newblock In \emph{International Conference on Machine Learning}, pages
  1225--1234. PMLR, 2016.

\bibitem[Kale et~al.(2021)Kale, Sekhari, and Sridharan]{kale2021sgd}
S.~Kale, A.~Sekhari, and K.~Sridharan.
\newblock {SGD}: The role of implicit regularization, batch-size and
  multiple-epochs.
\newblock \emph{arXiv preprint arXiv:2107.05074}, 2021.

\bibitem[Koren et~al.(2022)Koren, Livni, Mansour, and Sherman]{benignunderfit}
T.~Koren, R.~Livni, Y.~Mansour, and U.~Sherman.
\newblock Benign underfitting of stochastic gradient descent.
\newblock In S.~Koyejo, S.~Mohamed, A.~Agarwal, D.~Belgrave, K.~Cho, and A.~Oh,
  editors, \emph{Advances in Neural Information Processing Systems}, volume~35,
  pages 19605--19617. Curran Associates, Inc., 2022.

\bibitem[Magen and Shamir(2023)]{magen2023initialization}
R.~Magen and O.~Shamir.
\newblock Initialization-dependent sample complexity of linear predictors and
  neural networks.
\newblock \emph{arXiv preprint arXiv:2305.16475}, 2023.

\bibitem[Muller(1959)]{muller1959note}
M.~E. Muller.
\newblock A note on a method for generating points uniformly on n-dimensional
  spheres.
\newblock \emph{Communications of the ACM}, 2\penalty0 (4):\penalty0 19--20,
  1959.

\bibitem[Neyshabur et~al.(2014)Neyshabur, Tomioka, and
  Srebro]{neyshabur2014search}
B.~Neyshabur, R.~Tomioka, and N.~Srebro.
\newblock In search of the real inductive bias: On the role of implicit
  regularization in deep learning.
\newblock \emph{arXiv preprint arXiv:1412.6614}, 2014.

\bibitem[Neyshabur et~al.(2017)Neyshabur, Bhojanapalli, McAllester, and
  Srebro]{neyshabur2017exploring}
B.~Neyshabur, S.~Bhojanapalli, D.~McAllester, and N.~Srebro.
\newblock Exploring generalization in deep learning.
\newblock \emph{Advances in neural information processing systems}, 30, 2017.

\bibitem[Shalev-Shwartz and Ben-David(2014)]{shalev2014understanding}
S.~Shalev-Shwartz and S.~Ben-David.
\newblock \emph{Understanding Machine Learning: From Theory to Algorithms}.
\newblock Understanding Machine Learning: From Theory to Algorithms. Cambridge
  University Press, 2014.
\newblock ISBN 9781107057135.

\bibitem[Shalev-Shwartz et~al.(2010)Shalev-Shwartz, Shamir, Srebro, and
  Sridharan]{shalev2010learnability}
S.~Shalev-Shwartz, O.~Shamir, N.~Srebro, and K.~Sridharan.
\newblock Learnability, stability and uniform convergence.
\newblock \emph{The Journal of Machine Learning Research}, 11:\penalty0
  2635--2670, 2010.

\bibitem[Vapnik(1971)]{vapnik1971uniform}
V.~Vapnik.
\newblock On the uniform convergence of relative frequencies of events to their
  probabilities.
\newblock \emph{Theory of Probability and its Applications}, 16\penalty0
  (2):\penalty0 264--281, 1971.

\bibitem[Zhang et~al.(2017)Zhang, Bengio, Hardt, Recht, and
  Vinyals]{zhang2017understanding}
C.~Zhang, S.~Bengio, M.~Hardt, B.~Recht, and O.~Vinyals.
\newblock Understanding deep learning requires rethinking generalization.
\newblock In \emph{5th International Conference on Learning Representations,
  {ICLR} 2017}, 2017.

\end{thebibliography}

\appendix

\section{Differentiability and Proofs of \cref{GDmain,SGDmain}}
\label{sec:const_diff}

In this section, we complete the proof of \cref{GDmain,SGDmain}, by showing a construction of a differentiable objective that maintains the same lower bounds given in \cref{nonspec_lower_bound_GD,nonspec_lower_bound_GD_small_eta,SGD_lower_bound}.
Our general approach is to use a randomized smoothing of the original objectives. Then, we use the fact that the subgradients are unique along any possible trajectory of GD, 
to show that when smoothing is applied within a sufficiently small neighborhood, gradients along any such trajectory are preserved. Consequently, this approach does not impact the dynamics of the optimization algorithm, while simultaneously ensuring the objectives become differentiable everywhere.

\subsection{Proof of \cref{GDmain}}
\label{sec:gd_diff}

\paragraph{Full construction.}
The hard distribution $\gdindD$ is defined to be as in \cref{sec:GD}. The hard loss function is a smoothing of $\gdindf$ (\cref{loss_def_gd_in_setup_nondiff}), and is defined as \begin{align}
\label{loss_def_gd_in_setup_diff}
   \dgdindf(w,(V,j))
    \eqq
    \E_{v\in B}\sbr{ \gdindf(w+\delta v,(V,j)) }
    ,
\end{align}
for a sufficiently small $\delta>0$ and the $d$-dimensional unit ball $B$. 
Analogously, we denote the empirical loss and the population loss with respect to the loss function $\dgdindf$ as $\dgdindempf(w)=\frac{1}{n}\sum_{i=1}^n \dgdindf(w,(V_i,j_i))$ and $\dgdindpopf(w)=\E_{(V,j)\sim \gdindD} \dgdindf(w,(V,j))$, respectively. 
The loss function $\dgdindf$ is differentiable, $5$-Lipschitz with respect to its first argument and convex over $\R^d$, as stated in the following lemma. 
\begin{lemma}
\label{convex_lip_diff}
For every $(V,j) \in Z$, the loss function $\dgdindf$ is differentiable, convex and $5$-Lipschitz with respect to its first argument and over $\R^d$.
\end{lemma}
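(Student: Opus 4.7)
The plan is to verify each of the three claimed properties of $\dgdindf(\cdot,(V,j))$ by invoking standard preservation properties of randomized uniform-ball smoothing, with \cref{convex_lip} supplying convexity and $5$-Lipschitzness of the underlying $\gdindf$.

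For convexity and Lipschitzness, the arguments are immediate. Since $\gdindf(\cdot,(V,j))$ is convex, the translated map $w \mapsto \gdindf(w+\delta v,(V,j))$ is convex for each fixed $v \in B$, and hence so is its expectation over $v$, which is precisely $\dgdindf(\cdot,(V,j))$. For Lipschitzness, Jensen's inequality yields
\[
|\dgdindf(w_1,(V,j)) - \dgdindf(w_2,(V,j))| \leq \mathbb{E}_{v \in B}\bigl|\gdindf(w_1+\delta v,(V,j)) - \gdindf(w_2+\delta v,(V,j))\bigr| \leq 5\|w_1-w_2\|,
\]
where the last inequality uses the $5$-Lipschitz property of $\gdindf$ given by \cref{convex_lip}.

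The main step is to establish differentiability everywhere. I would first rewrite the smoothing as a convolution with the scaled uniform density on the ball, namely $\dgdindf(w,(V,j)) = \mathrm{vol}(\delta B)^{-1}\int_{\delta B}\gdindf(w+u,(V,j))\,du$, and then invoke the standard fact that convolving a Lipschitz function with a compactly supported bounded density on $\R^d$ yields a $C^{1}$ function. Concretely, Rademacher's theorem guarantees that $\gdindf(\cdot,(V,j))$ is differentiable almost everywhere with gradient norm uniformly bounded by $5$, so differentiation under the integral sign is justified and yields
\[
\nabla \dgdindf(w,(V,j)) = \mathbb{E}_{v \in B}\bigl[\nabla \gdindf(w+\delta v,(V,j))\bigr],
\]
which depends continuously on $w$ by dominated convergence (noting that $\gdindf(\cdot,(V,j))$ is differentiable on a set of full measure, which is preserved under any fixed translation). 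Alternatively, applying the divergence theorem rewrites $\nabla \dgdindf(w,(V,j))$ as a continuous surface integral over the sphere $\partial(\delta B)$ whose integrand involves only $\gdindf$ itself and not its gradient, giving an equivalent, manifestly continuous expression.

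The only possible obstacle is making the differentiability argument fully rigorous, but this is a completely standard consequence of randomized smoothing and depends only on the Lipschitz property of $\gdindf$ established in \cref{convex_lip}; no specific structural details of the construction enter the argument, and no macro beyond what was used for $\gdindf$ is required.
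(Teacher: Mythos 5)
Your proposal is correct and follows essentially the same route as the paper: convexity and $5$-Lipschitzness are obtained by the identical expectation/Jensen manipulations, and differentiability rests on the standard randomized-smoothing gradient formula, which the paper simply imports as \cref{grads_after_smooth} (the sphere surface-integral representation you mention as your alternative). The only cosmetic difference is that you additionally sketch a differentiation-under-the-integral argument, whereas the paper cites the smoothing lemma directly.
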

We first prove the following theorem, %

\begin{theorem}
\label{nonspec_lower_bound_GD_diff}
Assume that $n>0$, $T>3200^2$ and $\eta\leq\frac{1}{\sqrt{T}}$. %
Consider the distribution $\gdindD$ and the loss function $\dgdindf$ for $d=178nT+2n^2$, $\varepsilon = \frac{1}{n^2}
\br[0]{1-\cos\br[0]{\frac{2\pi}{|P(U)|}}}$, $\beta=\frac{\epsilon}{4T^2}$, $\delta=\frac{\eta\beta}{32}$, $\delta_1=\frac{\eta}{2n}$ and $ \delta_2=\frac{3\eta\beta}{16}$.
Then, %
 for Unprojected GD (cf. \cref{gd_update_rule} with $W=\R^d$) on $\gdindempf$, initialized at $w_1=0$ with step size $\eta$, we have,  %
with probability at least $\frac{1}{6}$ over the choice of the training sample:
\begin{enumerate}[label=(\roman*)]
    \item 
    The iterates of GD remain within the unit ball, namely $w_t \in \unitballd$ for all $t=1,\ldots,T$;
    \item
    For all $m=1,\ldots,T$, the $m$-suffix averaged iterate has:
    \[ 
        \dgdindpopf(\suff)
        - \dgdindpopf(\opt) 
        = 
        \Omega\br[1]{\eta\sqrt{T}}
        .
    \]
\end{enumerate}
\end{theorem}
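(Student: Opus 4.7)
The plan is to reduce \cref{nonspec_lower_bound_GD_diff} to the non-differentiable counterpart \cref{nonspec_lower_bound_GD} by showing that (a) $\dgdindf$ is a differentiable convex $5$-Lipschitz function (already recorded as \cref{convex_lip_diff}), (b) GD run on $\dgdindempf$ produces exactly the same iterates as GD run on $\gdindempf$, and (c) the gap between $\dgdindpopf$ and $\gdindpopf$ is at most $5\delta$, which is negligible in the stated regime. The first point is standard: a Moreau-type mollification of a convex $L$-Lipschitz function by averaging over a small ball yields a smooth convex $L$-Lipschitz function (its gradient equals $\E_{v\in B}[\nabla_w \gdindf(w+\delta v,(V,j))]$, with the expectation well-defined since $\gdindf$ is convex, hence differentiable a.e.).

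The heart of the argument is (b). I would prove by induction on $t$ that for every iterate $w_t$ produced by GD on $\dgdindempf$, the trajectory coincides with the one identified in \cref{diff_GD_expression}, and that the gradient of $\dgdindempf$ at $w_t$ equals the gradient of $\gdindempf$ at the same point. The key observation is that \cref{gradient_uc,gradient_decode,gradient_stab} do not merely assert differentiability at $w_t$: they supply explicit \emph{margins} in the max-of-linears representing $\gdindgam_3$ and $\gdindgam_4$ (margins of order $\eta\epsilon/4$ and $\eta/64$), and likewise \cref{gradient_uc} shows that for each $V_i$ every candidate $u\in V_i$ has $\langle u,\vecpart{w_t}{\indsec}\rangle$ bounded below the plateau value $\tfrac{3}{32}\eta$ by $\Omega(\eta)$. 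Therefore there exists an absolute constant $c$ such that the \emph{same} argmax is attained uniformly on a Euclidean ball of radius $c\eta\beta$ around $w_t$. Choosing $\delta = \eta\beta/32$ makes this ball contain the mollification support, so for every $v\in B$ the gradient $\nabla_w \gdindf(w_t+\delta v,(V_i,j_i))$ coincides with $\nabla_w \gdindf(w_t,(V_i,j_i))$; averaging gives $\nabla \dgdindempf(w_t) = \nabla \gdindempf(w_t)$, and the inductive step follows. This margin-preservation check is the main obstacle, and will require carefully tracking three kinds of margins: the plateau margin in $\gdindgam_1$ (between $\tfrac{3\eta}{32}$ and the linear terms), the decoding margin in $\gdindgam_3$ (between $\psi^*$ and any other $\psi\in\Psi$, and between the max and $\delta_1$), and the stability margin in $\gdindgam_4$ (between $(m,u_0)$ and any other $(k,u)$, and between the max and $\delta_2$). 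In each case the parameters $\beta,\delta_1,\delta_2$ have been chosen precisely so that these margins are $\Omega(\eta\beta)$, and $\delta=\eta\beta/32$ is a safely strict fraction.

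Once (b) is in hand, the first conclusion of the theorem ($w_t\in\unitballd$) transfers verbatim from \cref{nonspec_lower_bound_GD}(i). For the excess risk bound, I would use Lipschitzness of $\gdindf$ to get
\begin{align*}
\abs{\dgdindpopf(w)-\gdindpopf(w)} \;\leq\; \E_{v\in B}\abs{\gdindf(w+\delta v,\cdot)-\gdindf(w,\cdot)} \;\leq\; 5\delta
\end{align*}
for every $w$, so $\dgdindpopf(\suff)-\dgdindpopf(\opt) \geq \gdindpopf(\suff)-\gdindpopf(w^{\dgdindpopf}_*) - 10\delta$, where $w^{\dgdindpopf}_*$ may differ from $\opt$ but $\gdindpopf(\opt)-\gdindpopf(w^{\dgdindpopf}_*) \leq \abs{\dgdindpopf(w^{\dgdindpopf}_*)-\gdindpopf(w^{\dgdindpopf}_*)}+\abs{\gdindpopf(\opt)-\dgdindpopf(\opt)} \leq 10\delta$. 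Since $\suff$ is the same iterate as in \cref{nonspec_lower_bound_GD}, its population loss is $\Omega(\eta\sqrt{T})$ there, and $\delta=\eta\beta/32 = \Theta(\eta^3/T^2) \ll \eta\sqrt{T}$ is absorbed. This yields the required $\Omega(\eta\sqrt T)$ lower bound with probability at least $1/6$ (the event $\cE$ from \cref{intersection_events_gd}), completing the proof.
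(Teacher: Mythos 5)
Your proposal is correct and matches the paper's own proof essentially step for step: differentiability/Lipschitzness of the mollified loss (\cref{convex_lip_diff}), an inductive margin argument showing the smoothed and unsmoothed GD trajectories coincide under $\cE$ (\cref{iden_grad_iter}, which relies on exactly the margins you identify, the binding one being $\Theta(\eta\beta)$ so that $\delta=\eta\beta/32$ suffices), and a final $O(\delta)$ transfer of the excess-risk bound via $\abs{\dgdindpopf-\gdindpopf}\leq 5\delta$ (\cref{expec_change_bound}). No substantive differences to report.
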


\paragraph{Algorithm dynamics.}
Now we show that the dynamics of GD when is applied on $\dgdindempf$ is identical to dynamics of the algorithm on $\gdindempf$, as stated in the following lemma.
\begin{lemma}
    \label{iden_grad_iter}
Under the conditions of \cref{nonspec_lower_bound_GD_diff,nonspec_lower_bound_GD}, let $w_t,\Tilde{w}_t$ be the iterates of Unprojected GD with step size $\eta\leq\frac{1}{\sqrt{T}}$ and $w_1=0$, on $\gdindempf$ and $\dgdindempf$ respectively. 
    Then, if $\cE$ occurs, then for every $t\in[T]$, it holds that 
    $w_t=\Tilde{w}_t$. 
\end{lemma}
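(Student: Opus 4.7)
The plan is to proceed by induction on $t$ and show that at each iterate $w_t$ produced by GD on $\gdindempf$, the non-smoothed objective $\gdindempf$ is differentiable and has a locally constant gradient in a $\delta$-ball around $w_t$; the identity $\nabla \dgdindempf = \E_{v\in B}[\nabla \gdindempf(\cdot + \delta v)]$ (valid a.e.\ for a convolution with the uniform measure on a ball) then implies $\nabla \dgdindempf(w_t) = \nabla \gdindempf(w_t)$, and the update rules produce the same next iterate.

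The base case $t=1$ is trivial since $w_1 = \wt{w}_1 = 0$. For the inductive step, assume $w_t = \wt{w}_t$ and $\cE$ occurs. Using \cref{diff_GD_expression} (together with the explicit computation at $t \leq 5$ in \cref{diff_GD_w_5}) we know the structure of $w_t$. I then invoke the three component lemmas to extract \emph{strict} margins around $w_t$:
\begin{itemize}
    \item From \cref{gradient_uc}(i), for every $k \geq 2$ and every $i$ we have $\max_{u \in V_i}\langle u, \vecpart{w_t}{k}\rangle \leq \eta/16 < \tfrac{3\eta}{32}$, so the outer $\max$ inside each summand of $\gdindgam_1$ is uniquely attained by the constant $\tfrac{3\eta}{32}$, with margin $\eta/32$.
    \item From \cref{gradient_decode}(i)--(ii), the maximizer $\psi^\star$ in $\gdindgam_3$ beats every other $\psi \in \Psi$ by $\eta\epsilon/4$ and beats $\delta_1$ by $\eta/(16n)$.
    \item From \cref{gradient_stab}(i)--(ii), the maximizer $(u_0, m)$ in $\gdindgam_4$ beats every other pair by $\eta/64$ and beats $\delta_2$ by $\eta/64$.
\end{itemize}

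The key step is then to verify that a perturbation $w_t \mapsto w_t + \delta v$ with $\|v\| \leq 1$ changes each of the linear functionals entering the three $\max$ operators by at most a constant multiple of $\delta$ (using that the coefficient vectors $u$, $\psi$, $\alpha(\psi)$ and $\phi(V,j)$ have norm $O(1)$). With the choice $\delta = \eta\beta/32 = \eta\epsilon/(128 T^2)$, all three margins above are strictly larger than the perturbation, so for every $v \in B$ the same maximizers are selected at $w_t + \delta v$ as at $w_t$. Consequently $\gdindempf$ is affine on the ball $\{w_t + \delta v : v \in B\}$ (and in particular differentiable there), with gradient equal to $\nabla \gdindempf(w_t)$ as given by \cref{gradient_uc,gradient_decode,gradient_stab,diff_GD_expression}. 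Averaging over $v$ and using the dominated convergence theorem yields
\[
    \nabla \dgdindempf(\wt w_t) = \E_{v \in B}\bigl[\nabla \gdindempf(w_t + \delta v)\bigr] = \nabla \gdindempf(w_t),
\]
so a single GD step gives $\wt w_{t+1} = \wt w_t - \eta \nabla \dgdindempf(\wt w_t) = w_t - \eta \nabla \gdindempf(w_t) = w_{t+1}$, completing the induction.

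The main obstacle is bookkeeping the margin constants for each of the three max-type components simultaneously and checking that the single choice $\delta = \eta\beta/32$ is strictly smaller than every margin (the tightest being the $\eta\epsilon/4$ margin of $\gdindgam_3$, which is quadratic in $1/T$ via $\epsilon$ and hence forces the small scale of $\delta$). A secondary nuisance is handling the early iterations $t \leq 4$ separately, where the structure of $w_t$ differs from the steady-state formula of \cref{diff_GD_expression}; this requires a parallel check of uniqueness of maximizers using the same ideas as in the base case computation of \cref{diff_GD_w_5}.
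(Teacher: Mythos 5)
Your proposal is correct and follows essentially the same route as the paper: induction on $t$, verifying via the margin bounds of \cref{gradient_uc,gradient_decode,gradient_stab} that every maximizer in $\gdindgam_1,\gdindgam_3,\gdindgam_4$ is locally unique on the $\delta$-ball around $w_t$, so the smoothed gradient coincides with the original one and the two trajectories agree. The only difference is presentational: the paper packages the ``same maximizer on the ball implies same gradient after smoothing'' step into the helper lemmas \cref{grad_constant_threshold,grad_constant_threshold_multi,grad_linear_max}, whereas you argue it directly from local affineness (and note that the binding margin is actually the $\Theta(\eta\epsilon/T^2)$ one arising at the $t=3$ step of $\gdindgam_4$ via $\beta=\epsilon/(4T^2)$, not the $\eta\epsilon/4$ margin of $\gdindgam_3$).
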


\paragraph{Proof of \cref{nonspec_lower_bound_GD_diff}.}
Next, we set out to establish the proof for \cref{nonspec_lower_bound_GD_diff}. 
\begin{proof}[of \cref{nonspec_lower_bound_GD_diff}]
Let $\overline{\suff}$ be the $\indsuff$-suffix average of $GD$ when is applied on $\gdindempf$. Let $\overline{\opt}=\argmin_w \gdindpopf(w)$. By \cref{iden_grad_iter}, we know that, with probability of at least $\frac{1}{6}$, $\cE$ occurs and $\suff=\overline{\suff}$. Then, by \cref{nonspec_lower_bound_GD} and \cref{expec_change_bound},
\begin{align*}
   \frac{\eta}{3200}\sqrt{T}&\leq  \gdindpopf(\overline{\suff})-\gdindpopf(\overline{\opt}) 
   \\&=\gdindpopf(\suff)-\gdindpopf(\overline{\opt})
    \\&\leq \dgdindpopf(\suff)+5\delta-\dgdindpopf(\overline{\opt})+5\delta 
    \\&\leq \dgdindpopf(\suff)+5\delta-\dgdindpopf(\opt)+5\delta,
\end{align*}
and, 
\begin{align*}
    \dgdindpopf(\suff)-\dgdindpopf(\opt)&\geq  \frac{\eta}{3200}\sqrt{T}-\frac{10\eta\epsilon}{128T^2}
    \\&\geq 
    \frac{\eta}{3200}\sqrt{T}-\frac{\eta}{10T^2}
    \\&\geq 
    \frac{\eta}{6400}\sqrt{T} \tag{$T\geq 30$}
   . 
\end{align*}
\end{proof}

Now we can finally prove \cref{GDmain}. The proof is an immediate corollary from \cref{nonspec_lower_bound_GD_diff}
and the lower bound of $\Omega\left(\min\left(\frac{1}{\eta T},1\right)\right)$ given in \cref{nonspec_lower_bound_GD_diff_small_eta} in \cref{sec_small_eta}.
It's important to highlight that we offer a rigorous proof for a modified version of \cref{GDmain}, where the loss function $\gdindf$ possesses a Lipschitz constant of only 5.
By scaling down this loss function by a factor of $\frac{1}{5}$ and simultaneously adjusting the step size $\eta$ by a factor of 5, we can employ the same proof to establish the validity of \cref{GDmain}.
\begin{proof}[of \cref{GDmain}]
    We know that $\eta\leq \frac{1}{5\sqrt{T}}$.
    First, by \cref{nonspec_lower_bound_GD_diff}, we know that for Unprojected $GD$ and $d_1=178nT+2n^2$, there exist a distribution $\gdindD$ over a probability space $Z$, a constant $C_1$ and a loss function $\dgdindf:\R^{d_1}\times Z\to \R$ such that, with probability of at least $\frac{1}{6}$, \[\dgdindpopf(\suff)-\dgdindpopf(\opt)\geq C_1\eta\sqrt{T.}\]
    Second,  by \cref{nonspec_lower_bound_GD_diff_small_eta}, we know that for Unprojected $GD$ and $d_2=\max(25\eta^2T^2,1)$, there exist a constant $C_2$ and a deterministic loss function $\doptindf:\R^{d_2}\to \R$ such that \[\doptindf(\suff)-\doptindf(\opt)\geq
    C_2\min\left(1,\frac{1}{\eta T}\right)\]
    Now, let $C=\frac{1}{2}\min \left(C_1, C_2\right)$.
    If $\eta\geq T^{-\frac{3}{4}}$, then, $\eta\sqrt{T}\geq \min(1,\frac{1}{\eta T})$, and we get,
    \[\dgdindpopf(\suff)-\dgdindpopf(\opt)\geq C\left(\eta\sqrt{T} +\min\left(1,\frac{1}{\eta T}\right)\right)\geq C\left( \min\left(1,\eta\sqrt{T}+\frac{1}{\eta T}\right)\right).\]
    Otherwise, we get that, 
    \[\doptindf(\suff)-\doptindf(\opt)\geq C\left(\eta\sqrt{T} +\min\left(1,\frac{1}{\eta T}\right)\right)\geq C\left( \min\left(1,\eta\sqrt{T}+\frac{1}{\eta T}\right)\right).\]
    Since in both cases, by \cref{nonspec_lower_bound_GD_diff_small_eta,nonspec_lower_bound_GD_diff}, $w_t\in \unitballd$ for every $t\in[T]$, the theorem is applicable also for Projected GD.
\end{proof}

\subsection{Proof of \cref{SGDmain}}
\label{sec:sgddiff}
\paragraph{Full construction.}
The hard distribution $\sgdindD$ is defined to be as in \cref{sec:sgd_appendix}. The hard loss function is a smoothing of $\sgdindf$ (\cref{sgd_loss_func}), and is defined as \begin{align}
\label{loss_def_sgd_in_setup_diff}
   \dsgdindf(w,V)
    \eqq
    \E_{v\in B}\sbr{ \sgdindf(w+\delta v,V) }
    ,
\end{align}
for a sufficiently small $\delta>0$ and the $d$-dimensional unit ball $B$. 
Analogously, we denote the empirical loss and the population loss with respect to the loss function $\dsgdindf$ as $\dsgdindempf(w)=\frac{1}{n}\sum_{i=1}^n \dsgdindf(w,V_i)$ and $\dsgdindpopf(w)=\E_{V\sim \gdindD} \dsgdindf(w,V)$, respectively. 
The loss function $\dsgdindf$ is differentiable, $4$-Lipschitz with respect to its first argument and convex over $\R^d$, as stated in the following lemma. 
\begin{lemma}
\label{convex_lip_diff_sgd}
For every $V\in Z$, the loss function $\dsgdindf$ is differentiable, convex and $4$-Lipschitz with respect to its first argument and over $\R^d$.
\end{lemma}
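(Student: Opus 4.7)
The plan is to verify each of the three properties (convexity, $4$-Lipschitzness, and differentiability) for the smoothed function $\dsgdindf(w,V) = \E_{v \in B}[\sgdindf(w+\delta v, V)]$ by transferring the corresponding properties of $\sgdindf$ (established in \cref{convex_lip_SGD}) through the averaging operation. This mirrors the argument for $\dgdindf$ in \cref{convex_lip_diff}, so I would cite that proof structure directly where applicable.

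For convexity, I would use that for each fixed $v \in B$, the map $w \mapsto \sgdindf(w+\delta v, V)$ is a composition of the convex function $\sgdindf(\cdot, V)$ (by \cref{convex_lip_SGD}) with the affine map $w \mapsto w + \delta v$, hence convex. Convexity is preserved under taking expectations of a family of convex functions, so $\dsgdindf(\cdot, V)$ is convex. For the Lipschitz bound, I would invoke Jensen's inequality: for any $w, w' \in \R^d$,
\begin{align*}
\abs{\dsgdindf(w,V) - \dsgdindf(w',V)}
&\leq \E_{v \in B}\abs{\sgdindf(w+\delta v, V) - \sgdindf(w'+\delta v, V)}
\leq 4\|w-w'\|,
\end{align*}
again using the $4$-Lipschitzness of $\sgdindf(\cdot, V)$ from \cref{convex_lip_SGD}.

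For differentiability, I would rely on the standard fact that convolution with the uniform distribution on the unit ball yields a $C^1$ function. Concretely, writing the expectation as an integral over the fixed density $\tfrac{1}{\vol(B)}\mathbbm{1}_B$ and using a change of variables,
\begin{align*}
\dsgdindf(w,V) = \frac{1}{\vol(\delta B)} \int_{\delta B} \sgdindf(w+u, V)\, du,
\end{align*}
one sees that this is the convolution of the locally integrable convex function $\sgdindf(\cdot,V)$ with the (bounded, compactly supported) uniform density on $\delta B$. Since $\sgdindf(\cdot, V)$ is $4$-Lipschitz, it is absolutely continuous and its weak gradient is bounded and defined almost everywhere; the convolution is then differentiable everywhere with gradient given by the expectation of (any measurable selection of) subgradients over the ball. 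Combined with the convexity from the first step, this gives differentiability at every $w \in \R^d$.

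The main (and only) subtlety is the differentiability step, since $\sgdindf(\cdot, V)$ has kinks arising from the max terms in \cref{sgd_loss_func}. The argument above bypasses this by appealing to standard Lebesgue differentiation / mollification facts for Lipschitz convex functions, which is essentially the same argument used to justify \cref{convex_lip_diff} in the GD construction; I would simply cite or replicate that justification verbatim. Everything else is a routine transfer of properties through an expectation.
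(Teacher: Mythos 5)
Your proposal is correct and follows essentially the same route as the paper: convexity via expectation of convex compositions with an affine shift, the Lipschitz bound via Jensen's inequality and the $4$-Lipschitzness of $\sgdindf$ from \cref{convex_lip_SGD}, and differentiability from the randomized-smoothing/convolution fact, which the paper obtains by citing the explicit gradient formula of \cref{grads_after_smooth} rather than spelling out the mollification argument as you do. No gaps.
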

We first prove the following theorem, %

\begin{theorem}
\label{nonspec_SGD_lower_bound_diff}
Assume that $n>2048$ and $\eta\leq \frac{1}{\sqrt{n}}$.
Consider the distribution
$\sgdindD$ %
and the loss function $\dsgdindf$ with $d=712 n^2\log n +2n^2$, $\varepsilon = \frac{1}{n^2}
\br[0]{1-\cos\br[0]{\frac{2\pi}{|P(U)|}}}$, $\delta=\frac{\eta\varepsilon}{32n^3}$ and $\delta_1=\frac{\eta}{8n^3}$.
Then, %
for Unprojected SGD (cf. \cref{sgd_update_rule} with $W=\R^d$) with $T=n$ iterations, initialized at $w_1=0$ with step size $\eta$, we have,  %
with probability at least $\frac{1}{2}$ over the choice of the training sample,
\begin{enumerate}[label=(\roman*)]
    \item 
    The iterates of SGD remain within the unit ball, namely $w_t \in \unitballd$ for all $t=1,\ldots,n$;
    \item
    For all $m=1,\ldots, n$, the $m$-suffix averaged iterate has:
    \[ 
        \dsgdindempf(w_{n,m})
        - \dsgdindempf(\erm) 
        = 
        \Omega\br[1]{\eta\sqrt{n}}
        .
    \]
\end{enumerate}
\end{theorem}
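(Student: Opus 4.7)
}
My plan is to mirror the strategy used for the analogous GD result in \cref{nonspec_lower_bound_GD_diff}: show that when the smoothing radius $\delta$ is taken small enough, one-pass SGD executed on $\dsgdindempf$ produces exactly the same sequence of iterates as SGD executed on the non-smoothed $\sgdindempf$, and then transfer the lower bound from \cref{SGD_lower_bound} while paying only an $O(\delta)$ additive error coming from $|\sgdindf-\dsgdindf|\leq L\delta$, guaranteed by \cref{convex_lip_diff_sgd}.

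The central step is to establish an SGD analog of \cref{iden_grad_iter}: under the good event $\cE'$ of \cref{good_event_sgd}, the iterates $\Tilde{w}_t$ of SGD applied to $\dsgdindempf$ satisfy $\Tilde{w}_t=w_t$ for every $t\leq T$, where $w_t$ are the SGD iterates on $\sgdindempf$ described by \cref{diffSGDexpression}. I would prove this by induction on $t$. The base case is immediate since $\Tilde{w}_1=w_1=0$. For the inductive step, I need to show that $\nabla \dsgdindf(w_t,V_t) = \nabla \sgdindf(w_t,V_t)$. The key observation is that along the SGD trajectory, the maxima defining the terms $\sgdindgam_1$ and $\sgdindgam_2$ are attained at unique maximizers with strictly positive margins: in \cref{gradient_uc_sgd} the relevant inner products are $\leq\eta/16$ while the ``floor'' $\tfrac{3\eta}{32}$ is strictly larger; and in the proof of \cref{diffSGDexpression} (via \cref{gradient_decode_sgd}), the correct pair $(k,u,\psi)=(t-1,u_{t-1},\psi^*_{t-1})$ achieves the maximum in $\sgdindgam_2$ with margin of order $\eta\epsilon/n^2$ above every competitor and above $\delta_1=\eta/(8n^3)$. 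Choosing $\delta=\eta\varepsilon/(32n^3)$ (as given in the theorem) is small enough that for every perturbation $v\in B$, the same maximizers remain uniquely optimal at $w_t+\delta v$. Consequently, $\sgdindf(\cdot,V_t)$ is differentiable throughout the $\delta$-ball around $w_t$ with a constant gradient, and pushing the expectation in the definition of $\dsgdindf$ through gives $\nabla \dsgdindf(w_t,V_t) = \nabla \sgdindf(w_t,V_t)$, closing the induction. The iterates are therefore identical, and in particular $\Tilde{w}_t=w_t \in \unitballd$ for all $t$ by the corresponding claim in \cref{SGD_lower_bound}, so the bound applies equally to the projected variant.

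Given this trajectory equivalence, the lower bound follows directly. Denote by $\wt{w}_{n,m}$ the $m$-suffix average on $\dsgdindempf$ and by $w_{n,m}$ the one on $\sgdindempf$; by the step above they coincide on $\cE'$. Let $\erm\in\argmin_{w\in W}\sgdindempf(w)$ and $\wt w_*\in\argmin_{w\in W}\dsgdindempf(w)$. From $|\sgdindf(\cdot,V)-\dsgdindf(\cdot,V)|\leq 4\delta$ we deduce $|\sgdindempf-\dsgdindempf|\leq 4\delta$ uniformly, so
\begin{align*}
    \dsgdindempf(\wt w_{n,m}) - \dsgdindempf(\wt w_*)
    &\geq \sgdindempf(w_{n,m}) - \sgdindempf(\erm) - 8\delta.
\end{align*}
By \cref{SGD_lower_bound} the right-hand side is $\Omega(\eta\sqrt{n}) - 8\delta$, and with $\delta=\eta\varepsilon/(32n^3)=o(\eta)$ this simplifies to $\Omega(\eta\sqrt{n})$ for $n>2048$, which gives the claim.

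The main obstacle I anticipate is a careful bookkeeping of the margins that the maxima inside $\sgdindgam_2$ enjoy at every trajectory point. The decoding mechanism relies on a very small margin $\Theta(\eta\varepsilon/n^2)$ between the correct maximizer and its closest competitors, and the smoothing radius $\delta$ must be chosen strictly smaller than this margin divided by the Lipschitz constant of the affine form being maximized. The choice $\delta=\eta\varepsilon/(32n^3)$ is made precisely to leave room for these perturbations together with the floor $\delta_1$, and verifying this for every iterate along the SGD trajectory (including the auxiliary encoding coordinates $\vecpart{w_t}{\last,k}$) is where the bulk of the technical work lies. Once this is in place, the proof is essentially a transfer argument from \cref{SGD_lower_bound}.
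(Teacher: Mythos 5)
Your proposal is correct and follows essentially the same route as the paper: the paper's proof establishes the trajectory-equivalence claim as \cref{iden_grad_iter_sgd} (by induction, using the unique-maximizer-with-margin structure from \cref{gradient_uc_sgd,gradient_decode_sgd} together with the smoothing lemmas \cref{grad_constant_threshold,grad_constant_threshold_multi,grad_linear_max}), and then performs exactly the transfer argument you describe, using \cref{expec_change_bound} to pay the $O(\delta)$ additive error before invoking \cref{SGD_lower_bound}.
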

\paragraph{Algorithm's dynamics.}
Now, As in GD, the main step in proving \cref{nonspec_SGD_lower_bound_diff} is to show that taking expectation of $\sgdindf$ for every point $w$ in a ball with small enough radius does not change the dynamics of $SGD$.
\begin{lemma}
    \label{iden_grad_iter_sgd}
    Under the conditions of \cref{nonspec_SGD_lower_bound_diff,SGD_lower_bound}, let $w_t,\Tilde{w}_t$ be the iterates of Unprojected SGD with step size $\eta\leq\frac{1}{\sqrt{T}}$ and $w_1=0$, on $\sgdindempf$ and $\dsgdindempf$ respectively. 
    Then, if $\cE'$ occurs, then for every $t\in[T]$, it holds that 
    $w_t=\Tilde{w}_t$. 
\end{lemma}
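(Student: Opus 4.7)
}
The plan is to prove by induction on $t$ that $w_t = \tilde{w}_t$ for every $t \in [T]$, assuming $\cE'$ holds. The base case is immediate since both algorithms are initialized at the origin. For the inductive step, since both updates have the form $w_{t+1} = w_t - \eta g_t$ where $g_t$ is a (sub)gradient of the respective loss at the current iterate on example $V_t$, it suffices to show that, assuming $w_t = \tilde{w}_t$, we have
\[
    \nabla \dsgdindf(\tilde{w}_t, V_t) \;=\; \nabla \sgdindf(w_t, V_t),
\]
where the right-hand side is the (unique) gradient identified in the proof of \cref{diffSGDexpression}. Since $\dsgdindf(w, V) = \E_{v \in B}[\sgdindf(w + \delta v, V)]$, by dominated convergence this will follow from proving the pointwise identity $\nabla_w \sgdindf(w_t + \delta v, V_t) = \nabla_w \sgdindf(w_t, V_t)$ for every $v$ in the unit ball (a.e.\ will suffice, but we will establish it everywhere).

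The key step is therefore a \emph{local stability of the gradient} around $w_t$. Inspecting the three components of $\sgdindf$, the linear term $\sgdindgam_3$ has a constant gradient, so only $\sgdindgam_1$ and $\sgdindgam_2$ need attention. For $\sgdindgam_1$, \cref{gradient_uc_sgd} shows that at $w_t$ the maximum inside every coordinate is uniquely attained at the constant floor $\tfrac{3\eta}{32}$, with a positive slack of order $\eta$ over every competitor $\langle u, \vecpart{w_t}{k}\rangle$; a $5$-Lipschitz perturbation of size $\delta = \Theta(\eta \varepsilon / n^3)$ cannot change this maximizer, so $\nabla \sgdindgam_1(\cdot, V_t)$ remains zero throughout the ball of radius $\delta$ around $w_t$. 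For $\sgdindgam_2$, the analogue of \cref{gradient_decode_sgd} (adapted to the SGD construction) shows that the maximizing triple $(m, u, \psi) = (t-1, u_{t-1}, \psi^*_{t-1})$ is the unique maximizer with a margin of order $\eta \varepsilon / n^2$ over every other triple, and is separated from $\delta_1 = \tfrac{\eta}{8n^3}$ by an $\Omega(\eta/n^2)$ gap. Hence, by choosing $\delta$ small enough compared to these margins, the maximizing indices remain unchanged on the ball of radius $\delta$ around $w_t$, and so does the gradient.

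Combining these two stability statements, $\nabla \sgdindf(\cdot, V_t)$ is constant on the ball of radius $\delta$ around $w_t$, and averaging yields the identity of gradients. By induction, the iterates agree, completing the proof. The main obstacle is bookkeeping the various margins: one must verify that the chosen $\delta = \tfrac{\eta \varepsilon}{32 n^3}$ is strictly smaller than \emph{all} the gaps appearing in the analogues of \cref{gradient_uc_sgd,gradient_decode_sgd} (both between the winning maximizer and any runner-up, and between the winning value and the constant floors $\tfrac{3\eta}{32}$ and $\delta_1$), so that a worst-case $5$-Lipschitz perturbation of magnitude $5\delta$ in function value cannot swap maximizers along the smoothing ball. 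Once these quantitative checks are in place, the inductive argument goes through verbatim.
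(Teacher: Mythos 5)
Your proposal is correct and follows essentially the same route as the paper: induction on $t$, with the inductive step reduced to showing that each component's maximizer is unique at $w_t$ with a quantitative margin exceeding the perturbation induced by the smoothing radius $\delta$, so that the smoothed gradient coincides with the original one (the paper packages this in \cref{grad_constant_threshold,grad_constant_threshold_multi,grad_linear_max}, invoked via \cref{gradient_uc_sgd,gradient_decode_sgd}). The only differences are cosmetic — e.g., the Lipschitz constant here is $4$ rather than $5$ — and your margin bookkeeping matches the paper's choice of $\delta=\tfrac{\eta\varepsilon}{32n^3}$.
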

\paragraph{Proof of \cref{nonspec_SGD_lower_bound_diff}.}
Next, we set out to establish the proof for  \cref{nonspec_SGD_lower_bound_diff}.
\begin{proof}[of \cref{nonspec_SGD_lower_bound_diff}]
Let $\overline{w_{n,m}}$ be the $m$- suffix average of $SGD$ when is applied on $\sgdindf$ and let $\overline{\erm}=\argmin_w \sgdindempf(w)$. By \cref{iden_grad_iter_sgd}, we know that, with a probability $\frac{1}{2}$ ,$w_{n,m}=w^\sgdind_{n,m}$. Then, by \cref{SGD_lower_bound} and \cref{expec_change_bound},
\begin{align*}
   \frac{\eta}{64000}\sqrt{n}&\leq  \sgdindempf(\overline{w_{n,m}})-\sgdindempf(\overline{\erm}) \\&=\sgdindempf(w_{n,m})-\sgdindempf(\overline{\erm})
    \\&\leq \dsgdindempf(w_{n,m})+4\delta-\dsgdindempf(\overline{\erm})+4\delta
    \\&\leq \dsgdindempf(w_{n,m})+4\delta-\dsgdindempf(\erm)+4\delta,
\end{align*}
and, 
\begin{align*}
    \dsgdindempf(w_{n,m})-\dsgdindempf(\erm)&\geq  \frac{\eta}{64000}\sqrt{n}-\frac{\eta\epsilon}{4n^3}
    \\&\geq 
    \frac{\eta}{64000}\sqrt{n}-\frac{\eta}{4n^3}
    \\&\geq 
    \frac{\eta}{128000}\sqrt{n} \tag{$n\geq 40$}
   . 
\end{align*}
\end{proof}

Now we can finally prove \cref{SGDmain}.
\begin{proof}[proof of \cref{SGDmain}]
We know that $T=n$ and $\eta\leq \frac{1}{5\sqrt{T}}$.
    First, by \cref{nonspec_SGD_lower_bound_diff}, we know that for Unprojected $SGD$ and $d_1=712n^2\log n+2n^2$, there exist a distribution $\sgdindD$ over a probability space $Z$, a constant $C_1$ and a loss function $\dsgdindf:\R^{d_1}\times Z\to \R$ such that, with probability of at least $\frac{1}{2}$, \[\dsgdindempf(\suff)-\dsgdindempf(\erm)\geq C_1\eta\sqrt{T.}\]
    Second,  by \cref{nonspec_lower_bound_GD_diff_small_eta}, we know that for Unprojected $SGD$ and $d_2=\max(25\eta^2T^2,1)$, there exist a constant $C_2$ and a deterministic loss function $\doptindf:\R^{d_2}\to \R$ such that \[\doptindf(\suff)-\doptindf(\erm)\geq
    C_2\min\left(1,\frac{1}{\eta T}\right)\]
    Now, let $C=\frac{1}{2}\min \left(C_1, C_2\right)$.
    If $\eta\geq T^{-\frac{3}{4}}$, then, $\eta\sqrt{T}\geq \min(1,\frac{1}{\eta T})$, and we get,
    \[\dsgdindempf(\suff)-\dsgdindempf(\erm)\geq C\left(\eta\sqrt{T} +\min\left(1,\frac{1}{\eta T}\right)\right)\geq C\left( \min\left(1,\eta\sqrt{T}+\frac{1}{\eta T}\right)\right).\]
    Otherwise, we get that, 
    \[\doptindf(\suff)-\doptindf(\erm)\geq C\left(\eta\sqrt{T} +\min\left(1,\frac{1}{\eta T}\right)\right)\geq C\left( \min\left(1,\eta\sqrt{T}+\frac{1}{\eta T}\right)\right).\]
    Since in both cases, by \cref{nonspec_lower_bound_GD_diff_small_eta,nonspec_SGD_lower_bound_diff}, $w_t\in \unitballd$ for every $t\in[T]$, the theorem is applicable also for Projected SGD.
\end{proof}

\section{Proofs of \texorpdfstring{\cref{sec:GD}}{Section \ref{sec:GD}}}
\label{sec:GD_proofs}

\subsection{Proofs for the full construction}
\label{sec:gd_const_proofs}

\begin{proof}[of \cref{lem:set_direc_exists}] Let $r=2^{\frac{-d'}{178}}$. 
    For every $1\leq i\leq r$ and $1\leq j\leq d'$ we define the random variable $u_i^j$ be a random variable to be $\frac{1}{\sqrt{d'}}$ with probability $\frac{1}{2}$ and $-\frac{1}{\sqrt{d'}}$ with probability $\frac{1}{2}$. Then, for every $1\leq i\leq r$, we define the vector $u_i$ which its $j$th entry is $u_i^j$ and look at the set $U=\{u_1,u_2,...u_{r}\}$. This set will hold the required property with positive probability.
    First, for every $i\neq k$, $\langle u_i,u_k\rangle$ are sums of $d$ random variables that taking values in $[-\frac{1}{d'},\frac{1}{d'}]$ with $\E\langle u_i,u_k\rangle=0$.
    Then by Hoeffding's inequality,
    \begin{align*}
        Pr(|\langle u_i,u_k\rangle|\geq \frac{1}{8})&\leq 2e^{\frac{-2\left(\frac{1}{8}\right)^2}{d'\cdot \frac{4}{d'^2}}}=2e^{-\frac{d'}{128}}
    \end{align*}
Then, by union bound on the $\binom{r}{2}$ pairs of vectors in $U$,
\begin{align*}
        Pr(\exists i,k \ |\langle u_i,u_k\rangle|\geq \frac{1}{8})&\leq 2e^{-\frac{d'}{128}}\cdot \binom{r}{2}< 2e^{-\frac{d'}{128}}\cdot \frac{1}{2}r^2\leq 1.
    \end{align*}
\end{proof}

\begin{lemma}
\label{diff_set_encode exists_GD}
Let $n, d \geq 1$ and a set $U \subseteq \unitballd$. Let $P(U)$ be the power set of $U$. Then, 
there exist a set $\Psi\subseteq \R^{2n^2}$, a number $0<\epsilon<\frac{1}{n}$ and two mappings $\phi:P(U) \times [n^2]\to \R^{2n^2}$, $\alpha:\R^{2n^2}\to U$ such that, 
\begin{enumerate}[label=(\roman*)]
    \item For every $j\in[n^2]$ and $V\subseteq U$, it holds $\|\phi\left(V,j\right)\|\leq 1$;
    \item For every $\psi\in\Psi$, it holds $\|\psi\|\leq 1$; 
    \item  Let $V_1,\ldots,V_n$ be arbitrary subsets of $U$. 
    If $j_1,\ldots,j_n$ hold that $j_i\neq j_k$ for $i\neq k$, $\psi^*=\frac{1}{n}\sum_{i=1}^n\phi(V_i,j_i)$ is that,
    \begin{itemize}
        \item %
        \[\abr[2]{ \psi^*, \frac{1}{n}\sum_{i=1}^n\phi(V_i,j_i) } > \frac{7}{8n} ;\]
        \item For every $\psi\in \Psi$, $\psi\neq \psi^*$:
        \[\abr[2]{ \psi^*, \frac{1}{n}\sum_{i=1}^n\phi(V_i,j_i) } \geq \abr[2]{ \psi, \frac{1}{n}\sum_{i=1}^n\phi(V_i,j_i) } + \epsilon ;\]
        \item If $\bigcup_{i=1}^nV_i\neq U$, then it holds that $\alpha(\psi^*)=v_{i^*} \in U\setminus\bigcup_{i=1}^{n}V_{i}$ for $i^* = \min\cbr{ i:v_i \in U\setminus\bigcup_{i=1}^nV_i }$.
       \end{itemize}
     \end{enumerate}
\end{lemma}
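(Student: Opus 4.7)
I would use a block-encoding scheme. Partition the $2n^2$ coordinates of $\R^{2n^2}$ into $n^2$ orthogonal two-dimensional blocks, indexed by $j \in [n^2]$. Enumerate the elements of $P(U)$ as $V^{(0)}, V^{(1)}, \ldots, V^{(|P(U)|-1)}$ and assign each subset $V \subseteq U$ the angle $\theta_V = \tfrac{2\pi k_V}{|P(U)|}$, where $k_V$ is its index in this enumeration. Define $\phi(V, j)$ to be the vector in $\R^{2n^2}$ whose $j$-th block equals $(\cos\theta_V,\sin\theta_V)$ and whose other blocks are zero; this has unit norm, giving (i). Set $\Psi$ to be the set of all vectors of the form $\tfrac{1}{n}\sum_{i=1}^n \phi(V_i,j_i)$ arising from arbitrary subsets $V_i \subseteq U$ and distinct indices $j_i \in [n^2]$. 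Since the $j_i$ are distinct, the $n$ summands occupy orthogonal blocks, so $\|\psi\|^2 = n/n^2 = 1/n$ for every $\psi \in \Psi$, yielding (ii) and also the first bullet of (iii), since $\langle \psi^*, \psi^*\rangle = 1/n > 7/(8n)$.

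\textbf{Margin analysis.} This is the main technical step, and it determines the choice $\epsilon := \tfrac{1}{n^2}\bigl(1-\cos(2\pi/|P(U)|)\bigr)$. For any $\psi = \tfrac{1}{n}\sum_{i=1}^n \phi(V'_i,j'_i) \in \Psi$, write $J = \{j_1,\ldots,j_n\}$, $J' = \{j'_1,\ldots,j'_n\}$, and let $i(j)$ and $i'(j)$ denote the unique indices for which $j_{i(j)} = j$ and $j'_{i'(j)} = j$. The $j$-th block of $\psi^*$ (resp.\ $\psi$) is nonzero only when $j \in J$ (resp.\ $j \in J'$), in which case it equals $\tfrac{1}{n}(\cos\theta_{V_{i(j)}},\sin\theta_{V_{i(j)}})$ (resp.\ $\tfrac{1}{n}(\cos\theta_{V'_{i'(j)}},\sin\theta_{V'_{i'(j)}})$). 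Summing inner products block-by-block gives
\[ \langle \psi^*,\psi^*\rangle - \langle \psi,\psi^*\rangle = \frac{1}{n^2}\Bigl[n - \sum_{j \in J \cap J'} \cos\bigl(\theta_{V_{i(j)}} - \theta_{V'_{i'(j)}}\bigr)\Bigr]. \]
I would then split into two cases. If $J \neq J'$, then $|J \cap J'| \leq n-1$, so the bracket is at least $1$, and the gap is at least $1/n^2 \geq \epsilon$. If $J = J'$, then $\psi \neq \psi^*$ forces some $j \in J$ with $V_{i(j)} \neq V'_{i'(j)}$, whose cosine is at most $\cos(2\pi/|P(U)|)$; all other summands contribute at most $1$, and the gap is at least $\epsilon$ by construction. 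One also checks $\epsilon < 1/n$ since $1-\cos \leq 2$.

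\textbf{Decoder.} Define $\alpha:\R^{2n^2}\to U$ operationally as follows: given input $\psi$, inspect each block; if the $j$-th block has norm exceeding the threshold $1/(2n)$, compute its angle and identify the unique subset $V^{(j)}_\psi \in P(U)$ matching that angle. Let $\mathcal{V}_\psi$ be the union of all recovered subsets, and return $v_{i^*}$ for the smallest index $i^*$ with $v_{i^*} \notin \mathcal{V}_\psi$ (or an arbitrary fixed fallback vector if no such index exists). When $\psi = \psi^*$, exactly the blocks $j \in J$ have norm $1/n > 1/(2n)$, and each recovers precisely $V_{i(j)}$, so $\mathcal{V}_{\psi^*} = \bigcup_{i=1}^n V_i$ and $\alpha(\psi^*) = v_{i^*}$ as required whenever this union is a proper subset of $U$.

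\textbf{Main obstacle.} The only substantive step is the two-case margin analysis, where one must ensure that distinct elements of $\Psi$ produce encodings separated by at least $\epsilon$ regardless of whether they differ in their index set $J$ or only in the assigned subsets at common indices. The fact that angles for distinct subsets are separated by at least $2\pi/|P(U)|$ is what drives the magnitude of $\epsilon$ and matches the definition of $\varepsilon$ used in \cref{nonspec_lower_bound_GD}. The norm bounds and the decoder construction are then routine bookkeeping.
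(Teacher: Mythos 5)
Your proposal is correct and follows essentially the same route as the paper's proof: the same roots-of-unity encoding $g(V)=(\cos\theta_V,\sin\theta_V)$ placed in one of $n^2$ orthogonal two-dimensional blocks, the same set $\Psi$ of averaged encodings, the same margin $\epsilon=\tfrac{1}{n^2}(1-\cos(2\pi/|P(U)|))$, and the same per-block nearest-angle decoder followed by taking the minimal index outside the recovered union. Your two-case split ($J\neq J'$ versus $J=J'$) is a slightly cleaner packaging of the paper's observation that at least one block of $\psi$ either vanishes or contributes at most $1-\delta$ instead of $1$, but it is not a different argument.
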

\begin{proof}
First, we consider an arbitrary enumeration of $P(U)=\{V^1,...V^{|P(U)|}\}$ and define $g:P(U)\to \R^2$, $g(V^i)= \left(\sin\left(\frac{2\pi i}{|P(U)|}\right),\cos\left(\frac{2\pi i}{|P(U)|}\right)\right)$. 
Now, we refer to a vector $a\in \R^{2n^2}$ as a concatenation of $n^2$ vectors in $\R^2$, $\vecpart{a}{1},...,\vecpart{a}{n^2}$. Then, we define $\delta=1-\cos\left(\frac{2\pi}{|P(U)|}\right)$, $\epsilon=\frac{\delta}{n^2}$ and \[\vecpart{\phi(V,j)}{i}=
\left\{
\begin{array}{cc}
    g(V) & i=j \\
    0 & \text{otherwise} 
\end{array}
\right.\]
As a result, for every $V^i,j$ it holds that
\begin{align*}
    \|\phi(V^i,j)\|=\|g(V^i)\|=\sqrt{\sin\left(\frac{2\pi i}{|P(U)|}\right)^2+\cos\left(\frac{2\pi i}{|P(U)|}\right)^2}=1
\end{align*}
Moreover, if $j_1\neq j_2$,
\begin{align*}
    \langle \phi(V^i,j_1),\phi(V^i,j_2)\rangle =0,
\end{align*}
and if $i>k$,
\begin{align*}
   \langle \phi(V^i,j),\phi(V^k,j)\rangle=&\langle  g(V^i), g(V^k)\rangle \\&=\sin\left(\frac{2\pi i}{|P(U)|}\right)\sin\left(\frac{2\pi k}{|P(U)|}\right) + \cos\left(\frac{2\pi i}{|P(U)|}\right)\cos\left(\frac{2\pi k}{|P(U)|}\right)
   \\&=\cos\left(\frac{2\pi (i-k)}{|P(U)|}\right)
   \\&\leq \cos\left(\frac{2\pi}{|P(U)|}\right) 
   \tag{$\cos$ is monotonic decreasing in $[0,\pi/2]$}
   \\&=1-\delta
\end{align*}
We notice that $0<\delta< 1$.
Now, we consider an arbitrary enumeration of $U=\{v_1,...v_{|U|}\}$, and define the following set $\Psi\subseteq \R^{2n^2}$ and the following two mappings $\sigma: R^{2n^2}\to P(U), \alpha:  R^{2n^2}\to U$,
\[\Psi = \{ \frac{1}{n}\sum_{i=1}^n \phi(V_i,j_i): \forall i\  V_i\ \subseteq U,\ j_i\in [n^2] \text{ and } i\neq l \implies j_i\neq j_\ell\}\]
Note that, for every $\psi \in \Psi$,
\[\|\psi\|=\|\frac{1}{n}\sum_{i=1}^n \phi(V_i,j_i)\|\leq \frac{1}{n}\sum_{i=1}^n \|\phi(V_i,j_i)\|=1.\]
Then, for every $a\in \R^{2n^2}$ and $j\in[n^2]$, we denote the index $q(a,j)\in [|P(U)|]$ as \[q(a,j)=\argmax_{r}\langle g(V_r), \vecpart {a}{j}\rangle,\] and define the following mapping $\sigma:\R^{2n^2}\to P(U)$,
\[\sigma(a)=\bigcup_{j=1,\vecpart {a}{j}\neq 0}^{n^2}V_{q(a,j)}.\]
Moreover, for every $a\in \R^{2n^2}$, we denote the index $p(a)\in [|U|]$ as \[p(a)=\argmin_{i}\{i:v_i \in U\setminus\sigma(a)\},\] and define the following mapping $\alpha:\R^{2n^2}\to U$,
\[\alpha(a)=\left\{\begin{array}{cc}
    v_{|U|} & \sigma(a)=U \\
     v_{p(a)} & \sigma(a)\neq U
\end{array}\right..\]

Now, Let $V_1,\ldots, V_n\subseteq U$ and $j_1,...j_n$ that are sampled uniformly from $[n^2]$, 
We prove the last part of the lemma under the condition that $j_i\neq j_k$ for $i\neq k$.
$\psi^*=\frac{1}{n}\sum_{i=1}^n \phi(V_i,j_i)$ holds 
\begin{align*}
    \langle \psi^*, \frac{1}{n}\sum_{i=1}^n \phi(V_i,j_i)\rangle&=
   \langle \frac{1}{n}\sum_{i=1}^n \phi(V_i,j_i), \frac{1}{n}\sum_{i=1}^n \phi(V_i,j_i)\rangle \\&= \frac{1}{n^2}\sum_{i=1}^n\langle \phi(V_i,j_i),\phi(V_i,j_i)\rangle
   \\&= \frac{1}{n}
   \\& > \frac{7}{8n}
\end{align*}
For $\psi=\frac{1}{n}\sum_{l=1}^n \phi(V'_l,j'_l)$ such that $\psi\neq \psi^*$, 
there are at most $n$ pairs $i, l$ such that $ \langle \phi(V'_i,j'_i),\phi(V'_l,j'_l)\rangle\neq 0$.
thus, 
there exists a pair $(V'_r,j'_r)$ that 
$(V'_r,j'_r)\notin \{(V_i,j_i) : i\in [n]\}$.
and for every $i$, $\langle \phi(V_i,j_i),\phi(V'_l,j'_l)\rangle\leq 1-\delta$.
As a result,
\begin{align*}
\langle \psi, \frac{1}{n}\sum_{l=1}^n \phi(V_i,j_i)\rangle&=
   \langle \frac{1}{n}\sum_{i=1}^n \phi(V'_l,j'_l), \frac{1}{n}\sum_{i=1}^n \phi(V_i,j_i)\rangle \\&=\frac{1}{n^2}\sum_{i=1}^n\sum_{l=1}^n\langle \phi(V_i,j_i),\phi(V'_l,j'_l)\rangle 
\\&\leq \frac{1}{n^2}\left( 1-\delta+\sum_{i=1,i\neq r}^n 1\right)
   \\&\leq \frac{1}{n^2}(1-\delta+n-1)\\&= \frac{1}{n}-\frac{\delta}{n^2}
   \\&= \langle \psi^*, \frac{1}{n}\sum_{i=1}^n \phi(V_i,j_i)\rangle-\epsilon
\end{align*}
Furthermore, since if all $j_i$ are distinct, for every $i$ it holds that, $\vecpart {\frac{1}{n}\sum_{i=1}^n\phi=(V_i,i)}{j_i}=\frac{1}{n}g(V_i)$, thus,
\begin{align*}
q\left(\frac{1}{n}\sum_{i=1}^n\phi(V_i,j_i),j_i\right)&=\argmax_{r}\langle g(V_r), \vecpart {\frac{1}{n}\sum_{i=1}^n\phi(V_i,j_i)}{j_i}\rangle
\\&=\argmax_{r}\langle g(V_r), \frac{1}{n}g(V_i)\rangle
\\&=i, 
\end{align*}
and we get,
\begin{align*}
    \sigma (\psi^*)&=\sigma \left(\frac{1}{n}\sum_{i=1}^n \phi(V_i,j_i)\right)\\
&=\bigcup_{j=1,\vecpart {\frac{1}{n}\sum_{i=1}^n\phi(V_l,j_i)}{j}\neq 0}^{n^2}V_{q\left(\frac{1}{n}\sum_{i=1}^n \phi(V_i,j_i),j\right)}
\\&=\bigcup_{i=1}^{n}V_{q\left(\frac{1}{n}\sum_{i=1}^n\phi(V_i,j_i),j_i\right)} \tag{The indices that are non-zero are $\{j_i\}_{i=1}^n\}$}
\\&=\bigcup_{i=1}^{n}V_{i} %
\end{align*}
Finally, assuming that $\bigcup_{i=1}^nV_i\neq U$,
\[\alpha(\psi^*)= v_{p(a)}
\in
U\setminus\bigcup_{i=1}^{n}V_{i}.\]
\end{proof}

\begin{proof}[of \cref{convex_lip}]
    We prove that $\gdindgam_{1},\gdindgam_{2}$ and $\gdindgam_{4}$ are convex and $1$-Lipschitz and $\gdindgam_{3}$ is convex and $1
    2$-Lipschitz. 
    
    First, by \cref{lem:set_direc_exists,diff_set_encode exists_GD} for every $u\in U$ and $V\in P(U)$, $j\in[n^2]$, it holds that $\|u\|=1$, $\|\phi(V,j)\|=1$.
    Then, $\gdindgam_{2}$ is a $1$-Lipschitz linear function, and $\gdindgam_{4}$ is a maximum over $1$-Lipschitz linear functions, thus, both functions are convex and $1$-Lipschitz.
    Moreover, for every possible $\psi\in \Psi$
    \[\|\psi\|=\|\frac{1}{n}\sum_{l=1}^n\phi(V_l)\|\leq \frac{1}{n}\sum_{l=1}^n\|\phi(V_l)\|=1.\]
    thus, $\gdindgam_{3}$ is a maximum over $2$-Lipschitz linear functions, thus, it is convex and $2$-Lipschitz.
    Now, for $\gdindgam_{1}$,  for every set $V\subseteq U$, let $\alpha_V(w)\in \R^{T-1}$ to be the vector which its $k$'th coordinate is $\vecpart{\alpha_V(w)}{k}=\max\left(\frac{3\eta}{32}, \max_{u\in V} \langle u\vecpart{w}{k+1}\rangle \right)$ and prove convexity and $1$-Lipshitzness.
    For establishing convexity, observe 
    \begin{align*}
    &\sqrt{\sum_{k=2}^T\max\left(\frac{3\eta}{32}, \max_{u\in V} \langle u,\vecpart{(\lambda x + (1-\lambda) y)}{k}\rangle \right)^2} 
    \\&= \sqrt{\sum_{k=2}^T\max\left(\frac{3\eta}{32}, \max_{u\in V} \left(\lambda \langle u,\vecpart{x}{k}\rangle +(1-\lambda)\langle u,\vecpart{ y}{k}\rangle\right) \right)^2} 
    \\&\leq \sqrt{\sum_{k=2}^T\max\left(\frac{3\eta}{32}, \max_{u\in V} \left(\lambda \langle u,\vecpart{x}{k}\rangle\right) +\max_{u\in V}\left((1-\lambda)\langle u,\vecpart{ y}{k}\rangle\right) \right)^2} \tag{convexity of $\max$ \& monotonicity of square root}
    \\&\leq \sqrt{\sum_{k=2}^T \left( \lambda\max\left(\frac{3\eta}{32}, \max_{u\in V} \left( \langle u,\vecpart{x}{k}\rangle\right)\right) +(1-\lambda)\max\left(\frac{3\eta}{32},\max_{u\in V}\langle u,\vecpart{ y}{k}\rangle\right) \right)^2} 
    \\& =\|\lambda \alpha_V(x) +(1-\lambda)\alpha_V(y)\|_2
    \\& \leq \lambda\|\alpha_V(x)\|_2 +(1-\lambda)\alpha_V(y)\|_2 \tag{convexity of $\ell_2$ norm}
    \\& = \lambda \sqrt{\sum_{k=2}^T\max\left(\frac{3\eta}{32}, \max_{u\in V} \langle u\vecpart{x}{k}\rangle \right)^2}+(1-\lambda)\sqrt{\sum_{k=2}^T\max\left(\frac{3\eta}{32}, \max_{u\in V} \langle u\vecpart{y}{k}\rangle \right)^2}.
    \end{align*}
   For $1$-Lipschitzness, 
    for every $w\in \R^d$ and sub-gradient $g(w,V)\in \partial \gdindgam_1(w,V)$, there exists a sub gradient  $h(w,V)\in \partial\left(\sum_{k=2}^T\max\left(\frac{3\eta}{32}, \max_{u\in V} \langle u\vecpart{w}{k}\rangle \right)^2\right)$ such that
    \begin{align*}
        \|g(w,V)\|=%
        \frac{\|h(w,V)\|}{2\sqrt{\sum_{k=2}^T\max\left(\frac{3\eta}{32}, \max_{u\in V} \langle u\vecpart{w}{k}\rangle \right)^2}}
        =\frac{\|h(w,V)\|}{2\sqrt{\sum_{k=2}^T{\vecpart{\alpha_V(w)}{k}}^{2}}}.
    \end{align*}
    Moreover, for every $k$ and sub gradient $b_{k,V}(w)\in \partial \left( \vecpart{\alpha_V(w)}{k}\right)$ we denote $r_{k,V}(w)\in R^d$ the vector with $\vecpart{r_{k,V}(w)}{k}=b_{k,V}(w)$ and for $j\neq k$, $\vecpart{r_{k,V}(w)}{j}=0$.
    Then, 
    for every sub gradient $h(w,V)\in \partial\left(\sum_{k=2}^T\max\left(\frac{3\eta}{32}, \max_{u\in V} \langle u\vecpart{w}{k}\rangle \right)^2\right)$, there exists $T-1$  such vectors $r_{k,V}(w)\in R^d$ ($2\leq k\leq T$) such that,
    \begin{align*}
        h(w,V)=2 \sum_{k=2}^{T} r_{k,V}(w)\vecpart{\alpha_V(w)}{k}
    \end{align*}
    As a result, by the fact that every sub gradient of $b_{k,V}(w)\in \partial \left( \vecpart{\alpha_V(w)}{k}\right)$ is either 0 or $\lambda_1 u_1 +\lambda_2 u_2+\ldots+\lambda_p u_p$ for $\sum_{i}\lambda_i\leq 1$, such that for all every $j,k$, $u_j\in U$ and $\vecpart{\alpha_V(w)}{k}=\langle u_j,\vecpart{w}^{k}\rangle$,
    combining by the facts that for distinct $k,k'$, $r_{k,V},r_{k',V}$ are orthogonal, it holds, for $u_j^2,\ldots u_j^{T}\in U$ such that for every $k$, $\vecpart{\alpha_V(w)}{k}=\langle u_j^k,\vecpart{w}{k}\rangle$,
    \begin{align*}
        \|h(w,V)\|&=\|2 \sum_{k=2}^{T} r_{k,V}(w)\vecpart{\alpha_V(w)}{k}\rangle\|
        \\&=\|2 \sum_{k=2,b_{k,V}}^{T} r_{k,V}(w)\vecpart{\alpha_V(w)}{k}\rangle\|
        \\&= 2\|\sum_{k=2}^{T} r_{k,V} \langle u_j^k,\vecpart{w}{k}\rangle\|.
    \end{align*}
    Now, we denote $c_j^k(w)\in R^d$ the vector with $\vecpart{c_j^k(w)}{k}=u_j^k$ and for $j\neq k$, $\vecpart{c_j^k(w)}{j}=0$, and,
    \begin{align*}
        \|h(w,V)\|&=2\|\sum_{k=2}^{T} r_{k,V} \langle c_j^k,w\rangle\|
        \\&\leq 2\sqrt{\langle \sum_{k=2}^{T} r_{k,V} \langle c_j^k,w\rangle, \sum_{l=2}^{T} r_{l,V} \langle c_j^l,w\rangle\rangle}
        \\&= 2\sqrt{\sum_{k=2}^{T}\|r_{k,V}\|^2  \langle c_j^k ,w\rangle^2}
         \\&\leq 2\sqrt{\sum_{k=2}^{T}\langle u_j^k ,\vecpart{w}{k}\rangle^2}
         \\&=2\sqrt{\sum_{k=2}^T{\vecpart{\alpha_V(w)}{k}}^{2}}.
    \end{align*}
    The lemma follows.
\end{proof}

\subsection{Proof of algorithm's dynamics}
\label{sec:proofs_alg_GD}
In this section we describe the dynamics of GD when applied on $\gdindempf$ for training set $S$ that is sampled from a distribution $\gdindD$.
We begin with showing that the good event $\cE$ (\cref{good_event_gd}) occurs with a constant probability.

\begin{proof}[of \cref{intersection_events_gd}]
By the fact that every $V_i$ and $j_i$ are independent, it is enough to show that \[\Pr\left(\bigcup_{i=1}^nV_i\neq U_{d '}\right)\geq\frac{1}{2},\]and,\[\Pr\left(\text{for every $k\neq l$, $j_k\neq j_l$}\right)\geq \frac{1}{3}.\]
For the former,
for every $u\in U_{d'}$, since $V_i$ are sampled independently and every vector $u\in U_{d'}$ is in every $V_i$ with probability $\frac{1}{2}$,
\begin{align*}
    \Pr\left(u \in\bigcup_{i=1}^nV_i \right)&=
    1-\Pr\left(u \notin\bigcup_{i=1}^nV_i \right)=1-2^{-n},
\end{align*}
thus, since by \cref{lem:set_direc_exists}, $|U_{d'}|\geq \frac{d'}{178}=n$ , it holds that,
\begin{align*}
\Pr\left(\bigcup_{i=1}^nV_i =U_{d'} \right)
    &=
    \Pr\left(\forall u\in U_{d'} \ u \in\bigcup_{i=1}^nV_i \right)
    \\
    &=
    \left(1-2^{-n}\right)^{|U_{d'}|}
    \\&\leq
    \left(1-2^{-n}\right)^{2^\frac{d'}{178}}
    \\&=
    \left(1-2^{-n}\right)^{2^n}
    \\
    &\leq \frac{1}{e}
    \\&<\frac{1}{2}
    .
\end{align*}
We conclude,
\[
    \Pr\left(\bigcup_{i=1}^nV_i\neq U_{d'}\right)\geq \frac{1}{2}
    .
\]
For the latter, since all $j_i$s are sampled independently, for a single pair $k\neq l$, it holds that
    \[\Pr(j_k\neq j_l)=1-\frac{1}{n^2}\]
   As a result,
   \[\Pr\left(\text{for every $k\neq l$, $j_k\neq j_l$}\right)=\left(1-\frac{1}{n^2}\right)^{\frac{n(n-1)}{2}}\geq \left(1-\frac{1}{n^2}\right)^\frac{n^2}{2}\geq \frac{1}{\sqrt{2e}}\geq \frac{1}{e}.
   \qedhere\]
\end{proof}

From now on, we analyze the dynamics of the GD conditioned on $\cE$ (\cref{good_event_gd}). We begin with several lemmas.
\begin{proof}[of \cref{gradient_uc}]
For the first part, we know that, for every $2\leq k\leq T$, $\vecpart{w}{k}=c\eta u_0$ for $c\leq \frac{1}{2}$. In addition, by the facts that $u_0\in U\setminus \bigcup_{i=1}^n V_i$ and that for every $u\neq v\in U$, it holds that $\langle u,v\rangle\leq \frac{1}{8}$, we get for every $i$, $\max_{u\in V_i}\langle u_0,u\rangle\leq \frac{1}{8}$, thus, for every $i$ and $k\geq 2$,
\begin{align*}
     \max_{u\in V_i} u\vecpart{w}{k}=\max_{u\in V_i}\langle u,c\eta u_{0}\rangle \leq\frac{1}{8}\cdot c\eta\leq\frac{\eta}{16}.
\end{align*}

For the second part, for every sub-gradient $g(w,V_i)\in \partial \gdindgam_1(w,V_i)$, there exists a sub gradient  $h(w,V_i)\in \partial\left(\sum_{k=2}^T\max\left(\frac{3\eta}{32}, \max_{u\in V} \langle u\vecpart{w}{k}\rangle \right)^2\right)$ such that
    \begin{align*}
g(w,V_i)=%
        \frac{h(w,V_i)}{2\sqrt{\sum_{k=2}^T\max\left(\frac{3\eta}{32}, \max_{u\in V_i} \langle u\vecpart{w}{k}\rangle \right)^2}}
        .
    \end{align*} Then, since for every $k$, it holds that $\max_{u\in U}  \langle \vecpart{w}{k}, u_0 \rangle\leq \frac{\eta}{16}$, every such sub-gradient $h(w,V_i)$ is zero, $\nabla \gdindgam_{1}(w,V_i)=0$.
\end{proof}

\begin{proof}[of \cref{gradient_decode}]
    First, for the first part, by
\cref{diff_set_encode exists_GD}, 
the fact that for every $\psi$,
$\|\alpha(\psi)\|\leq 1$, and by $\|\vecpart{w}{1}\|\leq \eta$, for every $\psi\in \Psi$, $\psi^*=\frac{1}{n}\sum_{i=1}^n \phi(V_i,j_{i})$ holds,
    \begin{align*}
    \langle\wenc, \psi^*\rangle - \frac{1}{4}\frac{\epsilon}{T^2}\langle \alpha(\psi^*), \vecpart{w}{1}\rangle
    &\geq \langle\frac{\eta}{n}\sum_{i=1}^n\phi(V_i,j_{i}), \psi^*\rangle - \frac{\eta \epsilon}{4}
    \\&\geq \eta\langle\frac{1}{n}\sum_{i=1}^n\phi(V_i,j_{i}), \psi^*\rangle - \frac{\eta \epsilon}{4}
    \\&\geq \eta\langle\frac{1}{n}\sum_{i=1}^n\phi(V_i,j_{i}), \psi\rangle +\eta\epsilon -\frac{\eta \epsilon}{4} \tag{\cref{diff_set_encode exists_GD}}
     \\&= \eta\langle\frac{1}{n}\sum_{i=1}^n\phi(V_i,j_{i}), \psi\rangle +\frac{\eta\epsilon}{2}+\frac{\eta \epsilon}{4}
    \\&>  \langle\wenc, \psi\rangle - \frac{1}{4}\frac{\epsilon}{T^2}\langle \alpha(\psi), \vecpart{w}{1}\rangle + \frac{\eta\epsilon}{4},
\end{align*}
thus,
\begin{align*}
    \arg\max_{\psi\in \Psi} \left(\langle\wenc, \psi\rangle - \frac{1}{4}\frac{\epsilon}{T^2}\langle \alpha(\psi), \vecpart{w}{1}\rangle\right)=\psi^*=\frac{1}{n}\sum_{i=1}^n\phi(V_i,j_{i}).
\end{align*}
For the second part, by the fact that $\epsilon<\frac{1}{n}$ and \cref{diff_set_encode exists_GD},
\begin{align*}
    \langle\wenc, \psi^*\rangle - \frac{1}{4}\frac{\epsilon}{T^2}\langle \alpha(\psi^*), \vecpart{w}{1}\rangle 
    \geq \frac{7\eta}{8n}-\frac{\eta}{4n}
    > \frac{\eta}{2n}+\frac{\eta}{16n}=\delta_1+\frac{\eta}{16n}.
\end{align*}
Now, by $\cE$, for $u_0=\alpha(\psi^*)$, which is the $u$ with the minimal index in $U\setminus\bigcup_{i=1}^n V_i$,
\begin{align*}
    \alpha(\psi^*)=u_0\in U\setminus\bigcup_{i=1}^n V_i.
\end{align*}
As a result, by the fact that the maximum is attained uniquely at $\psi^*$, we derive that,
\begin{align*}
   \vecpart{\nabla \gdindgam_{3}(w)}{\indsec}=\begin{cases}
        \frac{1}{n}\sum_{i=1}^n \phi(V_i,j_{i}) &\quad \indsec=\last\\
        -\frac{1}{4}\frac{\epsilon}{T^2}u_0 &\quad \indsec=1\\    
        0 &\quad \text{otherwise}.
    \end{cases}
\end{align*}
\end{proof}
\begin{proof}[of \cref{gradient_stab}]
We show that the maximum is attained uniquely at $k=m$ and $u=u_0$.
For $k=1$ and every $u\in U$, 
\begin{align*}
    \frac{3}{8}\langle u,\vecpart{w_t}{1}\rangle-\frac{1}{2}\langle u,\vecpart{w_t}{2}\rangle=\frac{3}{8}\langle u,c\eta u_0\rangle-\frac{1}{2}\langle u,\frac{\eta}{8}u_0\rangle\leq \frac{9\eta}{512}+\frac{\eta}{128}=\frac{13\eta}{512}.
\end{align*}
Moreover, for every $2\leq \indsec\leq m-2$ and every $u\in U$,
\begin{align*}
    \frac{3}{8}\langle u,\vecpart{w}{\indsec}\rangle-\frac{1}{2}\langle u,\vecpart{w}{\indsec+1}\rangle=\frac{3}{8}\langle u,\frac{\eta}{8}u_{0}\rangle-\frac{1}{2}\langle u,\frac{\eta}{8}u_{0}\rangle\leq \frac{3\eta}{64}+\frac{\eta}{128}=\frac{7\eta}{128}.
\end{align*}
For $\indsec= m-1$ and every $u\in U$,
\begin{align*}
    \frac{3}{8}\langle u,\vecpart{w}{m-1}\rangle-\frac{1}{2}\langle u,\vecpart{w}{m}\rangle=\frac{3}{8}\langle u,\frac{\eta}{8}u_{0}\rangle-\frac{1}{2}\langle u,\frac{\eta}{2}u_{0}\rangle\leq \frac{3\eta}{64}+\frac{\eta}{32}=\frac{5\eta}{64}.
\end{align*}
For $\indsec=m$ and $u=u_{0}$,
\begin{align*}
    \frac{3}{8}\langle u,\vecpart{w}{m}\rangle-\frac{1}{2}\langle u,\vecpart{w}{m+1}\rangle=\frac{3}{8}\langle u_0,\frac{\eta}{2}u_{0}\rangle-\frac{1}{2}\langle u_0,0\rangle= \frac{3\eta}{16}.
\end{align*}
For $\indsec=m$ and $u\neq u_0$,
\begin{align*}
    \frac{3}{8}\langle u,\vecpart{w}{m}\rangle-\frac{1}{2}\langle u',\vecpart{w}{m+1}\rangle=\frac{3}{8}\langle u,\frac{\eta}{2}u_{0}\rangle-\frac{1}{2}\langle u',0\rangle\leq  \frac{3\eta}{128}.
\end{align*}
For every $m+1\leq \indsec< T-1$ and every $u\in U$,
\begin{align*}
     \frac{3}{8}\langle u,\vecpart{w}{\indsec}\rangle-\frac{1}{2}\langle u',\vecpart{w}{\indsec+1}\rangle=0.
\end{align*}
Moreover, since $T\geq 4,\eta<1,\epsilon<1$, $\delta_1\leq \frac{3\eta}{1024}$, and 
\[\frac{3}{8}\langle u,\vecpart{w}{m}\rangle-\frac{1}{2}\langle u,\vecpart{w}{m+1}\rangle=\frac{3\eta}{16}>\delta_2+\frac{\eta}{64}.\]
We derive that, 
\begin{align*}
    \vecpart{\nabla \gdindgam_4(w)}{\indsec}=
    \begin{cases}
         \frac{3}{8}u_0 &\quad \indsec=m\\
          -\frac{1}{2}u_0 &\quad \indsec=m+1\\
        0 & \text{otherwise}
        .\end{cases}
\end{align*}
\end{proof}

\begin{lemma}
\label{diff_GD_w_2}
Under the conditions of \cref{nonspec_lower_bound_GD}, if $\cE$ occurs and $w_t$ is %
the iterate of Unprojected GD on $\gdindempf$, with step size $\eta\leq\frac{1}{\sqrt{T}}$ and $w_1=0$, %
then, for $t=2$ it holds that,
\begin{align*}
    \vecpart{w_2}{\indsec}=\begin{cases}
        \frac{\eta}{n}\sum_{i=1}^n\phi(V_i,j_i) &\quad \indsec=\last\\
        0 &\quad \text{otherwise}.
    \end{cases}
\end{align*}
\end{lemma}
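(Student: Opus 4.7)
The plan is to directly compute $\nabla \gdindf(0, (V_i, j_i))$ component by component and then perform a single GD step from $w_1 = 0$. I would decompose the loss as $\gdindf = \gdindgam_1 + \gdindgam_2 + \gdindgam_3 + \gdindgam_4$ and handle each term separately, observing that the event $\cE$ plays no role at this step because nothing about the first iterate depends on the structure of the sampled sets beyond the encodings $\phi(V_i, j_i)$.

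First, I would argue that $\gdindgam_1, \gdindgam_3, \gdindgam_4$ all have a \emph{unique} gradient equal to $0$ at the origin, because in each case the outer max is strictly attained at its positive constant term. Concretely, at $w = 0$ every inner linear form $\langle u, \vecpart{w}{k}\rangle$, $\langle \psi, \wenc\rangle - \beta\langle \alpha(\psi), \vecpart{w}{1}\rangle$, and $\tfrac{3}{8}\langle u, \vecpart{w}{k}\rangle - \tfrac{1}{2}\langle u, \vecpart{w}{k+1}\rangle$ evaluates to $0$, which is strictly less than the respective constants $\tfrac{3\eta}{32}, \delta_1 = \tfrac{\eta}{2n}, \delta_2 = \tfrac{3\eta\beta}{16}$ (all positive under the parameter choices in \cref{nonspec_lower_bound_GD}). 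Hence inside $\gdindgam_1$ each inner max collapses to the constant $\tfrac{3\eta}{32}$, making the expression under the square root a positive constant in a neighborhood of $0$, so $\gdindgam_1$ is locally constant there and $\nabla \gdindgam_1(0, V_i) = 0$. For $\gdindgam_3, \gdindgam_4$, the outer max is locally equal to $\delta_1, \delta_2$ respectively, so their gradients at $0$ are also $0$.

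Second, $\gdindgam_2(w, (V_i, j_i)) = \langle -\phi(V_i, j_i), \wenc\rangle$ is linear in $w$, so its gradient has component $-\phi(V_i, j_i)$ in the encoding subspace $\vecpart{W}{\last}$ and $0$ in every other subspace. Summing the four contributions yields
\[
    \vecpart{(\nabla \gdindf(0, (V_i, j_i)))}{\indsec}
    =
    \begin{cases}
        -\phi(V_i, j_i) & \indsec = \last, \\
        0 & \text{otherwise.}
    \end{cases}
\]
Averaging over $i = 1, \ldots, n$ and applying the GD update $w_2 = w_1 - \tfrac{\eta}{n}\sum_{i=1}^n \nabla \gdindf(w_1, (V_i, j_i))$ gives $\vecpart{w_2}{\last} = \tfrac{\eta}{n}\sum_{i=1}^n \phi(V_i, j_i)$ and $\vecpart{w_2}{\indsec} = 0$ for every $\indsec \neq \last$, as claimed.

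There is no real obstacle here: the only thing to verify carefully is that each of the outer maxima in $\gdindgam_1, \gdindgam_3, \gdindgam_4$ is attained \emph{strictly} at its constant term at $w = 0$, which makes the subgradient unique and equal to $0$. This strictness is precisely what ensures the conclusion holds for any subgradient oracle, not just a favorably chosen one, and is the same mechanism that will drive the inductive step in \cref{diff_GD_expression} at later iterations.
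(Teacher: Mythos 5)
Your proof is correct and follows essentially the same route as the paper: it shows that at $w_1=0$ the terms $\gdindgam_1,\gdindgam_3,\gdindgam_4$ have zero gradient because each outer max is strictly attained at its positive constant ($\tfrac{3\eta}{32}$, $\delta_1$, $\delta_2$), leaving only the linear encoding term $\gdindgam_2$ to drive the first step. The paper packages the $\gdindgam_1$ part into \cref{gradient_uc}, but the underlying argument is identical.
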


\begin{proof}
    For $t=1$, $w_1=0$.
    By \cref{gradient_uc} we know that for every $i$, $\nabla\gdindgam_{1}(w_1,V_i)=0$. Moreover, by
    the fact that $\delta_1,\delta_2>0$ the maximum in $\gdindgam_3$ and $\gdindgam_4$ is attained in $\delta_1$ and $\delta_2$, respectively, thus
    we get that \[\nabla\gdindgam_{3}(w_1)=\nabla\gdindgam_{4}(w)=0\]
    As a result, 
    \begin{align*}
    \vecpart{\nabla \gdindempf(w_1)}{\indsec}=\frac{1}{n}\sum_{i=1}^n\vecpart{\nabla \gdindgam_{2}(w_1,(V_i,j_i))}{\indsec}=\begin{cases}
        -\frac{1}{n}\sum_{i=1}^n(V_i,j_{i}) &\quad \indsec=\last\\
        0 &\quad \text{otherwise},
    \end{cases}
\end{align*}
and,
\begin{align*}
    \vecpart{w_2}{\indsec}=\begin{cases}
        \frac{\eta}{n}\sum_{i=1}^n\phi(V_i,j_i) &\quad \indsec=\last\\
        0 &\quad \text{otherwise}.
    \end{cases}
\end{align*}
\end{proof}
\begin{lemma} \label{diff_GD_w_3}
    Under the conditions of \cref{nonspec_lower_bound_GD}, if $\cE$ occurs and $w_t$ is %
the iterate Unprojected GD on $\gdindempf$, with step size $\eta\leq\frac{1}{\sqrt{T}}$ and $w_1=0$, %
then, for $t=3$ it holds that,
    \begin{align*}
    \vecpart{w_3}{\indsec}=
    \begin{cases}
        \frac{\eta}{4}\frac{\epsilon}{T^2}u_0&\quad \indsec=1\\
        0 &\quad 2\leq \indsec\leq T\\
        \frac{\eta}{n}\sum_{i=1}^n\phi(V_i,j_i) &\quad \indsec=\last
        ,\end{cases}
\end{align*}where $u_0\in U \setminus \bigcup_{i=1}^n V_i$.
\end{lemma}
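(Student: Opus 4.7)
The plan is to compute $\nabla \gdindempf(w_2)$ by decomposing the objective into the four terms $\gdindgam_1,\gdindgam_2,\gdindgam_3,\gdindgam_4$, analyze each separately using the three gradient lemmas already established (\cref{gradient_uc,gradient_decode,gradient_stab}), and then take a single gradient step from $w_2$ as given by \cref{diff_GD_w_2}.

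First I would handle $\gdindgam_1$. By \cref{diff_GD_w_2}, for $2\leq \indsec\leq T$ we have $\vecpart{w_2}{\indsec}=0$, which trivially satisfies the hypothesis $\vecpart{w_2}{\indsec}=c\eta u_0$ with $c=0\leq\tfrac12$, where $u_0\in U\setminus\bigcup_i V_i$ is guaranteed to exist under the event $\cE$. \cref{gradient_uc} then gives $\nabla \gdindgam_1(w_2,V_i)=0$ for every $i\in[n]$. Next, $\gdindgam_2$ is linear in $w$, so its gradient is simply $-\phi(V_i,j_i)$ in the encoding coordinate and zero elsewhere, averaging over $i$ to $-\frac{1}{n}\sum_{i=1}^n \phi(V_i,j_i)$ in the $\last$ block.

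For $\gdindgam_3$, I would verify the hypotheses of \cref{gradient_decode} at $w_2$: we have $\wenct{2}=\frac{\eta}{n}\sum_{i=1}^n \phi(V_i,j_i)=\eta\psi^*$ and $\vecpart{w_2}{1}=0=c\eta u_0$ with $c=0$ (so trivially $|c|\leq 1$). The conclusion of that lemma yields $\vecpart{\nabla\gdindgam_3(w_2)}{\last}=\psi^*=\frac{1}{n}\sum_{i=1}^n \phi(V_i,j_i)$ and $\vecpart{\nabla\gdindgam_3(w_2)}{1}=-\frac{\epsilon}{4T^2}u_0$, with all other blocks zero. For $\gdindgam_4$, since $\vecpart{w_2}{\indsec}=0$ for all $\indsec\geq 1$, each inner expression $\frac{3}{8}\langle u,\vecpart{w_2}{\indsec}\rangle-\frac{1}{2}\langle u,\vecpart{w_2}{\indsec+1}\rangle$ equals zero, which is strictly less than $\delta_2>0$. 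Hence the maximum in \cref{stability_term_GD} is attained uniquely at the flat value $\delta_2$, so $\gdindgam_4$ is locally constant at $w_2$ and $\nabla\gdindgam_4(w_2)=0$.

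Summing the four contributions, the encoding block of $\nabla\gdindempf(w_2)$ is $-\frac{1}{n}\sum_{i=1}^n\phi(V_i,j_i)+\psi^*=0$, the first block is $-\frac{\epsilon}{4T^2}u_0$, and all remaining blocks vanish. Applying the GD update $w_3=w_2-\eta\nabla\gdindempf(w_2)$ then yields $\wenct{3}=\wenct{2}=\frac{\eta}{n}\sum_{i=1}^n\phi(V_i,j_i)$, $\vecpart{w_3}{1}=\frac{\eta\epsilon}{4T^2}u_0$, and $\vecpart{w_3}{\indsec}=0$ for $2\leq\indsec\leq T$, matching the claim. The only point that requires care is checking the strict inequalities needed for differentiability of $\gdindgam_3$ (the uniqueness of the maximizer $\psi^*$ and the margin over $\delta_1$), which is exactly the content of parts (i)--(ii) of \cref{gradient_decode} and is already guaranteed under $\cE$.
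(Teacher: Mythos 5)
Your proposal is correct and follows essentially the same route as the paper's proof: decompose $\nabla\gdindempf(w_2)$ into the four loss terms, apply \cref{gradient_uc} to kill $\gdindgam_1$, note $\gdindgam_4$ sits at the flat value $\delta_2$, invoke \cref{gradient_decode} for $\gdindgam_3$, observe the cancellation in the encoding block against $\gdindgam_2$, and take one GD step from the $w_2$ given by \cref{diff_GD_w_2}. No gaps.
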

\begin{proof}
By \cref{diff_GD_w_2}, $\vecpart{w_2}{1},...\vecpart{w_2}{T}=0$, thus,
by \cref{gradient_uc}, we know that for every $i$, $\nabla \gdindgam_{1}(w_1,V_i)=0$.
Moreover, by the fact that $\delta_2>0$, we get that
$\nabla \gdindgam_{4}(w_2)=0$.
For $\gdindgam_{3}(w_2)$, by \cref{gradient_decode}, using the fact that $\vecpart{w_2}{1}=0$ and $\wenct{2}=\frac{\eta}{n}\sum_{i=1}^n\phi(V_i,j_i)$, we get that \begin{align*}
   \vecpart{\nabla \gdindgam_{3}(w_2)}{\indsec}=\begin{cases}
        \frac{1}{n}\sum_{i=1}^n \phi(V_i,j_{i}) &\quad \indsec=\last\\
        -\frac{1}{4}\frac{\epsilon}{T^2}u_0 &\quad \indsec=1\\    
        0 &\quad \text{otherwise}.
    \end{cases}
\end{align*}

For $\gdindgam_{2}(w_2)$, for every $i$, the gradient is
\begin{align*}
   \vecpart{\nabla \gdindgam_{2}(w_2,(V_i,j_i))}{\indsec}=\begin{cases}
    -\phi(V_i,j_{i}) &\quad \indsec=\last\\
        0 &\quad \text{otherwise}.
    \end{cases}
\end{align*}

Combining all together, we conclude that, for $u_0\in U\setminus \bigcup_{i=1}^{n}V_{i}$, it holds that,
\begin{align*}
    \vecpart{\nabla \gdindempf(w_2)}{\indsec}=
    \begin{cases}
         -\frac{1}{4}\frac{\epsilon}{T^2}u_0 &\quad \indsec=1\\
        0 &\quad 2\leq \indsec\leq T
        \\
        0 &\quad \indsec=\last
        ,\end{cases}
\end{align*}
and
\begin{align*}
 \vecpart{w_3}{\indsec}=
    \begin{cases}
       \frac{\eta}{4}\frac{\epsilon}{T^2}u_0&\quad \indsec=1\\
        0 &\quad 2\leq \indsec\leq T\\
        \frac{\eta}{n}\sum_{i=1}^n\phi(V_i,j_i) &\quad \indsec=\last
        .\end{cases}
\end{align*}
\end{proof}
\begin{lemma} \label{diff_GD_w_4}
    Under the conditions of \cref{nonspec_lower_bound_GD}, if $\cE$ occurs and $w_t$ is %
the iterate Unprojected GD on $\gdindempf$, with step size $\eta\leq\frac{1}{\sqrt{T}}$ and $w_1=0$, %
then, for $t=4$ it holds that,
    \begin{align*}
    \vecpart{w_4}{\indsec}=
    \begin{cases}
        -\frac{3\eta}{8}u_0+\frac{\eta}{2}\frac{\epsilon}{T^2}u_0&\quad \indsec=1\\
         \frac{\eta}{2}u_0 &\quad \indsec=2\\
        0 &\quad 3\leq \indsec\leq T
        \\
        \frac{\eta}{n}\sum_{i=1}^n\phi(V_i,j_i) &\quad \indsec=\last
        ,\end{cases}
\end{align*}
where $u_0\in U\setminus \bigcup_{i=1}^n V_i$.%
\end{lemma}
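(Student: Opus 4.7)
}

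The plan is to start from the characterization of $w_3$ in \cref{diff_GD_w_3} and compute the four gradient contributions $\nabla \gdindgam_1, \nabla \gdindgam_2, \nabla \gdindgam_3, \nabla \gdindgam_4$ at $w_3$ separately, then perform a single GD update step. Recall from \cref{diff_GD_w_3} that $\vecpart{w_3}{1}=\tfrac{\eta\varepsilon}{4T^2}u_0$, $\vecpart{w_3}{\indsec}=0$ for $2\leq \indsec\leq T$, and $\wenct{3}=\tfrac{\eta}{n}\sum_{i=1}^n\phi(V_i,j_i)=\eta\psi^*$ for $\psi^*\eqq \tfrac{1}{n}\sum_{i=1}^n \phi(V_i,j_i)$.

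First, since $\vecpart{w_3}{\indsec}=0$ for all $\indsec\geq 2$, the hypothesis of \cref{gradient_uc} holds (with $c=0$), so $\nabla\gdindgam_1(w_3,V_i)=0$ for every $i$. Second, $\gdindgam_2$ is linear, so its gradient is $-\phi(V_i,j_i)$ in the $\last$ component and zero elsewhere. Third, because $\vecpart{w_3}{1}=c\eta u_0$ with $|c|=\tfrac{\varepsilon}{4T^2}\leq 1$ and $\wenct{3}=\eta\psi^*$, the hypothesis of \cref{gradient_decode} applies, yielding $\vecpart{\nabla\gdindgam_3(w_3)}{\last}=\psi^*$, $\vecpart{\nabla\gdindgam_3(w_3)}{1}=-\tfrac{\varepsilon}{4T^2}u_0$, and zero elsewhere.

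The main obstacle is the fourth term: \cref{gradient_stab} is not directly applicable at $w_3$ since it requires $m\geq 3$, whereas here only the $\indsec=1$ component is nonzero. Instead, I will compute $\nabla\gdindgam_4(w_3)$ from scratch. For any pair $(u,\indsec)$ with $\indsec\geq 2$ we have $\tfrac{3}{8}\langle u,\vecpart{w_3}{\indsec}\rangle-\tfrac{1}{2}\langle u,\vecpart{w_3}{\indsec+1}\rangle=0$. For $\indsec=1$ and $u=u_0$, the value is $\tfrac{3}{8}\cdot\tfrac{\eta\varepsilon}{4T^2}=\tfrac{3\eta\varepsilon}{32T^2}=2\delta_2>\delta_2$, while for $\indsec=1$ and $u\neq u_0$, near-orthogonality of $U$ bounds the value by $\tfrac{3\eta\varepsilon}{32T^2}\cdot\tfrac{1}{8}=\tfrac{3\eta\varepsilon}{256T^2}<\delta_2$. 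Hence the inner maximum is uniquely attained at $(u_0,1)$, $\gdindgam_4$ is differentiable at $w_3$, and the gradient has $\vecpart{\cdot}{1}=\tfrac{3}{8}u_0$, $\vecpart{\cdot}{2}=-\tfrac{1}{2}u_0$, zero elsewhere.

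Summing the four contributions in $\nabla \gdindempf(w_3)=\tfrac{1}{n}\sum_{i=1}^n \nabla\gdindf(w_3,(V_i,j_i))$, the $\last$ components of $\nabla\gdindgam_2$ and $\nabla\gdindgam_3$ cancel exactly, and the only nonzero components are $\vecpart{\nabla\gdindempf(w_3)}{1}=\br[0]{-\tfrac{\varepsilon}{4T^2}+\tfrac{3}{8}}u_0$ and $\vecpart{\nabla\gdindempf(w_3)}{2}=-\tfrac{1}{2}u_0$. Applying the update $w_4=w_3-\eta\nabla\gdindempf(w_3)$ and substituting the values of $w_3$ yields $\vecpart{w_4}{1}=\tfrac{\eta\varepsilon}{4T^2}u_0+\eta\br[0]{\tfrac{\varepsilon}{4T^2}-\tfrac{3}{8}}u_0=-\tfrac{3\eta}{8}u_0+\tfrac{\eta\varepsilon}{2T^2}u_0$, $\vecpart{w_4}{2}=\tfrac{\eta}{2}u_0$, $\vecpart{w_4}{\indsec}=0$ for $3\leq \indsec\leq T$, and $\wenct{4}=\tfrac{\eta}{n}\sum_{i=1}^n\phi(V_i,j_i)$, which is exactly the claim.
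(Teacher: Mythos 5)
Your proposal is correct and follows essentially the same route as the paper's proof: zero gradient for $\gdindgam_1$ via \cref{gradient_uc}, the linear gradient of $\gdindgam_2$, \cref{gradient_decode} for $\gdindgam_3$, a direct case analysis showing the maximum in $\gdindgam_4$ is uniquely attained at $(u_0,1)$ with value $\tfrac{3\eta\varepsilon}{32T^2}>\delta_2$, and then the update step. The explicit observation that \cref{gradient_stab} does not apply at $w_3$ and that the winning value equals $2\delta_2$ matches the paper's computation exactly.
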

\begin{proof}
We start with $\gdindgam_{1},\gdindgam_{2},\gdindgam_{3}$.
For $\gdindgam_1$, by \cref{diff_GD_w_3}, for every $2\leq \indsec\leq T$, $\vecpart{w_3}{\indsec}=0$, thus, by \cref{gradient_uc}, we know that for every $i$, $\nabla\gdindgam_{1}(w_1,V_i)=0$.
For $\gdindgam_2$, we know that, for every $i$,
\begin{align*}
   \vecpart{\nabla \gdindgam_{2}(w_3,(V_i,j_i))}{\indsec}=\begin{cases}
        -
    \phi(V_i,j_{i}) &\quad \indsec=\last\\
        0 &\quad \text{otherwise}.
    \end{cases}
\end{align*}
For $\gdindgam_{3}$, by \cref{gradient_decode}, using the fact that $\vecpart{w_3}{1}=c\eta u_0$ for $|c|\leq1$ and $\wenct{3}=\frac{\eta}{n}\sum_{i=1}^n\phi(V_i,j_i)$, we get that, \begin{align*}
   \vecpart{\nabla \gdindgam_{3}(w_3)}{\indsec}=\begin{cases}
        \frac{1}{n}\sum_{i=1}^n \phi(V_i,j_{i}) &\quad \indsec=\last\\
        -\frac{1}{4}\frac{\epsilon}{T^2}u_0 &\quad \indsec=1\\    
        0 &\quad \text{otherwise}.
    \end{cases}
\end{align*}

Now, for $\gdindgam_{4}$, we show that the maximum is attained uniquely in $\indsec=1$ and $u=u_0$:
For $\indsec\neq 1$, for every $u\in U$
    \[\frac{3}{8}\langle u,\vecpart{w_3}{\indsec}\rangle-\frac{1}{2}\langle u,\vecpart{w_3}{\indsec+1}\rangle=0.\]
    For $\indsec=1$ and $u\neq u_0$,
    \begin{align*}
        \frac{3}{8}\langle u,\vecpart{w_3}{\indsec}\rangle-\frac{1}{2}\langle u,\vecpart{w_3}{\indsec+1}\rangle&=\frac{3}{8}\langle u,\vecpart{w_3}{1}\rangle-\frac{1}{2}\langle u,\vecpart{w_3}{2}\rangle\\&=\frac{3}{8}\langle u,\frac{\eta}{4}\frac{\epsilon}{T^2}u_0\rangle 
        \\&\leq \frac{3\eta}{256}\frac{\epsilon}{T^2}
    \end{align*}
     For $k=1$ and $u= u_0$,
    \begin{align*}
        \frac{3}{8}\langle u,\vecpart{w_3}{\indsec}\rangle-\frac{1}{2}\langle u,\vecpart{w_3}{\indsec+1}\rangle&=\frac{3}{8}\langle u_0,\vecpart{w_3}{1}\rangle-\frac{1}{2}\langle u_0,\vecpart{w_3}{2}\rangle\\&=\frac{3}{8}\langle u_0,\frac{\eta}{4}\frac{\epsilon}{T^2}u_0\rangle 
        \\&= \frac{3\eta}{32}\frac{\epsilon}{T^2}\\&>\delta_2
    \end{align*}
We derive that, 
\begin{align*}
    \vecpart{\nabla \gdindgam_4(w_3)}{\indsec}=
    \begin{cases}
         \frac{3}{8}u_0 &\quad \indsec=1\\
          -\frac{1}{2}u_0 &\quad \indsec=2\\
        0 &\quad 3\leq \indsec\leq T
        \\
        0 &\quad \indsec=\last
        .\end{cases}
\end{align*}
Combining all together, we get that,
\begin{align*}
    \vecpart{\nabla \gdindempf(w_3)}{\indsec}=
    \begin{cases}
         \frac{3}{8}u_0 -\frac{1}{4}\frac{\epsilon}{T^2}u_0 &\quad \indsec=1\\
          -\frac{1}{2}u_0 &\quad \indsec=2\\
        0 &\quad 3\leq \indsec\leq T
        \\
        0 &\quad \indsec=\last
        ,\end{cases}
\end{align*}
and
\begin{align*}
    \vecpart{w_4}{\indsec}=
    \begin{cases}
        -\frac{3\eta}{8}u_0+\frac{\eta}{2}\frac{\epsilon}{T^2}u_0&\quad \indsec=1\\
         \frac{\eta}{2}u_0 &\quad \indsec=2\\
        0 &\quad 3\leq s\leq T
        \\
        \frac{\eta}{n}\sum_{i=1}^n\phi(V_i,j_i) &\quad \indsec=\last
        ,\end{cases}
\end{align*}
where $u_0\in U\setminus \bigcup_{i=1}^n V_i$.
\end{proof}

\begin{lemma} \label{diff_GD_w_5}
    Under the conditions of \cref{nonspec_lower_bound_GD}, if $\cE$ occurs and $w_t$ is %
the iterate Unprojected GD on $\gdindempf$, with step size $\eta\leq\frac{1}{\sqrt{T}}$ and $w_1=0$, %
then, for $t=5$ it holds that,
    \begin{align*}
    \vecpart{w_{5}}{\indsec}=\begin{cases}
     -\frac{3}{8}\eta u_{0}+ \frac{3\eta}{4}\frac{\epsilon}{T^2}u_0 & \indsec=1\\
        \frac{1}{8}\eta u_{0} &\quad \indsec=2
        \\
        \frac{1}{2}\eta u_{0} &\quad \indsec=3
        \\
        0 &\quad 4\leq s\leq T
         \\
        \frac{1}{n}\sum_{i=1}^n\phi(V_i,j_i) & \indsec=\last,
    \end{cases}
\end{align*}
where $u_0\in U \setminus \bigcup_{i=1}^n V_i$.
\end{lemma}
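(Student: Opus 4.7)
The plan is to start from the explicit form of $w_4$ given by \cref{diff_GD_w_4} and compute $w_5 = w_4 - \eta\nabla\gdindempf(w_4)$ by summing the gradients of the four loss terms $\gdindgam_1,\gdindgam_2,\gdindgam_3,\gdindgam_4$ at $w_4$. Three of the four terms can be handled by directly invoking the general lemmas already proved; only $\gdindgam_4$ will need an ad-hoc case analysis, and this is the step I expect to be the main (though quite manageable) obstacle, because the generic stability lemma \cref{gradient_stab} is stated only for $m\geq 3$, whereas at $w_4$ the ``active index'' is $m=2$.

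First I would treat the three easy terms. For $\gdindgam_1$, observe that by \cref{diff_GD_w_4} every coordinate $\vecpart{w_4}{\indsec}$ with $\indsec\geq 2$ equals $c\eta u_0$ for some $|c|\leq \tfrac{1}{2}$ (namely $c=\tfrac{1}{2}$ at $\indsec=2$ and $c=0$ for $\indsec\geq 3$), so \cref{gradient_uc} yields $\nabla\gdindgam_1(w_4,V_i)=0$ for all $i$. For $\gdindgam_2$, linearity immediately gives $\vecpart{(\nabla\gdindgam_2(w_4,(V_i,j_i)))}{\last}=-\phi(V_i,j_i)$ and zero elsewhere. For $\gdindgam_3$, I would check that the hypotheses of \cref{gradient_decode} hold: we have $\vecpart{w_4}{1}=c\eta u_0$ with $|c|=|-\tfrac{3}{8}+\tfrac{\epsilon}{2T^2}|\leq 1$, and $\wenct{4}=\eta\psi^*$ for $\psi^*=\tfrac{1}{n}\sum_i\phi(V_i,j_i)$; the lemma then returns the gradient $\psi^*$ in the encoding block, $-\tfrac{\epsilon}{4T^2}u_0$ in $\vecpart{W}{1}$, and zero elsewhere.

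The main step is the computation of $\nabla\gdindgam_4(w_4)$. Because \cref{gradient_stab} requires $m\geq 3$, I would mimic instead the direct argument used in \cref{diff_GD_w_4} and evaluate $\tfrac{3}{8}\langle u,\vecpart{w_4}{\indsec}\rangle-\tfrac{1}{2}\langle u,\vecpart{w_4}{\indsec+1}\rangle$ case-by-case over $u\in U$ and $1\leq \indsec<T$. At $\indsec=1$, the contribution from $\vecpart{w_4}{1}$ is at most $O(\eta)\cdot(-\tfrac{3}{8}+o(1))$ and from $\vecpart{w_4}{2}=\tfrac{\eta}{2}u_0$ it is $-\tfrac{\eta}{4}\langle u,u_0\rangle$; both pull the expression strictly negative. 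At $\indsec=2$ and $u=u_0$ the expression equals $\tfrac{3}{8}\cdot\tfrac{\eta}{2}=\tfrac{3\eta}{16}$, while for $\indsec=2$ and $u\neq u_0$ near-orthogonality bounds it by $\tfrac{3\eta}{128}$. For $\indsec\geq 3$ both iterates vanish so the expression is $0$. Comparing to $\delta_2=\tfrac{3\eta\beta}{16}\ll \tfrac{3\eta}{16}$, the maximum is uniquely attained at $(\indsec,u)=(2,u_0)$, hence $\gdindgam_4$ is differentiable at $w_4$ with $\vecpart{(\nabla\gdindgam_4(w_4))}{2}=\tfrac{3}{8}u_0$, $\vecpart{(\nabla\gdindgam_4(w_4))}{3}=-\tfrac{1}{2}u_0$, and zero elsewhere.

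Finally I would assemble the four pieces: the $\gdindgam_2$ and $\gdindgam_3$ contributions to the encoding block cancel to give $\vecpart{(\nabla\gdindempf(w_4))}{\last}=0$; the $\vecpart{W}{1}$ block receives only $-\tfrac{\epsilon}{4T^2}u_0$ from $\gdindgam_3$; and $\vecpart{W}{2},\vecpart{W}{3}$ receive $\tfrac{3}{8}u_0,-\tfrac{1}{2}u_0$ from $\gdindgam_4$. The update $w_5=w_4-\eta\nabla\gdindempf(w_4)$ then yields $\vecpart{w_5}{1}=(-\tfrac{3}{8}+\tfrac{\epsilon}{2T^2})\eta u_0+\tfrac{\eta\epsilon}{4T^2}u_0=-\tfrac{3}{8}\eta u_0+\tfrac{3\eta\epsilon}{4T^2}u_0$, $\vecpart{w_5}{2}=\tfrac{\eta}{2}u_0-\tfrac{3\eta}{8}u_0=\tfrac{\eta}{8}u_0$, $\vecpart{w_5}{3}=0+\tfrac{\eta}{2}u_0=\tfrac{\eta}{2}u_0$, $\vecpart{w_5}{\indsec}=0$ for $4\leq \indsec\leq T$, and $\vecpart{w_5}{\last}=\tfrac{\eta}{n}\sum_i\phi(V_i,j_i)$, matching the claimed expression.
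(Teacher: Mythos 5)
Your proposal is correct and follows essentially the same route as the paper's proof: the three easy terms are dispatched via \cref{gradient_uc}, linearity, and \cref{gradient_decode}, and $\gdindgam_4$ is handled by the same direct case analysis at $w_4$ (the paper likewise does not invoke \cref{gradient_stab} here, for exactly the reason you identify, namely that its hypothesis requires $m\geq 3$). One minor imprecision: at $\indsec=1$ the two inner products are not both negative when $u\neq u_0$, since near-orthogonality only bounds $|\langle u,u_0\rangle|\leq\frac18$ and so each term can be positive (totalling up to roughly $\frac{9\eta}{512}+\frac{\eta}{32}+\frac{9\eta}{256}$); the correct argument, as in the paper, is that this total is still strictly below the value $\frac{3\eta}{16}$ attained at $(\indsec,u)=(2,u_0)$.
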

\begin{proof}
    We begin with $\gdindgam_{1},\gdindgam_{2},\gdindgam_{3}$.
Note that, by \cref{diff_GD_w_4}, for every $2\leq \indsec \leq T$, $\vecpart{w_4}{\indsec}=c\eta u_0$ for $c\leq\frac{1}{2}$, thus, by \cref{gradient_uc}, for every $i$, $\nabla \gdindgam_{1}(w_4, V_i)=0$.
For $\gdindgam_2$, we know that, for every $i$,
\begin{align*}
   \vecpart{\nabla \gdindgam_{2}(w_4,(V_i,j_i))}{\indsec}=\begin{cases}
        -
    \phi(V_i,j_{i}) &\quad \indsec=\last\\
        0 &\quad \text{otherwise}.
    \end{cases}
\end{align*}
For $\gdindgam_{3}$, by \cref{gradient_decode}, using \cref{diff_GD_w_4}, where we showed that $\vecpart{w_4}{1}=c\eta u_0$ for $|c|\leq1$ and $\wenct{4}=\frac{\eta}{n}\sum_{i=1}^n\phi(V_i,j_i)$, we get that, \begin{align*}
   \vecpart{\nabla \gdindgam_{3}(w_4)}{\indsec}=\begin{cases}
        \frac{1}{n}\sum_{i=1}^n \phi(V_i,j_{i}) &\quad \indsec=\last\\
        -\frac{1}{4}\frac{\epsilon}{T^2}u_0 &\quad \indsec=1\\    
        0 &\quad \text{otherwise}.
    \end{cases}
\end{align*}

It is left to calculate $\nabla \gdindgam_{4}(w_4)$. We show that the maximum is attained uniquely at $\indsec=2$ and $u=u_0$. First,
\begin{align*}
\frac{3}{8}\langle u,\frac{\eta}{2}\frac{\epsilon}{T^2} u_0\rangle= \frac{3\eta}{16}\frac{\epsilon}{T^2}\langle u,u_0\rangle
\leq \frac{3\eta}{16T^2},
\end{align*}
thus, since $T\geq 4$,
\begin{align*}
    \frac{3}{8}\langle u,\vecpart{w_4}{1}\rangle-\frac{1}{2}\langle u,\vecpart{w_4}{2}\rangle=\frac{3}{8}\langle u,-\frac{3\eta}{8}u_0+\frac{\eta}{2}\frac{\epsilon}{T^2}u_0\rangle-\frac{1}{2}\langle u,\frac{\eta}{2}u_0\rangle\leq \frac{9\eta}{512}+\frac{\eta}{32}+\frac{9\eta}{256}=\frac{43\eta}{512}<\frac{3\eta}{16}.
\end{align*}
For $\indsec=2$ and $u=u_{0}$,
\begin{align*}
    \frac{3}{8}\langle u,\vecpart{w_4}{2}\rangle-\frac{1}{2}\langle u,\vecpart{w_4}{3}\rangle=\frac{3}{8}\langle u_0,\frac{\eta}{2}u_{0}\rangle-\frac{1}{2}\langle u_0,0\rangle= \frac{3\eta}{16}(> \delta_2).
\end{align*}
For $\indsec=2$ and $u\neq u_{t-2}$,
\begin{align*}
    \frac{3}{8}\langle u,\vecpart{w_4}{2}\rangle-\frac{1}{2}\langle u',\vecpart{w_3}{3}\rangle=\frac{3}{8}\langle u,\frac{\eta}{2}u_{0}\rangle-\frac{1}{2}\langle u,0\rangle\leq  \frac{3\eta}{128}.
\end{align*}
For every $3\leq \indsec\leq T-1$,
\begin{align*}
     \frac{3}{8}\langle u,\vecpart{w_4}{\indsec}\rangle-\frac{1}{2}\langle u',\vecpart{w_4}{\indsec+1}\rangle=0.
\end{align*}
We derive that, 
\begin{align*}
    \vecpart{\nabla \gdindgam_4(w_4)}{\indsec}=
    \begin{cases}
         \frac{3}{8}u_0 &\quad \indsec=2\\
          -\frac{1}{2}u_0 &\quad \indsec=3\\
        0 &\quad 3\leq \indsec\leq T
        \\
        0 &\quad \indsec=\last
        .\end{cases}
\end{align*}
Combining all together, we get that,
\begin{align*}
    \vecpart{\nabla \gdindempf(w_4)}{\indsec}=\begin{cases}
        -\frac{1}{4}\frac{\epsilon}{T^2} u_{0} & \indsec=1\\
        \frac{3}{8} u_{0} &\quad \indsec=2
        \\
        -\frac{1}{2} u_{0} &\quad \indsec=3
        \\
        0 &\quad 4\leq \indsec\leq T
         \\
        0 & \indsec=\last
    \end{cases}
\end{align*}
and 
\begin{align*}
    \vecpart{w_{5}}{\indsec}=\begin{cases}
     -\frac{3}{8}\eta u_{0}+ \frac{3\eta}{4}\frac{\epsilon}{T^2}u_0 & \indsec=1\\
        \frac{1}{8}\eta u_{0} &\quad \indsec=2
        \\
        \frac{1}{2}\eta u_{0} &\quad \indsec=3
        \\
        0 &\quad 4\leq s\leq T
         \\
        \frac{1}{n}\sum_{i=1}^n\phi(V_i,j_i) & \indsec=\last,
    \end{cases}
\end{align*}
where $u_0\in U \setminus \bigcup_{i=1}^n V_i$.

\end{proof}

\begin{lemma}
    \label{bounded_norm_gd}
    Assume the conditions of \cref{nonspec_lower_bound_GD}, and consider the iterates of unprojected GD on $\gdindempf$, with step size $\eta \leq \ifrac{1}{\sqrt{T}}$ initialized at $w_1=0$. 
Under the event $\cE$, we have for all $t\in[T]$ that
\[\|w_t\|\leq1.\]
\end{lemma}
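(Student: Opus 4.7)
The proof is a direct calculation that exploits the explicit formulas for the iterates already established in \cref{diff_GD_w_2,diff_GD_w_3,diff_GD_w_4,diff_GD_expression}, together with the orthogonality of the subspaces $\vecpart{W}{\last},\vecpart{W}{1},\ldots,\vecpart{W}{T}$. Since these subspaces are orthogonal, the Pythagorean identity gives
\[
    \|w_t\|^2 \;=\; \|\vecpart{w_t}{\last}\|^2 + \sum_{k=1}^{T}\|\vecpart{w_t}{k}\|^2,
\]
so it suffices to bound each orthogonal component separately and sum the squared norms.

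For $t=1$ the claim is trivial ($w_1=0$), and for $t\in\{2,3,4\}$ the explicit expressions in \cref{diff_GD_w_2,diff_GD_w_3,diff_GD_w_4} are already comfortably inside the unit ball (each nontrivial component has norm at most $\eta$ or at most a small constant times $\eta$, and there are only a constant number of such components). The main case is $t\geq 5$, where I would plug in the formula from \cref{diff_GD_expression}. Using $\|u_0\|=1$ and $\|\phi(V_i,j_i)\|\leq 1$, together with $\varepsilon<1$ and $t\leq T$, the individual components satisfy
\[
    \|\vecpart{w_t}{\last}\|\leq\eta,\qquad
    \|\vecpart{w_t}{1}\|\leq\tfrac{5}{8}\eta,\qquad
    \|\vecpart{w_t}{t-2}\|=\tfrac{\eta}{2},
\]
and $\|\vecpart{w_t}{k}\|=\eta/8$ for each $2\leq k\leq t-3$ (at most $T$ such indices), while $\vecpart{w_t}{k}=0$ for $k\geq t-1$.

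Summing the squared norms gives
\[
    \|w_t\|^2 \;\leq\; \eta^2 + \tfrac{25}{64}\eta^2 + (T-4)\cdot\tfrac{\eta^2}{64} + \tfrac{\eta^2}{4}
    \;\leq\; \tfrac{105}{64}\eta^2 + \tfrac{T\eta^2}{64}.
\]
Using the hypothesis $\eta\leq 1/\sqrt{T}$, which implies $\eta^2\leq 1/T$ and $T\eta^2\leq 1$, the right-hand side is at most $\tfrac{105}{64T}+\tfrac{1}{64}$, which is much smaller than $1$ (in fact, $T>3200^2$ makes the first term negligible). This completes the bound.

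I do not expect a genuine obstacle: the proof is a straightforward bookkeeping argument that piggybacks on the already-proved characterization of the trajectory under $\cE$. The only things to be careful about are (i) bounding the bracketed coefficient in $\vecpart{w_t}{1}$ using $\varepsilon<1$ and $(t-2)/T^2\leq 1/T$, and (ii) ensuring the count of ``middle'' indices contributing $\eta^2/64$ is handled as $\leq T$ so that the stepsize restriction $T\eta^2\leq 1$ can absorb it.
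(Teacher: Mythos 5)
Your proposal is correct and follows essentially the same route as the paper's proof: invoke the explicit trajectory formulas from \cref{diff_GD_w_2,diff_GD_w_3,diff_GD_w_4,diff_GD_expression}, decompose $\|w_t\|^2$ over the orthogonal subspaces, bound each component ($\leq\eta$ for the encoding block, $O(\eta)$ for the two ``active'' blocks, $\eta/8$ for the $O(T)$ middle blocks), and absorb the sum using $\eta\leq 1/\sqrt{T}$. The constants differ slightly from the paper's but the argument is the same.
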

\begin{proof}
If $\cE$ holds, by \cref{diff_GD_w_2,diff_GD_w_3,diff_GD_w_4,diff_GD_expression}, we know that for every $t\geq 2$, $\|\vecpart{w_t}{1}\|\leq \frac{\eta}{2}$, $\|\vecpart{w_t}{t-1}\|\leq \frac{\eta}{2}$ and for every $k\in\{2,\ldots,t-2\}$, $\|\vecpart{w_t}{t-1}\|\leq \frac{\eta}{8}$. As a result,
\begin{align*}
    \norm{w_t}^2
    &\leq \sum_{i=1}^d w_t[i]^2
    \\&\leq \sum_{k=0}^T \|\vecpart{w_t}{k}\|^{2}
    \\&< 2\cdot \left(\frac{\eta}{2}\right)^{2}+ (T-3)\left(\frac{\eta}{8}\right)^{2}+\norm[2]{ \frac{\eta}{n}\sum_{i=1}^n\phi(V_i,j_i) }^2
    \\&\leq \frac{\eta^2}{2} + \frac{\eta^2(T-3)}{64}+\eta^2
    \\&\leq \frac{1}{64}+\frac{3}{2T} 
        \tag{$\eta\leq \frac{1}{\sqrt{T}}$}  
    \\&\leq 1 
        \tag{$T\geq 2$}
\end{align*}
\end{proof}

\subsection{Proof of \cref{nonspec_lower_bound_GD}}
\label{sec:proof_nondiff_GD}

\begin{proof} [of \cref{nonspec_lower_bound_GD}]
    By \cref{intersection_events_gd}, with probability of at least $\frac{1}{6}$, $\cE$ occurs and by \cref{diff_GD_expression}, it holds for every $2\leq \indsec\leq T-3$ that,
    \begin{align}
        \label{suff_exp}
        \vecpart{\suff}{\indsec}=\frac{1}{\indsuff}\sum_{i=1}^\indsuff\vecpart{w_{T-i+1}}{\indsec}&=\begin{cases}
            \frac{\eta}{8}u_0 & \indsec\leq T-\indsuff-2\\
            \frac{1}{\indsuff}\left(\frac{\eta}{2}+\frac{\eta}{8}(T-\indsec-2)\right)u_0 & \indsec\geq T-\indsuff-1       
        \end{cases}
        \\&=\begin{cases}
            \frac{\eta}{8} u_0& \indsec\leq T-\indsuff-2\\
            \frac{\eta(T-\indsec+2)}{8\indsuff}u_0 & \indsec\geq T-\indsuff-1      
        \end{cases}   \notag
    \end{align}
     Then, we denote $\mathbbm{\alpha}_V\in \R^{T-4}$ the vector which its $\indsec$th entry is $\max\left(\frac{3\eta}{32}, \max_{u\in V} \langle u,\vecpart{\suff}{\indsec+1}\rangle \right)$.
    By the fact that every vector $u\in U$ is in $V$ with probability $\frac{1}{2}$, the following holds,
\begin{align*}
\E_V\sqrt{\sum_{\indsec=2}^T\max\left(\frac{3\eta}{32}, \max_{u\in V} \langle u,\vecpart{\suff}{\indsec}\rangle \right)^2}&\geq 
    \E_V\sqrt{\sum_{\indsec=2}^{T-3}\max\left(\frac{3\eta}{32}, \max_{u\in V} \langle u,\vecpart{\suff}{\indsec}\rangle \right)^2}
    \\&= \E_V\sqrt{\sum_{\indsec=1}^{T-4}\max\left(\frac{3\eta}{32}, \max_{u\in V} \langle u,\vecpart{\suff}{\indsec+1}\rangle \right)^2}
     \\&=\E_V \|\alpha_V\|
     \\&\geq \|\E_V \alpha_V\|
     \\&=\sqrt{\sum_{\indsec=2}^{T-3
    }\left(\E_V \max\left(\frac{3\eta}{32}, \max_{u\in V} \langle u,\vecpart{\suff}{\indsec}\rangle \right) \right)^2}
    \end{align*}
    Then, by \cref{suff_exp}, 
    \begin{align*}
        &\E_V\sqrt{\sum_{\indsec=2}^T\max\left(\frac{3\eta}{32}, \max_{u\in V} \langle u,\vecpart{\suff}{\indsec}\rangle \right)^2}\\&\geq 
        \sqrt{\sum_{\indsec=2}^{T-\indsuff-2}\left(\E_V \max\left(\frac{3\eta}{32}, \max_{u\in V} \langle u,\vecpart{\suff}{\indsec}\rangle \right) \right)^2+\sum_{\indsec=T-\indsuff-1}^{T-3
    }\left(\E_V \max\left(\frac{3\eta}{32}, \max_{u\in V} \langle u,\vecpart{\suff}{\indsec}\rangle \right) \right)^2}
    \\&\geq \sqrt{\sum_{\indsec=2}^{T-\indsuff-2}\left(\E_V \max\left(\frac{3\eta}{32}, \max_{u\in V} \langle u,\frac{\eta}{8}u_0\rangle \right) \right)^2+\sum_{\indsec=T-\indsuff-1}^{T-3
    }\left(\E_V \max\left(\frac{3\eta}{32}, \max_{u\in V} \langle u,\frac{\eta(T-\indsec+2)}{8\indsuff}u_0\rangle \right) \right)^2}
    \\&=
    \frac{\eta}{8}\sqrt{\sum_{\indsec=2}^{T-\indsuff-2}\left(\E_V \max\left(\frac{3}{4}, \max_{u\in V} \langle u,u_0\rangle \right) \right)^2+\sum_{\indsec=T-\indsuff-1}^{T-3
    }\left(\E_V \max\left(\frac{3}{4}, \frac{T-\indsec+2}{\indsuff} \max_{u\in V} \langle u,u_0\rangle \right) \right)^2}
     \\&\geq
    \frac{\eta}{8}\sqrt{\sum_{\indsec=2}^{T-\indsuff-2}\left(\E_V \max\left(\frac{3}{4}, \max_{u\in V} \langle u,u_0\rangle \right) \right)^2+\sum_{\indsec=T-\indsuff-1}^{T-3
    }\left(\E_V \max\left(\frac{3}{4}, \frac{T-\indsec+2}{T} \max_{u\in V} \langle u,u_0\rangle \right) \right)^2}
     \\&=
    \frac{\eta}{8}\sqrt{\sum_{\indsec=2}^{T-\indsuff-2}\left(\E_V \max\left(\frac{3}{4}, \max_{u\in V} \langle u,u_0\rangle \right) \right)^2+\sum_{\indsec=1}^{\indsuff-1
    }\left(\E_V \max\left(\frac{3}{4}, \frac{\indsec+4}{T} \max_{u\in V} \langle u,u_0\rangle \right) \right)^2}
    \end{align*}
    Now, treating each of the term separately, with probability $\frac{1}{2}$ on $V$, $ \max_{u\in V} \langle u,u_0\rangle\leq \frac{1}{8}$ (otherwise it is $1$), thus, 
    \begin{align*}
       \E_V \max\left(\frac{3}{4}, \max_{u\in V} \langle u,u_0\rangle \right)=\frac{1}{2}\cdot \frac{3}{4} + \frac{1}{2}\cdot 1=\frac{7}{8}
    \end{align*}
    Moreover, if $\indsec\leq \frac{3T}{4}-4$ 
    \begin{align*}
        \E_V \max\left(\frac{3}{4}, \frac{\indsec+4}{T}\max_{u\in V} \langle u,u_0\rangle \right)=\frac{3}{4},
    \end{align*}
    otherwise, 
    \begin{align*}
    \E_V \max\left(\frac{3}{4}, \frac{\indsec+4}{T}\max_{u\in V} \langle u,u_0\rangle \right)&\geq\frac{1}{2}\max\left(\frac{3}{4}, \frac{\indsec+4}{T}\right)+ \frac{1}{2}\cdot \frac{3}{4}
    \\&\geq \frac{3}{8}+ \frac{\indsec+4}{2T}
    \end{align*}
    Then, we get, if $\indsuff\geq T-3$, (note that it implies $l-1 \geq \frac{3T}{4}-4$),
    \begin{align*}
        &\E_V\sqrt{\sum_{\indsec=2}^T\max\left(\frac{3\eta}{32}, \max_{u\in V} \langle u,\vecpart{\suff}{\indsec}\rangle \right)^2}\\&\geq
        \frac{\eta}{8}\sqrt{\sum_{\indsec=1}^{\indsuff-1
    }\left(\E_V \max\left(\frac{3}{4}, \frac{\indsec+4}{T} \max_{u\in V} \langle u,u_0\rangle \right) \right)^2}
    \\&\geq\frac{\eta}{8}\sqrt{\sum_{\indsec:1\leq \indsec\leq \frac{3T}{4}-4}\frac{9}{16}
        +\sum_{\indsec:\frac{3T}{4}-4< \indsec\leq \indsuff-1}\left( \frac{3}{8}+ \frac{\indsec+4}{2T}\right)^2}
        \\&\geq \frac{\eta}{8}\sqrt{\sum_{\indsec:1\leq \indsec\leq \frac{3T}{4}-4}\frac{9}{16}
        +\sum_{\indsec:\frac{3T}{4}< \indsec\leq T}\left( \frac{3}{8}+ \frac{\indsec}{2T}\right)^2}
        \\&\geq \frac{\eta}{8}\sqrt{\frac{27T-144}{64} +\sum_{\indsec:\frac{3T}{4}< \indsec\leq T}\left( \frac{9}{64}+ \frac{3\indsec}{8T}\right)}
         \\&\geq \frac{\eta}{8}\sqrt{\frac{27T-144}{64} +\left( \frac{9T}{256}+  \frac{3}{8T}\sum_{\indsec:\frac{3T}{4}< \indsec\leq T}\indsec
         \right)}
         \\&\geq \frac{\eta}{8}\sqrt{\frac{27T-144}{64} +\left( \frac{9T}{256}+ \frac{3T^2}{16T}-\frac{3(\frac{3T}{4}+1)^2}{16T}
         \right)} \tag{$\frac{i^2}{2}\leq\sum_{i=1}^ni^2\leq \frac{(i+1)^2}{2}$}
         \\&= \frac{\eta}{8}\sqrt{\frac{27T-144}{64} +\left( \frac{9T}{256}+\frac{3T}{16}-\frac{27T}{256}-\frac{3}{16T}-\frac{9}{32}
        \right)}
        \\&= \frac{\eta}{8}\sqrt{\frac{148T}{256}-\frac{45}{32} -\frac{3}{16T}
        }
        \\&\geq \frac{\eta}{8}\sqrt{\frac{147T}{256}}\tag{$T\geq 512\implies \frac{45}{32}+\frac{3}{16T}\leq \frac{T}{256}$}
         \\&\geq  \frac{3\eta}{32}\cdot \frac{101\sqrt{T}}{100}.
    \end{align*}
    Otherwise, if $\indsuff< T-4$, by similar arguments,
    \begin{align*}
        &\E_V\sqrt{\sum_{\indsec=2}^T\max\left(\frac{3\eta}{32}, \max_{u\in V} \langle u,\vecpart{\suff}{\indsec}\rangle \right)^2}
        \\&\geq 
    \frac{\eta}{8}\sqrt{\sum_{\indsec=2}^{T-\indsuff-2}\left(\E_V \max\left(\frac{3}{4}, \max_{u\in V} \langle u,u_0\rangle \right) \right)^2+\sum_{\indsec=T-\indsuff-1}^{T-3
    }\left(\E_V \max\left(\frac{3}{4}, \frac{T-\indsec+2}{T} \max_{u\in V} \langle u,u_0\rangle \right) \right)^2}
        \\&\geq \frac{\eta}{8}\sqrt{\sum_{\indsec=2}^{T-\indsuff-2}\left(\frac{7}{8}\right)^2+\sum_{\indsec:1\leq \indsec\leq \frac{3T}{4}-4}\frac{9}{16}
        +\sum_{\indsec:\frac{T}{2}< \indsec\leq \indsuff+3}\left( \frac{3}{8}+ \frac{\indsec}{2T}\right)^2}
        \\&=  \frac{\eta}{8}\sqrt{\sum_{\indsec=\indsuff+4}^{T}\left(\frac{7}{8}\right)^2+\sum_{\indsec:1\leq \indsec\leq \frac{3T}{4}-4}\frac{9}{16}
        +\sum_{\indsec:\frac{T}{2}< \indsec\leq \indsuff+3}\left( \frac{3}{8}+ \frac{\indsec}{2T}\right)^2}
        \\&\geq 
       \frac{\eta}{8}\sqrt{\sum_{\indsec:1\leq \indsec\leq \frac{3T}{4}-4}\frac{9}{16}
        +\sum_{\indsec:\frac{3T}{4}< \indsec\leq T}\left( \frac{3}{8}+ \frac{\indsec}{2T}\right)^2}
        \tag{$\frac{3}{8}+\frac{\indsec}{2T}\leq \frac{7}{8}$}
         \\&\geq 
          \frac{3\eta}{32}\cdot \frac{101\sqrt{T}}{100}.
    \end{align*}
    Moreover, we notice that for every $t$, $\gdindgam_2(w_t)\geq-\|\wenct{t}\|\geq -\eta$, $\gdindgam_3(w_t)\geq \delta_1$ and  $\gdindgam_4(w_t)\geq \delta_2$, thus,  it holds that,
\begin{align*}
 \gdindpopf(w_{T,l})\geq 
\frac{303\eta}{3200}\sqrt{T}+\delta_1+\delta_2-\eta
&\geq 
\eta\left(\frac{303}{3200}\sqrt{T}-1\right)
\end{align*}
and
\begin{align*}
\gdindpopf(\opt)&\leq
\frac{3\eta}{32}\sqrt{T}+\eta
\end{align*}
Then, with probability of at least $\frac{1}{6}$,
\begin{align*}
    \gdindpopf(w_{T,l})-\gdindpopf(\opt) 
    &\geq \eta (\frac{303}{3200}\sqrt{T}-2-\frac{3}{32}\sqrt{T})
    \\&\geq \eta (\frac{303}{3200}\sqrt{T}-\frac{302}{3200}\sqrt{T})
    \tag{$T\geq 3200^2\implies 2 \leq \frac{2}{3200}\sqrt{T}$}
    \\&= \frac{\eta}{3200}\sqrt{T}.
\end{align*}
    
\end{proof}

\section{Proofs of \cref{sec:sgd_appendix}}
\label{sec:proofs_SGD}
\subsection{Proofs for the full construction}

\begin{lemma}
    \label{diff_set_encode exists_SGD}
    Let $n$, a set $U\in \R^d$. Let $P(U)$ be the power set of U. Then, 
    there exist sets $\{\Psi_1,...\Psi_n\}\subseteq \R^{2n}$, a number $0<\epsilon<\frac{1}{n}$ and two mappings $\phi:P(U) \times [n]\to \R^{2n}$, $\alpha:\R^{2n}\to U$ such that,
    \begin{enumerate}
        \item For every $j\in[n]$ and $V\subseteq U$, $\|\phi\left(V,j\right)\|\leq 1$.
        \item For every $k$, $\psi\in\Psi_k$, $\|\alpha(\psi)\|\leq 1,\|\psi\|\leq 1$.
        \item  Let $V_1,\ldots,V_k\subseteq U$. Then, for every $k$, $\psi^*_k=\frac{1}{n}\sum_{i=1}^k\phi(V_i,i)$ holds,
        \begin{itemize}
            \item For every $\psi\in \Psi_k$, $\psi\neq \psi^*_k$:
            \[\langle \psi^*_k, \frac{1}{n}\sum_{i=1}^k\phi(V_i,i)\rangle \geq \langle \psi, \frac{1}{n}\sum_{i=1}^k\phi(V_i,i)\rangle + \epsilon ;\]
            \item If $\bigcap_{i=1}^kV_i\neq \emptyset$ and $m=\argmin_{i}\{i:v_i \in \bigcap_{i=1}^kV_i$\}, then $\alpha(\psi^*)=v_m\in \bigcap_{i=1}^kV_i$.
           \end{itemize}
         \end{enumerate}
\end{lemma}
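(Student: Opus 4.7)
The plan is to adapt the construction from \cref{diff_set_encode exists_GD} to the sequential SGD setting. The key modifications are that the per-step index now runs over $[n]$ rather than $[n^2]$, that we need a family of encoding sets $\Psi_1,\ldots,\Psi_n$ indexed by the prefix length, and that the decoder must extract an element of the intersection $\bigcap_{i=1}^k V_i$ rather than the complement of a union.

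I would fix an arbitrary enumeration $P(U)=\{V^1,\ldots,V^{|P(U)|}\}$ and embed subsets as distinct points on the unit circle via $g(V^i)=(\sin(2\pi i/|P(U)|),\cos(2\pi i/|P(U)|))$. Viewing $\R^{2n}$ as a concatenation of $n$ blocks of $\R^2$, I would define $\phi(V,j)$ to place $g(V)$ at block $j$ and zero at the other blocks; this immediately gives $\|\phi(V,j)\|=1$, orthogonality across distinct blocks, and the near-orthogonality bound $\langle g(V),g(V')\rangle \le \cos(2\pi/|P(U)|) = 1-\delta$ for $V\ne V'$, where $\delta := 1-\cos(2\pi/|P(U)|) \in (0,1)$. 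I then set $\Psi_k=\{\tfrac1n\sum_{i=1}^k\phi(V_i,i):V_1,\ldots,V_k\subseteq U\}$, which immediately gives $\|\psi\|\le k/n\le 1$ and ensures $\psi_k^*\in\Psi_k$.

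For the margin, a direct calculation using block-orthogonality yields $\langle\psi_k^*,\psi_k^*\rangle=k/n^2$, whereas for any other $\psi=\tfrac1n\sum_i\phi(V'_i,i)\in\Psi_k$ there is at least one index $i$ with $V'_i\ne V_i$, so block-orthogonality and the near-orthogonality of $g$ give $\langle\psi_k^*,\psi\rangle\le \tfrac{1}{n^2}\bigl((k-1)+(1-\delta)\bigr) = k/n^2 - \delta/n^2$. Thus $\epsilon:=\delta/n^2$ works and satisfies $\epsilon<1/n$. For the decoder, I define $q(a,j)=\argmax_r\langle g(V^r),\vecpart{a}{j}\rangle$ on nonzero blocks, set $\sigma(a)=\bigcap_{j:\vecpart{a}{j}\ne 0}V_{q(a,j)}$, and let $\alpha(a)=v_{\min\{i:v_i\in\sigma(a)\}}$ (with an arbitrary default, say $v_1$, when $\sigma(a)=\emptyset$). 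At $\psi_k^*$, block $j\le k$ equals $g(V_j)/n$ and blocks $j>k$ are zero; since $g$ is injective on the enumeration and all its values are unit vectors, the argmax is uniquely attained at the index of $V_j$. Hence $\sigma(\psi_k^*)=\bigcap_{j=1}^k V_j$, and when this intersection is nonempty, $\alpha(\psi_k^*)$ returns its minimal-index element, as required.

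There is no real obstacle beyond bookkeeping: the margin computation is a tightened per-block version of the GD argument, and the decoder's correctness follows from injectivity of $g$ onto distinct points of the unit circle. The bound $\|\alpha(\psi)\|\le 1$ uses that $U\subseteq\unitballd$, which is implicit via the instantiation through \cref{lem:set_direc_exists}. The only arithmetic subtlety worth double-checking is the switch from the $n^2$ blocks used in \cref{diff_set_encode exists_GD} (for injecting the auxiliary index $j\in[n^2]$) to the $n$ blocks here (since SGD naturally supplies a distinct step index $i\in[n]$), which is exactly why no collision event on $j$-indices appears in the SGD good event $\cE'$.
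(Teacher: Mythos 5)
Your proposal is correct and follows essentially the same route as the paper's proof: the same unit-circle embedding $g$, the same block decomposition of $\R^{2n}$, the same margin $\epsilon=\delta/n^2$ with $\delta=1-\cos(2\pi/|P(U)|)$, and the same per-block argmax decoder $\sigma,\alpha$ recovering $\bigcap_{i=1}^k V_i$. The only cosmetic difference is the default value of $\alpha$ on an empty intersection ($v_1$ versus the paper's $v_{|U|}$), which is immaterial.
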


\begin{proof}[of \cref{diff_set_encode exists_SGD}]
The construction is similar to \cref{diff_set_encode exists_GD}.
First, we consider an arbitrary enumeration of $P(U)=\{V^1,...V^{|P(U)|}\}$ and define $g:P(U)\to \R^2$, $g(V^i)= \left(\sin\left(\frac{2\pi i}{|P(U)|}\right),\cos\left(\frac{2\pi i}{|P(U)|}\right)\right)$. 
Here, we refer to a vector $a\in \R^{2n}$ as a concatenation of $n$ vectors in $\R^2$, $\vecpart{a}{1},...,\vecpart{a}{n}$. Then, we define $\delta=1-\cos\left(\frac{2\pi}{|P(U)|}\right)$, $\epsilon=\frac{\delta}{n^2}$ and \[\vecpart{\phi(V,j)}{i}=
\left\{
\begin{array}{cc}
    g(V) & i=j \\
    0 & \text{otherwise} 
\end{array}
\right.\]
As a result, for every $V_i,j$ it holds that
\begin{align*}
    \|\phi(V^i,j)\|=\|g(V^i)\|=\sqrt{\sin\left(\frac{2\pi i}{|P(U)|}\right)^2+\cos\left(\frac{2\pi i}{|P(U)|}\right)^2}=1
\end{align*}
Moreover, if $j_1\neq j_2$,
\begin{align*}
    \langle \phi(V^i,j_1),\phi(V^i,j_2)\rangle =0,
\end{align*}
and if $i>k$,
\begin{align*}
   \langle \phi(V^i,j),\phi(V^k,j)\rangle=&\langle  g(V^i), g(V^k)\rangle \\&=\sin\left(\frac{2\pi i}{|P(U)|}\right)\sin\left(\frac{2\pi k}{|P(U)|}\right) + \cos\left(\frac{2\pi i}{|P(U)|}\right)\cos\left(\frac{2\pi k}{|P(U)|}\right)
   \\&=\cos\left(\frac{2\pi (i-k)}{|P(U)|}\right)
   \\&\leq \cos\left(\frac{2\pi}{|P(U)|}\right) 
   \tag{$\cos$ is monotonic decreasing in $[0,\pi/2]$}
   \\&=1-\delta
\end{align*}
We notice that $0<\delta< 1$.
Now, we consider an arbitrary enumeration of $U=\{v_1,...v_{|U|}\}$, and define the following sets $\Psi_1,\ldots\Psi_n\subseteq \R^{2n}$ and the following two mappings $\sigma: R^{2n}\to P(U), \alpha:  R^{2n}\to U$,
\[\Psi_k = \{ \frac{1}{n}\sum_{i=1}^k \phi(V_i,i): \forall i\  V_i\ \subseteq U\}\]
Note that, for every $\psi \in \Psi$,
\[\|\psi\|=\|\frac{1}{n}\sum_{i=1}^k \phi(V_i,j_i)\|\leq \frac{1}{n}\sum_{i=1}^k \|\phi(V_i,j_i)\|\leq 1.\]

Then, for every $a\in \R^{2n}$ and $j\in[n]$, we denote the index $q(a,j)\in [|P(U)|]$ as \[q(a,j)=\argmax_{r}\langle g(V_r), \vecpart {a}{j}\rangle,\] and define the following mapping $\sigma:\R^{2n}\to P(U)$,
\[\sigma(a)=\bigcap_{j=1,\vecpart {a}{j}\neq 0}^{n}V_{q(a,j)}.\]
Moreover, for every $a\in \R^{2n}$, we denote the index $p(a)\in [|U|]$ as 
\[p(a)=\argmin_{i}\{i:v_i \in \sigma(a)\},\] and define the following mapping $\alpha:\R^{2n^2}\to U$,
\[\alpha(a)=\left\{\begin{array}{cc}
    v_{|U|} & \sigma(a)=\emptyset \\
     v_{p(a)} & \sigma(a)\neq \emptyset
\end{array}\right..\]
Note that for every $a\in \R^{2n}$, $\alpha(a)\in U$, thus, $\|\alpha(a)\|\leq 1$.

Now, Let $V_1,\ldots, V_n\subseteq U$, $k\in[n]$ and $\psi^*_k=\frac{1}{n}\sum_{i=1}^k \phi(V_i,i)$.
Then, 
\begin{align*}
    \langle \psi^*, \frac{1}{n}\sum_{i=1}^k \phi(V_i,i)\rangle&=
   \langle \frac{1}{n}\sum_{i=1}^k \phi(V_i,i), \frac{1}{n}\sum_{i=1}^k \phi(V_i,i)\rangle \\&= \frac{1}{n^2}\sum_{i=1}^k\langle \phi(V_i,i),\phi(V_i,i)\rangle
   \\&= \frac{k}{n^2}
\end{align*}
For $\psi=\frac{1}{n}\sum_{i=1}^k \phi(V'_i,i)$ such that $\psi\neq \psi^*$, 
there exists a index $r$ such that $V'_r\neq V_r$ ,thus,
\begin{align*}
\langle \psi, \frac{1}{n}\sum_{l=1}^k \phi(V_i,i)\rangle&=
   \langle \frac{1}{n}\sum_{i=1}^k \phi(V'_i,i), \frac{1}{n}\sum_{i=1}^k \phi(V_i,i)\rangle \\&=\frac{1}{n^2}\sum_{i=1}^k\langle \phi(V_i,i),\phi(V'_i,i)\rangle 
\\&\leq \frac{1}{n^2}\left(1-\delta+\sum_{i=1,i\neq r}^k 1 \right)
   \\&\leq \frac{1}{n^2}(1-\delta+k-1)\\&= \frac{k}{n^2}-\frac{\delta}{n^2}
   \\&= \langle \psi^*_k, \frac{1}{n}\sum_{i=1}^n \phi(V_i,j_i)\rangle-\epsilon
\end{align*}
Furthermore, it holds that, $\vecpart {\frac{1}{n}\sum_{i=1}^n\phi(V_i,i)}{i}=\frac{1}{n}g(V_i)$, thus,
\begin{align*}
q\left(\frac{1}{n}\sum_{i=1}^k\phi(V_i,i),i\right)&=\argmax_{r}\langle g(V_r), \vecpart {\frac{1}{n}\sum_{i=1}^k\phi(V_i,i)}{i}\rangle
\\&=\argmax_{r}\langle g(V_r), \frac{1}{n}g(V_i)\rangle
\\&=i, 
\end{align*}
thus, we get,
\begin{align*}
    \sigma (\psi^*)&=\sigma \left(\frac{1}{n}\sum_{i=1}^k\phi(V_i,i)\right)\\
&=\bigcap_{j=1,\vecpart {\frac{1}{n}\sum_{i=1}^k\phi(V_i,i)}{j}\neq 0}^{n}V_{q(\vecpart {\frac{1}{n}\sum_{i=1}^k\phi(V_i,i)}{j},j)}
\\&=\bigcap_{j=1}^{k}V_{q(\frac{1}{n}\sum_{i=1}^k\phi(V_i,i),j)}\tag{The indices that are non-zero are $j=1,\ldots,k$}
\\&=\bigcap_{i=1}^{k}V_{i} %
\end{align*}
Then, assuming that $\bigcap_{i=1}^kV_i\neq  \emptyset$, and let and $m=\argmin_{i}\{i:v_i \in \bigcap_{i=1}^kV_i$\}, $p(a)=m$ and,
\[\alpha(\psi^*)= v_{m}
\in
\bigcap_{i=1}^{k}V_{i}.\]
\end{proof}

\begin{proof}[of \cref{convex_lip_SGD}]
First, $\sgdindgam_1$ is convex and $1$-Lipschitz by the fact that $\sgdindgam_1=\gdindgam_1$ and \cref{convex_lip}. Moreover, by \cref{diff_set_encode exists_SGD}, $\sgdindgam_2$ is a maximum over $1$-Lipschitz linear functions, thus, $\sgdindgam_2$ is convex and $1$-Lipschitz. Finally,  $\sgdindgam_3$ is a summation of two $1$-Lipschitz linear functions, thus, $\sgdindgam_3$ is convex and $2$-Lipschitz.
Combining all together, we get the lemma.
\end{proof}

\subsection{Proof of algorithm's dynamics}
In this section we describe the dynamics of SGD.
We begin with showing that the good event $\cE'$ (\cref{good_event_sgd}) occurs with a constant probability.

\begin{proof} [of \cref{prefix_event}]
First, by union bound, 
 \begin{align*}
        \Pr\left(\forall t\in[n] \ P_t\neq \emptyset
        \text{ and } J_t \in S_t \right)\geq\frac{1}{2}&=1- Pr\left(\exists t \ P_t= \emptyset
        \text{ or } J_t \notin S_t \right)
        \\&\geq 1- \sum_{t=1}^n Pr\left(P_t= \emptyset
        \text{ or } \left(P_t\neq \emptyset \text{ and }J_t \notin S_t\right) \right)
        \\&\geq 1- \sum_{t=1}^n \Pr\left(P_t= \emptyset\right)
        -\sum_{t=1}^n\Pr\left(P_t\neq \emptyset \text{ and }J_t \notin S_t\right)
        \\&= 1- \sum_{t=1}^n\Pr\left(P_t= \emptyset\right)
        -\sum_{t=1}^n\Pr\left(P_t\neq \emptyset\right)\Pr\left(J_t \notin S_t | P_t\neq \emptyset\right).
    \end{align*}
Now,  for every $v_l\in U$,
\begin{align*}
    \Pr(v_l \notin \bigcap_{i=1}^{t-1} V_i)= 1- \Pr(v_l \notin \bigcap_{i=1}^{t-1} V_i)=1-\delta^{t-1},
\end{align*}
and, 
\begin{align*}
    \Pr\left(v_l \notin S_t \right)=1- (1-\delta)^{n-t+1}\leq 1-(1-\delta)^n.
\end{align*}
Then, 
\begin{align*}
    \Pr\left(P_t= \emptyset\right)&=\Pr\left(\bigcap_{i=1}^t V_i =\emptyset\right)
    \\&= \Pr(\forall v_l\in U \ w\notin \bigcap_{i=1}^t V_i)
    \\&=
    (1-\delta^{t-1})^{|U|}
    \\&\leq  (1-\delta^n)^{|U|}.
\end{align*}
Moreover, by the fact that for every $t$, $P_t$ is independent of $V_{t+1},...V_n$,
\begin{align*}
    Pr\left(P_t\neq \emptyset\right)\Pr\left(J_t \notin S_t | P_t\neq \emptyset\right)&= \sum_{l:v_l\in U} \Pr\left(P_t\neq \emptyset\right)\Pr\left(v_l \notin S_t | P_t\neq \emptyset\right)\Pr(J_t=v_l)
    \\&= \sum_{l:v_l\in U} \Pr\left(P_t\neq \emptyset\right)\Pr\left(v_l \notin S_t \right)\Pr(J_t=v_l)
    \\&\leq 1- (1-\delta)^n
\end{align*}
Combining all of the above, we get that,
\begin{align*}
    &\Pr(\forall t\in[n] \ P_t\neq \emptyset
        \text{ and } J_t \in S_t)\\&= 1- \sum_{t=1}^n \Pr\left(P_t= \emptyset\right)
        -\sum_{t=1}^n\Pr\left(P_t\neq \emptyset\right)\Pr\left(J_t \notin S_t | P_t\neq \emptyset\right)
        \\&\geq 1- n (1-\delta^n)^{|U|}- n\left(1- (1-\delta)^n\right).
\end{align*}
For $\delta=\frac{1}{4n^2}$, by the fact that $|U|\geq 2^{\frac{d'}{178}}=2^{4n\log(n)}=n^{4n}$, 
\[|U|\delta^{n}\geq n^{4n}n^{-2n}4^{-n}\geq n^{2n}4^{-n}\geq \log(4n)\]
\begin{align*}
    \Pr\left(\forall t\in[n] \ P_t\neq \emptyset
        \text{ and } J_t \in S_t \right)\geq\frac{1}{2}
        &\geq 1- n (1-\delta^\frac{n}{500})^{|U|}- n\left(1- (1-\delta)^n\right)
        \\&\geq 1- n e^{-|U|\delta^\frac{n}{500}}- n\left(1- (1-n\delta)\right)
        \\&\geq 1- n e^{-\log(4n)}- n^2\delta
        \\&\geq 1- \frac{1}{4} -\frac{1}{4}
        \\&= \frac{1}{2}
        .
\end{align*}
\end{proof}

\begin{proof}[of \cref{gradient_uc_sgd}]
First, by the fact that for every $t\leq k\leq T$, $\vecpart{w}{k}=0$, 
     for every such $k$, \[\max_{u\in V_t} \langle u,\vecpart{w}{k}\rangle=0<\frac{3\eta}{32},\]
     For $2\leq k\leq t-1$, $\vecpart{w}{k}=c\eta u_k$, where $c\leq \frac{1}{2}$ and every $u_k\in \bigcap_{i=k}^T\overline{V_i}\subseteq \overline{V_{t}}$,
     thus, 
     \[\max_{u\in V_t} \langle u,\vecpart{w}{k}\rangle\leq\frac{\eta}{2}\cdot \frac{1}{8}<\frac{3\eta}{32}.\]
    We derive that
    $\nabla \sgdindgam_1(w_t,V_t)= 0$.
\end{proof}

\begin{proof}[of \cref{gradient_decode_sgd}]
First, we show that the maximum of $\sgdindgam_2(w,V)$ is attained with $k=m$ and $u=u_m$.
For $k\geq m+1$, for every $u\in U$ and $\psi\in \Psi_k$,
\begin{align*}
&\frac{3}{8}\langle u,\vecpart{w}{k}\rangle-\frac{1}{2}\langle \alpha(\psi),\vecpart{w}{k+1}\rangle +\langle\vecpart{w}{ \last,k},\frac{1}{4n}\psi\rangle
-\langle\vecpart{w}{ \last,k+1},\frac{1}{4n}\psi\rangle+\langle \vecpart{w}{\last,k+1},-\frac{1}{4n^2}\phi(V,k+1)\rangle=0
.
\end{align*}
For $k=1$, for every $u\in U$ and $\psi\in \Psi_1$, by \cref{diff_set_encode exists_SGD}, we know that for every $\psi,V,j$, $\|\psi\|,\|\phi(V,j)\|\leq 1$, and $\alpha(\psi)\in U$, thus, 
\begin{align*}
&\frac{3}{8}\langle u,\vecpart{w}{k}\rangle-\frac{1}{2}\langle \alpha(\psi),\vecpart{w}{k+1}\rangle +\langle\vecpart{w}{ \last,k},\frac{1}{4n}\psi\rangle
-\langle\vecpart{w}{ \last,k+1},\frac{1}{4n}\psi\rangle+\langle \vecpart{w}{\last,k+1},-\frac{1}{4n^2}\phi(V,k+1)\rangle
\\&=\frac{3c}{8}\langle u_1,u\rangle
    -\frac{\eta}{16}\langle u_2,\alpha(\psi)\rangle + \langle\vecpart{w}{ \last,k},\frac{1}{4n}\psi\rangle - 0+ 0
    \\&\leq 
    \frac{9\eta}{512}
    +\frac{\eta}{128} +\frac{\eta}{4n} 
    \\& < \frac{\eta}{8}\tag{$n\geq 4$}
.
\end{align*}
For $2\leq k\leq m-2$, for every $u\in U$ and $\psi\in \Psi_k$, by \cref{diff_set_encode exists_SGD}, we know that for every $\psi,V,j$, $\|\psi\|,\|\phi(V,j)\|\leq 1$, and $\alpha(\psi)\in U$, thus, 
\begin{align*}
&\frac{3}{8}\langle u,\vecpart{w}{k}\rangle-\frac{1}{2}\langle \alpha(\psi),\vecpart{w}{k+1}\rangle +\langle\vecpart{w}{ \last,k},\frac{1}{4n}\psi\rangle
-\langle\vecpart{w}{ \last,k+1},\frac{1}{4n}\psi\rangle+\langle \vecpart{w}{\last,k+1},-\frac{1}{4n^2}\phi(V,k+1)\rangle
\\&=\frac{3}{64}\langle u_k,u\rangle
    -\frac{\eta}{16}\langle u_{k+1},\alpha(\psi)\rangle +0 - 0+ 0
    \\&\leq \frac{3\eta}{64} +\frac{\eta}{16}
    \\& < \frac{\eta}{8}
.
\end{align*}
For $k= m-1$, for every $u\in U$ and $\psi\in \Psi_{k}$, by \cref{diff_set_encode exists_SGD}, we know that for every $\psi,V,j$, $\|\psi\|,\|\phi(V,j)\|\leq 1$, and $\alpha(\psi)\in U$, thus, 
\begin{align*}
&\frac{3}{8}\langle u,\vecpart{w}{k}\rangle-\frac{1}{2}\langle \alpha(\psi),\vecpart{w}{k+1}\rangle +\langle\vecpart{w}{ \last,k},\frac{1}{4n}\psi\rangle
-\langle\vecpart{w}{ \last,k+1},\frac{1}{4n}\psi\rangle+\langle \vecpart{w}{\last,k+1},-\frac{1}{4n^2}\phi(V,k+1)\rangle
\\&=\frac{3}{64}\langle u_k,u\rangle
    -\frac{\eta}{4}\langle u_{k+1},\alpha(\psi)\rangle +0 - \langle \vecpart{w}{ \last,k+1},\frac{1}{4n}\psi\rangle+  \langle \vecpart{w}{ \last,k+1},\frac{1}{4n^2}\phi(V,m)\rangle
    \\&\leq \frac{3\eta}{64}+\frac{\eta}{32}+\frac{1}{16n^2}+\frac{1}{16n^3}
    \\& < \frac{\eta}{8} \tag{$n\geq 4$}
.
\end{align*}

For $k=m$, $u\neq u_{m}$ and every $\psi\in \Psi_{m}$, by \cref{diff_set_encode exists_SGD}, we know that for every $\psi,V,j$, $\|\psi\|,\|\phi(V,j)\|\leq 1$, and $\alpha(\psi)\in U$, thus, 
\begin{align*}
&\frac{3}{8}\langle u,\vecpart{w}{k}\rangle-\frac{1}{2}\langle \alpha(\psi),\vecpart{w}{k+1}\rangle +\langle\vecpart{w}{ \last,k},\frac{1}{4n}\psi\rangle
-\langle\vecpart{w}{ \last,k+1},\frac{1}{4n}\psi\rangle+\langle \vecpart{w}{\last,k+1},-\frac{1}{4n^2}\phi(V,k+1)\rangle\\&=\frac{3}{8}\langle u,\vecpart{w}{k}\rangle+\langle \frac{1}{4n}\psi,\vecpart{w}{\last,k}\rangle
\\&=\frac{3}{8}\langle u,\frac{\eta}{2}u_{m}\rangle+\langle \frac{1}{4n}\psi,\vecpart{w}{\last, k} \rangle
\\&\leq \frac{3\eta}{128}+\frac{\eta}{16n^2}\\&<\frac{\eta}{32}
\tag{$n\geq4$}.\end{align*}
For $k=m$, $u=u_{m}$ and every $\psi\in \Psi_{m}$, by \cref{diff_set_encode exists_SGD}, we know that for every $\psi,V,j$, $\|\psi\|,\|\phi(V,j)\|\leq 1$, and $\alpha(\psi)\in U$, thus, 
\begin{align*}
&\frac{3}{8}\langle u,\vecpart{w}{k}\rangle-\frac{1}{2}\langle \alpha(\psi),\vecpart{w}{k+1}\rangle +\langle\vecpart{w}{ \last,k},\frac{1}{4n}\psi\rangle
-\langle\vecpart{w}{ \last,k+1},\frac{1}{4n}\psi\rangle+\langle \vecpart{w}{\last,k+1},-\frac{1}{4n^2}\phi(V,k+1)\rangle
\\&=\frac{3}{8}\langle u,\vecpart{w_t}{k}\rangle+\langle \frac{1}{4n}\psi,\vecpart{w_t}{\last,k}\rangle\\&=\frac{3}{8}\langle u,\frac{\eta}{2}u_{m}\rangle+\langle \frac{1}{4n}\psi,\vecpart{w_t}{\last,k}\rangle
\\&\geq \frac{3\eta}{16}-\frac{\eta}{16n^2}\\&>\frac{5\eta}{32}
\tag{$n\geq4$}
\\&>\delta_1.\end{align*}

Second, we show that when $k=m$ and $u=u_m$, the maximum among $\psi\in \Psi_{m}$ is attained uniquely in $\psi^*_{m}=\frac{1}{n}\sum_{t=1}^{m}\phi(V_t,t)$. For any $\psi\in \Psi_m$, with $\psi\neq \psi^*_{m}$, by \cref{diff_set_encode exists_SGD}, for $k=m$, $u=u_m$,
\begin{align*}
&\frac{3}{8}\langle u,\vecpart{w}{m}\rangle-\frac{1}{2}\langle \alpha(\psi^*_{m}),\vecpart{w}{m+1}\rangle +\langle\vecpart{w}{ \last,m},\frac{1}{4n}\psi^*_{m}\rangle
-\langle\vecpart{w}{ \last,m+1},\frac{1}{4n}\psi^*_{m}\rangle+\langle \vecpart{w}{\last,m+1},-\frac{1}{4n^2}\phi(V,m+1)\rangle\\&=\frac{3}{8}\langle u,\vecpart{w}{m}\rangle+\langle \frac{1}{4n}\psi^*_{m},\vecpart{w}{\last,m}\rangle
\\&=\frac{3\eta}{16}+\frac{\eta}{16n^2}\langle \psi^*_{m},\frac{1}{n}\sum_{t=1}^{m}\phi(V_t,t)\rangle
\\&\geq 
\frac{3\eta}{16}+\frac{\eta}{16n^2}\langle \psi,\frac{1}{n}\sum_{t=1}^{m}\phi(V_t,t)\rangle +\frac{\eta\epsilon}{16n^2}\\&= \frac{3}{8}\langle u,\vecpart{w}{k}\rangle+\langle \frac{1}{4n}\psi,\vecpart{w}{\last,m}\rangle +\frac{\eta\epsilon}{16n^2} 
\\&=\frac{3}{8}\langle u,\vecpart{w}{m}\rangle-\frac{1}{2}\langle \alpha(\psi),\vecpart{w}{m+1}\rangle +\langle\vecpart{w}{ \last,m},\frac{1}{4n}\psi\rangle
-\langle\vecpart{w}{ \last,m+1},\frac{1}{4n}\psi\rangle+\langle \vecpart{w}{\last,m+1},-\frac{1}{4n^2}\phi(V,m+1)\rangle+\frac{\eta\epsilon}{16n^2}
\end{align*}
We derive that, 
\begin{align*}
    \vecpart{\nabla \sgdindgam_2(w,V)}{k}=
    \begin{cases}
        \frac{3}{8} u_{m} &\quad k=m
        \\
        -\frac{1}{2} \alpha(\psi_{m}^*) &\quad k=m+1
        \\
        0 &\quad k\notin \{m,m+1\}   
    \end{cases}
\end{align*}
\begin{align*}
    \vecpart{\nabla \sgdindgam_2(w,V)}{\last, k}=
    \begin{cases}
        \frac{1}{4n^2}\sum_{t=1}^{m}\phi(V_t,t)&\quad k= m
        \\ -\frac{1}{4n^2}\sum_{t=1}^{m}\phi(V_t,t)-\frac{1}{4n^2}\phi(V,m+1)&\quad k= m+1
        \\  0&\quad k\notin \{m,m+1\}.
    \end{cases}
\end{align*}
\end{proof}
\begin{lemma}
    \label{sgd_exp_2}
    Under the conditions of \cref{SGD_lower_bound}, if $\cE'$ occurs and $w_t$ is %
the iterate of Unprojected SGD with step size $\eta\leq\frac{1}{\sqrt{n}}$ and $w_1=0$,
    \begin{align*}
    \vecpart{w_2}{k}=\begin{cases}
        \frac{\eta}{n^3}u_1 &\quad k=1
        \\
        0 &\quad k\geq 2
        \end{cases},
\end{align*}
and,
\begin{align*}
    \vecpart{w_2}{0,k}=\begin{cases}
        \frac{\eta}{4n^2}\phi(V_1,1) &\quad k=1
        \\ 0 &\quad k\neq 1.
    \end{cases}
\end{align*}
\end{lemma}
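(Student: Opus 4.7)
The plan is simply to compute the gradient of each of the three loss components $\sgdindgam_1,\sgdindgam_2,\sgdindgam_3$ at $w_1=0$ and then apply the SGD update rule $w_2 = w_1 - \eta\,\nabla \sgdindf(w_1,V_1)$ using the sampled training example $V_1$. Two of the three components will turn out to contribute nothing, and the entire movement of $w_1 \to w_2$ will be driven by the linear term $\sgdindgam_3$.

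First, for $\sgdindgam_1$, I would observe that at $w_1=0$ we have $\vecpart{w_1}{k}=0$ for all $k\in\{2,\ldots,n\}$, which vacuously meets the hypothesis of \cref{gradient_uc_sgd} with $t=2$ (there are no indices in the range $2\leq k\leq t-1$, and all $\vecpart{w_1}{k}$ with $k\geq t$ are zero). Hence $\nabla \sgdindgam_1(w_1,V_1)=0$. Alternatively, one can note directly that every inner $\max_{u\in V_1}\abr{u,\vecpart{w_1}{k}}=0<\tfrac{3\eta}{32}$, so the per-coordinate outer maxima are strictly attained at the constant $\tfrac{3\eta}{32}$, making the vector $\alpha_{V_1}(w_1)$ constant in a neighborhood of $w_1$ and thus producing a zero (sub)gradient.

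For $\sgdindgam_2$, the key observation is that at $w_1=0$ each of the linear expressions inside the outer maximum evaluates to $0$, so that the outer max is uniquely attained at the constant $\delta_1 = \tfrac{\eta}{8n^3}>0$. Since this maximizer involves no $w$-dependence, the function is differentiable at $w_1$ with $\nabla \sgdindgam_2(w_1,V_1)=0$. This is the only step that requires a genuine (though small) check: I would verify the strict inequality $0 < \delta_1$ to rule out any other maximizer, so that differentiability and a vanishing gradient are guaranteed.

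Finally, $\sgdindgam_3(w,V_1)$ is a linear function of $w$, so its gradient is constant:
\begin{align*}
\vecpart{\nabla \sgdindgam_3(w_1,V_1)}{1} &= -\tfrac{1}{n^3}u_1, &
\vecpart{\nabla \sgdindgam_3(w_1,V_1)}{\last,1} &= -\tfrac{1}{4n^2}\phi(V_1,1),
\end{align*}
with all other blocks equal to zero. Adding the three gradients and plugging into the SGD update $w_2 = w_1 - \eta\,\nabla \sgdindf(w_1,V_1) = -\eta\,\nabla \sgdindgam_3(w_1,V_1)$ reproduces the claimed values of $\vecpart{w_2}{1}$ and $\vecpart{w_2}{\last,1}$ and leaves every other block at $0$. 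The only mildly delicate point, as noted, is confirming that the outer max in $\sgdindgam_2$ is strictly attained at $\delta_1$; everything else is a direct computation.
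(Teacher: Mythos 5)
Your proposal is correct and follows essentially the same route as the paper's proof: show $\nabla\sgdindgam_1(w_1,V_1)=0$ because every inner maximum is $0<\tfrac{3\eta}{32}$ at $w_1=0$, show $\nabla\sgdindgam_2(w_1,V_1)=0$ because the outer maximum is uniquely attained at the positive constant $\delta_1$, and read off the constant gradient of the linear term $\sgdindgam_3$ before applying the update rule. The only difference is cosmetic: you correctly name the constant $\delta_1$ where the paper's proof writes $\delta_2$ by an apparent typo.
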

\begin{proof}
    $w_1=0$, thus, for every $k$, \[\max_{u\in V_1} \langle u,\vecpart{w_1}{k}\rangle=0<\frac{3\eta}{32},\]
    and we derive that
    $\nabla \sgdindgam_1(w_1,V_1)= 0$.
    By the same argument, $\nabla \sgdindgam_2(w_1,V_1)= 0$ (where the maximum is attained uniquely in $\delta_2$).
    Moreover, $\sgdindgam_3$ is a linear function, then, we get that,
\begin{align*}
    \vecpart{\nabla \sgdindgam_3(w_1,V_1)}{k}=\begin{cases}
        -\frac{1}{n^3}u_1 &\quad k=1
        \\
        0 &\quad k\geq 2
        \end{cases},
\end{align*}
and,
\begin{align*}
    \vecpart{\nabla \sgdindgam_3(w_1,V_1)}{0,k}=\begin{cases}
        -\frac{1}{4n^2}\phi(V_1,1) &\quad k=1
        \\ 0 &\quad k\neq 1,
    \end{cases}
\end{align*}
and the lemma follows.
\end{proof}
\begin{lemma}
    \label{sgd_exp_3}
    Under the conditions of \cref{SGD_lower_bound}, if $\cE'$ occurs and $w_t$ is %
the iterate of Unprojected SGD with step size $\eta\leq\frac{1}{\sqrt{n}}$ and $w_1=0$,
\begin{align*}
    \vecpart{w_3}{k}=
    \begin{cases}
        \frac{2\eta}{n^3}u_1-\frac{3\eta}{8}u_{1} &\quad k=1\\
        \frac{\eta}{2} u_{2} &\quad k=2
        \\
        0 &\quad 3\leq k\leq n
    \end{cases}
    \end{align*}
    \begin{align*}
    \vecpart{w_3}{0,k}=
    \begin{cases}
        \frac{\eta}{4n^2}\phi(V_2,1) &\quad k=1
        \\ \frac{\eta}{4n^2}\phi(V_1,1) +\frac{\eta}{4n^2}\phi(V_2,2)&\quad k=2
        \\  0&\quad k\geq 3.
    \end{cases}
\end{align*}
    where $u_1 \in U$, and $u_{2}$ holds $u_{2}\in P_2\cap S_2$.
\end{lemma}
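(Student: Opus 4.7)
\textbf{Proof plan for \cref{sgd_exp_3}.} The plan is to compute $w_3 = w_2 - \eta \nabla \sgdindf(w_2, V_2)$ by hand, using the explicit form of $w_2$ supplied by \cref{sgd_exp_2} and decomposing the gradient as $\nabla \sgdindgam_1(w_2,V_2)+\nabla \sgdindgam_2(w_2,V_2)+\nabla \sgdindgam_3(w_2,V_2)$. Two of the three terms are essentially immediate: the gradient of $\sgdindgam_3$ is constant (since $\sgdindgam_3$ is linear), so its components are $-\tfrac{1}{n^3}u_1$ in coordinate $1$ and $-\tfrac{1}{4n^2}\phi(V_2,1)$ in coordinate $(\last,1)$; and $\sgdindgam_1(w_2,V_2)$ vanishes because the inner maximum $\max_{u\in V_2}\langle u,\vecpart{w_2}{k}\rangle$ is zero for every $k\ge 2$ (since $\vecpart{w_2}{k}=0$), so the cap $\tfrac{3\eta}{32}$ is strictly active throughout and one may cite \cref{gradient_uc_sgd} (whose hypotheses on $\vecpart{w_2}{k}$ for $k\ge 2$ are vacuously met).

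The nontrivial computation is identifying the maximizer of $\sgdindgam_2(w_2,V_2)$. The plan is a direct case analysis over the index $k\in[n-1]$: for $k\ge 2$, every summand vanishes because $\vecpart{w_2}{k}=\vecpart{w_2}{k+1}=\vecpart{w_2}{\last,k}=\vecpart{w_2}{\last,k+1}=0$; and for $k=1$ the expression reduces to
\[
    \tfrac{3\eta}{8n^3}\langle u,u_1\rangle \;+\; \tfrac{\eta}{16n^3}\langle \phi(V_1,1),\psi\rangle,
\]
since $\vecpart{w_2}{2}=0$ and the cross terms with $\vecpart{w_2}{\last,2}=0$ drop. By the near-orthogonality of $U$ (\cref{lem:set_direc_exists}), the first term is uniquely maximized at $u=u_1$, and by \cref{diff_set_encode exists_SGD} the second is uniquely maximized at $\psi=\psi_1^\ast\eqq \tfrac{1}{n}\phi(V_1,1)$ with positive margin $\tfrac{\epsilon}{16n^3}\eta$ over any competitor. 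The attained value $\tfrac{3\eta}{8n^3}+\tfrac{\eta}{16n^4}$ strictly exceeds $\delta_1=\tfrac{\eta}{8n^3}$, so the outer max is not realized at the floor and $\sgdindgam_2$ is differentiable at $w_2$.

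Given this maximizer, \cref{diff_set_encode exists_SGD} yields $\alpha(\psi_1^\ast)=v_{r_2}\in V_1=P_2$; invoking the event $\cE'$ further gives $\alpha(\psi_1^\ast)=J_2\in S_2$, so setting $u_2\eqq \alpha(\psi_1^\ast)$ we have $u_2\in P_2\cap S_2$, as required. The gradient of $\sgdindgam_2$ at $w_2$ is therefore
\[
\vecpart{\nabla\sgdindgam_2(w_2,V_2)}{k}=\begin{cases}\tfrac{3}{8}u_1&k=1\\-\tfrac{1}{2}u_2&k=2\\0&\text{else},\end{cases}
\qquad
\vecpart{\nabla\sgdindgam_2(w_2,V_2)}{\last,k}=\begin{cases}\tfrac{1}{4n^2}\phi(V_1,1)&k=1\\-\tfrac{1}{4n^2}\phi(V_1,1)-\tfrac{1}{4n^2}\phi(V_2,2)&k=2\\0&\text{else}.\end{cases}
\]
Summing with the previously computed $\nabla \sgdindgam_1$ and $\nabla \sgdindgam_3$ and applying the SGD update $w_3=w_2-\eta\nabla \sgdindf(w_2,V_2)$ yields exactly the claimed expressions (the two $\tfrac{\eta}{n^3}u_1$ contributions add in coordinate $1$; in coordinate $(\last,1)$ the $\phi(V_1,1)$ terms cancel leaving $\tfrac{\eta}{4n^2}\phi(V_2,1)$; and in coordinate $(\last,2)$ the two negative contributions combine).

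\textbf{Main obstacle.} The only delicate step is the uniqueness/margin argument for the maximizer of $\sgdindgam_2$ at $k=1$: one cannot invoke \cref{gradient_decode_sgd} as a black box because its hypothesis $-\tfrac{3}{8}\le c\le 0$ on $\vecpart{w}{1}=c\eta u_1$ fails here (we have $c=\tfrac{1}{n^3}>0$). The computation must be carried out directly, using the fact that at $w_2$ most terms in the $\sgdindgam_2$ expression vanish so that the joint maximizer in $(u,\psi)$ decouples and can be read off from \cref{lem:set_direc_exists} and \cref{diff_set_encode exists_SGD} separately.
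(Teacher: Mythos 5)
Your proposal is correct and follows essentially the same route as the paper's proof: a term-by-term gradient computation at $w_2$, with $\nabla\sgdindgam_1=0$ and $\nabla\sgdindgam_3$ read off from linearity, and a direct case analysis over $k$, $u$, $\psi$ showing the maximum in $\sgdindgam_2$ is uniquely attained at $k=1$, $u=u_1$, $\psi=\psi_1^*$ (the paper likewise does this by hand rather than via \cref{gradient_decode_sgd}, for exactly the reason you identify). Your observation that the $(u,\psi)$ maximization decouples at $w_2$ and your margin/floor checks match the paper's computation up to harmless constant-factor slack.
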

\begin{proof}

    First, by the fact that for every $2\leq k\leq T$, $\vecpart{w_2}{k}=0$, 
     for every such $k$, \[\max_{u\in V_2} \langle u,\vecpart{w_2}{k}\rangle=0<\frac{3\eta}{32},\]
    and we derive that
    $\nabla \sgdindgam_1(w_2,V_2)= 0$.
    
Moreover, $\sgdindgam_3$ is a linear function, thus,
    \begin{align*}
    \vecpart{\sgdindgam_3(w_2,V_2)}{k}=\begin{cases}
        \frac{\eta}{n^3}u_1 &\quad k=1
        \\
        0 &\quad k\geq 2
        \end{cases},
\end{align*}
and,
\begin{align*}
   \vecpart{\sgdindgam_3(w_2,V_2)}{k}=\begin{cases}
        \frac{\eta}{4n^2}\phi(V_2,1) &\quad k=1
        \\ 0 &\quad k\neq 1.
    \end{cases}
\end{align*}
For $\sgdindgam_2(w_2,V_2)$, we get by the fact that for every $k\geq 1$, $\vecpart{w_2}{k+1}=\vecpart{w_2}{\last,k+1}=0$,
\begin{align*}
\sgdindgam_2(w_2,V_2)=\max\left(\delta_2,\max_{ k\in[n-1],u\in U,\psi\in \Psi_k}\left(\frac{3}{8}\langle u,\vecpart{w_2}{k}\rangle+\langle \frac{1}{4n}\psi,\vecpart{w_2}{\last,k}\rangle\right)\right)
\end{align*}
As a first step, we show that the the maximum is attained with $k=1$ and $u=u_1$,
For $k\neq 1$, for every $u\in U$ and $\psi\in \Psi_k$,\[\frac{3}{8}\langle u,\vecpart{w_2}{k}\rangle+\langle \frac{1}{4n}\psi,\vecpart{w_2}{\last,k}\rangle=0.\]
For $k=1$, $u\neq u_1$ and every $\psi\in \Psi_1$, by the fact that $\|\psi\|,\|\phi(V_1,1)\|\leq 1$, \[\frac{3}{8}\langle u,\vecpart{w_2}{k}\rangle+\langle \frac{1}{4n}\psi,\vecpart{w_2}{\last,k}\rangle\leq \frac{3\eta}{64n^3}+\frac{\eta}{16n^3}=\frac{7\eta}{64n^3}<\frac{3\eta}{16n^3}.\]
For $k=1$, $u=u_1$ and every $\psi\in \Psi_1$, by the fact that $\|\psi\|,\|\phi(V_1,1)\|\leq 1$,
\[\frac{3}{8}\langle u,\vecpart{w_2}{k}\rangle+\langle \frac{1}{4n^2}\psi,\vecpart{w_2}{\last,k}\rangle\geq\frac{3\eta}{8n^3}-\frac{\eta}{16n^3} > \frac{3\eta}{16n^3}>\delta_1.\]

As a second step we show that the maximum among $\psi\in \Psi_1$ is attained uniquely in $\psi^*_1=\frac{1}{n}\phi(V_1,1)$. For any $\psi\in \Psi_1$, with $\psi\neq \psi^*_1$. By \cref{diff_set_encode exists_SGD}, for $k=1$, $u=u_1$,
\begin{align*}
\frac{3}{8}\langle u,\vecpart{w_2}{k}\rangle+\langle \frac{1}{4n}\psi^*_1,\vecpart{w_2}{\last,k}\rangle&=\frac{3\eta}{8n^3}+\langle \frac{1}{4n}\psi^*_1,\frac{\eta}{4n^2}\phi(V_1,1)\rangle
\\&=\frac{3\eta}{8n^3}+\frac{\eta}{16n^2}\langle \psi^*_1,\frac{1}{n}\phi(V_1,1)\rangle
\\&\geq \frac{3\eta}{8n^3}+\frac{\eta}{16n^2}\langle \psi,\frac{1}{n}\phi(V_1,1)\rangle+\frac{\eta\epsilon}{16n^2}
\\&= \frac{3}{8}\langle u,\vecpart{w_2}{k}\rangle+\langle \frac{1}{4n}\psi,\vecpart{w_2}{\last,k}\rangle +\frac{\eta\epsilon}{16n^2}
\end{align*}
We got that the maximum is uniquely attained at $k=1, u=u_1, \psi=\frac{1}{n}\phi(V_1,1)$.
Now, by \cref{diff_set_encode exists_SGD}, for $j=\argmin_{i}\{i:v_i \in V_1\}$, we get that
\begin{align*}
    \alpha(\psi)=v_{j}\in V_1.
\end{align*}
 We notice that $V_1=P_2$ and thus $\alpha(\psi)=J_2$. Then, 
by $\cE'$, $\alpha(\psi)$ also holds $\alpha(\psi)\in S_{2}$.
Combining the above together, we get, for $u_2=\alpha(\psi)\in P_2\cap S_2$,
\begin{align*}
    \vecpart{\nabla f(w_2,V_2)}{k}=
    \begin{cases}
        \frac{3}{8}u_{1} - \frac{1}{n^3}u_1&\quad k=1\\
        -\frac{1}{2} u_{2} &\quad k=2
        \\
        0 &\quad k\geq 3
        \end{cases}
        \end{align*}
        and,
        \begin{align*}
            \vecpart{\nabla f(w_2,V_2)}{0,k}=
            \begin{cases}\frac{1}{4n^2}\phi(V_1,1)-\frac{1}{4n^2}\phi(V_2,1) &\quad k= 1
        \\ -\frac{1}{4n^2}\phi(V_1,1) -\frac{1}{4n^2}\phi(V_2,2)&\quad k= 2
        \\  0&\quad k\geq 3,
    \end{cases}
\end{align*}
and the lemma follows.
\end{proof}
\begin{lemma}\label{sgd_exp_4}
    Under the conditions of \cref{SGD_lower_bound}, if $\cE'$ occurs and $w_t$ is %
the iterate of Unprojected SGD with step size $\eta\leq\frac{1}{\sqrt{n}}$ and $w_1=0$,
    \begin{align*}
    \vecpart{w_4}{k}=
    \begin{cases}
        \frac{3\eta}{n^3}u_1-\frac{3\eta}{8}u_{1} &\quad k=1\\
         \frac{\eta}{8} u_{2} &\quad k=2
        \\
        \frac{\eta}{2} u_{3} &\quad k=3
        \\
        0 &\quad k\geq 4
        \end{cases},
        \end{align*}
and,
     \begin{align*}
    \vecpart{w_4}{0,k}=
    \begin{cases} \frac{\eta}{4n^2}\phi(V_2,1)+\frac{\eta}{4n^2}\phi(V_3,1) &\quad k=1
        \\ \frac{\eta}{4n^2}\phi(V_1,1) +\frac{\eta}{4n^2}\phi(V_2,2)+\frac{\eta}{4n^2}\phi(V_3,3)&\quad k=3
        \\  0&\quad k\notin \{1,3\}
    \end{cases}
    .
\end{align*}

\begin{proof}
First, we notice that by \cref{sgd_exp_3}, it holds that $\vecpart{w_3}{2}=c\eta u_2$ for $c\leq \frac{1}{2}$ and $u_2$ holds $u_2\in V_1\cap \bigcap_{i=2}^{n}\overline{V_i}$, and for every $3\leq k\leq T$, $\vecpart{w_t}{k}=0$. Then, by \cref{gradient_uc_sgd}, we have that $\nabla\sgdindgam_1(w_3,V_3)=0.$
Moreover, $\sgdindgam_3$ is a linear function, thus,
    \begin{align*}
    \vecpart{\sgdindgam_3(w_3,V_3)}{k}=\begin{cases}
        \frac{\eta}{n^3}u_1 &\quad k=1
        \\
        0 &\quad k\geq 2
        \end{cases},
\end{align*}
and,
\begin{align*}
   \vecpart{\sgdindgam_3(w_3,V_3)}{k}=\begin{cases}
        \frac{\eta}{4n^2}\phi(V_3,1) &\quad k=1
        \\ 0 &\quad k\neq 1.
    \end{cases}
\end{align*}
For $\sgdindgam_2(w_3,V_3)$,  we first show that the the maximum is attained with $k=2$ and $u=u_2$.
For $k\geq 3$, for every $u\in U$ and $\psi\in \Psi_k$,
\begin{align*}
&\frac{3}{8}\langle u,\vecpart{w_3}{k}\rangle-\frac{1}{2}\langle \alpha(\psi),\vecpart{w_3}{k+1}\rangle +\langle\vecpart{w_3}{ \last,k},\frac{1}{4n}\psi\rangle
-\langle\vecpart{w_3}{ \last,k+1},\frac{1}{4n}\psi\rangle+\langle \vecpart{w_3}{\last,k+1},-\frac{1}{4n^2}\phi(V,k+1)\rangle=0
.
\end{align*}
For $k=1$, for every $u\in U$ and $\psi\in \Psi_1$, by the fact that for every $\psi,V,j$, $\|\psi\|,\|\phi(V,j)\|\leq 1$, 
\begin{align*}
&\frac{3}{8}\langle u,\vecpart{w_3}{k}\rangle-\frac{1}{2}\langle \alpha(\psi),\vecpart{w_3}{k+1}\rangle +\langle\vecpart{w}{ \last,k},\frac{1}{4n}\psi\rangle
-\langle\vecpart{w_3}{ \last,k+1},\frac{1}{4n}\psi\rangle+\langle \vecpart{w_3}{\last,k+1},-\frac{1}{4n^2}\phi(V,k+1)\rangle
\\&=\frac{3}{8}(\frac{2\eta}{n^3}-\frac{3\eta}{8})\langle u_1,u\rangle
    -\frac{\eta}{4}\langle u_2,\alpha(\psi)\rangle + \langle \frac{1}{4n^2}\phi(V_2,1),\psi\rangle - \langle \frac{\eta}{4n^2}\phi(V_1,1) +\frac{\eta}{4n^2}\phi(V_2,2), \psi\rangle \\&+\langle \frac{\eta}{4n^2}\phi(V_1,1) +\frac{\eta}{4n^2}\phi(V_2,2),\frac{1}{4n^2}\phi(V_3,2)\rangle
    \\&\leq 
    \frac{9\eta}{512}
    +\frac{\eta}{32} +\frac{\eta}{4n^2}+\frac{\eta}{2n^2} +\frac{\eta}{8n^4}
    \\& < \frac{29\eta}{256}\tag{$n\geq 4$}
    \\& < \frac{\eta}{8}
.
\end{align*}
For $k=2$, $u\neq u_2$ and every $\psi\in \Psi_2$, , by the fact that for every $\psi,V,j$, $\|\psi\|,\|\phi(V,j)\|\leq 1$, 
\begin{align*}
&\frac{3}{8}\langle u,\vecpart{w_3}{k}\rangle-\frac{1}{2}\langle \alpha(\psi),\vecpart{w_3}{k+1}\rangle +\langle\vecpart{w_3}{ \last,k},\frac{1}{4n}\psi\rangle
-\langle\vecpart{w_3}{ \last,k+1},\frac{1}{4n}\psi\rangle+\langle \vecpart{w_3}{\last,k+1},-\frac{1}{4n^2}\phi(V,k+1)\rangle\\&=\frac{3}{8}\langle u,\vecpart{w_3}{k}\rangle+\langle \frac{1}{4n}\psi,\vecpart{w_3}{\last,k}\rangle
\\&=\frac{3}{8}\langle u,\frac{\eta}{2}u_2\rangle+\langle \frac{1}{4n}\psi,\frac{\eta}{4n^2}\phi(V_1,1) +\frac{\eta}{4n^2}\phi(V_2,2)\rangle
\\&\leq \frac{3\eta}{128}+\frac{\eta}{8n^3}\\&<\frac{\eta}{32}
\tag{$n\geq4$}.\end{align*}
For $k=2$, $u=u_2$ and every $\psi\in \Psi_2$, by the fact that $\|\psi\|,\|\phi(V_1,1)\|\leq 1$,
\begin{align*}
&\frac{3}{8}\langle u,\vecpart{w_3}{k}\rangle-\frac{1}{2}\langle \alpha(\psi),\vecpart{w_3}{k+1}\rangle +\langle\vecpart{w_3}{ \last,k},\frac{1}{4n}\psi\rangle
-\langle\vecpart{w_3}{ \last,k+1},\frac{1}{4n}\psi\rangle+\langle \vecpart{w_3}{\last,k+1},-\frac{1}{4n^2}\phi(V,k+1)\rangle\\&=\frac{3}{8}\langle u,\vecpart{w_3}{k}\rangle+\langle \frac{1}{4n}\psi,\vecpart{w_3}{\last,k}\rangle
\\&=\frac{3}{8}\langle u,\frac{\eta}{2}u_2\rangle+\langle \frac{1}{4n}\psi,\frac{\eta}{4n^2}\phi(V_1,1) +\frac{\eta}{4n^2}\phi(V_2,2)\rangle
\\&\geq \frac{3\eta}{16}-\frac{\eta}{8n^3}\\&>\frac{5\eta}{32}
\tag{$n\geq4$}
\\&>\delta_1.\end{align*}
Second, we show that the maximum among $\psi\in \Psi_2$ is attained uniquely in $\psi^*_2=\frac{1}{n}\phi(V_1,1)+\frac{1}{n}\phi(V_2,2)$. For any $\psi\in \Psi_2$, with $\psi\neq \psi^*_2$, by \cref{diff_set_encode exists_SGD}, for $k=2$, $u=u_2$,
\begin{align*}
&\frac{3}{8}\langle u,\vecpart{w_3}{k}\rangle-\frac{1}{2}\langle \alpha(\psi^*_2),\vecpart{w_3}{k+1}\rangle +\langle\vecpart{w_3}{ \last,k},\frac{1}{4n}\psi^*_2\rangle
-\langle\vecpart{w_3}{ \last,k+1},\frac{1}{4n}\psi^*_2\rangle+\langle \vecpart{w_3}{\last,k+1},-\frac{1}{4n^2}\phi(V,k+1)\rangle\\&=\frac{3}{8}\langle u,\vecpart{w_3}{k}\rangle+\langle \frac{1}{4n}\psi^*_2,\vecpart{w_3}{\last,k}\rangle\\&=\frac{3\eta}{16}+\langle \frac{1}{4n}\psi^*_2,\frac{\eta}{4n^2}\phi(V_1,1)+\frac{\eta}{4n^2}\phi(V_2,2)\rangle
\\&=\frac{3\eta}{16}+\frac{\eta}{16n^2}\langle \psi^*_2,\frac{1}{n}\phi(V_1,1)+\frac{1}{n}\phi(V_2,2)\rangle
\\&\geq \frac{3\eta}{16}+\frac{\eta}{16n^2}\langle \psi,\frac{1}{n}\phi(V_1,1)+\frac{1}{n}\phi(V_2,2)\rangle+\frac{\eta\epsilon}{16n^2}
\\&= \frac{3}{8}\langle u,\vecpart{w_3}{k}\rangle+\langle \frac{1}{4n}\psi,\vecpart{w_3}{\last,k}\rangle +\frac{\eta\epsilon}{16n^2}
\\&=\frac{3}{8}\langle u,\vecpart{w_3}{k}\rangle-\frac{1}{2}\langle \alpha(\psi),\vecpart{w_3}{k+1}\rangle +\langle\vecpart{w_3}{ \last,k},\frac{1}{4n}\psi\rangle
-\langle\vecpart{w_3}{ \last,k+1},\frac{1}{4n}\psi\rangle+\langle \vecpart{w_3}{\last,k+1},-\frac{1}{4n^2}\phi(V,k+1)\rangle+\frac{\eta\epsilon}{16n^2}
\end{align*}
We got that the maximum is uniquely attained at $k=2, u=u_2, \psi=\psi^*_2$.
Now, by \cref{diff_set_encode exists_SGD}, for $j=\argmin_{i}\{i:v_i \in V_1\cap V_2\}$, we get that
\begin{align*}
    \alpha(\psi)=v_{j}\in V_1\cap V_2.
\end{align*}
We notice that $V_1\cap V_2=P_3$ and thus $\alpha(\psi)=J_3$. Then, 
by $\cE'$, $\alpha(\psi)$ also holds $\alpha(\psi)\in S_{3}$.
Combining the above together, we get, for $u_1 \in U$, $u_2\in P_2\cap S_2$ and $u_3=\alpha(\psi_2^*)\in P_3\cap S_3$,
\begin{align*}
    \nabla f(w_3,V_3)=
    \begin{cases}
        - \frac{1}{n^3}u_1&\quad s=1\\
        \frac{3}{8} u_{2} &\quad s=2
        \\
        -\frac{1}{2} u_{3} &\quad s=3
        \\
        0 &\quad 4\leq s\leq n
        \\ -\frac{1}{4n^2}\phi(V_3,1) &\quad s= \last,1
        \\ \frac{1}{4n^2}\phi(V_1,1)+\frac{1}{4n^2}\phi(V_2,2)&\quad s= \last,2
        \\ -\frac{1}{4n^2}\phi(V_1,1) -\frac{1}{4n^2}\phi(V_2,2)-\frac{1}{4n^2}\phi(V_3,3)&\quad s= \last,3
        \\  0&\quad s=\last, k\text{ for } k\geq 3,
    \end{cases}
\end{align*}
and the lemma follows.
\end{proof}
\subsection{Proof of \cref{SGD_lower_bound}}
\begin{proof}[of \cref{SGD_lower_bound}]
    We show that the theorem holds if the event $\cE'$ occurs.
    First, we prove that for every $t$, $\|w_t\|\leq 1$.
    By \cref{diffSGDexpression}, 
    \begin{align*}
        \|w_t\|&\leq \sqrt{\sum_{i=1}^d w_t[i]^2} 
            \\&\leq \sqrt{\sum_{k=1}^n \|\vecpart{w_t}{k}\|^{2}+\sum_{l=1}^n \|\vecpart{w_t}{\last,l}\|^{2}}
             \\&< \sqrt{2\cdot \left(\frac{\eta}{2}\right)^{2}+ (n-2)\left(\frac{\eta}{8}\right)^{2}+2\cdot \left(\frac{\eta}{4n}\right)^2}
             \\&\leq \sqrt{\left(\frac{\eta^2}{2}\right)+ \frac{\eta^2(n-2)}{64}+2\eta^2}
             \\&\leq \sqrt{\frac{1}{64}+\frac{5}{2n}}\tag{$\eta\leq \frac{1}{\sqrt{n}}$}  
             \\&\leq 1 \tag{$n\geq 4$}
    \end{align*}
    Now, denote $\mathbbm{\alpha}_V\in \R^{n-3}$ the vector which its $k$th entry is $\max\left(\frac{\eta}{16}, \max_{u\in V_i} \langle u,(n-k+2)\frac{\eta}{8}u_{k+1}\rangle \right)$. For $\overline{w}_n=w_{n,n}$, and any $2\leq s\leq n-2$,
\begin{align*}
    \vecpart{\overline{w}_{n}}{s}=\frac{\eta}{2n}u_s + (n-s-1)\frac{\eta}{8n}u_s= (n-s+3)\frac{\eta}{8n}u_s.
\end{align*}
Then,
\begin{align*}
&\frac{1}{n}\sum_{i=1}^n\sqrt{\sum_{k=2}^n\max\left(\frac{3\eta}{32}, \max_{u\in V_i} \langle u,\vecpart{\overline{w}_n}{k}\rangle \right)^2}\geq 
    \frac{1}{n}\sum_{i=1}^n\sqrt{\sum_{k=2}^{n-2}\max\left(\frac{3\eta}{32}, \max_{u\in V_i} \langle u,\vecpart{\overline{w}_n}{k}\rangle \right)^2}
    \\&=\frac{1}{n}\sum_{i=1}^n\sqrt{\sum_{k=2}^{n-2}\max\left(\frac{3\eta}{32}, \max_{u\in V_i} \langle u,(n-k+3)\frac{\eta}{8n}u_k\rangle \right)^2}
    \\&=\frac{1}{n}\sum_{i=1}^n\sqrt{\sum_{k=1}^{n-3}\max\left(\frac{3\eta}{32}, \max_{u\in V_i} \langle u,(n-k+2)\frac{\eta}{8n}u_{k+1}\rangle \right)^2}
    \\&=\frac{1}{n}\sum_{i=1}^n\sqrt{\sum_{k=1}^{n-3}\max\left(\frac{3\eta}{32}, \max_{u\in V_i} \langle u,(n-k+2)\frac{\eta}{8n}u_{k+1}\rangle \right)^2}
    \\&=\frac{1}{n}\sum_{i=1}^n\|\alpha_{V_{i}}\|
    \\&\geq \|\frac{1}{n}\sum_{i=1}^n\alpha_{{V_i}}\|
    \\&=\sqrt{\sum_{k=2}^{n-2}\left(\frac{1}{n}\sum_{i=1}^n\max\left(\frac{3\eta}{32}, \max_{u\in V_i} \langle u,(n-k+3)\frac{\eta}{8n}u_{k}\rangle \right)\right)^2}
    \\&=\frac{\eta}{8}\sqrt{\sum_{k=2}^{n-2}\left(\frac{1}{n}\sum_{i=1}^n\max\left(\frac{3}{4}, \max_{u\in V_i} \langle u,\frac{n-k+3}{n}u_{k}\rangle \right)\right)^2}
    \\&=\frac{\eta}{8}\sqrt{\sum_{k=2}^{n-2}\left(\frac{1}{n}\sum_{i=1}^n\max\left(\frac{3}{4}, \max_{u\in V_i} \langle u,\frac{n-k+2}{n}u_{k}\rangle \right)\right)^2}
    \end{align*}
    Now, by the fact that if $\cE'$ holds, by \cref{diffSGDexpression}, for $2\leq k\leq n-2$, $u_k\in P_k=\bigcap_{i=1}^{k-1}V_k$,
    \begin{align*}
    &\frac{1}{n}\sum_{i=1}^n\sqrt{\sum_{k=2}^n\max\left(\frac{3\eta}{32}, \max_{u\in V_i} \langle u,\vecpart{\overline{w}_n}{k}\rangle \right)^2}
    \\&\geq \frac{\eta}{8}\sqrt{\sum_{k=2}^{n-2}\left(\frac{1}{n}\sum_{i=1}^{k-1}\max\left(\frac{3}{4}, \max_{u\in V_i} \langle u,\frac{n-k+2}{n}u_{k}\rangle \right)+\frac{1}{n}\sum_{i=k}^{n}\max\left(\frac{3}{4}, \max_{u\in V_i} \langle u,\frac{n-k+2}{n}u_{k}\rangle \right)\right)^2}
    \\&\geq \frac{\eta}{8}\sqrt{\sum_{k=2}^{n-2}\left(\frac{3(n-k+1)}{4n}+\frac{k-1}{n}\max\left(\frac{3}{4}, \frac{n-k+2}{n}\right)\right)^2}
    \\&\geq \frac{\eta}{8}\sqrt{\sum_{2\leq k\leq \frac{n}{4}-2}\left(\frac{3(n-k+1)}{4n}+\frac{(k-1)(n-k+1)}{n^2}\right)^2+\sum_{\frac{n}{4}-3<k\leq n-2}\left(\frac{3(n-k+1)}{4n}+\frac{3(k-1)}{4n}\right)^2}
     \\&= \frac{\eta}{8}\sqrt{\sum_{2\leq k\leq \frac{n}{4}-2}\left(\frac{(n-k+1)(3n+4(k-1))}{4n^2}\right)^2+\frac{27n}{64}}
     \\&= \frac{\eta}{8}\sqrt{\sum_{1\leq k\leq \frac{n}{4}-3}\left(\frac{(n-k)(3n+4k)}{4n^2}\right)^2+\frac{27n}{64}}
     \\&\geq \frac{\eta}{8}\sqrt{\sum_{1\leq k\leq \frac{n}{4}-3}\left(\frac{3}{4}+\frac{k}{4n}-\frac{k^2}{n^2}\right)^2+\frac{27n}{64}}
    \end{align*}
    Now, the fact that for $\frac{n}{8}\leq k\leq \frac{n}{4}$, $\frac{k}{4n}\leq \frac{k^2}{n^2}$ and for $k\leq \frac{n}{8}$, $\frac{k}{8n}\leq \frac{k^2}{n^2}$,
    \begin{align*}
         &\frac{1}{n}\sum_{i=1}^n\sqrt{\sum_{k=2}^n\max\left(\frac{3\eta}{32}, \max_{u\in V_i} \langle u,\vecpart{\overline{w}_n}{k}\rangle \right)^2}
         \\&\geq \frac{\eta}{8}\sqrt{\sum_{1\leq k\leq \frac{n}{8}}\left(\frac{3}{4}+\frac{k}{8n}\right)^2+\frac{9n}{128}-\frac{27}{16}+\frac{27n}{64}}
    \\&\geq \frac{\eta}{8}\sqrt{\frac{9n}{128}+\frac{3}{64n}\sum_{k=1}^{\lfloor{\frac{n}{8}\rfloor}}k+\frac{9n}{128}-\frac{27}{16}+\frac{27n}{64}}
    \\&\geq \frac{\eta}{8}\sqrt{\frac{1}{2}\left(\frac{n}{8}-1\right)^2-\frac{27}{16}+\frac{36n}{64}}
    \\&\geq \frac{\eta}{8}\sqrt{\frac{n}{512}-\frac{27}{16}+\frac{36n}{64}} \tag{$n\geq 16$}
    \\&\geq \frac{\eta}{8}\sqrt{\frac{577n}{1024}} \tag{$n\geq 2048$}
    \\&\geq\frac{3\eta}{32}\cdot \frac{2001}{2000}
    \end{align*}
        
Now, for $m<n$ and $2\leq k\leq n-2$,
\begin{align*}
    \vecpart{w_{n,m}}{k}&=
    \begin{cases}
        \frac{\eta}{8} u_{k}&\quad k\leq n-m-1\\
        \frac{1}{m}\left(\frac{\eta}{2} u_{k} + (n-k-1)\frac{\eta}{8} u_{s}\right)&\quad k\geq n-m
    \end{cases}
    \\&=
    \begin{cases}
        \frac{\eta}{8} u_{k}&\quad k\leq n-m-1\\
        \frac{\eta(n-k+3)}{8m}u_s&\quad k \geq n-m.
    \end{cases}
\end{align*}
Then, by similar arguments, it holds that,
\begin{align*}
&\frac{1}{n}\sum_{i=1}^n\sqrt{\sum_{k=2}^n\max\left(\frac{3\eta}{32}, \max_{u\in V_i} \langle u,\vecpart{w_{n,m}}{k}\rangle \right)^2}\geq 
    \frac{1}{n}\sum_{i=1}^n\sqrt{\sum_{k=2}^{n-2}\max\left(\frac{3\eta}{32}, \max_{u\in V_i} \langle u,\vecpart{w_{n,m}}{k}\rangle \right)^2}
    \\&\geq \sqrt{\sum_{k=2}^{n-2}\left(\frac{1}{n}\sum_{i=1}^n\max\left(\frac{3\eta}{32}, \max_{u\in V_i} \langle u,\vecpart{w_{n,m}}{k}\rangle \right)\right)^2}
    \\&=\frac{\eta}{8}\sqrt{\sum_{k=2}^{n-m-1}\left(\frac{1}{n}\sum_{i=1}^n\max\left(\frac{3}{4}, \max_{u\in V_i} \langle u,u_{k}\rangle \right)\right)^2+\sum_{k=n-m}^{n-2}\left(\frac{1}{n}\sum_{i=1}^n\max\left(\frac{3}{4}, \max_{u\in V_i} \langle u,\frac{n-k+3}{m}u_{k}\rangle \right)\right)^2}
     \\&\geq\frac{\eta}{8}\sqrt{\sum_{k=2}^{n-m-1}\left(\frac{1}{n}\sum_{i=1}^n\max\left(\frac{3}{4}, \max_{u\in V_i} \langle u,u_{k}\rangle \right)\right)^2+\sum_{k=n-m}^{n-2}\left(\frac{1}{n}\sum_{i=1}^n\max\left(\frac{3}{4}, \max_{u\in V_i} \langle u,\frac{n-k+2}{n}u_{k}\rangle \right)\right)^2}
      \\&\geq\frac{\eta}{8}\sqrt{\sum_{k=2}^{n-2}\left(\frac{1}{n}\sum_{i=1}^n\max\left(\frac{3}{4}, \max_{u\in V_i} \langle u,\frac{n-k+2}{n}u_{k}\rangle \right)\right)^2}\tag{$k\geq 2\implies \frac{n-k+2}{n}\leq 1$}
     \\&\geq \frac{3\eta}{32}\cdot \frac{2001}{2000} \tag{calculation for $w_{n,n}$}
    \end{align*}

As a result, we notice that for every $t$, $\sgdindgam_2(w_t)\geq -\frac{1}{4n^2}-\frac{1}{n^3}$ and $\gdindgam_2(w_t)\geq \delta_1$ thus,  it holds that,
\begin{align*}
\widehat{F}(w_{n,m})&\geq \frac{3\eta\sqrt{n}}{32}\cdot \frac{2001}{2000} -\frac{1}{4n^2}-\frac{1}{n^3}+\delta_1
   \\&\geq  \frac{3\eta\sqrt{n}}{32}\cdot \frac{2001}{2000}-\frac{\eta}{2n^2}
  \\&\geq \frac{3\eta\sqrt{n}}{32}\cdot \frac{2001}{2000}-\frac{\eta\sqrt{n} } {80000}\tag{$n\geq 256$}
  \\&\geq \frac{3\eta\sqrt{n}}{32}\cdot \left(\frac{2001}{2000}-\frac{1}{4000}\right)
   \\&\geq \frac{3\eta\sqrt{n}}{32}\cdot \frac{4001}{4000}
\end{align*}
and
\begin{align*}
\widehat{F}(\erm)&\leq \widehat{F}(0)\leq
\frac{3\eta}{32}\sqrt{n}
\end{align*}
Then, if $\cE'$ holds
\begin{align*}
    \widehat{F}(w_{n,m})-\widehat{F}(\erm)&\geq \frac{3\eta\sqrt{n}}{32}\cdot \frac{2001}{2000}-\frac{3\eta}{32}\sqrt{n} 
    \\&=\frac{\eta\sqrt{n}}{64000}
\end{align*}
\end{proof}
    
\end{lemma}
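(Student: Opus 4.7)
\textbf{Proof proposal for \cref{sgd_exp_4}.}
The plan is to compute $w_4$ by applying one SGD update to the iterate $w_3$ whose exact form is supplied by \cref{sgd_exp_3}, and then decomposing $\nabla \sgdindf(w_3,V_3) = \nabla \sgdindgam_1(w_3,V_3) + \nabla \sgdindgam_2(w_3,V_3) + \nabla \sgdindgam_3(w_3,V_3)$ term by term. The linear piece $\sgdindgam_3$ is trivial and contributes $-\tfrac{1}{n^3}u_1$ in the $k=1$ subspace and $-\tfrac{1}{4n^2}\phi(V_3,1)$ in the $(\last,1)$ subspace. For $\sgdindgam_1$, I will invoke \cref{gradient_uc_sgd}: from \cref{sgd_exp_3} we have $\vecpart{w_3}{2} = \tfrac{\eta}{2} u_2$ with $u_2 \in P_2 \cap S_2 \subseteq \overline{V_3}$ (using $\cE'$), and $\vecpart{w_3}{k}=0$ for $k \geq 3$, so the hypotheses of that lemma hold (with the ``current step'' being $t=3$ so that the nonzero block is at $k\leq t-1=2$), giving $\nabla \sgdindgam_1(w_3,V_3)=0$.

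The technical heart is computing $\nabla \sgdindgam_2(w_3,V_3)$. I would like to apply \cref{gradient_decode_sgd} with $m=2$, but that lemma is stated for $m \geq 3$, so the base case here must be done directly: I will verify uniqueness of the inner maximizer by comparing the candidate value at $(k,u,\psi)=(2,u_2,\psi_2^*)$ with $\psi_2^* = \tfrac{1}{n}\phi(V_1,1)+\tfrac{1}{n}\phi(V_2,2)$ against all other choices. Using \cref{sgd_exp_3} to plug in the explicit values, the candidate evaluates to $\tfrac{3\eta}{16}+\tfrac{\eta}{16n^2}\abr{\psi_2^*,\tfrac{1}{n}\phi(V_1,1)+\tfrac{1}{n}\phi(V_2,2)}$. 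For $k\geq 3$ all five summands in the max vanish; for $k=1$ the surviving terms are bounded by $O(\eta/n^2)$ using $\norm{\psi},\norm{\phi(V,j)}\leq 1$ and near-orthogonality of $U$; for $k=2$ with $u\neq u_2$ the $\tfrac{3}{8}\abr{u,\vecpart{w_3}{2}}$ term drops by the $\tfrac{1}{8}$ near-orthogonality bound; and for $k=2$, $u=u_2$ but $\psi\neq \psi_2^*$, the separation $\abr{\psi_2^*-\psi,\tfrac{1}{n}\phi(V_1,1)+\tfrac{1}{n}\phi(V_2,2)}\geq \epsilon$ from \cref{diff_set_encode exists_SGD} gives a strict gap of $\tfrac{\eta\epsilon}{16n^2}$. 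Each of these case comparisons is a routine numerical check (of the kind already done in \cref{gradient_decode_sgd}), using $n\geq 2048$ and $\delta_1=\tfrac{\eta}{8n^3}$ to absorb lower-order terms and to exceed the $\delta_1$ floor.

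Once uniqueness is established, the gradient of $\sgdindgam_2$ reads off from the maximizing linear form: $\tfrac{3}{8}u_2$ in coordinate $k=2$, $-\tfrac{1}{2}\alpha(\psi_2^*)$ in coordinate $k=3$, $\tfrac{1}{4n^2}(\phi(V_1,1)+\phi(V_2,2))$ in the $(\last,2)$ block, and the corresponding negative terms plus $-\tfrac{1}{4n^2}\phi(V_3,3)$ in the $(\last,3)$ block. By \cref{diff_set_encode exists_SGD}, $\alpha(\psi_2^*)$ is the vector $v_j$ of smallest index in $V_1\cap V_2 = P_3$, which by definition equals $J_3$; and $\cE'$ then forces $J_3\in S_3$, so I may set $u_3 \eqq \alpha(\psi_2^*) \in P_3 \cap S_3$. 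Summing all three gradients and applying the update $w_4 = w_3 - \eta \nabla \sgdindf(w_3,V_3)$ with the values of $\vecpart{w_3}{\cdot}$ and $\vecpart{w_3}{\last,\cdot}$ from \cref{sgd_exp_3} yields exactly the claimed expressions for $\vecpart{w_4}{k}$ and $\vecpart{w_4}{\last,k}$.

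The main obstacle I expect is the case analysis for $\nabla \sgdindgam_2(w_3,V_3)$: because \cref{gradient_decode_sgd} requires $m\geq 3$, I cannot directly cite it for $w_3$, and must instead redo the ``second-largest beats the margin'' bookkeeping by hand, being careful that the encoding contributions in the $(\last,k)$ block (which at $t=3$ are spread across both $k=1$ and $k=2$, unlike later iterations) do not spoil the desired maximizer. Once this base case is nailed down, the rest of the lemma is an arithmetic assembly.
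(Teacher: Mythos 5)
Your proposal follows essentially the same route as the paper's proof: invoke \cref{sgd_exp_3} for the form of $w_3$, kill $\nabla\sgdindgam_1$ via \cref{gradient_uc_sgd}, read off the linear $\sgdindgam_3$ gradient, and handle $\nabla\sgdindgam_2$ by a direct hand-verified case analysis of the maximizer at $(k,u,\psi)=(2,u_2,\psi_2^*)$ (correctly noting that \cref{gradient_decode_sgd} cannot be cited since it requires $m\geq 3$), then identify $\alpha(\psi_2^*)=J_3\in P_3\cap S_3$ using $\cE'$ and assemble the update. This matches the paper's argument, including the key observation that the base case must be done explicitly.
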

\section{Proofs of \cref{sec:const_diff}}
\subsection{Proofs of \cref{sec:gd_diff}}
The proof of \cref{nonspec_lower_bound_GD_diff} appears in \cref{sec:gd_diff}. Here we prove some auxiliary lemmas that are used for the proof.

\begin{proof}[of \cref{convex_lip_diff}]
First, differentiability can be derived immediately from \cref{grads_after_smooth}.
Second, for $5$-Lipschitzness, for every $(V,j)\in Z$, we define $\dgdindf_{V,j}:\R^d\to\R$ as 
$\dgdindf_{V,j}(w)\eqq\dgdindf(w,(V,j))$.
By the $5$-Lipschitzness of $\gdindf$ with respect to its first argument and Jensen Inequality, for every $x,y\in \R^d$, it holds that
\begin{align*}
    |\dgdindf_{V,j}(x)-\dgdindf_{V,j}(y)|&=\left|\E_{v\in\delta B}\left(\gdindf_{V,j}(y+v)\right)-\E_{v\in\delta B}\left(\gdindf_{V,j}(w+v)\right)\right|
    \\&=\left|\E_{v\in\delta B}\left(\gdindf_{V,j}(x+v)-\gdindf_{V,j}(y+v)\right)\right|
    \\&\leq \E_{v\in\delta B}\left|\left(\gdindf_{V,j}(x+v)-\gdindf_{V,j}(y+v)\right)\right|
    \\&\leq 5|x-y|.
\end{align*}
Third, for convexity, by the convexity of $\gdindf$ for every $x,y\in \R^d$ and $\alpha\in[0,1]$,
\begin{align*}
\dgdindf_{V,j}\left(\alpha x + (1-\alpha )y\right)&=
\E_{v\in\delta B}\left(\gdindf_{V,j}(\alpha x + (1-\alpha )y+v)\right)
\\&=
\E_{v\in\delta B}\left(\gdindf_{V,j}(\alpha (x+v) + (1-\alpha )(y+v))\right)
\\&\leq
\E_{v\in\delta B}\left(\alpha \gdindf_{V,j}(x+v) + (1-\alpha )\gdindf_{V,j}(y+v))\right)
\\&=
\alpha \E_{v\in\delta B}\left(\gdindf_{V,j}(x+v)\right) + (1-\alpha )\left(\E_{v\in\delta B}\gdindf_{V,j}(y+v)\right)
\\&=\alpha \dgdindf_{V,j}(x) +(1-\alpha) \dgdindf_{V,j}(y). 
\end{align*}
\end{proof}
\begin{lemma} (Lemma 1 in \cite{withoutgrad})
\label{grads_after_smooth}
    Let $d$ and $\delta>0$, $\mathbb{B}$ be the $d$-dimensional unit ball and $\mathbb{S}$ be the $d$-dimensional unit sphere. Moreover, let $\D_{\mathbb{B}}$ and $\D_{\mathbb{S}}$ be the uniform distributions on $\mathbb{B},\mathbb{S}$ respectively.
    If $\tilde{f}(x)=\E_{v\sim\D_{\mathbb{B}}}\left[f(x+\delta v)\right]$, then,
    \[\nabla \tilde{f}(x)=\frac{d}{\delta}\E_{a\sim\D_{\mathbb{S}}}\left[f(x+\delta a)a\right]\]
\end{lemma}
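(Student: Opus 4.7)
The plan is to prove the smoothing identity via the divergence theorem (equivalently, by differentiating the indicator of a ball in the distributional sense). Write $V_d$ for the volume of the unit ball and $S_{d-1}$ for the surface area of the unit sphere, and recall the classical relation $S_{d-1} = d \cdot V_d$. By a change of variables $y = x + \delta v$,
\begin{align*}
    \tilde{f}(x)
    \;=\;
    \frac{1}{V_d \delta^d} \int_{B(x,\delta)} f(y)\, dy
    \;=\;
    \frac{1}{V_d \delta^d} \int_{\mathbb{R}^d} f(y)\, \chi_{\{\|y-x\|\le \delta\}}\, dy,
\end{align*}
where $\chi_{\{\|y-x\|\le\delta\}} = H(\delta - \|y-x\|)$ is the indicator of the ball centered at $x$, expressed via the Heaviside step function $H$.

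Next, I would differentiate under the integral sign. Since $\frac{\partial}{\partial x_i} \|y-x\| = -(y_i - x_i)/\|y-x\|$, the distributional derivative of the indicator with respect to $x_i$ concentrates on the sphere $\partial B(x,\delta)$ and evaluates to $(y_i - x_i)/\delta$ times the surface measure. Assembling coordinates,
\begin{align*}
    \nabla \tilde{f}(x)
    \;=\;
    \frac{1}{V_d \delta^d}
    \int_{\partial B(x,\delta)} f(y)\, \frac{y-x}{\delta}\, dS(y).
\end{align*}
Parameterizing $y = x + \delta a$ with $a \in \mathbb{S}^{d-1}$ gives $dS(y) = \delta^{d-1} dS(a)$, and the formula reduces to
\begin{align*}
    \nabla \tilde{f}(x)
    \;=\;
    \frac{1}{V_d \delta}\int_{\mathbb{S}^{d-1}} f(x+\delta a)\, a\, dS(a)
    \;=\;
    \frac{S_{d-1}}{V_d \delta}\, \mathbb{E}_{a\sim \mathcal{D}_{\mathbb{S}}}[f(x+\delta a)\, a]
    \;=\;
    \frac{d}{\delta}\, \mathbb{E}_{a\sim \mathcal{D}_{\mathbb{S}}}[f(x+\delta a)\, a],
\end{align*}
using $S_{d-1} = d V_d$ in the final equality.

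The one technical obstacle is justifying the distributional differentiation rigorously when $f$ is merely continuous (or even just locally integrable) rather than smooth. I would handle this by a standard mollification argument: approximate $f$ in $L^1_{\mathrm{loc}}$ by smooth $f_\epsilon$, for which differentiation under the integral is classical and the divergence-theorem computation gives the stated identity; then pass to the limit on both sides, which is legitimate because the right-hand side depends continuously on $f$ in $L^1_{\mathrm{loc}}$ and $\tilde{f}_\epsilon \to \tilde{f}$ in $C^1$ on compacts (convolution with a bounded, compactly supported kernel is continuous). Alternatively, since in our application $f$ is Lipschitz, one can bypass distributions altogether: apply the classical divergence theorem directly on the ball $B(x,\delta)$ after invoking Rademacher's theorem to get a.e. differentiability, or simply write the identity as an integration by parts on a compact $C^1$ domain, which requires only $f \in W^{1,1}$.
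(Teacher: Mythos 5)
Your proof is correct. The paper does not prove this lemma at all --- it is imported verbatim as Lemma~1 of the cited reference \cite{withoutgrad} (Flaxman, Kalai, and McMahan) --- and your argument is essentially the standard Stokes'/divergence-theorem derivation given there: rewrite $\tilde f$ as a normalized integral over $B(x,\delta)$, differentiate the ball indicator to obtain a surface integral with outward normal $(y-x)/\delta$, and convert constants via $S_{d-1}=d\,V_d$. Your closing remark correctly identifies the only real technical point (justifying the differentiation for non-smooth $f$), and the mollification or Rademacher/$W^{1,1}$ routes you sketch are both adequate since $f$ is Lipschitz in every application in the paper.
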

\begin{lemma} (e.g., \cite{muller1959note})
\label{sphere_unif_dist}
Let $d$. Let $\mathbb{S}$ be the $d$-dimensional unit sphere and $\D_{\mathbb{S}}$ the uniform distributions on $\mathbb{S}$.
Moreover, we define random variables $Y_1,\ldots,Y_d\in \R$,$X_1,\ldots,X_d\in \R$ and $Y\in R^d$ such that $X_i\sim N(0,1)$ (where $N(0,1)$ is the normal univariate distribution with expectation 0 and variance $1$), $Y_i=\frac{x_i}{\sqrt{\sum_{i=1}^d}X_i^2}$ and $Y=(Y_1,\ldots,Y_d)$. Then, $Y\sim \D_{\mathbb{S}}$.
\end{lemma}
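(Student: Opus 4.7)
The plan is to exploit the rotational invariance of the standard multivariate normal. First, I would observe that $X=(X_1,\ldots,X_d)$ has joint density
\[
p(x) \;=\; (2\pi)^{-d/2}\exp\!\br{-\tfrac{1}{2}\|x\|^2},
\]
which depends on $x$ only through $\|x\|$. Consequently, for every orthogonal matrix $O\in O(d)$, the pushforward of $p$ under $x\mapsto Ox$ is again $p$, i.e.\ $OX \stackrel{d}{=} X$. Note also that $\|X\|>0$ almost surely, so $Y=X/\|X\|$ is well-defined a.s.

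Second, I would lift this rotational invariance to $Y$. For any $O\in O(d)$, orthogonality gives $\|OX\|=\|X\|$, and hence
\[
OY \;=\; \frac{OX}{\|X\|} \;=\; \frac{OX}{\|OX\|} \;\stackrel{d}{=}\; \frac{X}{\|X\|} \;=\; Y,
\]
where the distributional equality follows from $OX\stackrel{d}{=}X$ applied to the continuous map $x\mapsto x/\|x\|$. So the law of $Y$ on $\mathbb{S}^{d-1}$ is invariant under the full action of $O(d)$.

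The third step is to identify the unique $O(d)$-invariant Borel probability measure on $\mathbb{S}^{d-1}$ as $\mathcal{D}_{\mathbb{S}}$ (normalized surface measure). This is a standard consequence of the fact that $O(d)$ acts transitively on $\mathbb{S}^{d-1}$ together with the existence and uniqueness of Haar measure on the compact group $O(d)$: for any Borel $A\subseteq\mathbb{S}^{d-1}$ and any $y_0\in\mathbb{S}^{d-1}$, writing $\mu$ for the law of $Y$ and $\nu$ for Haar measure on $O(d)$, invariance yields $\mu(A)=\int_{O(d)} \mathbf{1}_A(O y_0)\,d\nu(O)$, which coincides with the normalized surface measure of $A$. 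Concluding, $Y\sim\mathcal{D}_{\mathbb{S}}$.

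The main (minor) obstacle is the third step, where uniqueness of the $O(d)$-invariant probability measure on the sphere must be invoked; if one wishes to avoid general Haar-measure machinery, an alternative is to verify directly that the density of $Y$ in spherical coordinates is constant by changing variables from $X$ to $(\|X\|,Y)$ and observing that the radial and angular parts factorize (with the angular density being independent of direction), yielding the same conclusion.
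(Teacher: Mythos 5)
Your proof is correct. Note that the paper itself does not prove this lemma at all: it is stated as a classical fact with a citation to Muller (1959), so there is no in-paper argument to compare against. Your route---rotational invariance of the standard Gaussian density, hence $O(d)$-invariance of the law of $Y=X/\|X\|$ on the sphere, combined with uniqueness of the $O(d)$-invariant Borel probability measure on $\mathbb{S}^{d-1}$ (via transitivity of the action and Haar measure on the compact group)---is the standard textbook derivation and is complete; the only points worth being careful about, which you handle, are that $\|X\|>0$ almost surely and that the uniqueness step genuinely requires either the Haar-measure argument or the direct polar-coordinate factorization you sketch as an alternative (the latter being closer in spirit to Muller's original computational derivation). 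One cosmetic remark: the lemma as stated in the paper has typos ($Y_i = x_i/\sqrt{\sum_i X_i^2}$ with a lowercase $x_i$ and a misplaced radical); your proof correctly interprets the intended statement $Y = X/\|X\|$.
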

\begin{lemma}
\label{grad_constant_threshold}
    Let $d$. Let $\mathbb{B}$ be the $d$-dimensional unit ball and $\D_{\mathbb{B}}$ the uniform distributions on $\mathbb{B}$.
    Let $\zeta_1>\zeta_2>0$, a function $g:\R\to\R$ and $a_{1},...a_{l}\in\mathbb{B}$. Moreover, let $h:\mathbb{B}\to R$, $h(x)=g\left(\max (\zeta_1, \max_{1\leq r\leq l}\langle a_r,x\rangle\right)$ and $x_0\in \mathbb{B}$ such that $\max_{1\leq r\leq l} \langle a_r,x_0\rangle \leq \zeta_2$. We define $\Tilde{h}(x)\eqq \E_{v\sim\D_{\mathbb{B}}}\left[h(x+\delta v)\right]$. 
    Then, for any $0<\delta<\zeta_1-\zeta_2$,
\begin{align*}
&\nabla\Tilde{h}(x_0)=0,
\\&\Tilde{h}(x_0)=g(\zeta_1).
    \end{align*}
\end{lemma}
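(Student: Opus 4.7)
The plan is to argue that on the entire ball of radius $\delta$ around $x_0$, the function $h$ is constantly equal to $g(\zeta_1)$, so the averaging operation trivially returns $g(\zeta_1)$ and produces zero gradient by symmetry of the sphere.

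The first step is a uniform bound on the inner products. For any $v\in\mathbb{B}$ (so $\|v\|\leq 1$) and any $r\in\{1,\dots,l\}$, Cauchy--Schwarz combined with $\|a_r\|\leq 1$ gives
\[
\langle a_r, x_0+\delta v\rangle
= \langle a_r, x_0\rangle + \delta \langle a_r, v\rangle
\leq \zeta_2 + \delta\|a_r\|\|v\|
\leq \zeta_2 + \delta
< \zeta_1,
\]
using the hypotheses $\max_r \langle a_r,x_0\rangle\leq \zeta_2$ and $\delta<\zeta_1-\zeta_2$. Hence $\max_r \langle a_r, x_0+\delta v\rangle<\zeta_1$, which means the inner maximum in the definition of $h$ is dominated by the constant threshold $\zeta_1$, so $h(x_0+\delta v) = g(\zeta_1)$ for every $v\in\mathbb{B}$.

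Given this, the value $\widetilde{h}(x_0) = \mathbb{E}_{v\sim \mathcal{D}_{\mathbb{B}}}[h(x_0+\delta v)] = g(\zeta_1)$ is immediate. For the gradient, I would invoke Lemma~\ref{grads_after_smooth} (from \cite{withoutgrad}), which expresses $\nabla \widetilde{h}(x_0) = \frac{d}{\delta}\mathbb{E}_{a\sim \mathcal{D}_{\mathbb{S}}}[h(x_0+\delta a)\, a]$. Since the sphere $\mathbb{S}$ sits inside the ball $\mathbb{B}$, the same bound above applies to $a\in\mathbb{S}$, so $h(x_0+\delta a)=g(\zeta_1)$ almost surely, and
\[
\nabla \widetilde{h}(x_0)
= \frac{d\cdot g(\zeta_1)}{\delta}\, \mathbb{E}_{a\sim \mathcal{D}_{\mathbb{S}}}[a]
= 0,
\]
where the final equality uses symmetry of the uniform distribution on the unit sphere ($a$ and $-a$ are identically distributed).

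There is no serious obstacle here; the only thing to be careful about is that the bound $\delta<\zeta_1-\zeta_2$ together with $\|a_r\|\leq 1$ and $\|v\|\leq 1$ is exactly tight enough to rule out the inner maximum ever exceeding $\zeta_1$ on the closed $\delta$-ball, so both conclusions reduce to the elementary fact that a smoothing average of a locally constant function is that constant, with zero gradient.
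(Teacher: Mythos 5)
Your proposal is correct and follows essentially the same argument as the paper: the Cauchy--Schwarz bound $\langle a_r, x_0+\delta v\rangle \leq \zeta_2+\delta<\zeta_1$ shows $h$ is constantly $g(\zeta_1)$ on the $\delta$-ball, which gives the value immediately, and the gradient vanishes via the smoothing formula of \cref{grads_after_smooth} together with $\E_{a\sim\D_{\mathbb{S}}}[a]=0$. No gaps.
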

\begin{proof}
First, for every $r$ and $v\in\mathbb{B}$, by Cauchy-Schwartz Inequality,
    \[\langle a_r,x_0+\delta v\rangle=\langle a_r,x_0\rangle+\langle a_r,\delta v\rangle\leq \zeta_2+\delta<\zeta_1\]
    Then, 
    \[\max (\zeta_1, \max_{1\leq r\leq l}\langle a_r,x_0+\delta v\rangle)=\zeta_1,\]
    and \[h(x_0+\delta v)=g(\max (\zeta_1, \max_{1\leq r\leq l}\langle a_r,x_0+\delta v\rangle)=g(\zeta_1).\]
As a result,
\begin{align*}
\Tilde{h}(x_0)&=
\E_{v\sim\D_{\mathbb{B}}}\left[h(x_0+\delta v)\right]=
g(\zeta_1)
\end{align*}
and by \cref{grads_after_smooth},
\begin{align*}
\nabla \Tilde{h}(x_0)&=
\frac{d}{\delta}\E_{v\sim\D_{\mathbb{S}}}\left[h(x_0+\delta v)v\right]\\&=
\frac{d}{\delta}\E_{v\sim\D_{\mathbb{S}}}\left[g\left(\max (\zeta_1, \max_{1\leq r\leq l}\langle a_r,x_0+\delta v\rangle\right)v\right]
\\&=\frac{d}{\delta}\E_{v\sim\D_{\mathbb{S}}}\left[g(\zeta_1)v\right]
\\&=\frac{d}{\delta}g(\zeta_1)\E_{v\sim\D_{\mathbb{S}}}\left[v\right]
\\&=0
\end{align*}
\end{proof}
\begin{lemma}
\label{grad_constant_threshold_multi}
    Let $d$ and $K$. Let $\mathbb{B}$ be the $dK$-dimensional unit ball and $\D_{\mathbb{B}}$ the uniform distributions on $\mathbb{B}$.
    Let $\zeta_1>\zeta_2>0$ and $a_{1},...a_{l}\in\unitballd$. Moreover, let $g:\unitballd\to R$, $g(x)=\max (\zeta_1, \max_{1\leq r\leq l}\langle a_r,x\rangle)$ and $h:\mathbb{B} \to \R$, $h(x)=\sqrt{\sum_{k=1}^{K} g(\vecpart{x}{k})^2}$. Let
    $x_0\in \mathbb{B}$ such that for every $k$, $\max_{1\leq r\leq l} \langle a_r,\vecpart{x_0}{k}\rangle \leq \zeta_2$. We define $\Tilde{h}(x)\eqq \E_{v\sim\D_{\mathbb{B}}}\left[h(x+\delta v)\right]$. 
    Then, for any $0<\delta<\zeta_1-\zeta_2$,
\begin{align*}
&\nabla\Tilde{h}(x_0)=0,
\\&\Tilde{h}(x_0)=\zeta_1\sqrt{K}.
    \end{align*}
\end{lemma}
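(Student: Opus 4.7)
The plan is to reduce this to the same constant-on-a-ball argument that underlies \cref{grad_constant_threshold}, only now carried out blockwise across the $K$ copies. The key observation is that the $dK$-dimensional ball $\mathbb{B}$ sits inside a product of $d$-dimensional balls in the sense that for any $v\in \mathbb{B}$ and any block $k\in[K]$ we have $\|\vecpart{v}{k}\|\leq \|v\|\leq 1$. Hence for every $v\in \mathbb{B}$, every block $k$, and every $r$, Cauchy--Schwarz gives
\[
\langle a_r,\vecpart{(x_0+\delta v)}{k}\rangle
= \langle a_r,\vecpart{x_0}{k}\rangle+\delta\langle a_r,\vecpart{v}{k}\rangle
\leq \zeta_2+\delta\|a_r\|\|\vecpart{v}{k}\|
\leq \zeta_2+\delta
<\zeta_1 ,
\]
using the hypothesis $\delta<\zeta_1-\zeta_2$ and $\|a_r\|\leq 1$.

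Consequently $\max_{1\leq r\leq l}\langle a_r,\vecpart{(x_0+\delta v)}{k}\rangle<\zeta_1$ for all $k$, so the threshold in the definition of $g$ is active in every block, which gives $g(\vecpart{(x_0+\delta v)}{k})=\zeta_1$ for all $k$ simultaneously. Plugging into the definition of $h$ then yields $h(x_0+\delta v)=\sqrt{\sum_{k=1}^{K}\zeta_1^2}=\zeta_1\sqrt{K}$ for every $v\in \mathbb{B}$. Taking expectation over $v\sim \D_{\mathbb{B}}$ immediately gives the value statement $\Tilde{h}(x_0)=\zeta_1\sqrt{K}$.

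For the gradient, the cleanest route is to invoke \cref{grads_after_smooth}, which (applied in dimension $dK$) states $\nabla \Tilde{h}(x_0)=\tfrac{dK}{\delta}\E_{a\sim \D_{\mathbb{S}}}[h(x_0+\delta a)\,a]$, where $\mathbb{S}$ is the $dK$-dimensional unit sphere. Since $\mathbb{S}\subset \mathbb{B}$, the argument of the previous paragraph applies pointwise to every $a\in \mathbb{S}$, so $h(x_0+\delta a)=\zeta_1\sqrt{K}$ is a constant and can be pulled out of the expectation, leaving $\nabla \Tilde{h}(x_0)=\tfrac{dK\zeta_1\sqrt{K}}{\delta}\,\E_{a\sim \D_{\mathbb{S}}}[a]=0$ by symmetry of the uniform distribution on the sphere.

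There is really no technical obstacle here: the entire content is the blockwise Cauchy--Schwarz bound that ensures the threshold $\zeta_1$ dominates in every one of the $K$ copies whenever we perturb within a $\delta$-ball and $\delta<\zeta_1-\zeta_2$. The only minor point to be careful about is distinguishing the ambient ball $\mathbb{B}\subset \R^{dK}$ from the per-block ball $\unitballd\subset \R^d$ where the vectors $a_r$ and the inner products live; once that bookkeeping is done, both conclusions follow immediately. (An equally valid alternative for the gradient step is to note that $h$ is identically equal to $\zeta_1\sqrt{K}$ on the entire $\delta$-ball around $x_0$, hence $\Tilde{h}$ is constant on a neighborhood of $x_0$ and its gradient vanishes there.)
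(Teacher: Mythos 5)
Your proposal is correct and follows essentially the same route as the paper's proof: the blockwise Cauchy--Schwarz bound showing the threshold $\zeta_1$ is active in every block on the whole $\delta$-ball, followed by taking the expectation for the value and invoking \cref{grads_after_smooth} plus symmetry of the sphere for the vanishing gradient. Your explicit remark that $\|\vecpart{v}{k}\|\leq\|v\|\leq 1$ for $v$ in the $dK$-dimensional ball is a small but welcome clarification of a point the paper's proof leaves implicit.
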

\begin{proof}
    First, for every $k,r$ and $u\in\unitballd$, by Cauchy-Schwartz Inequality,
    \[\langle a_r,\vecpart{x_0}{k}+\delta u\rangle=\langle a_r,\vecpart{x_0}{k}\rangle+\langle a_r,\delta u\rangle\leq \zeta_2+\delta<\zeta_1\]
    Then, 
    \[g(\vecpart{x_0}{k}+\delta u)=\max (\zeta_1, \max_{1\leq r\leq l}\langle a_r,\vecpart{x_0}{k}+\delta u\rangle)=\zeta_1,\]
     and for every $v\in \mathbb{B}$,\[h(x_0+\delta v)=\sqrt{\sum_{k=1}^Kg(\max (\zeta_1, \max_{1\leq r\leq l}\langle a_r,\vecpart{x_0}{k}+\delta \vecpart{v}{k}\rangle)}=\zeta_1\sqrt{K}.\]
As a result,
\begin{align*}
\Tilde{h}(x_0)&=
\E_{v\sim\D_{\mathbb{B}}}\left[h(x_0+\delta v)\right]=
\zeta_1\sqrt{K}.
\end{align*}
Now, by \cref{grads_after_smooth},
\begin{align*}
\nabla \Tilde{h}(x_0)&=
\frac{d}{\delta}\E_{v\sim\D_{\mathbb{S}}}\left[h(x_0+\delta v)v\right]\\&=
\frac{d}{\delta}\E_{v\sim\D_{\mathbb{S}}}\left[\left(\sqrt{\sum_{k=1}^{K}\left(\max (\zeta_1, \max_{1\leq r\leq l}\langle a_r,\vecpart{x_0}{k}+\delta \vecpart{v}{k}\rangle\right)^2}\right)\cdot v\right]
\\&=\frac{d}{\delta}\E_{v\sim\D_{\mathbb{S}}}\left[\zeta_1\sqrt{K}v\right]
\\&=\frac{d}{\delta}\zeta_1\sqrt{K}\E_{v\sim\D_{\mathbb{S}}}\left[v\right]
\\&=0.
\end{align*}
\end{proof}
\begin{lemma}
    \label{grad_linear_max}
    Let $d$. Let $\mathbb{B}$ be the $d$-dimensional unit ball and $\D_{\mathbb{B}}$ the uniform distributions on $\mathbb{B}$.
     Let $\zeta_1>\zeta_2,\zeta_3>0$ and vectors $a_{1},...a_{l}\in B_G^d(0)$. Moreover, let $h:\mathbb{B}\to \R$, $h(x)= \max\left(\zeta_3,\max_{1\leq r\leq l}\langle a_r,x\rangle\right)$ and $x_0\in \mathbb{B}, r_{0}\in[l]$ such that $\langle a_{r_{0}},x_0\rangle=\zeta_1$ and $\max_{1\leq r\leq l, r\neq r_{0}}\langle a_{r},x_0\rangle\leq \zeta_2$. We define $\Tilde{h}(x)\eqq \E_{v\sim\D_{\mathbb{B}}}\left[h(x+\delta v)\right]$. Then, for any $0<\delta<\frac{1}{2G}\left(\zeta_1-\max\left(\zeta_2,\zeta_3\right)\right)$,
\begin{align*}
&\Tilde{h}(x_0)=\langle a_{r_{0}},x_0\rangle
\\&\nabla\Tilde{h}(x_0)=a_{r_{0}}
    \end{align*}
\end{lemma}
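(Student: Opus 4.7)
The strategy is to show that in a small open neighborhood of $x_0$, the smoothed function $\Tilde{h}$ coincides with the single linear function $\ell(x) \eqq \langle a_{r_0}, x\rangle$. Once this local agreement is established, both conclusions of the lemma reduce to trivially evaluating $\ell$ and $\nabla \ell = a_{r_0}$ at $x_0$.

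To prove the coincidence, fix any $\rho > 0$ small enough that $G(2\delta + 2\rho) < \zeta_1 - \max\{\zeta_2, \zeta_3\}$; such $\rho$ exists by the strict hypothesis $\delta < \frac{1}{2G}(\zeta_1 - \max\{\zeta_2,\zeta_3\})$. For any $x$ with $\|x - x_0\| \leq \rho$, any $v \in \mathbb{B}$, and any $r \neq r_0$, Cauchy--Schwarz combined with $\|a_r\| \leq G$ yields
\[
\langle a_r, x + \delta v\rangle \leq \zeta_2 + G(\rho + \delta) < \zeta_1 - G(\rho + \delta) \leq \langle a_{r_0}, x + \delta v\rangle,
\]
and similarly $\zeta_3 < \zeta_1 - G(\rho + \delta) \leq \langle a_{r_0}, x + \delta v\rangle$. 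Consequently, the maximum defining $h(x + \delta v)$ is attained at $r = r_0$ and strictly exceeds the floor $\zeta_3$, so $h(x + \delta v) = \langle a_{r_0}, x + \delta v\rangle$ holds simultaneously for every $x$ in the $\rho$-neighborhood and every $v \in \mathbb{B}$.

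Averaging over $v \sim \D_{\mathbb{B}}$ and using the fact that this distribution is symmetric under $v \mapsto -v$ then gives, for every such $x$,
\[
\Tilde{h}(x) = \E_v\bigl[\langle a_{r_0}, x + \delta v\rangle\bigr] = \langle a_{r_0}, x\rangle + \delta \langle a_{r_0}, \E_v[v]\rangle = \langle a_{r_0}, x\rangle.
\]
Specializing to $x = x_0$ yields the first identity. For the second identity, note that $\Tilde{h}$ agrees with the differentiable linear function $\ell$ on an \emph{open} set containing $x_0$, hence $\nabla \Tilde{h}(x_0) = \nabla \ell(x_0) = a_{r_0}$. The only step worth flagging is the openness of the agreement set; this is handled by the strict inequality margin in the hypothesis on $\delta$, which leaves room for the perturbation $\rho > 0$. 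An alternative route, should the neighborhood argument be inconvenient, is to plug $h(x_0 + \delta a) = \langle a_{r_0}, x_0 + \delta a\rangle$ directly into the spherical gradient formula of \cref{grads_after_smooth} and use $\E_{a\sim \D_{\mathbb{S}}}[aa^\top] = \tfrac{1}{d}I$ to arrive at the same conclusion.
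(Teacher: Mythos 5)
Your proof is correct, and it takes a genuinely different route from the paper's for the gradient identity. The paper only establishes $h(x_0+\delta v)=\langle a_{r_0},x_0+\delta v\rangle$ at the single point $x_0$, and then computes $\nabla\Tilde{h}(x_0)$ by invoking the randomized-smoothing gradient formula of \cref{grads_after_smooth} ($\nabla\Tilde{h}(x)=\frac{d}{\delta}\E_{a\sim\D_{\mathbb{S}}}[h(x+\delta a)a]$), followed by the isotropy computation $\E_{a\sim\D_{\mathbb{S}}}[aa^{\top}]=\frac{1}{d}I$ via the Gaussian representation of the uniform distribution on the sphere (\cref{sphere_unif_dist}). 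You instead exploit the strict slack in the hypothesis $\delta<\frac{1}{2G}(\zeta_1-\max\{\zeta_2,\zeta_3\})$ to show that the maximizer is $r_0$ uniformly over a $\rho$-ball around $x_0$ and all $v\in\mathbb{B}$, so that $\Tilde{h}$ coincides with the linear function $x\mapsto\langle a_{r_0},x\rangle$ on an open neighborhood; both conclusions then follow from $\E_{v\sim\D_{\mathbb{B}}}[v]=0$ and the fact that two functions agreeing on an open set have equal gradients there. Your argument is more elementary, as it bypasses both auxiliary lemmas and the second-moment computation entirely, at the cost of one extra $\varepsilon$-of-room bookkeeping step (the choice of $\rho$); the paper's route is more mechanical and reuses machinery (\cref{grads_after_smooth}) that it needs elsewhere anyway. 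Both are valid, and your closing remark correctly identifies the paper's computation as the alternative.
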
  
\begin{proof}
    First, by Cauchy-Schwartz Inequality,
    \begin{align*}
\max\left(\zeta_3,\max_{r\neq r_0}\langle a_r,x_0+\delta v\rangle\right)&\leq\max\left(\zeta_3,\max_{r\neq r_0}\langle a_r,x_0\rangle+\max_{r\neq r_0}\langle\delta v,a_r\rangle\right)\\&\leq \max\left(\zeta_3,\zeta_2+G\delta\right)
\\&\leq \max\left(\zeta_3,\zeta_2\right)+G\delta\\&<\frac{1}{2}(\zeta_1+\max\left(\zeta_3,\zeta_2)\right),        
    \end{align*}\
    and for $r_0$,
    \[\langle a_{r_{0}},x_0+\delta v\rangle=\langle a_{r_{0}},x_0\rangle+\langle\delta v,a_{r_{0}}\rangle\geq \zeta_1-G\delta>\frac{1}{2}\left(\zeta_1+\max\left(\zeta_2,\zeta_3\right)\right).\]
    We derive that for every $v\in \mathbb{B}$, \[h(x_0+\delta v)=\max\left(\zeta_3,\max_{1\leq r\leq l}\langle a_r,x+\delta v\rangle\right))=\langle a_{r_{0}},x+\delta v\rangle.\] and that the maximum is attained in $r_0$.
    Then, 
    \begin{align*}
        \Tilde{h}(x_0)&=
\E_{v\sim\D_{\mathbb{B}}}\left[h(x_0+\delta v)\right]\\&=
\E_{v\sim\D_{\mathbb{B}}}\left[\langle a_{r_{0}},x_0+\delta v\rangle\right]
\\&=\langle a_{r_{0}},x_0+\delta \E_{v\sim\D_{\mathbb{B}}}v\rangle
\\&=\langle a_{r_{0}},x_0\rangle
    \end{align*}
    and by \cref{grads_after_smooth},
    \begin{align*}
        \nabla\Tilde{h}(x_0)&= \frac{d}{\delta}\E_{v\sim\D_{\mathbb{S}}}\left[\max\left(\zeta_3,\left(\max_{1\leq r\leq l}\langle a_r,x_0+\delta v\rangle\right)\right)v\right]
        \\&=\frac{d}{\delta}\E_{v\sim\D_{\mathbb{S}}}\left[\left(\langle a_{r_{0}},x_0+\delta v\rangle\right)v\right]
        \\&=\langle a_{r_{0}},x_0\rangle\frac{d}{\delta}\E_{v\sim\D_{\mathbb{S}}}\left[v\right]+\frac{d}{\delta}\E_{v\sim\D_{\mathbb{S}}}\left[\langle a_{r_{0}},\delta v\rangle v\right]
       \\&=0+d a_{r_{0}}^T\E_{v\sim\D_{\mathbb{S}}}\left[ v v^T\right]
       \\&=d a_{r_{0}}^T\E_{v\sim\D_{\mathbb{S}}}\left[ v v^T\right].
        \end{align*}
   Now, we define random variables $Y_1,\ldots,Y_d\in \R$,$X_1,\ldots,X_d\in \R$ and $Y\in R^d$ such that $X_i\sim N(0,1)$ (where $N(0,1)$ is the normal univariate distribution with expectation 0 and variance $1$), $Y_i=\frac{x_i}{\sqrt{\sum_{i=1}^d}X_i^2}$.  By \cref{sphere_unif_dist}, we get, for the standard basis vectors $e_1\ldots e_d$,
    \begin{align*}
        \nabla\Tilde{h}(x)&=d a_{r_{0}}^T\E_{Y_1,\ldots,Y_d}\left[\sum_{i=1}^d Y_i^2e_ie_i^T\right] 
       \\&=d a_{r_{0}}^T\sum_{i=1}^d \E_{Y_i}\left[Y_i^2\right] e_ie_i^T
       \\&=d a_{r_{0}}^T\sum_{i=1}^d \E_{X_i}\left[\frac{X_i^2}{\sum_{l=1}^d X_l^2}\right] e_ie_i^T
       \\&=d a_{r_{0}}^T\sum_{i=1}^d \frac{1}{d}e_ie_i^T
       \\&=a_{r_{0}}
    \end{align*}
\end{proof}
\begin{proof}[of \cref{iden_grad_iter}]

We assume that $\cE$ (\cref{good_event_gd}) holds and show \cref{iden_grad_iter} under this event. 
We prove the claim by induction on $t$.
For $t=1$, it is trivial. Now, we assume that $w_t=\Tilde{w}_t$.

For $\gdindgam_1$, in every $t$, by the proofs of \cref{diff_GD_w_2,diff_GD_w_3,diff_GD_w_4,diff_GD_w_5} and \cref{diff_GD_expression}, 
it can be observed that for every $i\in[n]$, $k\geq 2$, $\max_{t}\max_{u
\in V_i}\langle u,\vecpart{w_t}{k}\rangle\leq \frac{\eta}{16}$, thus, in every iteration the term that gets the maximal value is $\frac{3\eta}{32}$.
Then, by \cref{grad_constant_threshold_multi} and the hypothesis of the induction, for every $i$,
\begin{align*}
\nabla \dgdindgam_1(\Tilde{w}_t,V_i)&=\nabla\dgdindgam_1(w_{t},V_i)
=0
=\nabla \gdindgam_1(w_{t},V_i).
\end{align*}

For $\dgdindgam_2$ and every $w\in \R^d$, $V\subseteq U$ and $j\in[n^2]$, by linearity of expectation,
\begin{align*}
    \dgdindgam_2(w,(V,j))&=\E_{v\in\delta B}\left(\langle \wenc+\enc{v},-\phi(V,j)\rangle\right)
    \\&=\gdindgam_2(w,(V,j))+\left(\langle \E_{v\in\delta B}\enc{v},-\phi(V,j)\rangle\right)
     \\&=\gdindgam_2(w,(V,j))
\end{align*}
Then, we derive that for every $w$ and $i$, $\nabla\dgdindgam_2(w,(V_i,j_i))=\nabla\gdindgam_2(w,(V_i,j_i))$.

For $\dgdindgam_3$, which is a $2$-Lipschitz linear function, for $t=1$, by the proof of \cref{diff_GD_w_2},  the term that gets the maximal value is $\delta_1$.  Moreover, it can be observed that for such $t$, and every $\psi\in\Psi$, \[\max_{\psi\in \Psi} \left(\langle\wenc_1, \psi\rangle - \frac{1}{4}\frac{\epsilon}{T^2}\langle \alpha(\psi), \vecpart{w_1}{1}\rangle \right)=0.\]
Then, we can apply \cref{grad_constant_threshold}, and get by the hypothesis of the induction,
\begin{align*}
\nabla \dgdindgam_3(\Tilde{w}_1)&=\nabla\dgdindgam_3(w_{1})
=0
=\nabla \gdindgam_3(w_{1}).
\end{align*}
If $t\geq 2$, it can be observed that
\begin{align*}
\gdindgam_3(w_{t})&=\max\left(\delta_2,\max_{\psi\in \Psi} \left(\langle\wenc_t, \psi\rangle - \frac{1}{4}\frac{\epsilon}{T^2}\langle \alpha(\psi), \vecpart{w_t}{1}\rangle \right)\right)\\&=\max_{\psi\in \Psi} \left(\langle\wenc_t, \psi\rangle - \frac{1}{4}\frac{\epsilon}{T^2}\langle \alpha(\psi), \vecpart{w_t}{1}\rangle \right)\end{align*}
Then, by the proofs of \cref{diff_GD_w_3,diff_GD_w_4,diff_GD_w_5} and \cref{diff_GD_expression}, the maximal value of $\langle\wenc, \psi\rangle - \frac{1}{4}\frac{\epsilon}{T^2}\langle \alpha(\psi), \vecpart{w}{1}\rangle$ is attained in $\psi=\psi^*$ and the difference from the second maximal possible value of this term is at most $\frac{\epsilon}{2T^2}$. As a result, using the fact that this maximum is also larger than $\delta_2$ by at least $\frac{\eta}{8n}$ (which is also larger than $\delta$),
we can apply \cref{grad_linear_max} and get by the hypothesis of the induction that, \begin{align*}
\vecpart{\nabla \dgdindgam_3(\Tilde{w}_t)}{\indsec}&=\vecpart{\nabla\dgdindgam_3(w_{t})}{\indsec}
\\&=\begin{cases}
         \frac{1}{n}\sum_{i=1}^n\phi(V_i,j_i) &\quad \indsec=\last\\
          -\frac{\epsilon}{4T^2}\alpha(\frac{1}{n}\sum_{i=1}^n\phi(V_i,j_i) )&\quad \indsec=1\\
   0 &\quad \text{otherwise} 
\end{cases}
\\&=\vecpart{\nabla \gdindgam_3(w_{t})}{\indsec}.
\end{align*}

For $\dgdindgam_4$, for $t\in
\{1,2\}$, by the proofs of \cref{diff_GD_w_2,diff_GD_w_3}, the term that gets the maximal value is $\delta_2$.  Moreover, it can be observed that for every such $t$, and every $k\in[T-1$] and $u\in U$, \[\frac{3}{8}\langle u,\vecpart{w_t}{k}\rangle-\frac{1}{2}\langle u,\vecpart{w_t}{k+1}\rangle=0.\]
Then, we can apply \cref{grad_constant_threshold}, and get by the hypothesis of the induction,
\begin{align*}
\nabla \dgdindgam_4(\Tilde{w}_t)&=\nabla\dgdindgam_4(w_{t})
=0
=\nabla \gdindgam_4(w_{t}).
\end{align*}
For $t=3$, it can be observed by the proof of \cref{diff_GD_w_4} that,
\begin{align*}
\gdindgam_4(w_{t})&=\max\left(\delta_2,\max_{k\in [T-1], u \in U}\left(\frac{3}{8}\langle u,\vecpart{w_t}{k}\rangle-\frac{1}{2}\langle u,\vecpart{w_t}{k+1}\rangle\right)\right)\\&=\max_{k\in [T-1], u \in U}\left(\frac{3}{8}\langle u,\vecpart{w_t}{k}\rangle-\frac{1}{2}\langle u,\vecpart{w_t}{k+1}\rangle\right)\end{align*}
Moreover, the maximal value is $\frac{3\eta\epsilon}{32T^2}$ and is attained in $k_0=1,u=u_0=\alpha(\psi^*)$. The second maximal possible value of this term is $\delta_2=\frac{3\eta\epsilon}{64T^2}$, then, by the fact that $\delta<\frac{3\eta\epsilon}{32T^2}-\frac{3\eta\epsilon}{64T^2}=\frac{3\eta\epsilon}{64T^2}$, we can apply \cref{grad_linear_max} and get by the hypothesis of the induction that \begin{align*}
\vecpart{\nabla \dgdindgam_4(\Tilde{w}_t)}{\indsec}&=\vecpart{\nabla\dgdindgam_4(w_{t})}{\indsec}
\\&=\begin{cases}
         \frac{3}{8}u_0 &\quad \indsec=1\\
          -\frac{1}{2}u_0 &\quad \indsec=2\\
   0 &\quad \text{otherwise} 
\end{cases}
\\&=\vecpart{\nabla \gdindgam_4(w_{t})}{\indsec}.
\end{align*}
For $t\geq 4$, it can be observed by the proofs of \cref{diff_GD_expression,diff_GD_w_5} that,
\begin{align*}
\gdindgam_4(w_{t})&=\max\left(\delta_2,\max_{k\in [T-1], u \in U}\left(\frac{3}{8}\langle u,\vecpart{w_t}{k}\rangle-\frac{1}{2}\langle u,\vecpart{w_t}{k+1}\rangle\right)\right)\\&=\max_{k\in [T-1], u \in U}\left(\frac{3}{8}\langle u,\vecpart{w_t}{k}\rangle-\frac{1}{2}\langle u,\vecpart{w_t}{k+1}\rangle\right)\end{align*}
Moreover, the maximal value is $\frac{3\eta}{16}$ and is attained in $k_0=t-2,u=u_0=\alpha(\psi^*)$. The second maximal possible value of this term is smaller then $\frac{5\eta}{64}$, then we can apply again \cref{grad_linear_max} and get by the hypothesis of the induction that \begin{align*}
\vecpart{\nabla \dgdindgam_4(\Tilde{w}_t)}{\indsec}&=\vecpart{\nabla\dgdindgam_4(w_{t})}{\indsec}
\\&=\begin{cases}
         \frac{3}{8}u_0 &\quad \indsec=t-2\\
          -\frac{1}{2}u_0 &\quad \indsec=t-1\\
   0 &\quad \text{otherwise} 
\end{cases}
\\&=\vecpart{\nabla \gdindgam_4(w_{t})}{\indsec}.
\end{align*}

In conclusion, we proved that $\nabla \gdindempf (w_t)=\nabla \dgdindempf (\Tilde{w}_t)$, thus, by the hypothesis of the induction,
\[w_{t+1} = w_t - \nabla \gdindempf (w_t)=\Tilde{w}_t - \nabla \dgdindempf (\Tilde{w}_t)=\Tilde{w}_{t+1}\]
\end{proof}

\begin{lemma}
\label{expec_change_bound}
      Let $d$ and $\delta>0$. Let $f:\R^d\to \R$ be a $G$-Lipschitz function. Let $\mathbb{B}$ be the $d$-dimensional unit ball. Moreover, let $\D_{\mathbb{B}}$ be the uniform distributions on $\mathbb{B}$.
    If $\tilde{f}(x)=\E_{v\sim\D_{\mathbb{B}}}\left[f(x+\delta v)\right]$,
    then for every $x$,
    \[|\tilde{f}(x)-f(x)|\leq G\delta\]
\end{lemma}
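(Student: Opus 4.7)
The plan is to prove the bound by a very direct calculation exploiting only the $G$-Lipschitz property of $f$ together with Jensen's inequality for the absolute value. Since the constant function $v \mapsto f(x)$ has expectation $f(x)$ under $\D_{\mathbb{B}}$, I would first rewrite the quantity of interest as
\[
\tilde{f}(x) - f(x)
= \E_{v \sim \D_{\mathbb{B}}}\!\left[f(x + \delta v) - f(x)\right],
\]
so that bounding $|\tilde{f}(x) - f(x)|$ reduces to bounding the absolute value of an expectation.

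Next I would push the absolute value inside the expectation via the standard inequality $|\E[Z]| \leq \E|Z|$, yielding
\[
|\tilde{f}(x) - f(x)|
\leq \E_{v \sim \D_{\mathbb{B}}}\!\left|f(x + \delta v) - f(x)\right|.
\]
By the $G$-Lipschitz assumption on $f$, for every $v$ we have $|f(x + \delta v) - f(x)| \leq G \|\delta v\| = G\delta \|v\|$, and since $v$ is supported on the unit ball $\mathbb{B}$, $\|v\| \leq 1$ almost surely. Substituting into the expectation and using monotonicity gives
\[
\E_{v \sim \D_{\mathbb{B}}}\!\left|f(x + \delta v) - f(x)\right|
\leq \E_{v \sim \D_{\mathbb{B}}}\left[G\delta \|v\|\right]
\leq G\delta,
\]
which is the desired bound.

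There is essentially no technical obstacle here: the argument is one line of Jensen's inequality followed by one line of the Lipschitz property. The only thing to be careful about is keeping the roles of $\delta$ and $\|v\|$ straight (pulling $\delta$ out of the norm as a nonnegative scalar) and noting that the statement is uniform in $x$, so the bound is established pointwise without any additional structural assumption on $f$ beyond Lipschitzness.
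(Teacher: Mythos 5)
Your proof is correct and follows essentially the same route as the paper's: both rewrite $\tilde{f}(x)-f(x)$ as an expectation of differences, move the absolute value inside, and apply the $G$-Lipschitz bound $|f(x+\delta v)-f(x)|\leq G\delta\|v\|\leq G\delta$. If anything, your version is written more cleanly than the paper's.
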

\begin{proof}
    By the fact that $f$ is $G$-Lipschitz,
    \begin{align*}
    |\tilde{f}(x)-f(x)|&= |\E_{v\sim\D_{\mathbb{B}}}\left[f(x+\delta v)\right]-f(x)|\\&\leq
|\E_{v\sim\D_{\mathbb{B}}}\left[f(x)\right]+G\delta\E_{v\sim\D_{\mathbb{B}}}+\left[\|v\|\right]-f(x)|
\\&=G\delta\E_{v\sim\D_{\mathbb{B}}}\left[\|v\|\right]
\\&\leq G\delta
    \end{align*}

\end{proof}

\label{sec:proof_diff_GD}

\subsection{Proofs of \cref{sec:sgddiff}}

\begin{proof}[of \cref{convex_lip_diff_sgd}]
First, differentiability can be derived immediately from \cref{grads_after_smooth}.
Second, for $4$-Lipschitzness, for every $V\in Z$, we define $\dsgdindf_{V}:\R^d\to\R$ as 
$\dsgdindf_{V}(w)\eqq\dsgdindf(w,V)$.
By the $5$-Lipschitzness of $\sgdindf$ with respect to its first argument and Jensen Inequality, for every $x,y\in \R^d$, it holds that
\begin{align*}
    |\dsgdindf_{V}(x)-\dsgdindf_{V}(y)|&=\left|\E_{v\in\delta B}\left(\sgdindf_{V}(y+v)\right)-\E_{v\in\delta B}\left(\sgdindf_{V}(w+v)\right)\right|
    \\&=\left|\E_{v\in\delta B}\left(\sgdindf_{V}(x+v)-\sgdindf_{V}(y+v)\right)\right|
    \\&\leq \E_{v\in\delta B}\left|\left(\sgdindf_{V}(x+v)-\sgdindf_{V}(y+v)\right)\right|
    \\&\leq 4|x-y|.
\end{align*}
Third, for convexity, by the convexity of $\sgdindf$ for every $x,y\in \R^d$ and $\alpha\in[0,1]$,
\begin{align*}
\dsgdindf_{V}\left(\alpha x + (1-\alpha )y\right)&=
\E_{v\in\delta B}\left(\sgdindf_{V}(\alpha x + (1-\alpha )y+v)\right)
\\&=
\E_{v\in\delta B}\left(\sgdindf_{V}(\alpha (x+v) + (1-\alpha )(y+v))\right)
\\&\leq
\E_{v\in\delta B}\left(\alpha \sgdindf_{V}(x+v) + (1-\alpha )\gdindf_{V}(y+v))\right)
\\&=
\alpha \E_{v\in\delta B}\left(\sgdindf_{V}(x+v)\right) + (1-\alpha )\left(\E_{v\in\delta B}\sgdindf_{V}(y+v)\right)
\\&=\alpha \dsgdindf_{V}(x) +(1-\alpha) \dsgdindf_{V,j}(y). 
\end{align*}
\end{proof}
\begin{proof}[of \cref{iden_grad_iter_sgd}]
We assume that $\cE'$ (\cref{good_event_sgd}) holds and prove \cref{iden_grad_iter_sgd} under this event. 
We prove the claim by induction on $t$.
For $t=1$, it is trivial. Now, we assume that $w_t=\Tilde{w_t}$.
First, for $\dsgdindgam_3$ and every $w$ and $V$, by linearity of expectation,
\begin{align*}
    \dsgdindgam_3(w,V)&=\E_{v\in\delta B}\left(\langle \vecpart{w}{\last,1}+\vecpart{v}{\last,1},-\frac{1}{4n^2}\phi(V,1)\rangle -\langle \frac{1}{n^3} u_1, \vecpart{w}{1}+\vecpart{v}{1}\rangle\right)\\&=\sgdindgam_3(w,V)+\left(\langle \E_{v\in\delta B}\vecpart{v}{\last,1},-\frac{1}{4n^2}\phi(V,1)\rangle -\langle \frac{1}{n^3} u_1, \E_{v\in\delta B}\vecpart{v}{1}\rangle\right)
    \\&=\sgdindgam_3(w,V)
\end{align*}
Then, we derive that for every $w$, $\nabla\dsgdindgam_3(w,V)=\nabla\sgdindgam_3(w,V)$.
Now, for $r\in\{1,2\}$ we show that in each term $\dsgdindgam_r(w_t,V_t)$, the argument that gives the maximum value is the same as $\sgdindgam_r(w_t,V_t)$.

For $\dsgdindgam_1(w_t,V_t)$, in every $t$, by the proofs of \cref{sgd_exp_2}, \cref{sgd_exp_3} and \cref{diffSGDexpression}, the maximal value is $\frac{3\eta}{32}$. Moreover, it can be observed that for every $k\geq 2$, $\max_{t}\max_{u
\in V_t}\langle u,\vecpart{w_t^\sgdind}{k}\rangle\leq \frac{\eta}{16}$.
Then, by \cref{grad_constant_threshold_multi}, and the hypothesis of the induction,
\begin{align*}
\nabla \dsgdindgam_1(\Tilde{w_{t}},V_{t})&=\nabla\dsgdindgam_1(w_{t},V_{t})
=0
=\nabla \sgdindgam_1(w_{t},V_t).
\end{align*}
Now, for $\dsgdindgam_2$, for $t=1$, $\nabla \sgdindgam_2(w_1,V_1)= 0$ and the maximum is attained uniquely in $\delta_1=\frac{\eta}{8n^3}$ (the second maximal value is zero). Then, we can apply \cref{grad_constant_threshold} and by the hypothesis of the induction, it follows that,
\begin{align*}
\nabla \dsgdindgam_2(\Tilde{w_{1}},V_1)&=\nabla\dsgdindgam_2(w_{1},V_1)
=0
=\nabla \sgdindgam_2(w_{1},V_1).
\end{align*}
For every $t\geq 2$, the maximum is attained uniquely in the linear term of $k=t-1$,$u=u_{t-1}$ and $\psi=\psi^*_{t-1}$ such that the difference between the maximal value the second largest value is larger than $\frac{\eta\epsilon}{16n^2}$. Then, we can apply \cref{grad_linear_max}  and by the hypothesis of the induction, it follows that,
\begin{align*}
\nabla \dsgdindgam_2(\Tilde{w_{t}},V_t)&=\nabla\dsgdindgam_2(w_{t},V_t)
=\nabla \sgdindgam_t(w_{t},V_t).
\end{align*}
    In conclusion, we proved that $\nabla \dsgdindf(\Tilde{w_{t}},V_t)=\nabla \sgdindf(w_{t},V_t)$, thus, by the hypothesis of the induction,
\[w_{t+1}= w_t- \nabla \sgdindf(w_{t},V_t)=\Tilde{w}_t - \nabla \dsgdindf(\Tilde{w}_{t},V_t)=\Tilde{w}_{t+1}.\]
\end{proof}

\section{Lower bound of \(\Omega\left(\min\left(1,\frac{1}{\eta T}\right)\right)\)}
\label{sec_small_eta}
In this section, we prove the $\Omega\left(\min\left(1,\frac{1}{\eta T}\right)\right)$ lower bound. Since our hard construction for getting this bound involves a deterministic loss function, GD is equivalent to SGD. For clarity, we refer in our proof to the performance of GD, however, the same result is applicable also for SGD with $T=n$ iterations. 
\subsection{Construction of a non-differentiable loss function.}

For $d=\max(25\eta^2T^2,1)$, we define the hard loss function $\optindf:\R^d\to\R$, as follows,
\begin{align}
\label{loss_function_small_eta}
\optindf(w)=\max \left(0, \max_{i\in[d]} \{\frac{1}{\sqrt{d}}-\vecentry{w}{i}-\frac{\eta i}{4d}\}\right).
\end{align}
For this loss function, we prove the following lemma,
\begin{lemma}
\label{nonspec_lower_bound_GD_small_eta}
Assume $n,T>0,\eta\leq \frac{1}{5\sqrt{T}}$. Consider the loss function $\optindf$ that is defined in \cref{loss_function_small_eta} for $d=\max(25\eta^2T^2,1)$.
Then, %
for Unprojected GD (cf. \cref{gd_update_rule} with $W=\R^d$) on $\optindf$, initialized at $w_1=0$ with step size $\eta$, we have, %
\begin{enumerate}[label=(\roman*)]
    \item 
    The iterates of GD remain within the unit ball, namely $w_t \in \unitballd$ for all $t=1,\ldots,T$;
    \item
    For all $m=1,\ldots,T$, the $m$-suffix averaged iterate has:
    \[ 
        \optindf(\suff)
        - \optindf(\opt) 
        = 
        \Omega\br[3]{\min\left(1,\frac{1}{\eta T}\right)}
        .
    \]
\end{enumerate}
\end{lemma}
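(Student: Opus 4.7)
The loss $\optindf$ is a maximum of affine functions together with the constant $0$, so at any $w$ with $\max_i\bigl(1/\sqrt{d}-\vecentry{w}{i}-\eta i/(4d)\bigr)>0$ the subgradient is $-e_{i^\star(w)}$ where $i^\star(w)=\arg\max_i\bigl(1/\sqrt{d}-\vecentry{w}{i}-\eta i/(4d)\bigr)$; the strictly monotone penalty $\eta i/(4d)$ breaks all ties, so $i^\star(w)$ is always unique. Consequently GD evolves deterministically as $w_{t+1}=w_t+\eta e_{i^\star(w_t)}$ whenever the loss is positive, every coordinate of $w_t$ is nonnegative, and $\sum_i \vecentry{w_t}{i}\le (t-1)\eta$. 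I split on the size of $d$.

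\emph{Case 1: $\eta T<1/5$, so $d=1$.} The loss reduces to $\optindf(w)=\max(0,1-w-\eta/4)$. A direct induction shows $w_t=(t-1)\eta\in[0,1/5]\subset\unitballd$, and any suffix average $\suff$ lies in the same interval, so $\optindf(\suff)\ge 1-1/5-1/20=3/4$. Since $\optindf(\opt)=0$ (taking $w=1$) and $\min(1,1/(\eta T))=1$ here, the desired bound holds.

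\emph{Case 2: $\eta T\ge 1/5$, so $d=25\eta^2T^2$ and $\sqrt{d}=5\eta T$.} For statement (i) I will verify by induction that GD performs \emph{round-robin cycling}: after $kd$ iterations one has $w_{kd+1}=k\eta\mathbf{1}$, and in the ensuing block of $d$ steps the algorithm increments coordinates $1,2,\ldots,d$ in order, because the penalty $\eta i/(4d)$ makes the smallest not-yet-incremented index the unique maximizer throughout. This gives $\|w_t\|_2^2\le d\lceil (t-1)/d\rceil^2\eta^2\le (t+d)^2\eta^2/d$, which with $t\le T$ and $\sqrt{d}=5\eta T$ yields $\|w_t\|\le 2/5$; the same bound lifts to suffix averages by convexity of the unit ball. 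For statement (ii) I will bypass the detailed dynamics: since $\suff$ has nonnegative coordinates with $\sum_i\vecentry{\suff}{i}\le T\eta$, pigeonhole gives $\bigl|\{i:\vecentry{\suff}{i}\le 2T\eta/d\}\bigr|\ge d/2$, so some index $i^\star\le d/2+1$ satisfies $\vecentry{\suff}{i^\star}\le 2T\eta/d$. Plugging in,
\[
\optindf(\suff)\;\ge\;\frac{1}{\sqrt{d}}-\frac{2T\eta}{d}-\frac{\eta i^\star}{4d}\;\ge\;\frac{1}{5\eta T}-\frac{2}{25\eta T}-\frac{\eta}{8}-\frac{1}{100\eta T^2},
\]
and using $\eta\le 1/(5\sqrt{T})$ to absorb the last two terms as $O(1/(\eta T))$ yields $\optindf(\suff)=\Omega(1/(\eta T))$. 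Since $\optindf(\opt)=0$ (witnessed by $w=(1/\sqrt{d})\mathbf{1}$, which has unit norm and makes every $1/\sqrt{d}-\vecentry{w}{i}-\eta i/(4d)$ nonpositive), the conclusion $\optindf(\suff)-\optindf(\opt)=\Omega(\min(1,1/(\eta T)))$ follows.

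The main obstacle is the careful induction establishing the round-robin cycling and the uniform bound $\|w_t\|\le 1$: one needs to verify that the monotone penalty keeps the argmax unique and correctly selects the smallest not-yet-incremented coordinate even at cycle boundaries (where all coordinates are equal), and to ensure $f(w_t)>0$ for every $t\le T$ so that the subgradient never vanishes. Once this is in hand, the loss lower bound itself is a two-line pigeonhole argument driven by nonnegativity and the $\ell_1$ budget $\|\suff\|_1\le T\eta$; in particular, the proof is purely deterministic, matching the nature of the construction.
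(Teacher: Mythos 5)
Your proposal is correct and follows essentially the same route as the paper: the heart of the argument in both is the induction showing that the penalty $\eta i/(4d)$ forces a unique maximizer advancing in round-robin fashion with a strict margin (the paper's \cref{dynamics_small_eta} defines $j_{t+1}$ as the smallest index among those incremented the fewest times, exactly your cycling structure), from which the norm bound and the positivity of the loss along the trajectory follow. The only real difference is cosmetic: for the final estimate the paper uses the uniform per-coordinate bound $\vecentry{\suff}{i}\le \tfrac{1}{2\sqrt{d}}$ directly, whereas you use a pigeonhole on the $\ell_1$ budget $\norm{\suff}_1\le T\eta$ — both yield $\Omega(1/\sqrt{d})$ with comparable constants, and your $d=1$ versus $d>1$ case split is handled uniformly in the paper.
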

\paragraph{Algorithm's dynamics}
We start by proving a lemma that characterizes the dynamics of the algorithm.
\begin{lemma}
\label{dynamics_small_eta}
Assume the conditions of \cref{nonspec_lower_bound_GD_small_eta}, and consider the iterate of Unprojected GD on $\optindf$, initialized at $w_1=0$ with step size $\eta\leq \frac{1}{5{\sqrt{T}}}$
Let $w_t$ be the iterate of %
Then, it holds that,
\begin{enumerate}[label=(\roman*)]
    \item For every $i\in[d]$ and for every $t\in[T]$, \[\vecentry{w_t}{i}\leq \frac{1}{2\sqrt{d}}\]
     \item For every $t\in [T]$, there exists an index $j_t\in[d]$ such that 
     $k\neq j_t$,
        \[\frac{1}{\sqrt{d}}-\vecentry{w_t}{j_t}-\frac{\eta j}{4d}>\frac{1}{\sqrt{d}}-\vecentry{w_t}{k}-\frac{\eta k}{4d}+\frac{\eta}{8d}.\]
    \item For every $t\in [T]$, $j_t$ also holds
    \[\frac{1}{\sqrt{d}}-\vecentry{w_t}{j_t}-\frac{\eta j_t}{4d}>\frac{\eta}{8d}.\]
   
\end{enumerate}
\end{lemma}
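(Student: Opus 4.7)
The plan is to show that GD on $f^{\text{OPT}}$ has a fully explicit trajectory: at every step the function is differentiable at $w_t$ with gradient $-e_{j_t}$, where $j_t$ is the unique minimizer of $w_t[i] + \eta i/(4d)$ over $i \in [d]$. This means each iteration increments exactly one coordinate of $w_t$ by $\eta$, and a combinatorial argument on the integer-valued counts $c_t[i] := w_t[i]/\eta$ forces the updates to cycle through $i = 1, 2, \ldots, d$ in strict round-robin order. Once this description is established, all three claims reduce to bookkeeping and a short calculation using $\eta \leq 1/(5\sqrt{T})$.

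Concretely, I would prove the three items simultaneously by induction on $t$. The base case $t=1$ is immediate: with $w_1=0$, the quantity $\eta i/(4d)$ is uniquely minimized at $i=1$ with margin $\eta/(4d) > \eta/(8d)$, giving (ii), and $1/\sqrt d - \eta/(4d) > \eta/(8d)$ follows from $\sqrt d \geq 1$ and $\eta \leq 1$, giving (iii). For the inductive step, items (ii) and (iii) at iteration $t$ together guarantee that the outer maximum defining $f^{\text{OPT}}$ is attained uniquely by the $j_t$-th linear term with positive margin; consequently $f^{\text{OPT}}$ is differentiable at $w_t$ with $\nabla f^{\text{OPT}}(w_t) = -e_{j_t}$, so $w_{t+1} = w_t + \eta e_{j_t}$.

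Writing $w_t[i] = \eta c_t[i]$, this update picks the smallest $i$ minimizing $c_t[i]$ (the fractional offsets $i/(4d) \in (0,1/4]$ cannot reorder integer counts), which is precisely the round-robin rule. Hence $c_t[i] \in \{\lfloor (t-1)/d\rfloor, \lceil (t-1)/d\rceil\}$ and $w_t[i] \leq \lceil T/d \rceil \eta$. For (i) in the main case $d = 25\eta^2T^2$: this is at most $T\eta/d + \eta = 1/(25\eta T) + \eta$; the stepsize assumption $\eta \leq 1/(5\sqrt T)$ implies $\eta \cdot \eta T \leq 1/25$, i.e.\ $\eta \leq 1/(25\eta T)$, so the sum is at most $2/(25\eta T) \leq 1/(10\eta T) = 1/(2\sqrt d)$. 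In the degenerate case $d=1$, $\eta T \leq 1/5$ gives $w_t[1] \leq T\eta \leq 1/2$. Items (ii) and (iii) at step $t+1$ then follow from the same description: the new minimizer is $j_{t+1} \in \{j_t + 1, 1\}$ with margin at least $\eta/(4d)$ over any other index, and $1/\sqrt d - w_{t+1}[j_{t+1}] - \eta j_{t+1}/(4d) \geq 1/(2\sqrt d) - \eta/4 > \eta/(8d)$ by a similar case split.

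The main obstacle is not conceptual but arithmetic: ensuring the margin inequalities in (ii) and (iii) remain simultaneously valid in both regimes $d=1$ and $d = 25\eta^2T^2$ requires carefully invoking $\eta \leq 1/(5\sqrt T)$ at several points. Once the round-robin coupling between iteration count and active coordinate is pinned down, the rest of the argument is essentially bookkeeping.
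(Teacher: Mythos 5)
Your proposal is correct and follows essentially the same route as the paper: simultaneous induction on the three items, showing the loss is differentiable at each iterate with gradient $-e_{j_t}$ so that exactly one coordinate is incremented by $\eta$ per step, with $j_t$ chosen as the smallest index of minimal count, and then verifying the margin and magnitude bounds using $\eta\leq\tfrac{1}{5\sqrt{T}}$ in both regimes of $d$. The only (cosmetic) difference is that you make the round-robin count formula $c_t[i]\in\{\lfloor (t-1)/d\rfloor,\lceil (t-1)/d\rceil\}$ explicit, whereas the paper bounds the active coordinate via $\norm{w_{t+1}}_1\leq\eta t$ and a pigeonhole comparison; both yield the same estimates.
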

\begin{proof}
We prove by induction on $t$. For $t=1$, $w_t=0$, thus,
\[\vecentry{w_1}{i}=0\leq \frac{1}{2\sqrt{d}}.\]
Moreover, the maximizer is $j_1=1$. Then, we notice that for both $d=1$ and $d=25\eta^2T^2$, $\eta \leq\frac{1}{5\sqrt{T}}\implies\eta\leq \frac{1}{5\sqrt{d}}$. Then, it holds that, 
\begin{align*}
    \frac{1}{\sqrt{d}}-\vecentry{w_1}{j_1}-\frac{\eta j_1}{4d}&\geq 
    \frac{1}{\sqrt{d}}-\vecentry{w_1}{j_1}-\frac{\eta}{4}\\&\geq
     \frac{19}{20\sqrt{d}}
     \\&>\frac{\eta}{8d},
\end{align*}
and, for every $k\neq j_1$, 
\begin{align*}
    \frac{1}{\sqrt{d}}-\vecentry{w_1}{j_1}-\frac{\eta j_1}{4d}&=
     \frac{1}{\sqrt{d}}-\vecentry{w_1}{k}-\frac{\eta k}{4d}+\frac{\eta (k-j_1)}{4d}
     \\&\geq  \frac{1}{\sqrt{d}}-\vecentry{w_1}{k}-\frac{\eta k}{4d}+\frac{\eta}{4d}
     \\&>\frac{1}{\sqrt{d}}-\vecentry{w_1}{k}-\frac{\eta k}{4d}+\frac{\eta}{8d}
    .
\end{align*}

In the step of the induction we assume that the lemma holds for every $s\leq t$ and prove it for $s=t+1$.
By the hypothesis of the induction, we know that for every iteration $s\leq t$, $\|w_t\|_2\leq \frac{1}{2}$, as a result, we know that the projections does not affect the dynamics of the algorithm until the iteration $t$. Moreover, we know that for every iteration $s\leq t$
there exists an index $j_s\in[d]$ such that the term that achieve the maximum value in $w_s$ is $\frac{1}{\sqrt{d}}-\vecentry{w_s}{j_s}-\frac{\eta j}{4d}$. This maximum is attained uniquely in $j_s$ by margin that is strictly larger than $\frac{\eta}{8d}$.
As a result, we derive that, for every $s\leq t$, $\nabla f (w_s)=-e_{j_{s}}$. 
Now, for every index $m\in[d]$, we define,
\[n_t^m=|\{s\leq t : m=\arg\max_{i\in[d]} \{\frac{1}{\sqrt{d}}-\vecentry{w_s}{i}-\frac{\eta i}{4d}\}\}|.\]
We get that, for every $i$ it holds that, 
\[w_{t+1}[i]=\eta n_t^i.\]
Then,
\begin{align*}
    \|w_{t+1}\|_1=\sum_{i} \eta n_t^i\leq \eta t,
\end{align*}
and ,thus, there exists a entry $k\in [d]$ with $w_{t+1}[k]\leq\frac{\eta t}{d}$.
Now, we prove the first part of the lemma using this observation and the step of the induction.
For every $i\neq j_t$,
\[w_{t+1}[i]=w_{t}[i]\leq \frac{1}{2\sqrt{d}}.\]
Otherwise, we know that, by the definition of $j_t$
\begin{align*}
    \frac{1}{\sqrt{d}}-\vecentry{w_{t}}{i}-\frac{\eta i}{4d}>\frac{1}{\sqrt{d}}-\vecentry{w_t}{k}-\frac{\eta k}{4d}+\frac{\eta}{8d},
\end{align*}
\begin{align*}
    \vecentry{w_{t}}{i}&<\vecentry{w_t}{k}+\frac{\eta (k-i)}{4d}-\frac{\eta}{8d}
    \\&\leq\frac{\eta t}{d}+\frac{\eta}{4}
    \\&\leq\frac{1}{25\sqrt{d}}+\frac{1}{20\sqrt{d}}
\end{align*}
and,
\begin{align*}
\vecentry{w_{t+1}}{i}
    &\leq \vecentry{w_{t}}{i}+\eta
    \\&\leq \frac{1}{25\sqrt{d}}+\frac{1}{20\sqrt{d}} +\frac{1}{5\sqrt{d}}
    \\&\leq\frac{1}{2\sqrt{d}},
\end{align*}
where we again used the fact that $\eta\leq \frac{1}{5\sqrt{T}}$ implies $\eta\leq\frac{1}{5\sqrt{d}}$ for both $d=1$ and $d=25\eta^2T^2$.

For the second part of the lemma, we define $J_t\subseteq [d]$, $J_t=\argmin_{j}\{n_t^j\}$ and $j_{t+1}=\min \{j\in J_t\}$ and show that $j_{t+1}$ holds the required.
We know, for every $j\neq i\in[d]$,
\begin{align*}
   \vecentry{w_{t+1}}{i}-\vecentry{w_{t+1}}{j}=\eta(n_t^i-n_t^j).
\end{align*}
For $k\neq j_{t+1}$ with $n_t^k > n_t^{j_{t+1}}$, %
\begin{align*}
    \frac{1}{\sqrt{d}}-\vecentry{w_{t+1}}{j_{t+1}}-\frac{\eta j_{t+1}}{4d}&\leq 
    \frac{1}{\sqrt{d}}-\vecentry{w_{t+1}}{k}-\eta-\frac{\eta j_{t+1}}{4d}
     \\& =
     \frac{1}{\sqrt{d}}-\vecentry{w_{t+1}}{k}-\eta- \frac{\eta k}{4d}+\frac{\eta(k-j_{t+1})}{4d}
     \\& \leq 
     \frac{1}{\sqrt{d}}-\vecentry{w_{t+1}}{k}-\eta- \frac{\eta k}{4d}+\frac{\eta}{4}
     \\& <
     \frac{1}{\sqrt{d}}-\vecentry{w_{t+1}}{k}- \frac{\eta k}{4d}-\frac{\eta}{2}.
\end{align*}
in contradiction to the fact that $j_{t+1}$ gets the maximal value.
For $k\neq j_{t+1}$ with $n_t^k > n_t^{j_{t+1}}$, it holds that $\vecentry{w_{t+1}}{j_{t+1}}\leq \vecentry{w_{t+1}}{k}-\eta$, and,
\begin{align*}
    \frac{1}{\sqrt{d}}-\vecentry{w_{t+1}}{j_{t+1}}-\frac{\eta j_{t+1}}{4d}&\geq 
    \frac{1}{\sqrt{d}}-\vecentry{w_{t+1}}{k}+\eta-\frac{\eta j_{t+1}}{4d}
     \\& =
     \frac{1}{\sqrt{d}}-\vecentry{w_{t+1}}{k}+\eta- \frac{\eta k}{4d}+\frac{\eta(k-j_{t+1})}{4d}
     \\& \geq 
     \frac{1}{\sqrt{d}}-\vecentry{w_{t+1}}{k}+\eta- \frac{\eta k}{4d}-\frac{\eta}{4}
     \\& >
     \frac{1}{\sqrt{d}}-\vecentry{w_{t+1}}{k}- \frac{\eta k}{4d}+\frac{\eta}{8d},
\end{align*}
as required.
For the third part of the lemma, we know that for every $i\in [d]$, 
\begin{align*}
    \frac{1}{\sqrt{d}}-\vecentry{w_{t+1}}{i}-\frac{\eta i}{4d}
    &\geq 
    \frac{1}{2\sqrt{d}}-\frac{\eta }{4}
    \\&= 
    \frac{9}{20\sqrt{d}}
    \\&>\frac{\eta}{8d}.
\end{align*}
\end{proof}
\paragraph{Proof of lower bound.}
Now we can prove \cref{nonspec_lower_bound_GD_small_eta}.
\begin{proof}[of \cref{nonspec_lower_bound_GD_small_eta}]
The first part of the theorem is an immediate corollary from \cref{dynamics_small_eta}.
Moreover, by applying this lemma again, we know that, for every $i\in[d]$,
\begin{align*}
    \vecentry{\suff}{i}\leq \frac{1}{2\sqrt{d}},
\end{align*}
thus,
\begin{align*}
    \optindf(\suff)-\optindf(\opt)&\geq \frac{1}{2\sqrt{d}}-\frac{\eta}{4}-0
    \\&\geq \frac{1}{2\sqrt{d}}-\frac{\eta}{20\sqrt{d}}
    \\&> \frac{1}{4\sqrt{d}}
    \\&=\min\left(\frac{1}{4}, \frac{1}{20\eta T}\right).
\end{align*}
\end{proof}
\subsection{Construction of a differentiable loss function.}
In this section, we prove the lower bound for a smoothing of $\optindf$, defined as \begin{align}
\label{loss_function_small_eta_diff}
\doptindf(w)=\E_{v\in\unitballd} \max \left(0, \max_{i\in[d]} \{\frac{1}{\sqrt{d}}-\vecentry{w}{i}-\delta\vecentry{v}{i}-\frac{\eta i}{4d}\}\right),
\end{align}
namely, we prove the following lemma,
\begin{lemma}
\label{nonspec_lower_bound_GD_diff_small_eta}
Assume $n,T>0,\eta\leq \frac{1}{5\sqrt{T}}$. Consider the loss function $\doptindf$ that is defined in \cref{loss_function_small_eta_diff} for $d=\max(25\eta^2T^2,1)$ and $\delta=\frac{\eta}{16d}$.
Then, %
for Unprojected GD (cf. \cref{gd_update_rule} with $W=\R^d$) on $\optindf$, initialized at $w_1=0$ with step size $\eta$, we have, 
\begin{enumerate}[label=(\roman*)]
    \item 
    The iterates of GD remain within the unit ball, namely $w_t \in \unitballd$ for all $t=1,\ldots,T$;
    \item
    For all $m=1,\ldots,T$, the $m$-suffix averaged iterate has:
    \[ 
        \doptindf(\suff)
        - \doptindf(\opt) 
        = 
        \Omega\br[3]{\min\left(1,\frac{1}{\eta T}\right)}
        .
    \]
\end{enumerate}
\end{lemma}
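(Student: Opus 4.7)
The plan is to mirror the approach used in \cref{sec:const_diff}: first show that the randomized-smoothing with radius $\delta$ preserves the GD trajectory exactly when applied to $\optindf$, so that the iterates on $\doptindf$ coincide with those on $\optindf$; then leverage the uniform closeness $|\doptindf-\optindf|\le \delta$ to transfer the lower bound of \cref{nonspec_lower_bound_GD_small_eta} to $\doptindf$ with only a negligible loss in the constant.

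\paragraph{Step 1: Dynamics preservation.}
For the first step I would appeal to \cref{dynamics_small_eta}. That lemma guarantees, for every iterate $w_t$ of unprojected GD on $\optindf$, the existence of a \emph{unique} maximizing index $j_t \in [d]$ whose associated linear term exceeds both every other linear term and the constant threshold $0$ by a margin strictly larger than $\eta/(8d)$. Each linear term is $1$-Lipschitz, and we set $\delta = \eta/(16d)$, so $2\delta = \eta/(8d)$ is strictly less than the margin. This is precisely the regime covered by \cref{grad_linear_max}, which implies that at $w_t$ the smoothed objective is differentiable with $\nabla \doptindf(w_t) = -e_{j_t}$ — exactly the gradient taken by GD on $\optindf$. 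An induction on $t$, identical in spirit to that of \cref{iden_grad_iter}, then yields $\tilde w_t = w_t$ for all $t$, where $\tilde w_t$ denotes the iterate of GD on $\doptindf$; consequently $\suff$ is the same vector in both dynamics, and part (i) (iterates staying in $\unitballd$) follows from the corresponding claim in \cref{nonspec_lower_bound_GD_small_eta}.

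\paragraph{Step 2: Transferring the lower bound.}
Since $\optindf$ is $1$-Lipschitz and smoothing over a ball of radius $\delta$ perturbs values by at most $\delta$ (cf.~\cref{expec_change_bound}), we have $|\doptindf(w) - \optindf(w)| \le \delta$ for all $w$. Letting $\overline{\opt} \in \arg\min \optindf$ and noting $\doptindf(\opt) \le \doptindf(\overline{\opt})$, we obtain
\begin{align*}
\doptindf(\suff) - \doptindf(\opt)
\ge \doptindf(\suff) - \doptindf(\overline{\opt})
\ge \optindf(\suff) - \optindf(\overline{\opt}) - 2\delta
\ge C \min\cbr[1]{1, \tfrac{1}{\eta T}} - \tfrac{\eta}{8d},
\end{align*}
by \cref{nonspec_lower_bound_GD_small_eta} applied to the (shared) trajectory. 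For $d = \max\{25\eta^2 T^2,1\}$ the residual $\eta/(8d)$ is of order $\min\{\eta, 1/(\eta T^2)\}$, which is dominated by $\min\{1,1/(\eta T)\}$ up to an absolute constant, so absorbing this into the hidden constant delivers the claimed $\Omega(\min\{1, 1/(\eta T)\})$ lower bound.

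\paragraph{Main obstacle.}
The only subtle point is ensuring that a single, fixed smoothing radius $\delta$ simultaneously preserves the gradient at \emph{every} iterate $w_t$, for all $t \le T$. This boils down to verifying that the margin between the unique maximizer and all competitors in the $\max$ defining $\optindf$ admits a uniform-in-$t$ lower bound. The third conclusion of \cref{dynamics_small_eta} (together with its second conclusion handling the $0$-threshold) does exactly this: the margin is at least $\eta/(8d)$ throughout the run, independently of $t$. Once this is in place, the rest of the argument is routine and parallels the smoothing arguments already carried out for the stochastic constructions.
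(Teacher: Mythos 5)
Your proposal is correct and follows essentially the same route as the paper: it first establishes trajectory preservation under smoothing via the uniform $\eta/(8d)$ margin from \cref{dynamics_small_eta} together with \cref{grad_linear_max} (this is the paper's \cref{iden_grad_iter_small_eta}), and then transfers the bound of \cref{nonspec_lower_bound_GD_small_eta} using the $\delta$-closeness of \cref{expec_change_bound}, absorbing the $2\delta=\eta/(8d)$ loss into the constant exactly as the paper does.
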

First, we prove that the smoothing of the loss function does not affect the dynamics of the algorithm, as stated in the following lemma,
\begin{lemma}
    \label{iden_grad_iter_small_eta}
Under the conditions of \cref{nonspec_lower_bound_GD_diff_small_eta,nonspec_lower_bound_GD_small_eta}, let $w_t,\Tilde{w}_t$ be the iterates of Unprojected GD with step size $\eta\leq\frac{1}{5\sqrt{T}}$ and $w_1=0$, on $\optindf$ and $\doptindf$ respectively. 
    Then, for every $t\in[T]$, it holds that 
    $w_t=\Tilde{w}_t$. 
\end{lemma}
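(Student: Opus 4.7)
\textbf{Proof plan for \cref{iden_grad_iter_small_eta}.}
The plan is to induct on $t$, with base case $t=1$ being immediate since both algorithms are initialized at $w_1 = \Tilde{w}_1 = 0$. For the inductive step I would assume $w_t = \Tilde{w}_t$ and show that $\nabla \doptindf(\Tilde{w}_t) = \nabla \optindf(w_t)$, which, combined with the identical step rule and identical starting point, immediately yields $w_{t+1} = \Tilde{w}_{t+1}$.

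The key observation is that \cref{dynamics_small_eta} already gives a quantitative characterization of the trajectory of unprojected GD on $\optindf$: at every iterate $w_t$ there exists a \emph{unique} maximizing index $j_t \in [d]$ such that the linear term corresponding to $j_t$ strictly exceeds both (i) every other linear term and (ii) the constant threshold $0$ by a margin of at least $\eta/(8d)$. Since each linear term in the definition of $\optindf$ has the form $\frac{1}{\sqrt{d}} - \langle e_i, w\rangle - \frac{\eta i}{4d}$ and thus has a gradient of unit norm, this places us exactly in the setting of \cref{grad_linear_max}, with Lipschitz constant $G=1$ and a gap of at least $\eta/(8d)$ between the maximizer and the runner-up.

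With the choice $\delta = \eta/(16d)$ (or marginally smaller, if strict inequality in the hypothesis of \cref{grad_linear_max} is required), the smoothing radius satisfies $\delta \leq \tfrac{1}{2G}\cdot \tfrac{\eta}{8d}$, and thus \cref{grad_linear_max} applies. It yields $\nabla \doptindf(w_t) = -e_{j_t} = \nabla \optindf(w_t)$. Combining with the inductive hypothesis $\Tilde{w}_t = w_t$, we obtain
\[
    \Tilde{w}_{t+1} = \Tilde{w}_t - \eta \nabla \doptindf(\Tilde{w}_t) = w_t - \eta \nabla \optindf(w_t) = w_{t+1},
\]
closing the induction.

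The only potentially delicate point is ensuring that the margin estimate $\eta/(8d)$ from \cref{dynamics_small_eta} persists throughout all $T$ iterations and is not eroded by the accumulated movement of the iterates. This is already handled inside \cref{dynamics_small_eta}, which certifies the margin \emph{uniformly} in $t$ under the stepsize restriction $\eta \leq 1/(5\sqrt{T})$, so there is no additional obstacle: once the smoothing radius is chosen below half the universal margin (divided by the Lipschitz constant), the gradient identity holds at every step of the trajectory simultaneously.
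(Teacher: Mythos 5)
Your proposal is correct and follows essentially the same route as the paper's proof: induction on $t$, invoking \cref{dynamics_small_eta} for the strict $\eta/(8d)$ margin of the unique maximizer over both the runner-up linear terms and the threshold, and then \cref{grad_linear_max} with $G=1$ and $\delta=\eta/(16d)$ to conclude that smoothing preserves the gradient along the trajectory. The delicate point you flag (uniformity of the margin in $t$) is indeed handled exactly as you say, by \cref{dynamics_small_eta}.
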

\begin{proof}
    We proof the lemma by induction on $t$. For $t=1$, we know that $w_1=\Tilde{w}_1=0$.
    Now, we assume that $w_t=\Tilde{w}_t$.
    By \cref{dynamics_small_eta}, we know that the maximum of the loss function is attained uniquely with the property that the difference between the maximal value and the second maximal value is larger then $\frac{\eta}{8d}$. As a result, by the facts that $\gdindf$ is $1$-Lipschitz and $\delta=\frac{\eta}{16d}$, we can use \cref{grad_linear_max} for $\gdindempf(w_t)$ and get that, \[\nabla \gdindempf(w_t)=\nabla \dgdindempf(w_t)=\nabla \dgdindempf(\Tilde{w}_t).\]
    It follows
    by the hypothesis of the induction that,
\[w_{t+1} = w_t - \nabla \gdindempf (w_t)=\Tilde{w}_t - \nabla \dgdindempf (\Tilde{w}_t)=\Tilde{w}_{t+1}.\]
\end{proof}
Now we can prove \cref{nonspec_lower_bound_GD_diff_small_eta}.
\begin{proof}[of \cref{nonspec_lower_bound_GD_diff_small_eta}]
    Let $\overline{\suff}$ be the $\indsuff$-suffix average of $GD$ when is applied on $\optindf$. Let $\overline{\opt}=\argmin_w \optindf(w)$. By \cref{iden_grad_iter_small_eta}, we know that, $\suff=\overline{\suff}$. Then, by \cref{nonspec_lower_bound_GD_small_eta} and \cref{expec_change_bound},
\begin{align*}
  \frac{1}{4\sqrt{d}}&\leq  \optindf(\overline{\suff})-\optindf(\overline{\opt}) 
   \\&=\optindf(\suff)-\optindf(\overline{\opt})
    \\&\leq \doptindf(\suff)+\delta-\doptindf(\overline{\opt})+\delta 
    \\&\leq \doptindf(\suff)+\delta-\doptindf(\opt)+\delta,
\end{align*}
and, 
\begin{align*}
    \doptindf(\suff)-\doptindf(\opt)&\geq  \frac{1}{4\sqrt{d}}-\frac{\eta}{8d}
    \\&\geq 
    \frac{1}{4\sqrt{d}}-\frac{1}{8\sqrt{d}}
    \\&\geq 
    \frac{1}{8\sqrt{d}}
    \\&\geq 
    \min(\frac{1}{8},\frac{1}{40\eta T})
   . 
\end{align*}
\end{proof}

\end{document}